\pdfoutput=1
\documentclass{article}
\usepackage[T1]{fontenc}
\usepackage{iclr2027_conference,times}
\usepackage{graphicx}
\usepackage{subcaption}
\usepackage{amsmath,amssymb,amsthm}
\usepackage{booktabs}
\usepackage{microtype}
\usepackage{bm}
\usepackage{float}
\usepackage{tikz}
\usetikzlibrary{positioning,arrows.meta,fit,calc}
\usepackage[hidelinks]{hyperref}
\usepackage{url}

\usepackage{aliascnt}
\newtheorem{theorem}{Theorem}[section]
\newaliascnt{corollary}{theorem}
\newtheorem{corollary}[corollary]{Corollary}
\aliascntresetthe{corollary}
\newaliascnt{conjecture}{theorem}

\aliascntresetthe{conjecture}
\newaliascnt{lemma}{theorem}
\newtheorem{lemma}[lemma]{Lemma}
\aliascntresetthe{lemma}
\newaliascnt{proposition}{theorem}
\newtheorem{proposition}[proposition]{Proposition}
\aliascntresetthe{proposition}
\theoremstyle{definition}
\newaliascnt{definition}{theorem}
\newtheorem{definition}[definition]{Definition}
\aliascntresetthe{definition}
\newaliascnt{problem}{theorem}

\aliascntresetthe{problem}
\newaliascnt{assumption}{theorem}

\aliascntresetthe{assumption}
\theoremstyle{remark}
\newaliascnt{remark}{theorem}
\newtheorem{remark}[remark]{Remark}
\aliascntresetthe{remark}

\newcommand{\Var}{\operatorname{Var}}
\newcommand{\Cov}{\operatorname{Cov}}

\newcommand{\bX}{\mathbf{X}}

\newcommand{\bY}{\mathbf{Y}}

\newcommand{\bSigma}{\mathbf{\Sigma}}

\newcommand{\bg}{\mathbf{g}}

\newcommand{\bx}{\mathbf{x}}

\newcommand{\bz}{\mathbf{z}}
\newcommand{\bh}{\mathbf{h}}

\newcommand{\cO}{\mathcal{O}}
\newcommand{\cOs}{\widetilde{\mathcal{O}}}

\newcommand{\cS}{\mathcal{S}}
\newcommand{\cN}{\mathcal{N}}

\newcommand{\cL}{\mathcal{L}}

\newcommand{\cD}{\mathcal{D}}

\newcommand{\bbE}{\mathbb{E}}

\newcommand{\bbP}{\mathbb{P}}
\newcommand{\bbR}{\mathbb{R}}

\newcommand{\csch}{\operatorname{csch}}
\newcommand{\sech}{\operatorname{sech}}

\newcommand{\inner}[1]{\left\langle #1 \right\rangle}
\newcommand{\rbr}[1]{\left( #1 \right)}
\newcommand{\sbr}[1]{\left[ #1 \right]}
\newcommand{\wbr}[1]{\left\{ #1 \right\}}

\newcommand{\sumin}{\sum_{i = 1}^n}

\newcommand{\ttheta}{t_{\theta}}
\newcommand{\tlambda}{t_{\lambda}}
\newcommand{\tcrit}{t_{\text{crit}}}

\title{Adaptive Mean Estimation by In-Context Learning: A Gradient-Flow Analysis}

\author{
Martin Eppert$^{1}$ \And
Krishna Balasubramanian$^{2}$ \And
Subhro Ghosh$^{1}$ \AND
Jason Klusowski$^{3}$ \And
Yan Shuo Tan$^{1}$\\
$^{1}$National University of Singapore\\
$^{2}$University of California, Davis\\
$^{3}$Princeton University
}

\iclrfinalcopy %

\begin{document}

\maketitle
\lhead{Preprint}
\begin{abstract}
Prior Fitted Networks (PFNs) such as TabPFN~\citep{zhang2025tabpfnonemodel} now rival established statistical procedures across prediction and estimation tasks.
A natural explanation is that PFNs have the property of \emph{statistical adaptivity}, that is, they perform nearly as well as a method tailored to the true data-generating model for a heterogeneous set of models, while not being told which model the data comes from.
Prior work does not give a conclusive explanation of how gradient-based pretraining learns an adaptive estimator.
We study how such adaptivity is learned in a controlled location-estimation problem.
Each task is an unlabeled sample whose family is hidden: Gaussian data call for averaging, with error of order $n^{-1}$, whereas uniform data are best estimated from their extremes, at the faster rate $n^{-2}$.
We also provide the example of a symmetric Gaussian mixture, for which a rate of $\sigma^2_n/n$ can be attained.
On scalar inputs, softmax attention computes the derivative of the empirical cumulant-generating function.
A single primitive therefore both supplies features that distinguish the families and forms estimators interpolating between the sample mean and the mid-range.
Building on this, we combine attention experts through either a softmax mixture of experts or a gated linear unit (GLU), and analyze stagewise gradient flow.
With $\widetilde{\Omega}(n^{1+\epsilon})$ pretraining tasks, the learned estimator is asymptotically efficient on Gaussian tasks, within a factor $n^{\epsilon}$ of the minimax rate on uniform tasks, and order-optimal on mixtures in a shrinking-variance regime.
These guarantees extend to new locations and longer contexts.
A risk decomposition separates expert error, routing error and normalization error, which clarifies the architectural contrast.
Softmax gating enforces normalization and exact translation equivariance, whereas the GLU must learn it: its dynamics separate into fast bias removal followed by slow expert selection.
End-to-end experiments recover the predicted specialization.
\end{abstract}

\section{Introduction}
\label{sec:introduction}
In a landmark paper, \citet{brown2020languagemodelsfewshotlearners} showed that large language models can learn to perform new tasks from examples supplied in a prompt (also called a context), without updating their parameters at inference time, a capability they called in-context learning (ICL).
\citet{garg2023transformerslearnincontextcase} formulated ICL as a new paradigm for statistical learning: whereas the classical approach fits a \emph{predictor} to each new dataset through parameter optimization, ICL pretrains a reusable prediction \emph{procedure} across datasets and applies it to a new dataset through context alone.
Tabular foundation models (TFMs) have made this paradigm practical for tabular prediction, with evaluations on the broad TabArena and TALENT benchmarks showing that TFMs can even outperform previously state-of-the-art methods such as boosted tree ensembles~\citep{erickson2025tabarena,ye2025closer}.
Other recent studies show how the ICL paradigm can outperform well-established classical methods in estimating various statistical quantities, including heterogeneous treatment effects~\citep{balazadeh2025causalpfn}, survival curves~\citep{qi2026survivalpfn}, mutual information~\citep{peyrard2025metastatisticallearningsupervisedlearning}, and Poisson means in empirical Bayes problems~\citep{teh2025empiricalbayes}.
These works suggest that the central theoretical question for ICL should not be whether it can learn to approximate an existing algorithm or statistical procedure, but \emph{how it is able to outperform established procedures across diverse statistical problems}.

Several theoretical explanations have been developed by analyzing empirical risk minimization (ERM) of transformers.
\citet{bai2023transformersstatisticiansprovableincontext} show that transformers can implement in-context algorithm selection, including choosing among ridge penalties using a train-validation split.
\citet{wakayama2025bayesian} and \citet{ma2025adaptivity} establish approximation of the posterior predictive mean under hierarchical priors over regression-function classes.
In particular, \citet{ma2025adaptivity} use posterior contraction to show that this predictor achieves optimal rates corresponding to the unknown smoothness and effective dimension of the task.
These results suggest that part of ICL's strength may lie in its ability to adjust the statistical procedure to the structure revealed by the context.
This is a form of \emph{statistical adaptivity}, which is when a single estimator attains a close to optimal accuracy appropriate to the underlying model for multiple models, without knowing that model in advance.
Such adaptation is a longstanding objective in statistics, often achieved through procedures tailored to particular models and inferential targets~\citep{bickel1993efficient}.
Pretraining offers a potentially general recipe to acquiring this behavior, but guarantees for ERM do not establish that gradient-based training learns the required mechanisms.

Understanding how such adaptation is learned requires an analysis of pretraining itself.
A complementary literature studies pretraining dynamics directly.
Results include gradient-flow convergence for linear regression with linear attention~\citep{zhang2023trainedtransformerslearnlinear,zhang2024incontextlearninglineartransformer}, task allocation in multi-head softmax attention for multi-task linear regression~\citep{chen2024trainingdynamicsmultiheadsoftmax}, and gradient-based learning of shared low-dimensional structure for nonlinear single-index regression~\citep{oko2024pretrainedtransformer}.
These results explain important forms of learned statistical behavior, including specialization and adaptation to shared structure.

In this paper, \emph{we study how adaptation can emerge from gradient-based pretraining, focusing on a simple one-dimensional location-estimation problem}.
Here, each task draws a context comprising $n$ i.i.d.\ observations from either a Gaussian or a uniform distribution, and the estimation target is the location parameter of the underlying distribution.
The appropriate estimator depends on the distributional family.
For Gaussian observations, a sample average is optimal, with mean-squared error of order $n^{-1}$, whereas uniform observations permit the faster $n^{-2}$ rate through the sample mid-range.
Neither estimator attains both rates.
Moreover, the family label is hidden and has to be inferred from the context.
During pretraining, the learner observes the target mean but receives neither the family label nor a prescribed estimator choice.
We also consider a symmetric two-component Gaussian mixture as the alternative to the Gaussian family.
Its shrinking component variance $\sigma_n^2$ permits risk of order $\sigma_n^2/n$, providing a second example of adaptation to distributional structure.

Multi-head attention can be used to identify the distribution of the samples in context as well as to estimate their location.
With scalar observations, the query and key matrices of an attention head collapse to a single scalar which we denote by $\lambda$. %
Thus, attention effectively computes the exponentially tilted average $h(\bx,\lambda)=\sum_i x_i e^{\lambda x_i}/\sum_i e^{\lambda x_i}$.
This is the derivative of the empirical cumulant-generating function (CGF).
Since the population CGF can be used to characterize distributions, evaluating this function at several locations supplies features for distinguishing between the two families.
The same operation also constructs the relevant estimators: setting $\lambda=0$ gives the sample mean, while sending $\lambda \to \pm \infty$ approaches the sample maximum and minimum, which are required to compute the mid-range. %

We use these two roles of multi-head attention to construct a bank of estimators, or \emph{experts}, as well as a feature embedding, that allows distinguishing different distributions.
A learned gating parameter $\theta_i$ is used within a gating layer, either a Softmax MoE or a Gated Linear Unit, to output a weighted combination of the outputs of the different experts.
This weighted sum of the expert estimates forms the location estimate.
Both of these architectures are minor modifications of commonly used layers within standard Transformer architectures~\citep{shazeer2020glu}.
We analyze stagewise gradient flow that first learns the gating parameters $\theta_i$ with the expert parameters $\lambda_i$ fixed, then learns the expert parameters $\lambda_i$ with the gating parameters $\theta_i$ fixed.
This is also done to simplify the proof structure, while our experiments demonstrate that this is not strictly necessary.
Notably, our results are stronger than an expressivity statement, as they demonstrate that gradient flow on the empirical loss is able to learn how to do in context location estimation with close to optimal rates.

Our contributions are fourfold:
\begin{itemize} %
\item \textbf{Adaptive estimation and transfer.} 
We establish family-specific risk guarantees for the output of stagewise pretraining under both the Softmax MoE and the GLU architecture.
Both architectures achieve Gaussian asymptotic efficiency and nearly (up to a $n^\epsilon$ factor) optimal uniform estimation rates.
For a Gaussian mixture distribution, we also obtain order-optimal rates.
This implies that our architecture performs nearly as well as the sample mean on Gaussian data and the sample mid-range on uniform data. 
We also establish out-of-distribution risk bounds for longer contexts as well as for samples with a fixed location.
\item \textbf{Statistical mechanism and architectural comparison.} 
Our results show how the risk can be decomposed into the variance of the experts, the bias of the final estimator, and the error accumulated through the gating mechanism (\autoref{thm:accuracy_final_estimator}).
They explain how distribution-sensitive weighting preserves the accuracy of the appropriate expert, and how the architectures differ: softmax enforces normalization of the weights and thus exact translation equivariance, whereas the GLU must learn approximate normalization of the weights (\autoref{thm:fresh_context}).
\item \textbf{Learning dynamics.} We show how training learns the gating parameters and expert parameters required by the statistical analysis (\autoref{thm:softmax_dynamics} and \autoref{convergence_of_first_layer}).
Since the loss admits a bias-variance decomposition, our analysis tracks the movement of the bias and variance separately.
The major terms of the bias are created by the sum of the weights, while variance is accumulated through the variance of the separate experts.
Additionally, we observe that the bias is scaled at a different order, proportional to the prior variance $\sigma^2_\mu$, than the variance which is of order $\frac{1}{n}$ where $n$ is the sample size.
Thus, bias and variance move at different timescales.
We exploit this fact to characterize the dynamics of the parameters (\autoref{convergence_to_critical_manifold} and \autoref{application_of_thikonovs_thm}). 
Subsequent training of the attention layer lets the experts specialize their predictions to their assigned distribution (\autoref{convergence_of_first_layer}).
Here, our analysis exploits that the second layer has already converged to a near-perfect gating of the different distributions.
\item \textbf{Finite Sample Guarantees.}
Our analysis demonstrates that for $m = \widetilde{\Omega}(n^{1+\epsilon})$ pretraining tasks, the gradient flow on the empirical loss $\cL_m$ converges to the correct solution.
\end{itemize}

\begin{figure}[H]
\centering
\hspace{-0.5cm}
\scalebox{0.6}{
    \usetikzlibrary{decorations.pathreplacing}
\begin{tikzpicture}[
  font=\small, >={Stealth[length=2mm]},
  bx/.style   ={draw=black!65, rounded corners=2pt, align=center, inner sep=3.5pt},
  expert/.style={bx, text width=40mm, minimum height=8mm},
  gate/.style ={bx, text width=46mm, minimum height=13mm},
  ar/.style   ={->, semithick, draw=black!65},
  hd/.style   ={font=\scriptsize\bfseries, text=black!55}
]

\node[bx, minimum width=26mm, minimum height=12mm] (ctx) at (0,0) {};
\node at ([yshift=2.5mm]ctx.center) {$\mathbf x=(x_1,\dots,x_n)$};
\draw[black!30] ([xshift=3mm,yshift=-2.5mm]ctx.west) -- ([xshift=-3mm,yshift=-2.5mm]ctx.east);
\foreach \p in {0.08,0.15,0.23,0.28,0.3,0.47,0.50,0.55,0.63,0.7,0.72,0.86,0.93}
  \fill[black!80] ($([xshift=3mm,yshift=-2.5mm]ctx.west)!\p!([xshift=-3mm,yshift=-2.5mm]ctx.east)$) circle (.9pt);

\node[bx, text width=48mm] (attn) at (4.3,0)
  {$h(\mathbf x, a)=\dfrac{\sum_{j=1}^n e^{a x_j}x_j}{\sum_{j=1}^n e^{a x_j}}$};

\node[expert] (e1) at (9.7,1.125) {$g_i(\bx)=\tfrac{h(\mathbf x,\lambda_i)+h(\mathbf x,-\lambda_i)}{2}$};

\node[expert, minimum height=10mm] (phi) at (9.7,-0.90)
  {$\boldsymbol\phi_{\boldsymbol\omega}(\mathbf x)=(h(\mathbf x,\omega_r))_{r=1}^d$};
\node[expert] (z) at (9.7,-2.60)
  {$\mathbf z(\bx)=\operatorname{LayerNorm}(\boldsymbol\phi_{\boldsymbol\omega})$};
\draw[decorate, decoration={brace, mirror, amplitude=4pt}, draw=black!65]
  ([yshift=-3mm]z.south west) -- node[below=6pt, font=\scriptsize] {Fixed}
  ([yshift=-3mm]z.south east);

\node[gate] (sm) at (15.1,-0.90)
  {Softmax MoE\\[2pt]
   $w_k(\bx)=\dfrac{e^{\langle\mathbf z,\theta_k\rangle}}{\sum_\ell e^{\langle\mathbf z,\theta_\ell\rangle}}$};
\node[gate] (ig) at (sm.center |- z.center)
  {GLU\\[2pt]
   $w_k(\bx)=\sigma\!\left(\langle\mathbf z,\theta_k\rangle\right)$};
\node[draw=black!35, rounded corners=3pt, inner sep=2mm,
      fit=(sm)(ig)] (gating) {};

\node[bx, text width=46mm, minimum height=10mm] (out)
  at (sm.center |- e1.center)
  {$\widehat\mu=\sum_{k} w_k(\mathbf x)\,g_k(\mathbf x)$};

\draw[ar] (ctx.east) -- (attn.west);
\draw[ar] (attn.east) -- ++(0.35,0) |- (e1.west);
\draw[ar] (attn.east) -- ++(0.35,0) |- (phi.west);
\draw[ar] (phi.south) -- (z.north);
\draw[ar] (z.east) -- ++(0.35,0) |- (gating.west);
\draw[ar] (e1.east) -- (out.west);
\draw[ar] (gating.north) -- (out.south);
\end{tikzpicture}
}
\caption{
Illustration of our studied architecture. 
We also highlight the parts which were fixed for simplicity.
} %
\label{fig:architecture}
\end{figure}

\paragraph{Notation.}
\label{par:notation}
We write $[n]=\{1,\ldots,n\}$.
The norms $\|\cdot\|_2$ and $\|\cdot\|_{\mathrm F}$ denote the Euclidean and Frobenius norms, respectively.
Parameter collections and their gradients are identified with stacked vectors.
Let $e_i$ be the $i$-th coordinate vector.
The notation $\cO$, $\Theta$, $\Omega$, and $o$ has its usual meaning, $\cOs$ and $\widetilde\Omega$ suppress logarithmic factors, and $\lesssim$ denotes inequality up to an absolute multiplicative constant.
Constants may change from line to line.
Uppercase $X_i$ and $\bX$ denote random observations and contexts, while lowercase $x_i$ and $\bx=(x_1,\ldots,x_n)$ denote their realizations.
We define estimators as functions of $\bx$ and evaluate their risks at the random context $\bX$.
We use
$\bbE_j[\,\cdot\,]=\bbE[\,\cdot\mid\tau=j]$
for expectation under family $j$, including both the location prior and the conditional sample.
We write $\bbP_j$ for this family-conditional law.
For $1\le p<\infty$, $\|Y\|_{L^p(\bbP_j)}=(\bbE_j[|Y|^p])^{1/p}$, using the Euclidean norm for random vectors and the unconditional law when the family subscript is omitted.

\section{Problem and architectures}
\label{sec:problem}
We consider a location-estimation task.
During training and inference, the learner is presented with observations from a distribution in context and is tasked with estimating their location.
The learner is not explicitly told which distribution the observations are from.
To allow a fair comparison, the distributions differ only in shape, not in variance.

\begin{definition}[Adaptive mean-estimation tasks]
\label{adaptiveMeanEstimation}
Sample a location $\mu$ and a family label $\tau$ independently:
\[
 \mu\sim\cN(0,\sigma_\mu^2),\qquad
 \bbP(\tau=\mathrm G)=\bbP(\tau=\mathrm A)=\tfrac12
\]
Conditional on $(\mu,\tau)$, draw $\xi_i\overset{\mathrm{i.i.d.}}{\sim}Q_\tau$ and set
\[
 X_i=\mu+\xi_i,\qquad i\in[n],\qquad
 \bX=(X_1,\ldots,X_n).
\]
The centered Gaussian law is $Q_{\mathrm G}=\cN(0,1)$.
The alternative centered law $Q_{\mathrm A}$ is either %
\begin{align*}
 \text{Uniform:}\quad &Q_{\mathrm A}=Q_{\mathrm U}=\mathrm{Unif}[-\sqrt3,\sqrt3],\\
 \text{Gaussian mixture:}\quad
 &Q_{\mathrm A}=Q_{\mathrm M}=\tfrac12\cN(-\sqrt{1-\sigma_n^2},\sigma_n^2)
       +\tfrac12\cN(\sqrt{1-\sigma_n^2},\sigma_n^2),
\end{align*}
where $\sigma_n^2\in(0,1)$.
In statements specific to one alternative, we write $\mathrm U$ or $\mathrm M$ in place of $\mathrm A$.
Write $P_{\mu,j}$ for the law of $\mu+\xi$ with $\xi\sim Q_j$.
Each observed task $(X, \mu)$ is the tuple of the context and the location and thus, the family label $\tau$ is never observed.
\end{definition}

\textbf{Risk and Lower Bounds.}
For $j\in\{\mathrm G,\mathrm A\}$, define the risk at location $\mu$ and its average over the location prior by
\[
 R_{j,n}(\widehat\mu;\mu):=\bbE_{\bX\sim P_{\mu,j}^{\otimes n}}[(\widehat\mu(\bX)-\mu)^2],
 \qquad
 R_{j,n}(\widehat\mu):=\bbE_j[(\widehat\mu(\bX)-\mu)^2].
\]
For the trained estimator, these expectations are over an independent test task with the trained parameters held fixed.
For a fixed family $j$, define the location-minimax risk
\[
 R_{j,n}^{\star}:=\inf_{\widetilde\mu}\sup_{\mu\in\bbR}
 \bbE_{\bX\sim P_{\mu,j}^{\otimes n}}
 [(\widetilde\mu(\bX)-\mu)^2].
\]  
The sample mean has risk $1/n$ under every family.
For Gaussian data, the sample mean attains the exact location-minimax risk $n^{-1}$.
For uniform data, the sample mid-range attains the faster location-minimax rate $n^{-2}$.
For completeness, we give the proofs of these benchmarks in \autoref{app:benchmark_proof}.
For the Gaussian mixture, revealing component labels reduces the problem to Gaussian location estimation with variance $\sigma_n^2$, yielding the lower bound $\sigma_n^2/n$ (see \autoref{gsr_fisher}).

\subsection{Architecture}
\label{sec:architecture}
We consider a two-layer transformer architecture that consists of one multi-head attention layer followed by a second layer, which is either a Softmax Mixture of Experts (MoE) layer or a Gated Linear Unit.
In the following, we will fix some of the parameters of the transformer architecture to decide before training which attention heads will be used as estimators and which will be used to identify the distribution.
\begin{definition}[Transformer architecture]
\label{TransformerSimplified}
Let $K$ be the number of experts and $d \geq 3$ be the number of heads dedicated to identifying the distribution family.
Let $\bx$ be the context of size $n$, containing the input dataset.
For $\lambda \in \bbR$, define
\begin{equation}
 h(\bx, \lambda)=\frac{\sum_{i=1}^n e^{\lambda x_i}x_i}{\sum_{i=1}^n e^{\lambda x_i}}.
 \label{eq:attention_output}
\end{equation}
Expert $k\in[K]$ is $g_k(\bx)=\tfrac12\rbr{h(\bx,\lambda_k)+h(\bx,-\lambda_k)}$ with $\lambda_k(0)\stackrel{\mathrm{i.i.d.}}{\sim}\mathrm{Unif}[-1,1]$.
For gating, sample $\omega_r\stackrel{\mathrm{i.i.d.}}{\sim}\cN(0,1)$, $r\in[d]$, and hold $\boldsymbol\omega=(\omega_1,\ldots,\omega_d)$ fixed.
We define 
$
\boldsymbol\phi_{\boldsymbol\omega}(\bx) = (h(\bx,\omega_1),\ldots,h(\bx,\omega_d))^\top
$
and then apply a LayerNorm with fixed parameters to obtain $\bz$
\begin{align}
 \bz(\bx)
 = \operatorname{LayerNorm}(\boldsymbol\phi_{\boldsymbol\omega}(\bx))
 =\frac{P_d\boldsymbol\phi_{\boldsymbol\omega}(\bx)}
 {\|P_d\boldsymbol\phi_{\boldsymbol\omega}(\bx)\|_2},
 \label{eq:routing_features}
\end{align}
where $P_d = I_d-\mathbf1\mathbf1^\top/d$.
For $k\in[K]$, initialize $\theta_k(0)\stackrel{\mathrm{i.i.d.}}{\sim}\cN(0,\sigma_\theta^2I_d)$.
We will consider two different choices for $w$, the Softmax Mixture of Experts \eqref{eq:softmax_moe} and the Gated Linear Unit \eqref{eq:glu}:

\noindent
\begin{minipage}{0.42\linewidth}
\begin{equation}
 w_k(\bx;\theta)=\frac{\exp(\inner{\bz(\bx),\theta_k})}
 {\sum_{\ell=1}^K\exp(\inner{\bz(\bx),\theta_\ell})},
 \label{eq:softmax_moe}
\end{equation}
\end{minipage}%
\begin{minipage}{0.58\linewidth}
\begin{equation}
w_k(\bx;\theta)=\sigma\!\left(\inner{\bz(\bx),\theta_k}\right) = \frac1{1+e^{-\inner{\bz(\bx),\theta_k}}}.
\label{eq:glu}
\end{equation}
\end{minipage}
Finally, we define the final network output as
\begin{align}
\widehat\mu_{\theta,\lambda}(\bx) = \sum_{k=1}^K w_k(\bx;\theta)g_k(\bx).
\label{eq:network_output}
\end{align}
The parameters $\theta=(\theta_1,\ldots,\theta_K)$ and $\lambda=(\lambda_1,\ldots,\lambda_K)$ are trained, while $\boldsymbol\omega$ remains fixed.
\end{definition}
The architecture in \autoref{TransformerSimplified} fixes some parameters to remove permutation invariances.
The attention heads are split into the experts $g_k$, which implement the candidate estimators, and the heads forming the feature $\bz$, which identifies the distribution family.
When the routing feature is replaced by its population value, the unbiased experts $g_k$ give an unbiased Softmax MoE output, whereas the GLU must also learn normalization.
For sample-dependent gates, softmax guarantees exact translation equivariance but need not be exactly unbiased.
The feature does not depend on the location, $\bz(\bX+\mu\mathbf1)=\bz(\bX)$, so its distribution depends only on the family.
Under family $j$, $\bz(\bX)\to\bar\bz_j$ in probability as $n\to\infty$, where $\bar\bz_j$ is the centered and normalized vector of derivatives of the CGF of $Q_j$ at $\omega_1,\ldots,\omega_d$.
Since $\bar\bz_{\mathrm G}\ne\bar\bz_{\mathrm A}$ almost surely (\autoref{feature_geometry}), $\bz$ distinguishes the families for large $n$.

Our architecture introduces a LayerNorm for the gating input to the second layer, as this simplifies calculations.
We believe this is not strictly necessary. 
For details, see \autoref{role_of_layernorm}.
\subsection{Training protocol}
\label{sec:training_protocol}
For our training dataset, we sample $m$ i.i.d.\ tasks of size $n$ from \autoref{adaptiveMeanEstimation}
and denote our dataset by $\{(\bX_b,\mu_b)\}_{b=1}^m$.
\begin{definition}[Stagewise gradient flow training]
\label{training_procedure}
For either architecture, \eqref{eq:softmax_moe} or \eqref{eq:glu}, define the population loss 
$\cL(\theta,\lambda)=\frac12\bbE[(\widehat\mu(\bX)-\mu)^2]$
and the empirical loss
$\hat\cL_m(\theta,\lambda) = \frac1{2m}\sum_{b=1}^m(\widehat\mu(\bX_b)-\mu_b)^2.$
Training proceeds in two stages:
\begin{enumerate}
\item For $0\le t<\ttheta$, run $\dot\theta=-\nabla_\theta\hat\cL_m$ with $\lambda$ fixed.
\item For $\ttheta\le t<\tlambda$, run $\dot\lambda=-\nabla_\lambda\hat\cL_m$ with $\theta$ fixed.
\end{enumerate}
The full population flow is the same procedure with $\hat\cL_m$ replaced by $\cL$ in both stages.
\end{definition}

\section{Main results}
\label{sec:main}
\begin{theorem}[Main result; proof in \autoref{pf_theorem_assembly}]
\label{thm:fresh_context}
Let $K=2$ be the number of experts.
Fix any $\eta,\epsilon\in(0,1)$, a gating dimension $d\ge3$, and sufficiently small $\sigma_\mu^2>0$, all independent of $n$.
Sample $m=\widetilde\Omega(n^{1+\epsilon})$ independent tasks according to \autoref{adaptiveMeanEstimation}, with $m\in\operatorname{poly}(n)$.
In the Gaussian-mixture case, assume $n^{-1+\epsilon}\le\sigma_n^2 \le o(1)$.
Initialize as in \autoref{TransformerSimplified}, with fixed $\sigma_\theta^2>0$, sufficiently small depending on $\eta$.
Relabel the experts and permute $\theta$ accordingly so that $|\lambda_1(0)|<|\lambda_2(0)|$.
Let
$\widehat\mu_{\mathrm{trained}}=\widehat\mu_{\theta(\ttheta),\lambda(\tlambda)}$, with $\tlambda$ as in \autoref{convergence_of_first_layer}.
For sufficiently large $n$, with probability at least $1-\eta$ over the initialization and training tasks,
\[
 R_{\mathrm G,n}(\widehat\mu_{\mathrm{trained}})=\frac{1+o(1)}n,\qquad
 R_{\mathrm A,n}(\widehat\mu_{\mathrm{trained}})=
 \begin{cases}
 \cO(n^{-2+\epsilon}),&\text{uniform alternative},\\
 \cO(\sigma_n^2/n),&\text{Gaussian-mixture alternative}.
 \end{cases}
\]
For the GLU, we have, for every $\mu\in\bbR$ and $j\in\{\mathrm G,\mathrm A\}$,
\begin{gather*}
 R_{j,n}(\widehat\mu_{\mathrm{trained}};\mu)
 \le 2R_{j,n}(\widehat\mu_{\mathrm{trained}})+\frac{C\mu^2}{n^2}.
\end{gather*}
For the softmax MoE, for every $\mu\in\bbR$ and $j\in\{\mathrm G,\mathrm A\}$,
\[
 R_{j,n}(\widehat\mu_{\mathrm{trained}};\mu)
 =R_{j,n}(\widehat\mu_{\mathrm{trained}}).
\]
\end{theorem}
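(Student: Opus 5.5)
The proof assembles three ingredients that the paper develops later. The first is the risk decomposition of \autoref{thm:accuracy_final_estimator}. The second is the routing dynamics of \autoref{thm:softmax_dynamics} (softmax MoE) and of \autoref{convergence_to_critical_manifold} and \autoref{application_of_thikonovs_thm} (GLU). The third is the expert-specialization result \autoref{convergence_of_first_layer}. I will state the steps for the population flow first and transfer them to $\hat\cL_m$ at the end.

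\textbf{Step 1 (routing stage).} With $\lambda$ frozen at its initialization and $|\lambda_1(0)|<|\lambda_2(0)|$, expert $1$ is the one closer to the sample mean, so it has the smaller Gaussian variance. The gating flow drives $\inner{\bar\bz_{\mathrm G},\theta_1-\theta_2}\to+\infty$ and $\inner{\bar\bz_{\mathrm A},\theta_1-\theta_2}\to-\infty$, so routing error is exponentially small in $\ttheta$.
\begin{itemize}
\item Softmax MoE: this is the content of \autoref{thm:softmax_dynamics}. The key facts are that $\bar\bz_{\mathrm G}\ne\bar\bz_{\mathrm A}$ (\autoref{feature_geometry}) and that $\bz(\bX)$ concentrates around $\bar\bz_j$ at rate $n^{-1/2}$.
\item GLU: I use the two-timescale structure. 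The bias term is of order $\sigma_\mu^2$, while the variance term is $\cO(1/n)$. \autoref{convergence_to_critical_manifold} gives fast convergence to the manifold where $w_1+w_2\approx1$ on each family. \autoref{application_of_thikonovs_thm} (Tikhonov) then tracks the slow motion along this manifold toward selecting expert $1$ on $\mathrm G$ and expert $2$ on $\mathrm A$.
\item The condition that $\sigma_\theta^2$ is small depending on $\eta$ keeps the initialization in the basin with probability $1-\eta/3$.
\end{itemize}

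\textbf{Step 2 (expert stage).} With $\theta(\ttheta)$ fixed and near-perfect routing, the loss separates family by family, up to the routing error. \autoref{convergence_of_first_layer} shows that up to time $\tlambda$:
\begin{itemize}
\item $\lambda_1$ stays near $0$, so $\Var_{\mathrm G}(g_1)=(1+o(1))/n$;
\item $|\lambda_2|$ grows to order $n^{1-\epsilon}$ in the uniform case, so $g_2$ approximates the mid-range with error $\cO(n^{-2+\epsilon})$;
\item $|\lambda_2|$ reaches a suitable scale in the mixture case, so that the two tilted averages isolate the component means and give variance $\cO(\sigma_n^2/n)$.
\end{itemize}

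\textbf{Step 3 (risk bound).} I plug the learned parameters into \autoref{thm:accuracy_final_estimator}. The family-$j$ risk is bounded by the variance of the selected expert, plus the routing error, plus the normalization error. Routing error is controlled by Cauchy--Schwarz: $\bbE_j[w_{\text{wrong}}^2 g^2]\le \|w_{\text{wrong}}\|_{L^4}^2\|g-\mu\|_{L^4}^2$. The first factor is exponentially small because $\bz$ concentrates (sub-Gaussian tails of the tilted averages), so both correction terms are $o(n^{-2})$.

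\textbf{Step 4 (fixed-$\mu$ statements).} The two architectures are handled differently.
\begin{itemize}
\item Softmax MoE: $\bz$ is translation invariant, each $g_k$ is translation equivariant, and the weights sum to one, so $\widehat\mu(\bX+c\mathbf1)=\widehat\mu(\bX)+c$. Hence $R_{j,n}(\cdot;\mu)$ does not depend on $\mu$ and equals its prior average.
\item GLU: write $\widehat\mu(\bX)-\mu=(\widehat\mu(\xi)-0)+\mu(S(\xi)-1)$, where $S=w_1+w_2$ depends only on $\xi$. Then $(a+b)^2\le2a^2+2b^2$ gives
\[
R_{j,n}(\cdot;\mu)\le 2R_{j,n}(\cdot;0)+2\mu^2\bbE_j[(S-1)^2].
\]
Averaging the same identity over $\mu\sim\cN(0,\sigma_\mu^2)$ bounds $R_{j,n}(\cdot;0)$ by $R_{j,n}$. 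The normalization-error bound from Step 1, $\bbE_j[(S-1)^2]\lesssim n^{-2}$, then gives the stated $C\mu^2/n^2$.
\end{itemize}

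\textbf{Step 5 (finite $m$).} I transfer from $\cL$ to $\hat\cL_m$ by uniform concentration of the gradients over a bounded parameter region. The losses are sub-exponential, and a covering argument gives deviation $\cOs(\sqrt{d/m})$. This must be smaller than the $1/n$ signal driving the variance-scale dynamics, which holds when $m=\widetilde\Omega(n^{1+\epsilon})$. A Grönwall argument then keeps the empirical trajectory close to the population one over times $\ttheta,\tlambda\in\operatorname{poly}(n)$. A union bound over the three failure events gives probability $1-\eta$.

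I expect this last transfer to be the main obstacle. It is hardest in the slow GLU phase and in the growth of $\lambda_2$, where the driving gradients are only of order $1/n$. Closeness of trajectories is not enough there; I need stability of the critical manifold so that errors do not accumulate over polynomial time.
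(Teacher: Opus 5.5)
Your Steps 1--4 follow the paper's assembly: the gating lemma, the expert lemma, \autoref{thm:accuracy_final_estimator} with $i=1$ on $\mathrm G$ and $i=2$ on $\mathrm A$, and the translation identities. Your fixed-$\mu$ arguments are also the paper's: exact equivariance for softmax, and for the GLU $R_{j,n}(\cdot;0)\le R_{j,n}$ together with $\bbE_j[(S-1)^2]\lesssim n^{-2}$.

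Some quantitative claims in those steps are wrong, though harmless.
\begin{itemize}
\item The routing error is not exponentially small. For the GLU the gates converge to the shrinkage point $\bar w^*_{\cdot j}=\frac{n\sigma_\mu^2}{n\sigma_\mu^2+\Sigma^{(j)}_{k_jk_j}}e_{k_j}$, so $\|\bar w_{\cdot j}-e_{k_j}\|_1=\Theta(n^{-1})$ however long you train. For the softmax the excess loss decays only like $1/t$.
\item Hence the correction in Step 3 is of order $n^{-2}$ up to a $\log n$ factor, not $o(n^{-2})$. The $\log n$ appears because the wrong expert is controlled only through $|g_k-\mu|\le\max_\ell|X_\ell-\mu|$. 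This still suffices, but in the mixture case absorbing it into $\sigma_n^2/n$ is exactly where $\sigma_n^2\ge n^{-1+\epsilon}$ is used.
\item You also need $|\lambda_2(0)|\ge C^{-1}$ with probability $1-\eta/4$ to enter \autoref{convergence_of_first_layer}.
\end{itemize}

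The genuine gap is Step 5.
\begin{itemize}
\item An additive uniform deviation $\cOs(\sqrt{d/m})$ is below a $1/n$ signal only when $m\gtrsim n^2$. At $m=n^{1+\epsilon}$ it is $n^{-1/2-\epsilon/2}\gg n^{-1}$.
\item The driving gradients also shrink. In the GLU selection phase $\|\nabla_\theta\cL^{\mathrm{truncated}}\|_2\asymp n^{-1}\sum_{q,j}\bar w'_{qj}$, which is $\cO(n^{-2})$ near $\bar w^*$. In stage 2 the drift $1/(n\lambda_2^2)$ falls to $n^{-3+2\epsilon}$.
\item A Gronwall comparison over times of order $n^2$ and $n^{4-3\epsilon}$ cannot close either.
\end{itemize}

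The missing idea is \emph{relative} concentration combined with perturbed descent, in place of trajectory comparison.
\begin{itemize}
\item Work on an event of probability $1-\cO(n^{-A})$ where $\max_k|g_k-\mu|$ and $\|\bz-\bar\bz_j\|_2$ are at most $n^{-1/2+\beta}$ and $|\mu|\lesssim\sqrt{\log n}$. There $\widehat\mu-\mu$ is small and $w_q'\le e^{o(1)}\bar w_q'$. So, up to logarithms, the per-task $\theta$-gradient is bounded by $\bigl(|\sum_k\bar w_{kj}-1|+n^{-1/2+\beta}\bigr)\bar w_q(1-\bar w_q)$ (\autoref{concentration_for_theta}).
\item The deviation therefore carries the same factor $\bar w_q(1-\bar w_q)$ as the signal. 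Combined with $\sum_{q,j}\bar w'_{qj}\lesssim n\|\nabla_\theta\cL^{\mathrm{truncated}}\|_2$, the empirical error is a $\cOs(\sqrt{n^{1+2\beta}/m})$ fraction of the truncated gradient. This is where $m=\widetilde\Omega(n^{1+\epsilon})$ enters.
\item \autoref{population_gradient_perturbation}(b) then shows that any perturbation below $\min\{1/4,1/\log n\}$ times that gradient preserves descent of $\cL^{\mathrm{truncated}}$, bounds the total travel by $\cO(\log n)$, and keeps $\theta^\perp$ bounded. Gronwall is needed only on $\cO(n)$-long windows.
\item Stage 2 works the same way. The per-task $\lambda_2$-gradient has $L^p$ size $\cOs(1/(n\lambda_2^2))$, with a mixture analogue, so a net argument gives $\partial_{\lambda_2}\hat\cL_m\le-1/(Cn\lambda_2^2)$ uniformly (\autoref{first_layer_empirical_gradient}). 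This bound is integrated directly.
\end{itemize}

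If you cite the gating and expert lemmas in their stated empirical form, Step 5 is unnecessary. As a route to proving them, your transfer does not work.
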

The Gaussian and Gaussian-mixture rates match the lower bounds of \autoref{prop:location_benchmarks} and \autoref{gsr_fisher} up to constants, and the uniform rate is within a factor of $n^{\epsilon}$ of the minimax rate $n^{-2}$.
Table~\ref{tab:main_results} summarizes the results.
\begin{table}[ht]
\centering
\caption{The terminal expert parameters and fresh-context risks certified by the main theorem.}
\label{tab:main_results}
\begin{tabular}{@{}lllll@{}}
\toprule
Alternative family & terminal $|\lambda_2|$ & Gaussian risk & Alternative risk & $R_{\mathrm A,n}^{\star}$ \\
\midrule
Uniform & $n^{1-\epsilon}$ & $(1+o(1))/n$ & $\cO(n^{-2+\epsilon})$ & $\asymp n^{-2}$ \\
Gaussian mixture & $\frac{\log(1/\sigma_n^2)}{4\sqrt{1-\sigma_n^2}}$ & $(1+o(1))/n$ & $\cO(\sigma_n^2/n)$ & $\ge\sigma_n^2/n$ \\
\bottomrule
\end{tabular}
\end{table}
The Softmax MoE architecture has a stronger guarantee when $|\mu|$ is large.
This follows from its exact translation equivariance, while the GLU has to learn approximate normalization.

\begin{corollary}[Length generalization; proof in \autoref{pf_length_transfer}]
\label{cor:length_transfer}
Under the assumptions of \autoref{thm:fresh_context}, for sufficiently large $n$, with probability at least $1-\eta$ over the initialization and pretraining tasks, for every $n'\ge n$,
\begin{gather*}
 R_{\mathrm G,n'}(\widehat\mu_{\mathrm{trained}})=\cO\rbr{\frac1{n'}+\frac{\log n'}{n^2}},\\
 R_{\mathrm A,n'}(\widehat\mu_{\mathrm{trained}})=
 \begin{cases}
 \cO(n^{-1+\epsilon}/n'+n^{-2}),&\text{Uniform},\\
 \cO(\sigma_n^2/n'+\log n'/n^2),&\text{Gaussian-mixture}.
 \end{cases}
\end{gather*}
\end{corollary}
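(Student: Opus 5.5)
The plan is to keep the trained parameters fixed at their terminal values from \autoref{thm:fresh_context} and rerun the risk decomposition of \autoref{thm:accuracy_final_estimator} with the test length $n'\ge n$ in place of $n$. We work on the same probability-$(1-\eta)$ event over initialization and pretraining tasks; nothing there depends on $n'$, so a single event covers every $n'\ge n$ at once. On this event the trained estimator has three properties. It has two experts: expert 1 has $|\lambda_1|$ small, essentially a sample mean, and expert 2 has $|\lambda_2|$ at the terminal value of Table~\ref{tab:main_results}. The learned gating vectors $\theta_1,\theta_2$ separate $\bar\bz_{\mathrm G}$ from $\bar\bz_{\mathrm A}$ with margin of order $\|\theta\|$. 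For the GLU, the weights sum to $1$ at the population features up to an error of order $n^{-1}$, which is what produces the $C\mu^2/n^2$ term in the main theorem.

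The risk at length $n'$ splits into three pieces: the variance of the selected expert, the routing error, and the normalization or bias error. The first piece is the expert error. For $\lambda$ fixed, $h(\bX,\lambda)$ is a ratio of empirical means, so the delta method gives variance $\Var_j$ of order $1/n'$ for expert 1 under any family; under the Gaussian family the constant is $(1+o(1))$. For expert 2 under the uniform family, the variance is $\cO(\lambda_2^{-2}/n'\cdot\text{const})$ on the tilted scale. Concretely, for $\lambda_2=n^{1-\epsilon}$ the effective number of tail points is about $n'/\lambda_2$, and each carries spread about $1/\lambda_2$. This gives variance $\lambda_2^{-1}/n'=n^{-1+\epsilon}/n'$. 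There is also a deterministic bias of order $\lambda_2^{-2}$, but the symmetrization $\pm\lambda_2$ cancels it by symmetry of $Q_{\mathrm U}$. The remaining bias comes from finite $\lambda_2$ but is length-independent. It was controlled at length $n$ by $n^{-2+\epsilon}$, and I would bound it crudely by $n^{-2}$ after checking the symmetric-tilt cancellation. For the mixture, expert 2 at the terminal $\lambda_2$ has variance $\sigma_n^2/n'$ by the same delta-method computation used in the main proof.

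The second piece is the routing error, and it is the main obstacle. When the feature $\bz(\bX)$ falls on the wrong side, the wrong expert's error enters. I will bound $\bbE_j[w_{\text{wrong}}(\bX)^2(g_k-\mu)^2]$ by Cauchy--Schwarz as $\|w_{\text{wrong}}\|_{L^4}^2\|g_k-\mu\|_{L^4}^2$. Since $\bz$ concentrates around $\bar\bz_j$ at rate $\sqrt{\log n'/n'}$ with sub-Gaussian tails in $n'$, the probability of misrouting decays in $n'$ and is only better than at length $n$. The difficulty is that the residual gate weight on the wrong expert is not zero at the population feature. It is a fixed quantity $e^{-\Theta(\|\theta\|)}$ determined by training at length $n$, and it does not improve with $n'$. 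I would use the training guarantee that this residual weight is at most $n^{-1}$, or its square is at most $n^{-2}$, up to logs. Multiplying by the $\cO(\log n')$ size of $(g_k-\mu)^2$ for the wrong expert then gives the $\log n'/n^2$ terms. The logarithm comes from the maximal-type behaviour of the large-$\lambda$ expert under Gaussian data, where $h(\bX,\lambda_2)$ is of order $\sqrt{2\log n'}$ at most.

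The third piece is the normalization error. For the softmax MoE it vanishes by translation equivariance. For the GLU, the learned $|\sum_k w_k(\bar\bz_j)-1|\lesssim n^{-1}$ contributes $\mu^2 n^{-2}$, whose prior average is $\sigma_\mu^2n^{-2}$; this is absorbed into $\cO(n^{-2})$. Summing the three pieces family by family gives the three displayed bounds. In particular, the Gaussian bound $1/n'+\log n'/n^2$ is expert 1's variance plus the misrouting residual. The uniform bound $n^{-1+\epsilon}/n'+n^{-2}$ is expert 2's variance plus bias, with the routing residual absorbed because there the wrong expert has bounded error. The mixture bound is $\sigma_n^2/n'+\log n'/n^2$.
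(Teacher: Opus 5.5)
Your proposal is correct and takes essentially the paper's route. The trained parameters are frozen, and the population gates depend only on the length-free $\bar\bz_j$, so $\|\bar w_{\cdot j}-e_i\|_1=\cO(n^{-1})$ and $\max_q\|\theta_q\|_2=\cO(\log n')$ persist; the accuracy lemma at length $n'$, with the expert moment bounds (valid since $n^{1-\epsilon}\le n'^{1-\epsilon}$ and $n'^{-1+\epsilon}\le\sigma_n^2$, a check you should state) and sample-maximum fourth moments $\cO(1)$ for uniform data or $\cO((\log n')^2)$ otherwise, then gives the three bounds.

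One slip to fix: $g_2-\mu$ is an odd function of the symmetrically distributed centered sample, hence exactly unbiased at every length. There is therefore no ``length-independent bias'' to bound. As you wrote it, a bias of squared size $n^{-2+\epsilon}$ could not be bounded by $n^{-2}$, and the uniform bound would fail for $n'\gg n$. The $n^{-2}$ in the uniform bound is just the routing/normalization term $\|\bar w_{\cdot\mathrm A}-e_2\|_1^2\bigl(\sigma_\mu^2+\sqrt{\bbE_{\mathrm U}[\max_\ell|X_\ell-\mu|^4]}\bigr)=\cO(n^{-2})$.
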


\subsection{Pretraining dynamics}
\label{sec:dynamics}
\label{sec:theory}
We establish the gate bound of \autoref{thm:softmax_dynamics} and the expert-parameter bounds of \autoref{convergence_of_first_layer}: first train the gating parameters with fixed experts, then specialize the expert parameters with fixed gates.
\autoref{sec:statistics} supplies the resulting risk guarantees.

We specialize to two experts and two families.
Define the population attention responses and feature direction by
\[
m_j(a)=\frac{\bbE_j\sbr{e^{a(X-\mu)}(X-\mu)}}{\bbE_j\sbr{e^{a(X-\mu)}}},
\qquad
\bar{\boldsymbol\phi}_j=\rbr{m_j(\omega_1),\ldots,m_j(\omega_d)}^\top,
\qquad
\bar\bz_j=\frac{P_d\bar{\boldsymbol\phi}_j}{\|P_d\bar{\boldsymbol\phi}_j\|_2}.
\]
Let $\bar w_{kj}$ denote $w_k$ with $\bz(\bX)$ replaced by $\bar\bz_j$, and $\bar w_{\cdot j}=(\bar w_{1j},\bar w_{2j})^\top$.
Throughout, bars indicate evaluation at population moments rather than expectation of the corresponding nonlinear statistic.
In particular, $\bar\bz_j\ne\bbE_j[\bz(\bX)]$ in general. %
In the proofs, $\Delta$ denotes deviation from the corresponding population-moment value.
In particular, under family $j$, $\Delta\bz=\bz-\bar\bz_j$.

\subsection{Stage 1: gating dynamics}
\begin{lemma}[Gating dynamics; proof in \autoref{pf_glu_gating} for the GLU]
\label{thm:softmax_dynamics}
\label{thm:glu_dynamics}
Under the assumptions of \autoref{thm:fresh_context}, run the first stage of \autoref{training_procedure}.
For sufficiently large $n$, with probability at least $1-\eta$ over the initialization and training tasks, the first stage ends at a finite time $\ttheta$ with
$$
\max\{\|\bar w_{\cdot\mathrm G}(\ttheta)-e_1\|_1,\|\bar w_{\cdot\mathrm A}(\ttheta)-e_2\|_1\}=\cO(n^{-1}),
\qquad
\max_q\|\theta_q(\ttheta)\|_2=\cO(\log n).
$$
\end{lemma}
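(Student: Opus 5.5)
The plan is to reduce the Stage-1 flow to a low-dimensional, nearly deterministic system. Since $\lambda$ is fixed with $|\lambda_1(0)|<|\lambda_2(0)|$, the experts $g_1,g_2$ are fixed unbiased estimators of $\mu$. Because $\mu$ is independent of the centered sample under each family, the population loss splits exactly into a bias part and a variance part:
\[
\cL(\theta)=\tfrac12\,\sigma_\mu^2\,\bbE\big[(\textstyle\sum_k w_k-1)^2\big]+\tfrac12\,\bbE\big[(\textstyle\sum_k w_k(g_k-\mu))^2\big].
\]
The first term vanishes identically for the softmax MoE. For the GLU it is of order $\sigma_\mu^2$, while the second is of order $1/n$.

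Next, I would use the concentration $\bz(\bX)\to\bar\bz_j$ together with the nondegeneracy $\bar\bz_{\mathrm G}\neq\bar\bz_{\mathrm A}$ from \autoref{feature_geometry}. Deviations $\Delta\bz$ are $\cO_P(n^{-1/2})$ by a delta-method argument on the empirical CGF. This lets me replace $\cL$ by a surrogate $\bar\cL$ that depends on $\theta$ only through the four scalars $\inner{\bar\bz_j,\theta_k}$, with gradient error $\cO(n^{-1})$ times the variance scale. The empirical gradient $\nabla\hat\cL_m$ is then compared with $\nabla\cL$ uniformly over $\|\theta\|\le C\log n$. I would do this with a covering argument plus sub-exponential tails of $g_k-\mu$. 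With $m=\widetilde\Omega(n^{1+\epsilon})$, the deviation is $o$ of the variance-scale signal, of size about $n^{-1}\cdot$(gap between expert variances).

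The dynamics then proceed as follows. For the GLU, the fast $\sigma_\mu^2$-scale bias term first drives the flow toward the critical manifold $\{\bar w_{1j}+\bar w_{2j}=1,\ j\in\{\mathrm G,\mathrm A\}\}$. Small $\sigma_\theta^2$ keeps the initialization near symmetric, so this takes time $\cO(1/\sigma_\mu^2)$, as in \autoref{convergence_to_critical_manifold}. A Tikhonov singular-perturbation argument (\autoref{application_of_thikonovs_thm}) then gives a slow reduced flow on the manifold, driven by the variance term. Restricted to the manifold, under family $\mathrm G$ the variance term prefers expert 1: its small $|\lambda_1|$ makes it close to the sample mean, which has minimal variance among these experts for Gaussians. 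Under the alternative it prefers expert 2: the larger $|\lambda_2|$ puts more weight on extremes, giving lower variance for uniform or bimodal data. Both claims I would verify through asymptotic expansions of $\Var_j(h(\bX,\lambda))$ in $\lambda$. This sign structure makes the logits $\inner{\bar\bz_j,\theta_1-\theta_2}$ move monotonically in opposite directions for the two families, with drift proportional to $\bar w_{1j}\bar w_{2j}$. The logits therefore grow logarithmically in time. I would stop at $\ttheta$, the first time both gate errors reach $\cO(n^{-1})$; at that point the logits are $\Theta(\log n)$, giving $\|\theta_q\|=\cO(\log n)$. The softmax case is the same argument without the fast stage.

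The main obstacle is controlling the residual along the critical manifold once the weights saturate. The variance-driven signal decays like $\bar w_{1j}\bar w_{2j}\asymp n^{-1}$ times $n^{-1}$. It must still dominate three error sources: the $\Delta\bz$ corrections, the $\cO(\sigma_\mu^2)$-scale drift off the manifold, and the statistical error of $\hat\cL_m$. I would handle this by tracking the logit differences directly with a Lyapunov/comparison ODE, and by bounding errors relative to $\bar w_{1j}\bar w_{2j}$ rather than absolutely. This relative bound is where the $n^{\epsilon}$ slack in $m$ is spent.
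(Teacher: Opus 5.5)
Your plan follows the paper's route: bias/variance split, truncated loss in the four population logits, PL-type contraction to the critical manifold, Tikhonov reduction, and relative control of the empirical gradient. However, the step that is supposed to produce the correct assignment fails as stated.

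\textbf{The logits are not monotone.} The gating vectors are shared by both families. In the rescaled time $s=t/n$, the reduced logits $y_k=\sigma^{-1}(\bar w_{1k})$ therefore satisfy $\dot y_k=-\frac14\sum_j Z_{jk}\bar w'_{1j}\,n(S_{1j}-S_{2j})$, where $Z_{\mathrm{GA}}=\inner{\bar\bz_{\mathrm G},\bar\bz_{\mathrm A}}>0$ by \autoref{feature_geometry}. The cross-family term has the opposite sign to the own-family term, so the family with the larger covariance gap drags the other family's logit the wrong way. For instance, take $\lambda=(0.3,0.9)$ with all gates near $1/2$. The Gaussian gap $\Sigma_{22}^{(\mathrm G)}-\Sigma_{11}^{(\mathrm G)}\approx1.06$ exceeds the uniform gap $\Sigma_{11}^{(\mathrm U)}-\Sigma_{22}^{(\mathrm U)}\approx0.43$. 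The uniform gate then initially moves toward expert~1 whenever $Z_{\mathrm{GA}}>0.41$, which is typical because $m_{\mathrm G}$ and $m_{\mathrm U}$ are both increasing. The flow can thus approach the corners where both families use the same expert, where every drift degenerates. What is missing is a global analysis of the reduced flow:
\begin{itemize}
\item the classification of its stationary points;
\item an escape argument near the same-expert corners, namely the paper's Lyapunov function $W=\bar w_{2\mathrm A}^{-1}\max\{1,(\bar w_{2\mathrm G}/(r_0\bar w_{2\mathrm A}))^p\}$ in \autoref{avoidance_of_suboptimal_stationary_points}, which decreases at a uniform rate precisely because $Z_{\mathrm{GA}}<\min\{Z_{\mathrm{GG}},Z_{\mathrm{AA}}\}$ (this is where LayerNorm is used);
\item exclusion of the doubly wrong corner, via convexity of the on-manifold loss $q_j$ and loss monotonicity;
\item compactness, to get a fixed $S$ after which the loss lies $\kappa/n$ below every wrong corner.
\end{itemize}
This gap applies equally to your softmax case. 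Relatedly, ``expert~1 is preferred'' requires the full hierarchy $\Sigma_{11}^{(j)}<\Sigma_{12}^{(j)}<\Sigma_{22}^{(j)}$ of \autoref{expertHierarchy}, not just a comparison of variances. Without the cross-covariance ordering, a convex combination of the experts beats both and the optimal gate is interior. For the uniform and mixture families on $|\lambda|\le1$, this needs the exact covariance computations, not small-$\lambda$ expansions.

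\textbf{The GLU surrogate error is not lower order.} Averaging the bias term over the fluctuation of $\bz$ adds $\sigma_\mu^2\Var_j(\sum_kw_k)\approx\frac{\sigma_\mu^2}{n}\sum_{i,q}\bar w'_{ij}\bar w'_{qj}\,\theta_i^\top\bSigma_Z^{(j)}\theta_q$ to the loss. This term is of the same order $1/n$ as the variance-driven selection signal. It does not vanish on the critical manifold, and it grows like $\|\theta\|_2^2$, reaching $\log^2n$ near saturation. No choice of $n$ or $m$ alone makes it negligible relative to the signal. The paper controls it only by taking $\sigma_\mu^2$ sufficiently small, a hypothesis of \autoref{thm:fresh_context} that your plan never uses.
\begin{itemize}
\item In the Tikhonov phase, the tracking error is $C_S(\sigma_\mu^2+n^{-1/2}+1/(n\sigma_\mu^2))$.
\item In the saturation phase, \autoref{population_gradient_perturbation} bounds the perturbation relative to $\|\nabla_\theta\cL^{\mathrm{truncated}}\|_2$ by $C_0\sigma_\mu^2nE(\theta)(1+\|\theta\|_2)^2=\cO(\sigma_\mu^2)$. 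This uses that the excess loss $E$ decays while $\|\theta\|_2$ grows only like $\log(1/E)$.
\end{itemize}
Note also that the GLU target $\bar w^*$ lies off the manifold at distance $\Theta(1/(n\sigma_\mu^2))$, the same order as the Tikhonov error. The final $\cO(n^{-1})$ gate accuracy therefore cannot come from the reduced flow. It comes from this separate saturation argument, which combines gradient lower bounds away from $\cS$ with loss barriers against the wrong stationary points.
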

\paragraph{Proof sketch}
The population features $\bar\bz_j$ distinguish the families.
Let $\bSigma^{(j)}$ be the asymptotic covariance matrix of $g$ given data from family $j$, that is,
$$
 \bbE_j[(g_i-\mu)(g_k-\mu)]=\frac{\Sigma_{ik}^{(j)}}n+o(n^{-1}).
$$
We show that, for the families we study, there must exist an expert $i$ such that for all $k \neq i$ we have almost surely
\begin{equation}
\Sigma_{ii}^{(j)} < \Sigma_{ik}^{(j)} < \Sigma_{kk}^{(j)}.
\label{eq:hierarchy}
\end{equation}
This implies that the optimal gating must exclusively choose the ``best'' expert.
Finally, we observe that the main component of the loss can be identified as
\begin{equation}
\cL^{\mathrm{truncated}}(\theta)
=\frac14\sum_j\left[
\sigma_\mu^2\left(\sum_{k=1}^2\bar w_{kj}-1\right)^2
+\frac1n\bar w_{\cdot j}^{\top}\bSigma^{(j)}\bar w_{\cdot j}
\right].
\label{eq:shared_reduced_loss}
\end{equation}
For softmax, the normalization term vanishes as $\sum_{k=1}^2\bar w_{kj} = 1$.

\paragraph{Softmax MoE}
First, note that this problem is highly nonconvex due to $w$ being generated by a softmax. %
Since $\sum_iw_i=1$ and the routing feature concentrates at $\bar\bz_j$, the loss expansion gives $\cL(\theta)=\cL^{\mathrm{sm}}(\theta)+o\rbr{1/n}$, where
\begin{equation}
\cL^{\mathrm{sm}}(\theta) = \frac1{4n}\sum_j\bar w_{\cdot j}^{\top}\bSigma^{(j)}\bar w_{\cdot j}.
\label{eq:softmax_truncated_loss}
\end{equation}
We can analyze the gradient flow on $\cL^{\mathrm{sm}}$ and show global convergence.
Correcting for the $o(1/n)$ remainder finishes the proof.

\paragraph{Gated linear unit}
For the GLU, unbiasedness is not automatic even when the gates are evaluated at $\bar\bz_j$.
Our approach will be to analyze the gradient flow on $\cL^{\mathrm{truncated}}$ and correct for the perturbation $\cL(\theta)-\cL^{\mathrm{truncated}}(\theta)$.
The loss $\cL^{\mathrm{truncated}}(\theta)$ has two distinct scales, where one scales with $\sigma^2_\mu$ and one scales with $1/n$.
The term $\sigma_\mu^2\left(\sum_{k=1}^2\bar w_{kj}-1\right)^2$ corresponds (roughly) to the bias of the predictor.
In the truncated loss, there exists a whole manifold of unbiased predictors, and we show that the gradient flow finds a solution near this manifold, while optimizing for the variance of the predictor.
Choosing $n$ sufficiently large lets us attack this problem using Tikhonov's theorem and determine where on the manifold the gradient flow converges.
Notably, $\cL(\theta)-\cL^{\mathrm{truncated}}(\theta)$ comes from the variance of $w$ and thus scales with $\sigma^2_\mu$.
Hence, the proof depends on $\sigma^2_\mu$ being below some threshold.
This is natural: a larger prior variance $\sigma^2_\mu$ increases the cost of normalization error, amplifying inaccuracies in the gating and potentially leading to a collapse of the gating layer towards a single expert for some initializations.

\subsection{Stage 2: expert dynamics}
For $t \geq t_\theta$, $\theta$ is frozen.
In this second stage, we analyze the dynamics of $\lambda$.
We establish that the expert parameter assigned to samples from the Gaussian family moves toward zero, while uniform expert parameters grow toward endpoint estimation, and mixture expert parameters approach a certain limit.
In the respective regimes, the leading expert-parameter flows satisfy $\dot\lambda_1\asymp-\lambda_1^3/n$ and $\dot\lambda_2\asymp1/(n\lambda_2^2)$ assuming without loss of generality $\lambda_2 > 0$.
Controlling the residuals yields the following result.
\begin{lemma}[Expert dynamics; proof in \autoref{pf_convergence_of_first_layer}]
\label{convergence_of_first_layer}
Under the assumptions of \autoref{thm:fresh_context}, assume the first stage of \autoref{training_procedure} ends at a finite time $\ttheta$ satisfying
$$
\max\{\|\bar w_{\cdot\mathrm G}-e_1\|_1,\|\bar w_{\cdot\mathrm A}-e_2\|_1\}\le C/n,
\qquad
\max_q\|\theta_q\|_2\le C\log n,
$$
and $|\lambda_1(\ttheta)|\le1$, $C^{-1}\le|\lambda_2(\ttheta)|\le1$, for a constant $C<\infty$ independent of $n$.
Use the training procedure in \autoref{training_procedure}.
For sufficiently large $n$, with probability at least $1-\eta$, uniformly over the possible $(\theta,\lambda)$ at $\ttheta$ satisfying these bounds, $|\lambda_1|\le2$ until $|\lambda_2|$ first reaches the value below, and this first time $\tlambda$ satisfies
$$
\begin{cases}
|\lambda_2(\tlambda)|=n^{1-\epsilon},\qquad \tlambda-\ttheta\le C n^{4-3\epsilon},&\text{Uniform},\\
|\lambda_2(\tlambda)|=\dfrac{\log(1/\sigma_n^2)}{4\sqrt{1-\sigma_n^2}},\qquad \tlambda-\ttheta\le C n/\sigma_n^2,&\text{Gaussian mixture}.
\end{cases}
$$
Moreover, $|\lambda_1(\tlambda)|=o(1)$.
The tasks used for training $\lambda$ may also have been used for training $\theta$.
\end{lemma}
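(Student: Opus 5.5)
With $\theta$ frozen at $\theta(\ttheta)$, the plan is to reduce the second stage to two nearly decoupled scalar ODEs, one for $\lambda_1$ and one for $\lambda_2$. We then integrate these ODEs explicitly and control the residuals. The reduction follows the same template as the gating analysis.

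\textbf{Step 1: reduce the loss.} We expand the population loss as
$\cL(\lambda)=\frac12\sum_j \tfrac12\bbE_j[(\sum_k w_k g_k-\mu)^2]$.
Replace $w_k(\bX)$ by $\bar w_{kj}$, and use the Stage~1 bounds $\|\bar w_{\cdot\mathrm G}-e_1\|_1,\|\bar w_{\cdot\mathrm A}-e_2\|_1\le C/n$. This shows that, up to controlled remainders,
\[
\cL(\lambda)\approx \tfrac14\bbE_{\mathrm G}[(g_1-\mu)^2]+\tfrac14\bbE_{\mathrm A}[(g_2-\mu)^2]+\text{(cross terms of order } n^{-1}\cdot\text{risk)}.
\]
Each expert $g_k$ is symmetric in $\pm\lambda_k$ and translation equivariant. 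By the symmetry of $Q_j$, it is also unbiased, so each term above is a pure variance. Two things must be checked here.
\begin{itemize}
\item The fluctuation $\Delta\bz$ contributes only lower order to $\nabla_\lambda\cL$. This uses $\max_q\|\theta_q\|_2\le C\log n$ together with concentration of $\bz$, giving gate fluctuations of order $\widetilde{\cO}(n^{-1/2})$.
\item The cross-family leakage, with weights of order $1/n$, contributes gradients smaller than the main drift.
\end{itemize}

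\textbf{Step 2: compute the leading drifts.} We use a delta-method (ratio) expansion of $h(\bX,\lambda)$ around its population moments.
\begin{itemize}
\item \emph{Gaussian expert.} By symmetry, $\Var_{\mathrm G}(g_1)=\frac1n(1+c\lambda_1^4+\dots)$. Hence $\dot\lambda_1\asymp-\lambda_1^3/n$, so $|\lambda_1|$ is nonincreasing up to residuals and stays below $2$. It decays like $(n/t)^{1/2}$, which gives $|\lambda_1(\tlambda)|=o(1)$ because $\tlambda-\ttheta\gg n$.
\item \emph{Uniform expert.} For large $\lambda$, $h(\bX,\lambda)$ behaves like a soft maximum with effective bandwidth $1/\lambda$. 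The symmetrized estimator $g_2$ then has variance of order $\frac1{n\lambda}+\frac1{\lambda^2}$, roughly. Differentiating gives $\dot\lambda_2\asymp 1/(n\lambda_2^2)$ in the regime $1\lesssim\lambda_2\ll n$. Integrating yields $\lambda_2^3\asymp t/n$, so $\lambda_2$ reaches $n^{1-\epsilon}$ by time $Cn\cdot n^{3-3\epsilon}=Cn^{4-3\epsilon}$. Below $\lambda_2=O(1)$, we show directly that the derivative of the variance in $\lambda$ is negative and of order $1/n$ on $[C^{-1},1]$.
\item \emph{Mixture expert.} The tilted mean under the mixture gives a variance profile whose minimizer lies near $\lambda^\star=\frac{\log(1/\sigma_n^2)}{4\sqrt{1-\sigma_n^2}}$, which is where tilting isolates each component. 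Before this point the drift is at least of order $\sigma_n^2/n$, so the hitting time is at most $Cn/\sigma_n^2$.
\end{itemize}

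\textbf{Step 3: pass to the empirical flow.} We replace $\nabla\cL$ by $\nabla\hat\cL_m$. For this we need uniform concentration of the gradient over $\lambda$ in the relevant range, obtained by a net over $\lambda$, Lipschitz bounds and sub-exponential tails. The resulting gradient error is $\widetilde{\cO}(\sqrt{1/m}\cdot\text{scale})$, which $m=\widetilde\Omega(n^{1+\epsilon})$ makes smaller than the drift. Since the bounds are uniform over $\theta$ in the stated set, reusing the Stage~1 tasks is harmless.

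\textbf{Main obstacle.} The hardest step is the uniform-case bound for large $\lambda_2$, up to $n^{1-\epsilon}$. There the drift is tiny, of order $n^{-3+2\epsilon}$, while the weights $e^{\lambda x}$ are heavy-tailed, so the Taylor-based delta method breaks down. The plan is to analyze $h$ through the order statistics near $\pm\sqrt3$, using exponential spacings to get exact asymptotics for the variance and its $\lambda$-derivative. The same heavy tails make the empirical-gradient concentration require the $n^{\epsilon}$ slack, and we expect this to be exactly why the terminal value stops at $n^{1-\epsilon}$.
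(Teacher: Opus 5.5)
Your skeleton matches the paper: freeze $\theta$, prove scalar drift bounds for $\lambda_1$ and $\lambda_2$ uniformly over the admissible $(\theta,\lambda)$, integrate them, and transfer to $\hat\cL_m$ by a net. But the reduction in Step~1 fails with the estimates you give.

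\textbf{Gate control is too weak.} An additive bound $|w_k-\bar w_{kj}|=\cOs(n^{-1/2})$ does not suffice. In the uniform case, consider the Gaussian-family part of $\partial_{\lambda_2}\cL$, namely $\tfrac12\bbE_{\mathrm G}[w_2(\widehat\mu-\mu)\partial_{\lambda_2}g_2]$. For large $\lambda_2$, under Gaussian data $\partial_{\lambda_2}g_2$ is only controlled by the deterministic bound $C(\log n)^2/\lambda_2^2$. Your bound then gives only $\cOs(1/(n\lambda_2^2))$ for this term, which is no smaller than the drift you want to isolate. You need the \emph{realized} leakage weights to be $\cO(1/n)$ in $L^p$, together with $\|w_k-\bar w_{kj}\|_{L^p(\bbP_j)}\le C_p\log n/n^{3/2}$. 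The paper gets both from the multiplicative inequality $\sigma(u+v)\le e^{|v|}\sigma(u)$ and an exponential-moment bound on $\max_k|\langle\theta_k,\Delta\bz\rangle|$; this is where $\max_q\|\theta_q\|_2\le C\log n$ enters.

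\textbf{Mixture leakage needs a cancellation you do not have.} Even with weights of order $1/n$, ``leakage is smaller than the drift'' does not follow from moment bounds in the Gaussian-mixture case. Near the target $\lambda^\star=\log(1/\sigma_n^2)/(4\sqrt{1-\sigma_n^2})$ the drift is only of order $e^{-4\sqrt{1-\sigma_n^2}\lambda_2}/n\approx\sigma_n^2/n$, which can be as small as $n^{-2+\epsilon}$. For the term $\bbE_{\mathrm G}[w_2w_1(g_1-\mu)\partial_{\lambda_2}g_2]$, H\"older gives only $n^{-1}\|g_1-\mu\|_{L^2(\bbP_{\mathrm G})}\|\partial_{\lambda_2}g_2\|_{L^2(\bbP_{\mathrm G})}=n^{-3/2+o(1)}$, which is larger when $\epsilon<1/2$. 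The reason is that for $\lambda_2$ of order $\log(1/\sigma_n^2)$ the ratio expansion of $g_2$ under Gaussian data is useless: its normalized weights have second moment $e^{\lambda_2^2}$, so $\partial_{\lambda_2}g_2$ is not $\cO(n^{-1/2})$ there. The paper uses two facts: $\bbE_{\mathrm G}[\partial_{\lambda_2}g_2]=0$ by reflection symmetry, and a Gaussian integration-by-parts covariance bound with $\max_i|\partial g_1/\partial X_i|=n^{-1+o(1)}$. Together these give $|\bbE_{\mathrm G}[(g_1-\mu)\partial_{\lambda_2}g_2]|\le n^{-1+o(1)}$, hence leakage $n^{-2+o(1)}$.

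\textbf{The uniform drift has a gap in the middle range.} The soft-maximum picture only yields $-\tfrac{d}{d\lambda}\Sigma_g(\lambda,\lambda)\sim\sqrt3/(4\lambda^2)$ as $\lambda\to\infty$, and you check $[C^{-1},1]$ directly. A bound $\dot\lambda_2\ge1/(Cn\lambda_2^2)$ along the whole path needs $\inf_{\lambda\ge C^{-1}}\bigl(-\lambda^2\tfrac{d}{d\lambda}\Sigma_g(\lambda,\lambda)\bigr)>0$, that is, strict monotonicity of the uniform self-variance on all of $(0,\infty)$. This does not follow from the first-stage hierarchy, which is proved only on $[-1,1]$. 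The paper proves it separately (\autoref{uniform_self_variance_monotone}, via an explicit power series with nonnegative coefficients). Without it, a critical point at moderate $\lambda_2$ could stall the flow.

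\textbf{The ``main obstacle'' is misplaced.} The range $\lambda_2\le n^{1-\epsilon}$ needs no order statistics. Write the weights as $e^{\lambda_2(\xi_i-\sqrt3)}\in(0,1]$, with mean $\Theta(\lambda_2^{-1})$. The same ratio expansion then runs with relative denominator fluctuation $\cO(\sqrt{\lambda_2/n})$ and a Bernstein failure probability $e^{-n/(C\lambda_2)}$. This leaves remainders a factor $\sqrt{\lambda_2/n}\le n^{-\epsilon/2}$ below the drift. It is this effective sample size $n/\lambda_2\ge n^{\epsilon}$, not gradient concentration, that fixes the terminal value $n^{1-\epsilon}$. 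In the uniform case the per-task $\lambda_2$-gradient has $L^p$ norm of the same order as the drift, so concentration there is easy.

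\textbf{Smaller errors.}
\begin{itemize}
\item Your heuristic variance $\frac1{n\lambda}+\frac1{\lambda^2}$ is wrong. Symmetrization cancels the endpoint bias, leaving $\Sigma_g(\lambda,\lambda)/n\approx\sqrt3/(4n\lambda)$. A $\lambda^{-2}$ term would produce a drift of order $\lambda^{-3}$ with no factor $1/n$, contradicting your own $\dot\lambda_2\asymp1/(n\lambda_2^2)$.
\item In the mixture case, a uniform drift lower bound $\sigma_n^2/n$ over a distance of order $\log(1/\sigma_n^2)$ gives only hitting time $Cn\log(1/\sigma_n^2)/\sigma_n^2$. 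To get $Cn/\sigma_n^2$, integrate $\dot\lambda_2\ge e^{-4\sqrt{1-\sigma_n^2}\lambda_2}/(Cn)$.
\item $\lambda^\star$ is where $e^{-4\sqrt{1-\sigma_n^2}\lambda_2}$ drops to $\sigma_n^2$. It is not the minimizer of $\Sigma_g$, which is still decreasing there.
\end{itemize}
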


\subsection{Efficiency of the final estimator}
For every expert $i$, the error of the final estimator decomposes as
\begin{equation}
\widehat\mu-\mu
=\underbrace{g_i-\mu}_{\text{expert error}}
+\underbrace{\sum_{k\ne i}w_k(g_k-g_i)}_{\text{routing error}}
+\underbrace{\rbr{\sum_k w_k-1}g_i}_{\text{normalization error}}.
\label{eq:decomposition}
\end{equation}
The normalization error vanishes for the softmax MoE, since $\sum_k w_k=1$.
\autoref{thm:accuracy_final_estimator} bounds the routing and normalization errors in $L^2(\bbP_j)$ through $\|\bar w_{\cdot j}-e_i\|_1$.

\label{sec:statistics}
\begin{lemma}[Accuracy of the final estimator; proof in \autoref{app:accuracy}]
\label{thm:accuracy_final_estimator}
Under the data model of \autoref{adaptiveMeanEstimation}, fix $j\in\{\mathrm G,\mathrm A\}$.
Condition on the trained $(\theta,\lambda)$ and evaluate on an independent pair $(\mu,\bX)$, with
$\max_k\|\theta_k\|_2\le C\log n$.
Then, for sufficiently large $n$, every expert $i\in[2]$, and arbitrary $\lambda$,
\begin{align*}
\sqrt{\bbE_j\sbr{(\widehat\mu(\bX)-\mu)^2}}
&\le \sqrt{\bbE_j\sbr{(g_i(\bX)-\mu)^2}}
+C\|\bar w_{\cdot j}-e_i\|_1
\left(\sigma_\mu^2+
\sqrt{\bbE_j\sbr{\max_{\ell\in[n]}|X_\ell-\mu|^4}}\right)^{1/2}.
\end{align*}
\end{lemma}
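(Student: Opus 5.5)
We start from the decomposition \eqref{eq:decomposition} and apply Minkowski's inequality in $L^2(\bbP_j)$. The expert error term gives exactly $\sqrt{\bbE_j[(g_i-\mu)^2]}$. It therefore suffices to show that the routing error $\sum_{k\ne i}w_k(g_k-g_i)$ and the normalization error $(\sum_k w_k-1)g_i$ each have $L^2(\bbP_j)$ norm at most $C\|\bar w_{\cdot j}-e_i\|_1(\sigma_\mu^2+\sqrt{\bbE_j\max_\ell|X_\ell-\mu|^4})^{1/2}$.

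\textbf{Step 1: pointwise control of the gates.} We write $w_k(\bX)=\bar w_{kj}+\Delta w_k$, where $\Delta w_k$ is driven by $\Delta\bz=\bz(\bX)-\bar\bz_j$. Both the sigmoid and the softmax are Lipschitz in their logits. Since $\|\bz\|_2=\|\bar\bz_j\|_2=1$ and $\max_k\|\theta_k\|_2\le C\log n$, the logits are bounded by $C\log n$. The naive bound $|\Delta w_k|\lesssim \log n\,\|\Delta\bz\|_2$ is not proportional to $\|\bar w_{\cdot j}-e_i\|_1$, so we need a multiplicative bound instead. For the sigmoid, $\sigma'(u)=\sigma(u)(1-\sigma(u))$ gives $|\log\sigma(u)-\log\sigma(u')|\le|u-u'|$, hence
\[
w_k\le \bar w_{kj}\,e^{C\log n\|\Delta\bz\|_2},\qquad 1-w_k\le(1-\bar w_{kj})\,e^{C\log n\|\Delta\bz\|_2}.
\]
The softmax satisfies the same bounds with the constant doubled. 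Writing $D=e^{C\log n\|\Delta\bz\|_2}$, we get $|w_k-\mathbf 1\{k=i\}|\le D\,|\bar w_{kj}-\mathbf 1\{k=i\}|$ and $|\sum_k w_k-1|\le D\|\bar w_{\cdot j}-e_i\|_1$.

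\textbf{Step 2: moments of $D$.} $\bz$ is location invariant and concentrates at $\bar\bz_j$: each $h(\bX,\omega_r)-\mu$ is a ratio of empirical means whose fluctuations are $\cO(n^{-1/2})$ with sub-exponential tails, and the LayerNorm denominator is bounded away from zero with high probability because $P_d\bar{\boldsymbol\phi}_j\neq0$ (\autoref{feature_geometry}). Hence $\log n\|\Delta\bz\|_2=\cO_P(\log n/\sqrt n)$. Off a bad event of probability $n^{-c}$ for any fixed $c$, we have $D\le 2$. On the bad event we use the crude bounds $0\le w_k\le1$ and $|1-w_k|\le1$. The bad event therefore contributes at most $n^{-c/2}$ times a fourth moment via Cauchy--Schwarz. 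This is absorbed because $\|\bar w_{\cdot j}-e_i\|_1$ is at least of order $e^{-C\log n}=n^{-C'}$, a consequence of the logit bound, so choosing $c$ large makes the bad-event term a constant multiple of the main term. I expect this step to be the main obstacle: the uniformity in $\lambda$ and $\theta$, and verifying the lower bound $\|\bar w_{\cdot j}-e_i\|_1\gtrsim n^{-C'}$ carefully for both gate types, needs care.

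\textbf{Step 3: bounding the error terms.} Since $g_k$ is a convex combination of the $X_\ell$, we have $|g_k-g_i|\le 2\max_\ell|X_\ell-\mu|$ and $|g_i|\le|\mu|+\max_\ell|X_\ell-\mu|$. Then
\[
\Big\|\sum_{k\ne i}w_k(g_k-g_i)\Big\|_{L^2}\le \Big\|D\,\|\bar w_{\cdot j}-e_i\|_1\,2\max_\ell|X_\ell-\mu|\Big\|_{L^2}.
\]
Applying Cauchy--Schwarz with the bounded moments of $D$ gives $C\|\bar w_{\cdot j}-e_i\|_1(\bbE_j\max_\ell|X_\ell-\mu|^4)^{1/4}$. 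The normalization term is handled the same way. Its $|\mu|$ part contributes $\sigma_\mu$ because $\mu$ is independent of $\bz$, and its remaining part contributes the fourth-moment term. Combining, and using $\sqrt{a}+\sqrt{b}\le\sqrt2\sqrt{a+b}$, yields the stated bound.
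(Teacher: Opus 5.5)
Your proposal is correct and is essentially the paper's proof. It uses the same multiplicative sigmoid bounds $\sigma(u+v)\le e^{|v|}\sigma(u)$ and $1-\sigma(u+v)\le e^{|v|}(1-\sigma(u))$ with $|v|\le\|\theta_k\|_2\|\Delta\bz\|_2$. It uses the same polynomially rare bad event for $\|\Delta\bz\|_2$, though this should come from high-moment bounds and Markov's inequality rather than ``sub-exponential tails'', since the Gaussian-tilted summands $e^{\omega_r\xi}$ are log-normal. It also uses the same Cauchy--Schwarz step against $\bbE_j[\max_{\ell}|X_\ell-\mu|^4]$.

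The only difference is on the bad event. The paper bounds the exponential factor deterministically by $n^{2C}$ to get $\bbE_j[\exp(4\max_k|\langle\theta_k,\Delta\bz\rangle|)]\le C$. You instead use crude gate bounds and absorb the resulting terms via $\|\bar w_{\cdot j}-e_i\|_1\ge1-\bar w_{ij}\ge(1+n^{C})^{-1}$ (with $n^{2C}$ for softmax). This lower bound follows at once from $\|\bar\bz_j\|_2=1$ and the norm bound on $\theta$, and the bad event depends only on $\Delta\bz$, so everything is uniform in $\lambda$ and $\theta$. The step you flagged as the main obstacle is therefore not one.
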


\paragraph{Proof sketch of \autoref{thm:fresh_context}.}
Under the conclusions of \autoref{convergence_of_first_layer}, the moment estimates in \autoref{expert_moments_gradients}, \autoref{uniform_expert_moments_gradients}, and \autoref{mixture_expert_moments_gradients} give expert risks $(1+o(1))/n$, $\cO(n^{-2+\epsilon})$, and $\cO(\sigma_n^2/n)$, respectively.
Applying \autoref{thm:accuracy_final_estimator} and its proof with $i=1$ for $j=\mathrm G$ and $i=2$ for $j=\mathrm A$, the gate bound of \autoref{thm:softmax_dynamics}, and the sample-maximum moment bounds gives
\begin{equation}
\|\widehat\mu-g_i\|_{L^2(\bbP_j)}
=\cO\rbr{\frac{\sqrt{\sigma_\mu^2+\log n}}{n}}.
\label{eq:discrepancy}
\end{equation}
The triangle inequality then gives the stated risk bounds and preserves the Gaussian leading constant.
The fixed-location claims follow from the translation identities and normalization estimate in \autoref{pf_specialization_final_step}.

\section{Experiments}
\label{sec:experiments}

Figures~\ref{fig:joint_softmax} and~\ref{fig:joint} show joint training of the softmax MoE and GLU with the uniform alternative.
We use $m=8000$, $n=150$, $K=2$, $d=10$, and $\sigma_\mu^2=0.5$, with $600{,}000$ gradient-descent steps and constant step sizes $20$ for $\theta$ and $100$ for $\lambda$.
Both learn family-specific selection as $|\lambda_1|$ decreases toward zero and $|\lambda_2|$ grows, with greater variation in the onset of specialization across GLU runs.
Gaussian error stays near the sample-mean error, while uniform error falls below it toward the sample mid-range error.
Training details, Gaussian-mixture results, and stagewise experiments are in \autoref{app:experiments}.

\begin{figure}[htbp]
    \centering
    \includegraphics[width=\linewidth]{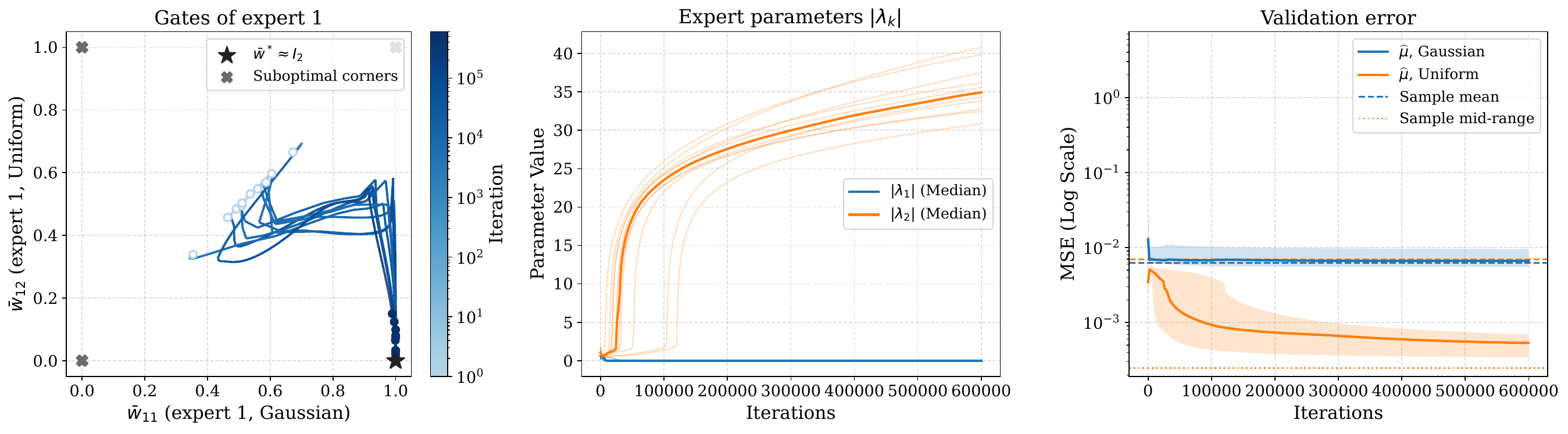}
    \caption{Softmax MoE, joint training with the uniform alternative.
Left: $(\bar w_{1\mathrm G},\bar w_{1\mathrm A})$ across runs, colored by iteration.
Middle: median $|\lambda_1|$ and $|\lambda_2|$, where thin lines are individual runs.
Right: median validation error of $\widehat\mu$ on Gaussian and uniform contexts, with the sample mean and sample mid-range as references.
Shading spans all ten runs.}
    \label{fig:joint_softmax}
\end{figure}

\begin{figure}[htbp]
    \centering
    \includegraphics[width=\linewidth]{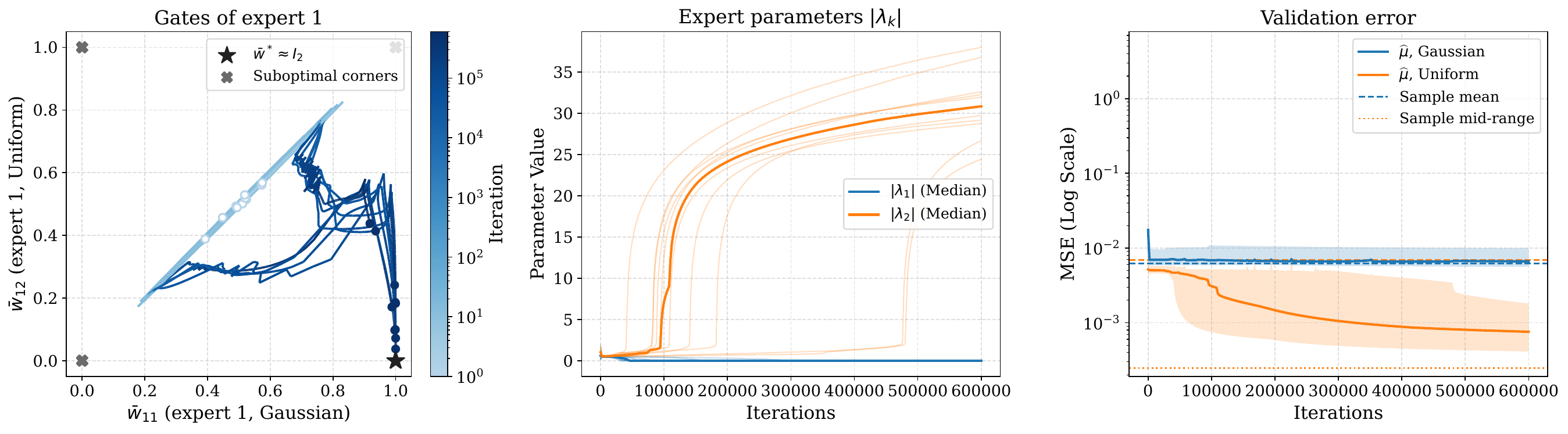}
    \caption{GLU, joint training with the uniform alternative, with panels and conventions as in Figure~\ref{fig:joint_softmax}.}
    \label{fig:joint}
\end{figure}

\section{Related work}
\label{sec:related}
\paragraph{In-context learning of statistical procedures.}
Prior-data fitted networks and tabular foundation models learn prediction procedures by pretraining on synthetic tasks~\citep{muller2024transformersbayesianinference,hollmann2025accuratepredictions,erickson2025tabarena}, and related pretrained models estimate treatment effects, survival curves, mutual information, and Poisson means~\citep{balazadeh2025causalpfn,qi2026survivalpfn,peyrard2025metastatisticallearningsupervisedlearning,teh2025empiricalbayes}.
Their focus is prediction or posterior approximation, not how pretraining attains estimation rates that depend on an unknown family.

\paragraph{Statistical theory of pretrained predictors.}
Transformers can implement in-context algorithm selection~\citep{bai2023transformersstatisticiansprovableincontext}, attain minimax rates over a known function class~\citep{kim2024transformersminimaxoptimalnonparametric}, and approximate Bayesian posterior predictors whose rates adapt to unknown structure~\citep{wakayama2025bayesian,ma2025adaptivity}; idealized pretrained Bayes estimators are similarly adaptive in empirical Bayes problems~\citep{cannella2026universalpriorssolvingempirical}.
These guarantees concern the empirical risk minimizer or an idealized predictor; we follow the training trajectory and identify the internal parameter that carries the adaptation.
\paragraph{Training dynamics of attention.}
Gradient-based analyses of attention for in-context learning~\citep{zhang2023trainedtransformerslearnlinear,huang2023incontextconvergencetransformers,kim2024transformerslearnnonlinearfeatures,oko2024pretrainedtransformer} include task allocation across softmax heads~\citep{chen2024trainingdynamicsmultiheadsoftmax}, where the task handled by each head is fixed by the task structure.
Here the family changes across contexts, so specialization must be paired with learned routing.

\paragraph{Adaptive estimation and architecture.}
Adaptivity is a classical objective in statistics~\citep{lepski1991adaptive,donoho1995adapting}, including adaptive estimation of a symmetric location~\citep{stone1975adaptive,bickel1993efficient}; the rate gap in our problem stems from the irregularity of the uniform family~\citep{ibragimov1981statistical}.
Our model is a small Set Transformer~\citep{lee2019settransformerframeworkattentionbased} whose second layer is a softmax mixture of experts~\citep{jacobs1991adaptive} or a GLU~\citep{shazeer2020glu}.
\autoref{app:related} discusses these connections in detail.

\section{Discussion}
Our current analysis is limited by making some simplifying assumptions on the training procedure as well as the architecture.
Without fixing parameters, we observe that the problem becomes much easier to learn in practice, but admits many possible optimal solutions making it much harder to study.
We also observe that separating training into two stages slows down training while making the problem much easier to study.
Future work should focus on addressing these issues.
Additionally, we believe transformers are able to do parameter estimation for much larger families of distributions.
Extending our analysis towards these would also be a natural next step.
Finally, we observe that there is an upper bound to the prior variance $\sigma_\mu^2$, which seems to be unavoidable in a simple architecture. 
Still, finding an architecture that is able to handle arbitrary $\sigma_\mu^2$ would be a clear next step.

\section{Conclusion}
We show that stagewise gradient flow on the empirical loss learns an adaptive location estimator without observing family labels.
The estimator achieves Gaussian asymptotic efficiency, nearly optimal uniform rates, and order-optimal rates for Gaussian mixtures with shrinking component variance.
Attention provides both distribution-sensitive features and candidate estimators, while the gating layer learns which expert to select.
Softmax enforces normalization and exact translation equivariance, whereas the GLU learns approximate normalization before selecting experts.

\newpage
\subsection*{Funding Statement}
Y.S. Tan was supported by NUS Startup Grant A-8000448-00-00 and the Singapore Ministry of Education (MOE) AcRF Tier 1 Grants A-8002498-00-00 and A-8004458-00-00.
Klusowski gratefully acknowledges support from the National Science Foundation through NSF CAREER award DMS-2239448 and from the Alfred P. Sloan Foundation through a Sloan Research Fellowship.
Subhro Ghosh was supported in part by the NUS Dean’s Chair Associate Professorship E-146-00-0037-01 and the
Singapore MOE grants A-8002014-00-00 and A-8003802-00-00.

\subsection*{AI Use Statement}
In this work, we used generative AI tools to assist in the writing of proofs, formulate mathematical claims and implement the experiment code.
Additionally, we used generative AI tools to polish the writing, find typesetting and grammar errors, suggest the structure of the main paper, and identify and summarize related work.
We have reviewed all AI-assisted work.
All proofs were verified by the authors.
The experiment code was reviewed and tested by the authors.
We take responsibility for the final content of this work, including text, claims or artifacts produced with the aid of generative AI.

\bibliography{references}

\begin{thebibliography}{48}
\providecommand{\natexlab}[1]{#1}
\providecommand{\url}[1]{\texttt{#1}}
\expandafter\ifx\csname urlstyle\endcsname\relax
  \providecommand{\doi}[1]{doi: #1}\else
  \providecommand{\doi}{doi: \begingroup \urlstyle{rm}\Url}\fi

\bibitem[Akyürek et~al.(2023)Akyürek, Schuurmans, Andreas, Ma, and
  Zhou]{akyurek2023learningalgorithmincontextlearning}
Ekin Akyürek, Dale Schuurmans, Jacob Andreas, Tengyu Ma, and Denny Zhou.
\newblock What learning algorithm is in-context learning? investigations with
  linear models, 2023.
\newblock URL \url{https://arxiv.org/abs/2211.15661}.

\bibitem[Bai et~al.(2023)Bai, Chen, Wang, Xiong, and
  Mei]{bai2023transformersstatisticiansprovableincontext}
Yu~Bai, Fan Chen, Huan Wang, Caiming Xiong, and Song Mei.
\newblock Transformers as statisticians: Provable in-context learning with
  in-context algorithm selection.
\newblock \emph{Advances in Neural Information Processing Systems}, 2023.

\bibitem[Balazadeh et~al.(2025)Balazadeh, Kamkari, Thomas, Li, Ma, Cresswell,
  and Krishnan]{balazadeh2025causalpfn}
Vahid Balazadeh, Hamidreza Kamkari, Valentin Thomas, Benson Li, Junwei Ma,
  Jesse~C. Cresswell, and Rahul~G. Krishnan.
\newblock {CausalPFN}: Amortized causal effect estimation via in-context
  learning, 2025.
\newblock URL \url{https://arxiv.org/abs/2506.07918}.

\bibitem[Beran(1974)]{beran1974asymptotically}
Rudolf Beran.
\newblock Asymptotically efficient adaptive rank estimates in location models.
\newblock \emph{The Annals of Statistics}, 2\penalty0 (1):\penalty0 63--74,
  1974.
\newblock \doi{10.1214/aos/1176342613}.

\bibitem[Berglund(2001)]{berglund2001perturbationtheorydynamicalsystems}
Nils Berglund.
\newblock Perturbation theory of dynamical systems, 2001.
\newblock URL \url{https://arxiv.org/abs/math/0111178}.

\bibitem[Bickel(1982)]{bickel1982adaptive}
P.~J. Bickel.
\newblock {On Adaptive Estimation}.
\newblock \emph{The Annals of Statistics}, 10\penalty0 (3):\penalty0 647 --
  671, 1982.
\newblock \doi{10.1214/aos/1176345863}.
\newblock URL \url{https://doi.org/10.1214/aos/1176345863}.

\bibitem[Bickel et~al.(1993)Bickel, Klaassen, Ritov, and
  Wellner]{bickel1993efficient}
Peter~J. Bickel, Chris A.~J. Klaassen, Ya'acov Ritov, and Jon~A. Wellner.
\newblock \emph{Efficient and Adaptive Estimation for Semiparametric Models}.
\newblock Johns Hopkins University Press, 1993.

\bibitem[Brown et~al.(2020)Brown, Mann, Ryder, Subbiah, Kaplan, Dhariwal,
  Neelakantan, Shyam, Sastry, Askell, Agarwal, Herbert-Voss, Krueger, Henighan,
  Child, Ramesh, Ziegler, Wu, Winter, Hesse, Chen, Sigler, Litwin, Gray, Chess,
  Clark, Berner, McCandlish, Radford, Sutskever, and
  Amodei]{brown2020languagemodelsfewshotlearners}
Tom~B. Brown, Benjamin Mann, Nick Ryder, Melanie Subbiah, Jared Kaplan,
  Prafulla Dhariwal, Arvind Neelakantan, Pranav Shyam, Girish Sastry, Amanda
  Askell, Sandhini Agarwal, Ariel Herbert-Voss, Gretchen Krueger, Tom Henighan,
  Rewon Child, Aditya Ramesh, Daniel~M. Ziegler, Jeffrey Wu, Clemens Winter,
  Christopher Hesse, Mark Chen, Eric Sigler, Mateusz Litwin, Scott Gray,
  Benjamin Chess, Jack Clark, Christopher Berner, Sam McCandlish, Alec Radford,
  Ilya Sutskever, and Dario Amodei.
\newblock Language models are few-shot learners, 2020.
\newblock URL \url{https://arxiv.org/abs/2005.14165}.

\bibitem[Bu et~al.(2025)Bu, Huang, Han, Nitanda, Zhang, Wong, and
  Suzuki]{bu2025provableincontextvectorarithmetic}
Dake Bu, Wei Huang, Andi Han, Atsushi Nitanda, Qingfu Zhang, Hau-San Wong, and
  Taiji Suzuki.
\newblock Provable in-context vector arithmetic via retrieving task concepts,
  2025.
\newblock URL \url{https://arxiv.org/abs/2508.09820}.

\bibitem[Cannella et~al.(2026)Cannella, Teh, Han, and
  Polyanskiy]{cannella2026universalpriorssolvingempirical}
Nick Cannella, Anzo Teh, Yanjun Han, and Yury Polyanskiy.
\newblock Universal priors: solving empirical {Bayes} via {Bayesian} inference
  and pretraining, 2026.
\newblock URL \url{https://arxiv.org/abs/2602.15136}.

\bibitem[Chen et~al.(2024{\natexlab{a}})Chen, Sheen, Wang, and
  Yang]{chen2024trainingdynamicsmultiheadsoftmax}
Siyu Chen, Heejune Sheen, Tianhao Wang, and Zhuoran Yang.
\newblock Training dynamics of multi-head softmax attention for in-context
  learning: Emergence, convergence, and optimality.
\newblock \emph{arXiv preprint arXiv:2402.19442}, 2024{\natexlab{a}}.

\bibitem[Chen et~al.(2024{\natexlab{b}})Chen, Zhao, and
  Zou]{chen2024transformersutilizemultiheadattention}
Xingwu Chen, Lei Zhao, and Difan Zou.
\newblock How transformers utilize multi-head attention in in-context learning?
  a case study on sparse linear regression.
\newblock \emph{arXiv preprint arXiv:2408.04532}, 2024{\natexlab{b}}.

\bibitem[Cranmer et~al.(2020)Cranmer, Brehmer, and Louppe]{Cranmer_2020}
Kyle Cranmer, Johann Brehmer, and Gilles Louppe.
\newblock The frontier of simulation-based inference.
\newblock \emph{Proceedings of the National Academy of Sciences}, 117\penalty0
  (48):\penalty0 30055--30062, 2020.
\newblock ISSN 1091-6490.
\newblock \doi{10.1073/pnas.1912789117}.
\newblock URL \url{http://dx.doi.org/10.1073/pnas.1912789117}.

\bibitem[Dauphin et~al.(2017)Dauphin, Fan, Auli, and
  Grangier]{dauphin2017languagemodelinggatedconvolutional}
Yann~N. Dauphin, Angela Fan, Michael Auli, and David Grangier.
\newblock Language modeling with gated convolutional networks, 2017.
\newblock URL \url{https://arxiv.org/abs/1612.08083}.

\bibitem[Donoho \& Johnstone(1995)Donoho and Johnstone]{donoho1995adapting}
David~L. Donoho and Iain~M. Johnstone.
\newblock Adapting to unknown smoothness via wavelet shrinkage.
\newblock \emph{Journal of the American Statistical Association}, 90\penalty0
  (432):\penalty0 1200--1224, 1995.
\newblock \doi{10.1080/01621459.1995.10476626}.

\bibitem[Erickson et~al.(2025)Erickson, Purucker, Tschalzev, Holzm{\"u}ller,
  Mutalik~Desai, Salinas, and Hutter]{erickson2025tabarena}
Nick Erickson, Lennart Purucker, Andrej Tschalzev, David Holzm{\"u}ller,
  Prateek Mutalik~Desai, David Salinas, and Frank Hutter.
\newblock {TabArena}: A living benchmark for machine learning on tabular data.
\newblock In \emph{Advances in Neural Information Processing Systems}, 2025.
\newblock URL \url{https://arxiv.org/abs/2506.16791}.

\bibitem[Garg et~al.(2023)Garg, Tsipras, Liang, and
  Valiant]{garg2023transformerslearnincontextcase}
Shivam Garg, Dimitris Tsipras, Percy Liang, and Gregory Valiant.
\newblock What can transformers learn in-context? a case study of simple
  function classes, 2023.
\newblock URL \url{https://arxiv.org/abs/2208.01066}.

\bibitem[Hollmann et~al.(2023)Hollmann, M{\"u}ller, Eggensperger, and
  Hutter]{hollmann2023tabpfntransformersolvessmall}
Noah Hollmann, Samuel M{\"u}ller, Katharina Eggensperger, and Frank Hutter.
\newblock {TabPFN}: A transformer that solves small tabular classification
  problems in a second.
\newblock \emph{International Conference on Learning Representations}, 2023.

\bibitem[Hollmann et~al.(2025)Hollmann, M{\"u}ller, Purucker, Krishnakumar,
  K{\"o}rfer, Hoo, Schirrmeister, and Hutter]{hollmann2025accuratepredictions}
Noah Hollmann, Samuel M{\"u}ller, Lennart Purucker, Arjun Krishnakumar, Max
  K{\"o}rfer, Shi~Bin Hoo, Robin~Tibor Schirrmeister, and Frank Hutter.
\newblock Accurate predictions on small data with a tabular foundation model.
\newblock \emph{Nature}, 637:\penalty0 319--326, 2025.
\newblock \doi{10.1038/s41586-024-08328-6}.

\bibitem[Huang et~al.(2023)Huang, Cheng, and
  Liang]{huang2023incontextconvergencetransformers}
Yu~Huang, Yuan Cheng, and Yingbin Liang.
\newblock In-context convergence of transformers, 2023.
\newblock URL \url{https://arxiv.org/abs/2310.05249}.

\bibitem[Ibragimov \& Has'minskii(1981)Ibragimov and
  Has'minskii]{ibragimov1981statistical}
I.~A. Ibragimov and R.~Z. Has'minskii.
\newblock \emph{Statistical Estimation: Asymptotic Theory}.
\newblock Springer New York, New York, NY, 1981.
\newblock ISBN 978-1-4899-0027-2.
\newblock \doi{10.1007/978-1-4899-0027-2}.

\bibitem[Ignatiadis \& Kankanala(2026)Ignatiadis and
  Kankanala]{ignatiadis2026compounddecisionsempiricalbayes}
Nikolaos Ignatiadis and Sid Kankanala.
\newblock Compound decisions and empirical {Bayes} via {Bayesian}
  nonparametrics, 2026.
\newblock URL \url{https://arxiv.org/abs/2602.20115}.

\bibitem[Jacobs et~al.(1991)Jacobs, Jordan, Nowlan, and
  Hinton]{jacobs1991adaptive}
Robert~A. Jacobs, Michael~I. Jordan, Steven~J. Nowlan, and Geoffrey~E. Hinton.
\newblock Adaptive mixtures of local experts.
\newblock \emph{Neural Computation}, 3\penalty0 (1):\penalty0 79--87, 1991.
\newblock \doi{10.1162/neco.1991.3.1.79}.

\bibitem[Kim \& Suzuki(2024)Kim and
  Suzuki]{kim2024transformerslearnnonlinearfeatures}
Juno Kim and Taiji Suzuki.
\newblock Transformers learn nonlinear features in context: Nonconvex
  mean-field dynamics on the attention landscape, 2024.
\newblock URL \url{https://arxiv.org/abs/2402.01258}.

\bibitem[Kim et~al.(2024)Kim, Nakamaki, and
  Suzuki]{kim2024transformersminimaxoptimalnonparametric}
Juno Kim, Tai Nakamaki, and Taiji Suzuki.
\newblock Transformers are minimax optimal nonparametric in-context learners,
  2024.
\newblock URL \url{https://arxiv.org/abs/2408.12186}.

\bibitem[Lee et~al.(2019)Lee, Lee, Kim, Kosiorek, Choi, and
  Teh]{lee2019settransformerframeworkattentionbased}
Juho Lee, Yoonho Lee, Jungtaek Kim, Adam~R. Kosiorek, Seungjin Choi, and
  Yee~Whye Teh.
\newblock Set transformer: A framework for attention-based
  permutation-invariant neural networks.
\newblock In \emph{Proceedings of the 36th International Conference on Machine
  Learning}, 2019.

\bibitem[Lehmann \& Casella(1998)Lehmann and Casella]{lehmann1998theory}
E.~L. Lehmann and George Casella.
\newblock \emph{Theory of Point Estimation}.
\newblock Springer Texts in Statistics. Springer New York, NY, 2 edition, 1998.
\newblock ISBN 978-0-387-98502-2.
\newblock \doi{10.1007/b98854}.

\bibitem[Lepskii(1991)]{lepski1991adaptive}
Oleg~V. Lepskii.
\newblock On a problem of adaptive estimation in {Gaussian} white noise.
\newblock \emph{Theory of Probability and Its Applications}, 35\penalty0
  (3):\penalty0 454--466, 1991.
\newblock \doi{10.1137/1135065}.

\bibitem[Li et~al.(2024)Li, Cao, Gao, He, Liu, Klusowski, Fan, and
  Wang]{li2024onelayertransformerprovablylearns}
Zihao Li, Yuan Cao, Cheng Gao, Yihan He, Han Liu, Jason~M. Klusowski, Jianqing
  Fan, and Mengdi Wang.
\newblock One-layer transformer provably learns one-nearest neighbor in
  context.
\newblock \emph{arXiv preprint arXiv:2411.10830}, 2024.

\bibitem[Ma et~al.(2025)Ma, Wang, and Samworth]{ma2025adaptivity}
Tianyi Ma, Tengyao Wang, and Richard~J. Samworth.
\newblock Optimal in-context adaptivity and distributional robustness of
  transformers, 2025.
\newblock URL \url{https://arxiv.org/abs/2510.23254}.

\bibitem[Müller et~al.(2024)Müller, Hollmann, Arango, Grabocka, and
  Hutter]{muller2024transformersbayesianinference}
Samuel Müller, Noah Hollmann, Sebastian~Pineda Arango, Josif Grabocka, and
  Frank Hutter.
\newblock Transformers can do {Bayesian} inference, 2024.
\newblock URL \url{https://arxiv.org/abs/2112.10510}.

\bibitem[Nagler(2023)]{nagler2023statisticalfoundationspriordatafitted}
Thomas Nagler.
\newblock Statistical foundations of prior-data fitted networks, 2023.
\newblock URL \url{https://arxiv.org/abs/2305.11097}.

\bibitem[Oko et~al.(2024)Oko, Song, Suzuki, and
  Wu]{oko2024pretrainedtransformer}
Kazusato Oko, Yujin Song, Taiji Suzuki, and Denny Wu.
\newblock Pretrained transformer efficiently learns low-dimensional target
  functions in-context.
\newblock In \emph{Advances in Neural Information Processing Systems}, 2024.
\newblock URL \url{https://arxiv.org/abs/2411.02544}.

\bibitem[Peyrard \& Cho(2025)Peyrard and
  Cho]{peyrard2025metastatisticallearningsupervisedlearning}
Maxime Peyrard and Kyunghyun Cho.
\newblock Meta-statistical learning: Supervised learning of statistical
  estimators.
\newblock \emph{arXiv preprint arXiv:2502.12088}, 2025.

\bibitem[Qi et~al.(2026)Qi, Balazadeh, Cooper, Greiner, and
  Krishnan]{qi2026survivalpfn}
Shi-ang Qi, Vahid Balazadeh, Michael Cooper, Russell Greiner, and Rahul~G
  Krishnan.
\newblock Survival{PFN}: Amortizing survival prediction via in-context
  {Bayesian} inference.
\newblock In \emph{2nd {ICML} Workshop on Foundation Models for Structured
  Data}, 2026.
\newblock URL \url{https://openreview.net/forum?id=PDik7bpFhE}.

\bibitem[Shao(2003)]{shao2003mathematical}
J.~Shao.
\newblock \emph{Mathematical Statistics}.
\newblock Springer Texts in Statistics. Springer, 2003.
\newblock ISBN 9780387953823.

\bibitem[Shazeer(2020)]{shazeer2020glu}
Noam Shazeer.
\newblock {GLU} variants improve transformer.
\newblock \emph{arXiv preprint arXiv:2002.05202}, 2020.

\bibitem[Shazeer et~al.(2017)Shazeer, Mirhoseini, Maziarz, Davis, Le, Hinton,
  and Dean]{shazeer2017outrageously}
Noam Shazeer, Azalia Mirhoseini, Krzysztof Maziarz, Andy Davis, Quoc Le,
  Geoffrey Hinton, and Jeff Dean.
\newblock Outrageously large neural networks: The sparsely-gated
  mixture-of-experts layer, 2017.
\newblock URL \url{https://arxiv.org/abs/1701.06538}.

\bibitem[Stein(1956)]{stein1956efficient}
Charles Stein.
\newblock Efficient nonparametric testing and estimation.
\newblock In \emph{Proceedings of the Third Berkeley Symposium on Mathematical
  Statistics and Probability, Volume 1: Contributions to the Theory of
  Statistics}, volume~1, pp.\  187--195. University of California Press, 1956.

\bibitem[Stone(1975)]{stone1975adaptive}
C.~J. Stone.
\newblock Adaptive maximum likelihood estimators of a location parameter.
\newblock \emph{Annals of Statistics}, 3\penalty0 (2):\penalty0 267--284, 1975.
\newblock \doi{10.1214/aos/1176343056}.

\bibitem[Teh et~al.(2025)Teh, Jabbour, and Polyanskiy]{teh2025empiricalbayes}
Anzo Teh, Mark Jabbour, and Yury Polyanskiy.
\newblock Solving empirical {Bayes} via transformers, 2025.
\newblock URL \url{https://arxiv.org/abs/2502.09844}.

\bibitem[Tsybakov(2009)]{tsybakov2009introduction}
Alexandre~B. Tsybakov.
\newblock \emph{Introduction to Nonparametric Estimation}.
\newblock Springer Series in Statistics. Springer New York, 2009.
\newblock \doi{10.1007/b13794}.

\bibitem[Wakayama \& Suzuki(2025)Wakayama and Suzuki]{wakayama2025bayesian}
Tomoya Wakayama and Taiji Suzuki.
\newblock In-context learning is provably {Bayesian} inference: A
  generalization theory for meta-learning, 2025.
\newblock URL \url{https://arxiv.org/abs/2510.10981}.

\bibitem[Ye et~al.(2025)Ye, Liu, Cai, Zhou, and Zhan]{ye2025closer}
Han-Jia Ye, Si-Yang Liu, Hao-Run Cai, Qi-Le Zhou, and De-Chuan Zhan.
\newblock A closer look at deep learning methods on tabular datasets, 2025.
\newblock URL \url{https://arxiv.org/abs/2407.00956}.

\bibitem[Zaheer et~al.(2017)Zaheer, Kottur, Ravanbakhsh, P{\'o}czos,
  Salakhutdinov, and Smola]{zaheer2018deepsets}
Manzil Zaheer, Satwik Kottur, Siamak Ravanbakhsh, Barnab{\'a}s P{\'o}czos,
  Ruslan Salakhutdinov, and Alexander~J. Smola.
\newblock Deep sets.
\newblock In \emph{Advances in Neural Information Processing Systems}, 2017.

\bibitem[Zhang et~al.(2025)Zhang, Tan, Tian, and Li]{zhang2025tabpfnonemodel}
Qiong Zhang, Yan~Shuo Tan, Qinglong Tian, and Pengfei Li.
\newblock {TabPFN}: One model to rule them all?
\newblock \emph{arXiv preprint arXiv:2505.20003}, 2025.
\newblock URL \url{https://arxiv.org/abs/2505.20003}.

\bibitem[Zhang et~al.(2024{\natexlab{a}})Zhang, Frei, and
  Bartlett]{zhang2023trainedtransformerslearnlinear}
Ruiqi Zhang, Spencer Frei, and Peter~L. Bartlett.
\newblock Trained transformers learn linear models in-context.
\newblock \emph{Journal of Machine Learning Research}, 25\penalty0
  (49):\penalty0 1--55, 2024{\natexlab{a}}.

\bibitem[Zhang et~al.(2024{\natexlab{b}})Zhang, Wu, and
  Bartlett]{zhang2024incontextlearninglineartransformer}
Ruiqi Zhang, Jingfeng Wu, and Peter~L. Bartlett.
\newblock In-context learning of a linear transformer block: Benefits of the
  {MLP} component and one-step {GD} initialization.
\newblock \emph{Advances in Neural Information Processing Systems},
  2024{\natexlab{b}}.

\end{thebibliography}
\bibliographystyle{iclr2027_conference}

\appendix

\section{Outline}
\label{app:assembly}
\paragraph{Proof map.}
\autoref{decomposition_of_gating_parameters} gives the reparameterization used in the dynamics, and \autoref{feature_geometry} establishes the geometry of $\bz$.
\autoref{app:mainproof} assembles the main theorem from the fresh-context risk bound in \autoref{app:accuracy}, the gating analysis in \autoref{app:routing}, and the expert-parameter analysis in \autoref{app:experts}.
\autoref{app:additional} collects auxiliary estimates, and \autoref{app:lowerbound} gives the Gaussian-mixture benchmark.
\autoref{app:experiments} reports additional experiments, and \autoref{role_of_layernorm} discusses the role of LayerNorm.
\autoref{app:related} gives an extended discussion of related work.
See \autoref{par:notation} for notation.

\section{Proof of \autoref{thm:fresh_context}}
\label{app:mainproof}

\subsection{Expert hierarchy}

\begin{definition}[Expert covariance and the better expert]
\label{betterExpert}
\label{expertHierarchy}
Let $\bSigma^{(j)}$ be the matrix with entries $\Sigma_{ik}^{(j)}=\Sigma_g(\lambda_i,\lambda_k)$ evaluated under the centered law $Q_j$.
The symmetrized covariance formula is given in \autoref{sym_expert_covariance}, with family-specific calculations in the expert-covariance appendix.
Expert $i\in\{1,2\}$ is \emph{better} for family $j\in\{\mathrm G,\mathrm A\}$ if, for the other expert $k\ne i$,
\begin{align}
    \Sigma_{ii}^{(j)} < \Sigma_{ik}^{(j)}
    < \Sigma_{kk}^{(j)}.
\end{align}
The experts satisfy the \emph{expert hierarchy} if each family has a better expert.
Its strict covariance gaps are the two positive differences in the ordering.
\end{definition}

\begin{lemma}[Gaussian distribution satisfies the expert hierarchy]
\label{expertHierarchy_normal}
Let $Y \sim \cN(0,1)$ and $\lambda_1, \lambda_2 \in \bbR$ with $0<|\lambda_1| < |\lambda_2|$.
Then
\begin{align}
    \Sigma_g(\lambda_1,\lambda_1) < \Sigma_g(\lambda_1,\lambda_2) < \Sigma_g(\lambda_2,\lambda_2).
\end{align}
\end{lemma}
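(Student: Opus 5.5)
The plan is to compute $\Sigma_g(\lambda,\lambda')$ in closed form for $Y\sim\cN(0,1)$ and reduce the claimed chain of inequalities to the monotonicity of a single scalar function of the product $u=\lambda\lambda'$. I take the symmetrized covariance formula of \autoref{sym_expert_covariance} to be the delta-method covariance of the expert influence functions. Concretely, with $M(a)=\bbE[e^{aY}]$ and $m(a)=\bbE[e^{aY}Y]/M(a)$, the tilted mean $h(\bx,a)$ has influence function
\[
\psi_a(y)=\frac{e^{ay}(y-m(a))}{M(a)} .
\]
The expert $g$ with parameter $\lambda$ therefore has influence function $\varphi_\lambda=\tfrac12(\psi_\lambda+\psi_{-\lambda})$, and
\[
\Sigma_g(\lambda,\lambda')=\bbE[\varphi_\lambda(Y)\varphi_{\lambda'}(Y)] .
\]
Checking that this is exactly the expression in \autoref{sym_expert_covariance} is the only point where I depend on the paper's conventions. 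If that formula is written in terms of $M$, $m$ and their derivatives instead, I will substitute the Gaussian values directly.

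\textbf{Gaussian computation.} For the standard Gaussian, $M(a)=e^{a^2/2}$ and $m(a)=a$, so $\psi_s(y)=e^{sy-s^2/2}(y-s)$. The key identity is
\[
\bbE[\psi_s(Y)\psi_t(Y)]=e^{st}\,\bbE_{Z\sim\cN(s+t,1)}[(Z-s)(Z-t)]=e^{st}(1+st).
\]
It follows by completing the square: $e^{(s+t)y-(s^2+t^2)/2}\phi(y)=e^{st}\phi(y-s-t)$. Expanding the four cross terms of $\varphi_\lambda\varphi_{\lambda'}$ and writing $u=\lambda\lambda'$ gives
\[
\Sigma_g(\lambda,\lambda')=\tfrac12\big[(1+u)e^{u}+(1-u)e^{-u}\big]=\cosh u+u\sinh u=:F(u).
\]

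\textbf{Monotonicity and conclusion.} $F$ is even, and $F'(u)=2\sinh u+u\cosh u>0$ for $u>0$, so $F$ is strictly increasing in $|u|$. The hypothesis $0<|\lambda_1|<|\lambda_2|$ gives $\lambda_1^2<|\lambda_1\lambda_2|<\lambda_2^2$. Hence
\[
\Sigma_g(\lambda_1,\lambda_1)=F(\lambda_1^2)<F(|\lambda_1\lambda_2|)=\Sigma_g(\lambda_1,\lambda_2)<F(\lambda_2^2)=\Sigma_g(\lambda_2,\lambda_2),
\]
which is the claimed ordering.

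The main obstacle is the first step: confirming that the paper's $\Sigma_g$ is exactly this influence-function covariance, with no extra centering or $1/n$ normalization changing the form. Once that is fixed, the rest is the Gaussian moment identity and a one-line derivative check.
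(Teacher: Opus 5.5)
Your proof is correct and follows the paper's route. The paper likewise obtains the closed form $\Sigma_g(\lambda_1,\lambda_2)=\cosh(\lambda_1\lambda_2)+\lambda_1\lambda_2\sinh(\lambda_1\lambda_2)$ from the Gaussian $\Sigma_h(s,t)=(1+st)e^{st}$ and the symmetrization $\Sigma_g(a,b)=\tfrac12(\Sigma_h(a,b)+\Sigma_h(a,-b))$, and then uses strict monotonicity in $|\lambda_1\lambda_2|$. Your influence-function convention matches the paper's exactly, so the normalization concern you flag does not arise, and your derivative check together with $\lambda_1^2<|\lambda_1\lambda_2|<\lambda_2^2$ supplies the details the paper leaves implicit.
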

\begin{proof}
    The statement follows from $\Sigma_g(\lambda_1,\lambda_2)=\cosh(\lambda_1\lambda_2)+\lambda_1\lambda_2\sinh(\lambda_1\lambda_2)$, which is strictly increasing in $|\lambda_1\lambda_2|$.
\end{proof}

\begin{lemma}[Gaussian mixture distribution satisfies the expert hierarchy]
\label{expertHierarchy_mixture}
For every fixed $0<\delta<1$ there exists $\sigma_\delta>0$ such that for all $0<\sigma_n<\sigma_\delta<1/\sqrt2$ and $0\le|\lambda_1|<|\lambda_2|\le L_{\max}=(1-\delta)\log(1/\sigma_n)$,
\begin{align}
    \Sigma_g(\lambda_1,\lambda_1) > \Sigma_g(\lambda_1,\lambda_2) > \Sigma_g(\lambda_2,\lambda_2).
\end{align}
\end{lemma}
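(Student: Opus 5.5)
The plan is to write $\Sigma_g$ for the mixture as an explicit leading term coming from the two-point law $\pm c$, where $c:=\sqrt{1-\sigma_n^2}$, plus a remainder of relative order $\sigma_n^{2\delta}$ up to polylogarithmic factors. Both inequalities will then follow from a strict monotonicity statement. I will use the delta-method representation underlying \autoref{sym_expert_covariance}: $\Sigma_g(a,b)=\bbE[\psi_a(Y)\psi_b(Y)]$, with influence function
\[
\psi_\lambda(y)=\frac{e^{\lambda y}(y-m(\lambda))+e^{-\lambda y}(y+m(\lambda))}{2M(\lambda)},
\]
where $M$ is the MGF and $m=M'/M$. This uses the symmetry of $Q_{\mathrm M}$, which gives $m(-\lambda)=-m(\lambda)$. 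For $Y=sc+\sigma_nZ$ with $s$ a Rademacher sign, we have $M(\lambda)=\cosh(\lambda c)e^{\lambda^2\sigma_n^2/2}$ and $m(\lambda)=c\tanh(\lambda c)+\lambda\sigma_n^2$. The map $\lambda\mapsto\psi_\lambda$ is even, so $\Sigma_g$ is even in each argument and I may assume $a,b\ge0$.

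\emph{Leading term.} Set $\sigma_n=0$. Then $e^{\lambda c}(1-\tanh\lambda c)=e^{-\lambda c}(1+\tanh\lambda c)=\sech(\lambda c)$, so $\psi_\lambda(\pm c)=\pm c\,\sech^2(\lambda c)$. Hence
\[
\Sigma_0(a,b)=c^2\sech^2(ac)\sech^2(bc),
\]
which is strictly decreasing in each of $|a|$ and $|b|$. Instead of comparing values directly, which fails to give a uniform gap when $|\lambda_1|\approx|\lambda_2|$, I will prove $\partial_b\Sigma_g(a,b)<0$ for all $0\le a\le L_{\max}$ and $0<b\le L_{\max}$. By the symmetry $\Sigma_g(a,b)=\Sigma_g(b,a)$, integrating this in $b$ from $|\lambda_1|$ to $|\lambda_2|$ with $a=|\lambda_1|$ gives $\Sigma_g(\lambda_1,\lambda_1)>\Sigma_g(\lambda_1,\lambda_2)$. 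Integrating the first argument from $|\lambda_1|$ to $|\lambda_2|$ with the second fixed at $|\lambda_2|$ gives $\Sigma_g(\lambda_1,\lambda_2)>\Sigma_g(\lambda_2,\lambda_2)$.

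\emph{Remainder.} With $\sigma_n>0$, expanding $\psi_a\psi_b$ produces terms $\bbE[e^{(\pm a\pm b)Y}(Y\mp m_a)(Y\mp m_b)]$. Each is an explicit combination of $\cosh$, $\sinh$ and Gaussian factors $e^{(a\pm b)^2\sigma_n^2/2}$. Since $L_{\max}^2\sigma_n^2\to0$, these Gaussian factors are $1+\cO(L_{\max}^2\sigma_n^2)$. After normalizing by $M(a)M(b)$, I expect
\[
\Sigma_g(a,b)=\Sigma_0(a,b)+\sigma_n^2\,\cO\!\rbr{(1+a^2+b^2)}+\text{(terms of size }\sigma_n^2\text{ times }\sech^2\text{ factors)}.
\]
The $\cO(\sigma_n^2)$ piece is the within-component variance that survives tilting; its natural form is about $\sigma_n^2 e^{\cO(\lambda^2\sigma_n^2)}$ and it is nearly flat in $b$. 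I will differentiate these closed forms in $b$ and bound $\partial_b$ of the remainder by $C\sigma_n^2(1+b)^2$, plus terms carrying an extra factor $\sech^2(ac)\tanh(bc)$.

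\emph{Comparison, the main obstacle.} The main derivative is
\[
\partial_b\Sigma_0=-2c^3\sech^2(ac)\sech^2(bc)\tanh(bc).
\]
For $b$ bounded away from $0$ it has size at least $\gtrsim e^{-2ac-2bc}\ge\sigma_n^{4(1-\delta)c}$. The difficulty is that this competes with a remainder derivative of size $\sigma_n^2\,\mathrm{polylog}$, and the factor $\sech^2(ac)$ could be as small as $\sigma_n^{2(1-\delta)}$ as well. So I must show that the remainder's $b$-derivative also carries a factor comparable to $\sech^2(ac)$, or that its leading $\sigma_n^2$ part is exactly independent of $b$ up to such factors. I would verify this first by expanding the remainder's $b$-dependence in powers of $e^{-2bc}$. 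With that factor available, the ratio of remainder derivative to main derivative is $\cO(\sigma_n^{2}\,\mathrm{polylog}/\sigma_n^{2(1-\delta)})=\cO(\sigma_n^{2\delta}\,\mathrm{polylog})\to0$. This fixes $\sigma_\delta$. Near $b=0$ both derivatives vanish by evenness, so I will compare $\partial_b^2$ at $b=0$ instead: the main term gives $-2c^4\sech^2(ac)$, and the remainder gives $\cO(\sigma_n^2)$ times the same factor. A Taylor argument on $[0,b_0]$ then covers small $b$.
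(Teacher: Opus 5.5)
Your overall architecture is the paper's. You split off the two-point term $c^2\sech^2(ca)\sech^2(cb)$, show its derivative in one argument is negative with size of order $e^{-4L_{\max}}$ up to polylogarithmic factors, show the remainder's derivative is smaller, and integrate one coordinate at a time. The gap is that the decisive remainder estimate is never established, and the route you suggest would not produce it.

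The bound you commit to, $|\partial_b R|\le C\sigma_n^2(1+b)^2$, is too weak. The main derivative can be as small as order $\sigma_n^{4(1-\delta)}$, so the comparison fails for every $\delta\le 1/2$, as you notice. Your proposed repair is to extract a factor $\sech^2(ca)$ from the remainder's $b$-derivative by expanding in powers of $e^{-2bc}$, but that is not the mechanism. To leading order the remainder contains $2\sigma_n^4ab\tanh(ca)\tanh(cb)$. Its $b$-derivative is about $2\sigma_n^4a\tanh(ca)$, with no decaying factor in either variable. Such terms are harmless because of their order in $\sigma_n$, not because of any hyperbolic structure. A term of the same shape at order $\sigma_n^2$ would swamp the main derivative when $\delta\le1/2$.

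The missing idea is that the order-$\sigma_n^2$ part of $\Sigma_g-\Sigma_0$ is exactly the constant $\sigma_n^2$. In your influence-function expansion, write $t=\epsilon a+\eta b$ for the $(\epsilon,\eta)$ term. After dividing by $M(a)M(b)$, its Gaussian factor is $e^{\epsilon\eta\sigma_n^2ab}$. Under the $t$-tilt, $Y-\epsilon m(a)=c(s-\epsilon\tanh(ca))+\eta\sigma_n^2b+\sigma_nZ'$ and $Y-\eta m(b)=c(s-\eta\tanh(cb))+\epsilon\sigma_n^2a+\sigma_nZ'$, with $Z'$ standard normal and independent of $s$.

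\begin{itemize}
\item The within-component variance contributes $\sigma_n^2\bigl[\cosh(\sigma_n^2ab)+\tanh(ca)\tanh(cb)\sinh(\sigma_n^2ab)\bigr]$, by the addition formula for $\cosh$.
\item Each of the two order-$\sigma_n^2$ cross terms reduces, after the $s$-average, to $\epsilon\eta$ times a factor independent of $(\epsilon,\eta)$. They therefore sum to multiples of $\sum_{\epsilon,\eta}\epsilon\eta e^{\epsilon\eta\sigma_n^2ab}=4\sinh(\sigma_n^2ab)$.
\end{itemize}

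This is exactly \eqref{eq:gsr_covariance}: $\Sigma_g=c^2\sech^2(ca)\sech^2(cb)+\sigma_n^2+R$, where every term of $R$ contains $\cosh(\sigma_n^2ab)-1$, $\sinh(\sigma_n^2ab)$, or an explicit $\sigma_n^4$. Since $\sigma_n^2L_{\max}^2\to0$, the first and second derivatives of $R$ are $\cO(\sigma_n^4(1+L_{\max})^{C})$ on $[0,L_{\max}]^2$. Because $\sigma_n^4e^{4L_{\max}}=\sigma_n^{4\delta}$, this is dominated by the main term, and your integration and small-$b$ Taylor arguments then go through. Your hoped-for bound $|\partial_bR|\lesssim\sigma_n^2(1+L_{\max})^{C}\sech^2(ca)$ does hold on this range, but only as a consequence of the $\sigma_n^4$ estimate.

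The paper also avoids your second-derivative patch at $b=0$ by differentiating in $u=\lambda^2$. There, $-\partial_u$ of the main term equals $c^4\frac{\tanh(c\sqrt u)}{c\sqrt u}\sech^2(c\sqrt u)\sech^2(c\sqrt v)\ge e^{-4L}/(4(1+L))$, uniformly down to $u=0$.
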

\begin{proof}
See \autoref{gsr_hierarchy}.
\end{proof}

\begin{lemma}[Uniform distribution satisfies the expert hierarchy]
\label{expertHierarchy_uniform}
Let $Y \sim \mathrm{Unif}[-\sqrt3,\sqrt3]$ and $\lambda_1,\lambda_2 \in [-1,1]$ with $|\lambda_1| < |\lambda_2|$.
Then
\begin{align}
    \Sigma_g(\lambda_1,\lambda_1) > \Sigma_g(\lambda_1,\lambda_2) > \Sigma_g(\lambda_2,\lambda_2).
\end{align}
\end{lemma}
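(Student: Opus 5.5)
The plan is to reduce the claim to a sign condition on one partial derivative of $\Sigma_g$, and then verify that sign from an explicit closed form under the uniform law.

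\textbf{Step 1: the influence function of an expert.} By the delta method, $h(\bx,\lambda)-m(\lambda)\approx \frac1n\sum_i e^{\lambda Y_i}(Y_i-m(\lambda))/M(\lambda)$, where $M(\lambda)=\bbE[e^{\lambda Y}]$ and $m=M'/M$. For a symmetric law $m$ is odd and $M$ is even. Symmetrizing over $\pm\lambda$ therefore gives the influence function of the expert:
\[
\varphi_\lambda(y)=\frac{y\cosh(\lambda y)-m(\lambda)\sinh(\lambda y)}{M(\lambda)},
\qquad
\Sigma_g(a,b)=\bbE[\varphi_a(Y)\varphi_b(Y)].
\]
This should coincide with the symmetrized covariance formula referenced in \autoref{betterExpert}. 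As a check, the Gaussian case reproduces $\cosh(ab)+ab\sinh(ab)$. Since $\varphi_\lambda$ is even in $\lambda$, $\Sigma_g$ is even in each argument, so I may assume $0\le a<b\le 1$.

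\textbf{Step 2: reduce the hierarchy to a derivative sign.} Suppose $\partial_t\Sigma_g(s,t)<0$ for all $s\in[0,1]$ and $t\in(0,1]$. By symmetry of $\Sigma_g$ this also gives $\partial_s\Sigma_g(s,t)<0$ for $s\in(0,1]$. Then
\[
\Sigma_g(a,a)-\Sigma_g(a,b)=-\int_a^b\partial_t\Sigma_g(a,t)\,dt>0,
\qquad
\Sigma_g(a,b)-\Sigma_g(b,b)=-\int_a^b\partial_s\Sigma_g(s,b)\,ds>0,
\]
which is exactly the required ordering.

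\textbf{Step 3: closed form for the uniform law.} Take $Y\sim\mathrm{Unif}[-\sqrt3,\sqrt3]$, so $M(\lambda)=\sinh(\sqrt3\lambda)/(\sqrt3\lambda)$ and $m(\lambda)=\coth(\sqrt3\lambda)-1/\lambda$. Expand the products $\cosh\cdot\cosh$, $\cosh\cdot\sinh$ and $\sinh\cdot\sinh$ into exponentials in $(a\pm b)y$. The integrals $\int y^2\cosh(cy)\,dy$, $\int y\sinh(cy)\,dy$ and $\int\cosh(cy)\,dy$ over $[-\sqrt3,\sqrt3]$ are elementary. This gives $\Sigma_g(a,b)$ as an explicit combination of $\sinh,\cosh$ evaluated at $\sqrt3(a\pm b)$, divided by rational factors in $a\pm b$, $a$ and $b$.

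A power series in $(a,b)$ should show $\Sigma_g(a,b)=1-\kappa a^2b^2+\text{higher order}$ with $\kappa>0$. This is the CGF analogue of the negative excess kurtosis of the uniform law, and it mirrors the $+\tfrac32a^2b^2$ term in the Gaussian case. So the sign is correct near the origin.

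\textbf{Step 4 (main obstacle): the sign on the whole square.} Since $\Sigma_g$ is even in $t$, $\partial_t\Sigma_g(s,0)=0$. I would therefore prove the stronger statement $t^{-1}\partial_t\Sigma_g(s,t)<0$ on $[0,1]^2$, which also covers the boundary case $s=0$ (i.e.\ $a=0$).

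My first attempt is to write the entire function $t^{-1}\partial_t\Sigma_g$ as a double power series in $(s^2,t^2)$. I would show the leading coefficient $-2\kappa$ dominates the sum of the absolute values of the remaining coefficients on $[0,1]^2$. The coefficients come from moments of the uniform law and decay factorially, so a tail bound after a few terms should suffice.

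If the coefficients do not have a controllable pattern, the fallback is a rigorous grid check. This needs a Lipschitz bound on $t^{-1}\partial_t\Sigma_g$ over $[0,1]^2$, obtained from the closed form, combined with evaluation at finitely many points, i.e.\ interval arithmetic. The restriction $|\lambda|\le1$ in the statement is precisely what keeps this region compact and the denominators $M(\lambda)$ bounded away from degeneracy.
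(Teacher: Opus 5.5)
Your Steps 1 and 2 are sound. Your $\varphi_\lambda$ is exactly the right object: it equals the paper's $\chi_\lambda/M_Y(\lambda)^2$. The hierarchy does follow from $\partial_t\Sigma_g(s,t)<0$. (Minor: the tilted mean is $m(\lambda)=\sqrt3\coth(\sqrt3\lambda)-1/\lambda$; you dropped a factor $\sqrt3$.)

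The gap is in Step 4, where all the work lies, and the plan there would fail.

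\emph{The expansion in Step 3 is wrong.} You have $\Sigma_g(0,b)=m'(b)=1+\tfrac12\kappa_4b^2+\cdots$ with $\kappa_4=\bbE Y^4-3=-\tfrac65$. In full, $\Sigma_g(a,b)=1-\tfrac35(a^2+b^2)+\tfrac27(a^4+b^4)+\tfrac{18}{35}a^2b^2+\cdots$. So the excess kurtosis enters through $a^2+b^2$; the Gaussian leading term $\tfrac32a^2b^2$ appears only because $\kappa_4=0$ there. Note also that your own expansion would make the leading term of $t^{-1}\partial_t\Sigma_g$ equal to $-2\kappa s^2$, which vanishes at $s=0$ and so cannot dominate there.

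\emph{Coefficient domination fails.} With the correct series, $t^{-1}\partial_t\Sigma_g(s,t)=-\tfrac65+\tfrac87t^2+\tfrac{36}{35}s^2+\cdots$. At $(1,1)$ the next two terms already sum to $\tfrac{76}{35}>\tfrac65$, while the true value there is only about $-\tfrac14$. The series cancels heavily, so domination by the leading coefficient fails on $[0,1]^2$. The coefficients also decay only geometrically, not factorially, because $M$ vanishes at $\lambda=\pm i\pi/\sqrt3$.

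\emph{The fallback is not yet a proof.} Interval arithmetic is valid in principle, since the function is analytic and strictly negative on a compact set. But as written it is an unexecuted computer check of a closed form you have not derived. Also, $|\lambda|\le1$ has nothing to do with degeneracy of $M$ ($M\ge1$ on $\bbR$); it is what makes the sign structure below hold.

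The missing idea is to skip the closed form and argue pointwise in $x$, using the representation you already have. By oddness of $\varphi_\lambda$ in $y$, $\Sigma_g(a,b)=\frac1{\sqrt3}\int_0^{\sqrt3}\varphi_a(x)\varphi_b(x)\,dx$, and
\[
\varphi_\lambda(x)=\frac{\sinh(\lambda x)}{\lambda M(\lambda)}\bigl[1-A+B\bigr],\qquad A=\sqrt3\lambda\coth(\sqrt3\lambda),\quad B=\lambda x\coth(\lambda x).
\]
The elementary inequality $0<u\coth u-v\coth v<(u^2-v^2)/3$ for $0\le v<u$ gives $0<A-1<\lambda^2$ and $0<A-B<\lambda^2(3-x^2)/3$. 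Hence for $0<x<\sqrt3$ and $0<\lambda\le1$ you get $\varphi_\lambda(x)>0$, and also
\[
\partial_\lambda\varphi_\lambda(x)=\frac{\sinh(\lambda x)}{\lambda^2M(\lambda)}\bigl[\lambda^2(x^2-3)+2(A-1)(A-B)\bigr]<-\frac{(3-x^2)\sinh(\lambda x)}{M(\lambda)}\Bigl(1-\frac{2\lambda^2}{3}\Bigr)<0.
\]
So for $0\le a<b\le1$ (with $\varphi_0(x)=x$) you have $0<\varphi_b<\varphi_a$ on $(0,\sqrt3)$. Then $\varphi_b^2<\varphi_a\varphi_b<\varphi_a^2$ pointwise, and integrating gives the hierarchy directly; this is the paper's proof. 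It also settles your Step 2 reduction, since $\partial_t\Sigma_g(s,t)=\bbE[\varphi_s(Y)\,\partial_t\varphi_t(Y)]$. Finally, it shows exactly where $|\lambda|\le1$ is used: $1-\lambda^2\ge0$ for positivity, which fails near $x=0^+$ once $\lambda$ exceeds about $1.1$.
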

\begin{proof}
See \autoref{gsr_uniform_hierarchy_bound}.
\end{proof}
In the parameter ranges above, $0<|\lambda_1|<|\lambda_2|$ makes expert $1$ better on Gaussian contexts and expert $2$ better on alternative contexts.

\subsection{Gating-layer dynamics}
\begin{lemma}[Decomposition of the gating parameters]
\label{decomposition_of_gating_parameters}
For the GLU, let $Z_{jk}=\inner{\bar\bz_j,\bar\bz_k}$ and suppose $Z\succ0$.
Each gating parameter has the decomposition
$$
\theta_q=\theta_q^\parallel+\theta_q^\perp,
\qquad
\theta_q^\parallel
=\sum_{\ell,j}
(Z^{-1})_{\ell j}\sigma^{-1}(\bar w_{qj})\bar{\bz}_\ell,
\qquad
\theta_q^\perp\perp\operatorname{span}\{\bar{\bz}_j\}_{j\in\{\mathrm G,\mathrm A\}}.
$$
Thus $\theta$ is equivalently parameterized by $(\bar w,\theta^\perp)$.
Moreover,
$$
\|\theta_q\|_2
\le\|\theta_q^\perp\|_2+\sqrt{\frac{2}{\lambda_{\min}(Z)}}
\max_j\log\frac1{\bar w'_{qj}}.
$$
\end{lemma}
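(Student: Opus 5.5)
The plan is to read $\theta_q^\parallel$ as the orthogonal projection of $\theta_q$ onto $V=\operatorname{span}\{\bar\bz_{\mathrm G},\bar\bz_{\mathrm A}\}$, and $\theta_q^\perp$ as the remainder.

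\emph{Coefficients of the projection.} Since $Z\succ0$, the vectors $\bar\bz_{\mathrm G},\bar\bz_{\mathrm A}$ are linearly independent, so $V$ is two-dimensional. The projection can therefore be written uniquely as $\theta_q^\parallel=\sum_\ell c_{q\ell}\bar\bz_\ell$. Taking inner products with $\bar\bz_j$, and using $\theta_q^\perp\perp V$, gives
\[
\inner{\bar\bz_j,\theta_q}=\sum_\ell c_{q\ell}Z_{\ell j}.
\]
By definition of the GLU gate at population features, $\bar w_{qj}=\sigma(\inner{\bar\bz_j,\theta_q})$. Because $\sigma$ is a bijection $\bbR\to(0,1)$, we get $\inner{\bar\bz_j,\theta_q}=\sigma^{-1}(\bar w_{qj})=:s_{qj}$. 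Hence $c_q=Z^{-1}s_q$, using that $Z$ is symmetric. This is exactly the stated formula for $\theta_q^\parallel$.

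\emph{Equivalence of parameterizations.} The map $\theta_q\mapsto(\bar w_{q\cdot},\theta_q^\perp)$ lands in $(0,1)^2\times V^\perp$. Its inverse is
\[
(\bar w_{q\cdot},\theta_q^\perp)\mapsto \sum_{\ell,j}(Z^{-1})_{\ell j}\sigma^{-1}(\bar w_{qj})\bar\bz_\ell+\theta_q^\perp .
\]
Both maps are smooth, so $\theta$ is equivalently parameterized by $(\bar w,\theta^\perp)$.

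\emph{Norm bound.} By orthogonality and the triangle inequality, $\|\theta_q\|_2\le\|\theta_q^\perp\|_2+\|\theta_q^\parallel\|_2$. Next,
\[
\|\theta_q^\parallel\|_2^2=c_q^\top Zc_q=s_q^\top Z^{-1}s_q\le\frac{\|s_q\|_2^2}{\lambda_{\min}(Z)}\le\frac{2\max_j s_{qj}^2}{\lambda_{\min}(Z)},
\]
where the last step uses that $s_q$ has two coordinates. Finally, $|\sigma^{-1}(w)|=|\log\frac{w}{1-w}|$. If $w\ge\tfrac12$ this equals $\log w-\log(1-w)\le\log\frac1{1-w}$, and symmetrically it is at most $\log\frac1w$ when $w\le\tfrac12$. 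So $|\sigma^{-1}(w)|\le\log\frac1{\min(w,1-w)}$. I read $\bar w'_{qj}$ as $\min(\bar w_{qj},1-\bar w_{qj})$, the distance of the gate to saturation; this notation is not defined earlier in the paper. Taking square roots then gives the claimed bound.

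The only step needing care is identifying what $\bar w'$ denotes; every other step is linear algebra, so I expect no real obstacle.
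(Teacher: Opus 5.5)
Your proof is correct and follows the paper's argument. You identify the coefficients through $\inner{\theta_q,\bar\bz_j}=\sigma^{-1}(\bar w_{qj})$, bound $s_q^\top Z^{-1}s_q\le\|s_q\|_2^2/\lambda_{\min}(Z)$, and control each logit by a logarithm.

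One notational correction: in this paper $\bar w'_{qj}$ means the sigmoid derivative $\bar w_{qj}(1-\bar w_{qj})$, not $\min(\bar w_{qj},1-\bar w_{qj})$. The convention is stated with the population gradient expansion and used throughout, e.g. $\bar w'_{qj}\ge3/16$ on $[1/4,3/4]$. Since $w(1-w)\le\min(w,1-w)$, your bound $|\sigma^{-1}(w)|\le\log\frac1{\min(w,1-w)}$ is slightly sharper and still gives the stated inequality. The paper reaches that inequality via $|\log(p/(1-p))|\le\log(1/(p(1-p)))$.
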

\begin{proof}
Taking the inner product of $\theta_q^\parallel$ with $\bar{\bz}_j$ gives $\sigma^{-1}(\bar w_{qj})=\inner{\theta_q,\bar{\bz}_j}$, so the stated decomposition is orthogonal.
Moreover,
$$
\|\theta_q^\parallel\|_2^2
=\sum_{i,j}
\sigma^{-1}(\bar w_{qi})(Z^{-1})_{ij}\sigma^{-1}(\bar w_{qj})
\le\frac1{\lambda_{\min}(Z)}
\sum_j|\sigma^{-1}(\bar w_{qj})|^2.
$$
The conclusion follows from the triangle inequality and $|\log(p/(1-p))|\le\log(1/(p(1-p)))$ for $0<p<1$.
\end{proof}
For the truncated flow, $\dot\theta_q$ lies in $\operatorname{span}\{\bar\bz_j\}_{j\in\{\mathrm G,\mathrm A\}}$, so $\theta_q^\perp(t)=\theta_q^\perp(0)$.
Under the assumptions of \autoref{population_gradient_perturbation}, the full flow also has $\|\theta_q^\perp(t)\|_2=\cO(1)$ for $\tcrit\le t\le \ttheta$.

\begin{definition}[Truncated loss]
\label{app:assembly_routing}
For fixed $\lambda$, define the bias loss
$$
\mathcal L^{\mathrm{bias}}(\theta)
=\frac{\sigma_\mu^2}{4}
\sum_j(\bar w_{1j}+\bar w_{2j}-1)^2
$$
and recall the truncated loss from \eqref{eq:shared_reduced_loss}
$$
\cL^{\mathrm{truncated}}(\theta)
=\frac1{4}\sum_j
\left[
\sigma_\mu^2\left(\sum_{k=1}^2\bar w_{kj}-1\right)^2
+\frac1n\bar w_{\cdot j}^{\top}\bSigma^{(j)}\bar w_{\cdot j}
\right].
$$
\end{definition}

\begin{lemma}[Stationary points of the truncated flow]
\label{list_of_stationary_points}
Fix $\lambda$ and $\sigma_\mu^2>0$, suppose the expert hierarchy of \autoref{expertHierarchy} holds, and let $k_j$ denote the better expert in family $j$.
Let $e_1=(1,0)^\top$, $e_2=(0,1)^\top$, and suppose $Z\succ0$.
The truncated flow is
\begin{align*}
\dot{\bar w}_{qk}
&=-\frac12\bar w'_{qk}\sum_jZ_{jk}\bar w'_{qj}S_{qj}(\bar w),\\
S_{qj}(\bar w)
&=\sigma_\mu^2\left(\sum_{i=1}^2\bar w_{ij}-1\right)
+\frac1n\sum_{i=1}^2\Sigma_{qi}^{(j)}\bar w_{ij}.
\end{align*}
Its stationary points form the set
$$
\cS=\left\{\bar w^\circ\in[0,1]^{2\times2}:
\bar w^\circ_{qj}(1-\bar w^\circ_{qj})S_{qj}(\bar w^\circ)=0
\text{ for all }q,j\right\},
$$
with
\begin{align*}
\bar w^\circ\in\cS
\quad\Longleftrightarrow\quad
&\bar w^\circ_{\cdot j}\in\Bigl\{
0,e_1,e_2,e_1+e_2,\\
&\qquad
\frac{n\sigma_\mu^2}{n\sigma_\mu^2+\Sigma_{11}^{(j)}}e_1,
\frac{n\sigma_\mu^2}{n\sigma_\mu^2+\Sigma_{22}^{(j)}}e_2
\Bigr\}
\quad(j\in\{\mathrm G,\mathrm A\}).
\end{align*}
The unique optimum is
\begin{align*}
\bar w_{\cdot j}^*
&=\frac{n\sigma_\mu^2}{n\sigma_\mu^2+\Sigma_{k_jk_j}^{(j)}}e_{k_j},\\
\operatorname*{argmin}_{\bar w\in[0,1]^{2\times2}}
\cL^{\mathrm{truncated}}(\bar w)&=\{\bar w^*\}.
\end{align*}
\end{lemma}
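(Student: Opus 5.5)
The plan is to compute the flow in the $\bar w$ coordinates, use $Z\succ0$ to turn stationarity into a per-column condition, enumerate the stationary columns using the expert hierarchy, and finally identify the minimizer by a convexity/KKT argument on each column.

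\textbf{Deriving the flow.} Write $\cL^{\mathrm{truncated}}$ as a function of $\bar w$, with $\bar w_{qj}=\sigma(\inner{\theta_q,\bar\bz_j})$. Differentiating,
\[
\partial\cL^{\mathrm{truncated}}/\partial\bar w_{qj}
=\tfrac12\Bigl[\sigma_\mu^2\bigl(\textstyle\sum_i\bar w_{ij}-1\bigr)+\tfrac1n(\bSigma^{(j)}\bar w_{\cdot j})_q\Bigr]
=\tfrac12 S_{qj},
\]
using the symmetry of $\bSigma^{(j)}$. The chain rule then gives $\nabla_{\theta_q}\cL^{\mathrm{truncated}}=\tfrac12\sum_j\bar w'_{qj}S_{qj}\bar\bz_j$, where $\bar w'=\sigma'=\bar w(1-\bar w)$. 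Since $\dot{\bar w}_{qk}=\bar w'_{qk}\inner{\dot\theta_q,\bar\bz_k}$, substituting yields the stated flow with $Z_{jk}=\inner{\bar\bz_j,\bar\bz_k}$. Consistently with \autoref{decomposition_of_gating_parameters}, $\theta^\perp$ is frozen, so $\bar w$ is a closed coordinate system for this flow.

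\textbf{Characterizing stationary points.} Set $v_q=(\bar w'_{qj}S_{qj})_j$. Then $\dot\theta_q=-\tfrac12\sum_j v_{qj}\bar\bz_j$. Because $Z\succ0$, the vectors $\bar\bz_{\mathrm G},\bar\bz_{\mathrm A}$ are linearly independent, so $\dot\theta_q=0$ iff $v_q=0$. Hence, taking the closure $[0,1]^{2\times2}$ to include the boundary limits, the stationary set is exactly $\cS$.

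\textbf{Enumerating the stationary columns.} $S_{qj}$ depends only on column $j$, so the columns decouple and each can be classified separately. Fix $j$ and drop the index.
\begin{itemize}
\item If $\bar w_{1},\bar w_{2}\in\{0,1\}$, we get the four vertices $0,e_1,e_2,e_1+e_2$.
\item If exactly one entry is interior, say $\bar w_1\in(0,1)$, then $S_1=0$.
 \begin{itemize}
 \item With $\bar w_2=0$, this gives $\bar w_1=n\sigma_\mu^2/(n\sigma_\mu^2+\Sigma_{11})$.
 \item With $\bar w_2=1$, it gives $(\sigma_\mu^2+\Sigma_{11}/n)\bar w_1=-\Sigma_{12}/n<0$, which is impossible. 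Here $\Sigma_{12}>0$ because by the hierarchy it lies strictly between the positive diagonal entries.
 \end{itemize}
 The case with $\bar w_2$ interior is symmetric.
\item If both entries are interior, then $S_1=S_2=0$, and subtracting gives $(\Sigma_{11}-\Sigma_{21})\bar w_1+(\Sigma_{12}-\Sigma_{22})\bar w_2=0$. Under either orientation of the hierarchy (\autoref{expertHierarchy}), both coefficients are nonzero with the same sign. This forces $\bar w_1=\bar w_2=0$, contradicting interiority.
\end{itemize}
This gives exactly the listed six options per column.

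\textbf{The unique optimum.} The loss splits as $\sum_j f_j(\bar w_{\cdot j})$ with
\[
f_j(w)=\tfrac{\sigma_\mu^2}{4}(\mathbf1^\top w-1)^2+\tfrac1{4n}w^\top\bSigma^{(j)}w .
\]
The Hessian $\tfrac12\bigl(\sigma_\mu^2\mathbf1\mathbf1^\top+\bSigma^{(j)}/n\bigr)$ is positive definite. Indeed, $\bSigma^{(j)}$ is a covariance matrix and hence PSD, and if $v\neq0$ is in its kernel and $\mathbf1^\top v=0$, then $v\propto(1,-1)$ and $\Sigma_{11}-\Sigma_{12}=0$, contradicting the strict hierarchy. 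So each $f_j$ is strictly convex on the box and has a unique minimizer, which is characterized by the KKT conditions.

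At $w=c\,e_{k}$ with $c=n\sigma_\mu^2/(n\sigma_\mu^2+\Sigma_{kk})\in(0,1)$, the $k$-th partial derivative vanishes by construction. For $i\ne k$, using $\sigma_\mu^2(c-1)=-c\Sigma_{kk}/n$, the $i$-th partial is proportional to $c(\Sigma_{ik}-\Sigma_{kk})/n$. This is $>0$ exactly when $k=k_j$ is the better expert, which is the correct KKT sign at the lower bound $w_i=0$. So $\bar w^*$ is the unique minimizer.

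\textbf{Main obstacle.} The delicate points are the bookkeeping of boundary values $\{0,1\}$, which are attained only in the closure rather than by finite $\theta$, and the sign facts $\Sigma_{12}>0$ and nondegeneracy of the Hessian. I would extract both from the hierarchy inequalities rather than from the family-specific formulas.
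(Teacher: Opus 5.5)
Your proposal is correct and follows essentially the same route as the paper: $Z\succ0$ reduces stationarity to $\bar w'_{qj}S_{qj}=0$, the columns decouple, the strict hierarchy (through the sign of $S_{1j}-S_{2j}$) excludes two interior gates in one column, $\Sigma_{12}>0$ excludes an interior gate paired with a gate at $1$, and the optimum follows from convexity together with the KKT signs $S_{k_jj}(\bar w^*)=0<S_{kj}(\bar w^*)$. The differences are minor. You prove the column Hessian is positive definite, ruling out the direction $(1,-1)$ by positive semidefiniteness and the strict hierarchy, where the paper invokes strict convexity only in the selected coordinate. Your ``taking the closure'' remark deserves one line: if $\bar w'_{qk}=0$, the remaining equation reads $Z_{k'k'}(\bar w'_{qk'})^2S_{qk'}=0$, and $Z_{k'k'}>0$ still forces $\bar w'_{qk'}S_{qk'}=0$.
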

\begin{proof}
See \autoref{pf_list_of_stationary_points}.
\end{proof}

\begin{lemma}[Convergence to the critical manifold]
\label{convergence_to_critical_manifold}
Consider the unperturbed flow
$$
\dot\theta=-\nabla_\theta\mathcal L^{\mathrm{bias}}.
$$
Assume $Z\succ0$, fixed $\sigma_\mu^2>0$, and initialization $\theta(0)$ such that $\max_{q,j}|\bar w_{qj}(0)-1/2|\leq\delta$ for sufficiently small $\delta$.
Then all gates remain in $[1/4,3/4]$, and
$$
\max_j|\bar w_{1j}(t)+\bar w_{2j}(t)-1|
\le C\delta e^{-\alpha t},
\qquad
\alpha=\frac{9\sigma_\mu^2\lambda_{\min}(Z)}{256}.
$$
For the truncated flow at fixed $\lambda$, there is also a time $\tcrit=\cO(\log n)$ such that all gates remain in $[1/4,3/4]$ up to $\tcrit$ and
$$
\max_j|\bar w_{1j}(\tcrit)+\bar w_{2j}(\tcrit)-1|=\cO(n^{-1}).
$$
The same conclusion at $\tcrit$ holds for the full population flow at fixed $\lambda$ if $\max_q\|\theta_q(0)\|_2$ is bounded independently of $n$.
Under this initialization bound and the data conditions of \autoref{concentration_for_theta}, fix $\epsilon>0$ and sample $m=\widetilde\Omega(n^{1+\epsilon})$ independent tasks, with $m\in\operatorname{poly}(n)$.
With probability at least $1-1/n$, the empirical flow $\dot\theta=-\nabla_\theta\hat\cL_m$ has the same conclusion at $\tcrit=\cO(\log n)$, with all gates in $[1/4,3/4]$ and $\sup_{0\le t\le\tcrit}\|\theta(t)\|_2=\cO(1)$.
Constants are independent of $n$, but may depend on $\sigma_\mu^2$, $Z$, and the fixed $\lambda,\boldsymbol\omega$.
\end{lemma}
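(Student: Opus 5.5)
We work in the $(\bar w,\theta^\perp)$ coordinates of \autoref{decomposition_of_gating_parameters}. The strategy is to get exponential decay of the normalization defect from a Lyapunov argument on the bias flow, and then transfer this to the truncated, full population and empirical flows as perturbations on a window of length $\cO(\log n)$.

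\textbf{Bias flow.} By the chain rule, as in \autoref{list_of_stationary_points} with the $1/n$ part of $S_{qj}$ removed, the flow is
\[
\dot{\bar w}_{qk}=-\tfrac12\bar w'_{qk}\sum_j Z_{jk}\bar w'_{qj}\,\sigma_\mu^2 s_j,
\qquad s_j=\bar w_{1j}+\bar w_{2j}-1 .
\]
Summing over $q$ gives $\dot s_k=-\tfrac{\sigma_\mu^2}{2}\sum_j\bigl(\sum_q\bar w'_{qk}\bar w'_{qj}\bigr)Z_{jk}s_j$. Set $D_q=\operatorname{diag}(\bar w'_{qG},\bar w'_{qA})$. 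Then the matrix $M=\sum_q D_qZD_q$ is positive definite, with $\lambda_{\min}(M)\ge 2\min_{q,j}(\bar w'_{qj})^2\lambda_{\min}(Z)$.

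As long as all gates lie in $[1/4,3/4]$, we have $\bar w'\ge 3/16$. Hence $V=\|s\|_2^2$ satisfies $\dot V\le -\sigma_\mu^2(9/256)\cdot 2\lambda_{\min}(Z)V$, i.e. $\dot V\le -2\alpha V$ with $\alpha$ as stated. This gives $\|s(t)\|\le\|s(0)\|e^{-\alpha t}\le C\delta e^{-\alpha t}$.

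To close the bootstrap, we bound the motion of each gate. We have $|\dot{\bar w}_{qk}|\le C\sigma_\mu^2\|s\|\le C\delta e^{-\alpha t}$, which integrates to a total displacement of at most $C'\delta/\alpha$. Choosing $\delta$ small keeps every gate inside $[1/4,3/4]$ for all time, and a continuity argument finishes this step.

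\textbf{Truncated flow.} The truncated flow adds the drift $\tfrac1n\Sigma^{(j)}\bar w$ to $S_{qj}$, and this drift is $\cO(1/n)$ in the bootstrap region. The same Lyapunov computation gives
\[
\dot V\le-2\alpha V+\tfrac Cn\sqrt V ,
\]
so $\sqrt V\le C\delta e^{-\alpha t}+C/(\alpha n)$. Taking $\tcrit=\alpha^{-1}\log n$ makes $s=\cO(1/n)$.

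The gate displacement now picks up an extra term of size $\cO(t/n)=\cO(\log n/n)$ up to $\tcrit$. This is negligible, so the gates stay in $[1/4,3/4]$.

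\textbf{Full population flow.} By the loss expansion behind \eqref{eq:shared_reduced_loss}, on a bounded $\theta$-set the gradient difference satisfies $\|\nabla\cL-\nabla\cL^{\mathrm{truncated}}\|=o(1/n)$. This holds because $\bz$ concentrates at $\bar\bz_j$ at rate $n^{-1/2}$: $\bbE_j\|\Delta\bz\|^2=\cO(1/n)$, and the first-order terms have mean $\cO(1/n)$. I expect this to be the main obstacle, since it requires second-order control of $\bz$'s fluctuations, multiplied by $\sigma_\mu^2$, and of cross terms between $\Delta\bz$ and $g_k-\mu$.

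To push it through, I would Taylor-expand $w_k(\bz)$ around $\bar\bz_j$ to second order, and use boundedness of $\theta$ together with moment bounds on $h$. The $\theta^\perp$ component now moves, but only at rate $o(1)\cdot\|s\|+\cO(1/n)$, so it stays bounded on $[0,\tcrit]$. The perturbed Lyapunov inequality then still yields $s(\tcrit)=\cO(1/n)$ and $\sup_t\|\theta(t)\|_2=\cO(1)$.

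\textbf{Empirical flow.} For $\|\theta\|\le C$, I would prove the uniform concentration bound
\[
\sup_\theta\|\nabla\hat\cL_m-\nabla\cL\|=\cOs\bigl(\sqrt{1/m}\bigr)=o(1/n)\quad\text{if } m=\widetilde\Omega(n^{1+\epsilon}).
\]
The ingredients are: Lipschitz dependence on $\theta$, a net argument, and sub-exponential tails of $(\widehat\mu-\mu)\nabla w$ (using the data conditions of \autoref{concentration_for_theta}). A union bound gives probability at least $1-1/n$ for $m\in\operatorname{poly}(n)$.

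Here the bound needed is $\cO(\sigma_\mu\sqrt{\log m/m})$, and we need $\sqrt{\log m/m}=o(1/n)$. This fails for $m=n^{1+\epsilon}$ unless the gradient noise variance is itself of order $1/n$. It is: since $s=\cO(\delta)$ and the bias cancels to leading order, the noise is of order $\sqrt{1/(nm)}$, which is $o(1/n)$ once $m\gg n^{1+\epsilon}$. I would exploit this variance structure, since a crude bound is insufficient.

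With the gradient error bounded by $o(1/n)$ uniformly on the region, the same Gronwall/Lyapunov perturbation argument as for the population flow concludes the proof.
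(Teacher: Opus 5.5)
Your treatment of the bias, truncated and full-population flows matches the paper's argument in substance. The paper phrases your Lyapunov computation for $s$ as a PL inequality $\|\nabla_\theta\cL^{\mathrm{bias}}\|_2^2\ge2\alpha\cL^{\mathrm{bias}}$ plus a travel-length bound. There is one slip in that part: $\nabla_\theta(\cL-\cL^{\mathrm{truncated}})$ is $\cO(n^{-1})$, not $o(n^{-1})$. Terms such as $\sigma_\mu^2\bar w'_{qj}\bar w'_{ij}(\theta_i^\top\bbE_j[\Delta\bz])\bar\bz_j$ and $\frac{\sigma_\mu^2}{n}\bar w'_{ij}\bar w'_{qj}\bSigma_Z^{(j)}\theta_i$ are genuinely of order $\sigma_\mu^2/n$. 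This is harmless, because $\cO(n^{-1})$ is all your argument uses.

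The genuine gap is in the empirical step. You feed a uniform bound $\sup_\theta\|\nabla_\theta(\hat\cL_m-\cL)\|_2=o(n^{-1})$ into the Lyapunov argument, and this bound is false early in the window, where $\max_j|s_j|$ is of order $\delta$.
\begin{itemize}
\item The per-task gradient contains $\mu^2(\sum_kw_k-1)w'_q\bz$. Its fluctuation, coming from $\mu^2$ and from the hidden family label, has standard deviation of order $\sigma_\mu^2\max_j|s_j|$.
\item The empirical error is therefore of order $\delta m^{-1/2}\approx\delta n^{-(1+\epsilon)/2}$, which is much larger than $n^{-1}$ when $\epsilon<1$.
\item Your heuristic that the noise is of order $(nm)^{-1/2}$ holds only once $\max_j|s_j|\lesssim n^{-1/2}$. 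It does not follow from $s=\cO(\delta)$.
\item Used as an additive error, this gives only $s(\tcrit)=\cOs(n^{-1}+\delta m^{-1/2})$, which misses the $\cO(n^{-1})$ target.
\end{itemize}
What you need instead is a relative bound, uniform on the bounded region:
\[
\|\nabla_\theta(\hat\cL_m-\cL)\|_2\le\cOs(n^{-1/2-\epsilon/2})\max_j|s_j|+\cOs(n^{-1-\epsilon/4}).
\]
The first term is then $o(1)\sqrt{\cL^{\mathrm{bias}}}$ and is absorbed into the contraction. This gives $\frac{d}{dt}\sqrt{\cL^{\mathrm{bias}}}\le-\frac\alpha2\sqrt{\cL^{\mathrm{bias}}}+\frac Cn$, and hence $\tcrit=2\alpha^{-1}\log n$.

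The paper obtains this bound from \autoref{concentration_for_theta}. Its prefactor $|\sum_k\bar w_k-1|+n^{-1/2+\beta}$ arises from two steps: first truncating to an event with $|g_k-\mu|,\|\Delta\bz\|_2\le n^{-1/2+\beta}$, then dividing the function class by this $\theta$-dependent scale before covering. The paper adds Hoeffding's inequality for the family proportions, which multiply conditional gradients of size $C\max_j|s_j|+C/n$. A net argument at a single worst-case scale, as you propose, cannot deliver this relative form.
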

\begin{proof}
See \autoref{pf_convergence_to_critical_manifold}.
\end{proof}

\begin{lemma}[Convergence below loss barrier]
\label{application_of_thikonovs_thm}
Fix $\lambda$ independently of $n$ and suppose the strict expert hierarchy holds.
Assume $Z\succ0$ and $Z_{\mathrm{GA}}<\min\{Z_{\mathrm{GG}},Z_{\mathrm{AA}}\}$.
Suppose
$$
\bar w(\tcrit)\in[1/4,3/4]^{2\times2},
\qquad
n\sigma_\mu^2\max_j|\bar w_{1j}(\tcrit)+\bar w_{2j}(\tcrit)-1|\le C_0,
$$
with $C_0$ independent of $n$.
For the truncated flow and fixed $\sigma_\mu^2>0$, there exist $S,\kappa>0$, independent of $n$, such that, for sufficiently large $n$,
$$
\cL^{\mathrm{truncated}}(\bar w(\tcrit+nS))
<\cL^{\mathrm{truncated}}(\bar w^\circ)-\frac\kappa n
$$
for every
\begin{equation}
\bar w^\circ\in\left\{
\begin{pmatrix}1&1\\0&0\end{pmatrix},
\begin{pmatrix}0&0\\1&1\end{pmatrix},
\begin{pmatrix}0&1\\1&0\end{pmatrix}
\right\}.
\label{eq:suboptimal_corners}
\end{equation}
For the full population flow, assume also that $\max_q\|\theta_q^\perp(\tcrit)\|_2$ is bounded independently of $n$ and $\sigma_\mu^2$.
For sufficiently small fixed $\sigma_\mu^2>0$ and then sufficiently large $n$, both $\cL^{\mathrm{truncated}}(\bar w(\tcrit+nS))$ and $\cL(\theta(\tcrit+nS))$ satisfy the same upper bound.
Under these assumptions and the data and sample-size conditions of the empirical part of \autoref{convergence_to_critical_manifold}, the same conclusions hold for the empirical flow with probability at least $1-1/n$, uniformly over possibly data-dependent initial parameters satisfying the stated bounds.
For the full population and empirical flows, $\sup_{\tcrit\le t\le\tcrit+nS}\|\theta(t)\|_2=\cO(1)$, with a bound independent of $n$ and sufficiently small $\sigma_\mu^2$.
\end{lemma}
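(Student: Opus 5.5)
The plan is a slow–fast (Tikhonov) reduction, matching the paper's own two-timescale picture. We work in the coordinates of \autoref{decomposition_of_gating_parameters}. In the truncated flow $\theta^\perp$ is frozen, so the dynamics close on $\bar w\in(0,1)^{2\times2}$ via the formula of \autoref{list_of_stationary_points}. We rescale time as $s=(t-\tcrit)/n$ and write $\varepsilon=1/(n\sigma_\mu^2)$.

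Each $\bar w_{\cdot j}$ then splits into two coordinates:
\begin{itemize}
\item the fast variable $u_j=\bar w_{1j}+\bar w_{2j}-1$, driven at rate $\sigma_\mu^2$ by the bias term;
\item the slow variable $v_j=\bar w_{1j}-\bar w_{2j}$, driven at rate $1/n$ by the variance term.
\end{itemize}
In the variable $s$ the system takes the singularly perturbed form $\varepsilon\,\dot u=F(u,v)+\cO(\varepsilon)$ and $\dot v=G(u,v)$. The critical manifold is $\{u=0\}$. It is normally attracting by the exponential contraction with rate $\alpha$ from \autoref{convergence_to_critical_manifold}, since linearizing the bias flow in $u$ gives a negative-definite factor built from $\bar w'_{qj}$ and $Z$. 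The hypothesis $n\sigma_\mu^2|u_j(\tcrit)|\le C_0$ places us within $\cO(\varepsilon)$ of the slow manifold. Tikhonov's theorem, in the form of \citet{berglund2001perturbationtheorydynamicalsystems}, then gives $\bar w(\tcrit+ns)=\bar w^{\mathrm{red}}(s)+\cO(\varepsilon)$ uniformly on $[0,S]$. Here $\bar w^{\mathrm{red}}$ solves the reduced flow on $u=0$.

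On $u=0$ each column is $\bar w_{\cdot j}=((1+v_j)/2,(1-v_j)/2)$. The loss restricted there is $\frac1{4n}\sum_j\bar w_{\cdot j}^\top\bSigma^{(j)}\bar w_{\cdot j}$. In $v_j$ this is a strictly convex quadratic by the hierarchy, and it is minimized at the endpoint $v_j=\pm1$ corresponding to $e_{k_j}$. The strict ordering $\Sigma_{ii}<\Sigma_{ik}<\Sigma_{kk}$ makes the derivative in $v_j$ nonzero with a fixed sign on all of $[-1,1]$. The reduced vector field is this gradient, projected and weighted by positive factors $\bar w'(1-\bar w')$ and by $Z$. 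The condition $Z_{\mathrm{GA}}<\min\{Z_{\mathrm{GG}},Z_{\mathrm{AA}}\}$ guarantees that the cross-coupling between the two families cannot reverse the sign of $\dot v_j$. So each $v_j$ moves monotonically toward its good endpoint, starting from the region where gates lie in $[1/4,3/4]$.

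I then choose $S$ so that at time $S$ the reduced loss sits below
\[
\min\Bigl\{\tfrac1{4n}\textstyle\sum_j\Sigma_{\cdot\cdot}\text{ at the three suboptimal corners}\Bigr\}-\tfrac{2\kappa}n .
\]
Every suboptimal corner puts some family on its worse expert. Since the column losses add, such a corner costs at least $\frac1{4n}\min_j(\Sigma^{(j)}_{kk}-\Sigma^{(j)}_{ii})$ more than the optimal corner. The reduced flow gets within half this gap in finite time $S$, independent of $n$, because after multiplying by $n$ everything is $n$-free. For the truncated loss on the true trajectory, the $\cO(\varepsilon)$ Tikhonov error contributes $\sigma_\mu^2\cO(\varepsilon^2)+\cO(\varepsilon/n)=o(1/n)$ to the loss. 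This gives the first claim once $n$ is large.

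For the full population flow, I bound $\nabla\cL-\nabla\cL^{\mathrm{truncated}}$. It consists of gate-fluctuation terms $\bbE[\Delta\bz\ldots]$ together with a term of order $\sigma_\mu^2\,\bbE\|\Delta \bz\|^2$. With $\theta$ bounded, these are $\cO(\sigma_\mu^2/n)+o(1/n)$. On the $1/n$ timescale this is a regular perturbation of relative size $\cO(\sigma_\mu^2)$ to the slow dynamics, and it also drives $\theta^\perp$, which stays $\cO(1)$ over time $nS$ by Gronwall. Choosing $\sigma_\mu^2$ small makes this perturbation smaller than the $\kappa$ margin, so the loss bound survives. The bound $\sup\|\theta\|_2=\cO(1)$ follows because the gates stay bounded away from $\{0,1\}$ over the finite horizon; combined with the bounded $\theta^\perp$, \autoref{decomposition_of_gating_parameters} then gives the claim. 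The empirical flow is handled by uniform concentration of $\nabla\hat\cL_m$ around $\nabla\cL$ over an $\cO(1)$ ball, at accuracy $o(1/n)$ given $m=\widetilde\Omega(n^{1+\epsilon})$, as in \autoref{convergence_to_critical_manifold}.

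The main obstacle is this last perturbation step. The difference $\cL-\cL^{\mathrm{truncated}}$ is the same order in $1/n$ as the slow driving force, so it cannot be treated as negligible. It has to be controlled by the smallness of $\sigma_\mu^2$ uniformly over the long horizon $nS$, while also keeping the fast contraction intact.
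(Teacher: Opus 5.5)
Your slow--fast setup matches the paper's structure: a fast normalization residual, a slow selection coordinate, $\varepsilon=1/(n\sigma_\mu^2)$, Tikhonov plus Gronwall on $[0,S]$, then an $\cO(\sigma_\mu^2)$ perturbation and concentration. The gap is in the step that carries the lemma, namely that the reduced flow started from $[1/4,3/4]$-type data drops below the loss of every suboptimal corner. That step rests on a false claim. On the manifold the reduced field is
\[
\frac{d\bar w_{1\mathrm G}}{ds}
=-\frac14\bar w'_{1\mathrm G}\bigl(Z_{\mathrm{GG}}\bar w'_{1\mathrm G}D_{\mathrm G}+Z_{\mathrm{GA}}\bar w'_{1\mathrm A}D_{\mathrm A}\bigr),
\qquad
D_j=n(S_{1j}-S_{2j}).
\]
Here $D_{\mathrm G}<0<D_{\mathrm A}$ by the hierarchy. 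When $Z_{\mathrm{GA}}>0$ (as it is here, by \autoref{feature_geometry}), the cross term therefore opposes the diagonal term. Which one wins depends on $Z_{\mathrm{GA}}\bar w'_{1\mathrm A}D_{\mathrm A}$ versus $Z_{\mathrm{GG}}\bar w'_{1\mathrm G}|D_{\mathrm G}|$.

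The hypothesis $Z_{\mathrm{GA}}<Z_{\mathrm{GG}}$ does not control this comparison. The ratio $D_{\mathrm A}/|D_{\mathrm G}|$ is set by unrelated covariance gaps of the two families, and $\bar w'_{1\mathrm G}$ may be small. For instance, take $Z_{\mathrm{GG}}=Z_{\mathrm{AA}}=1$, $Z_{\mathrm{GA}}=1/2$, and $D_{\mathrm A}=10|D_{\mathrm G}|$ at $\bar w_{1\mathrm G}=\bar w_{1\mathrm A}=1/2$. Then both $\bar w_{1\mathrm G}$ and $\bar w_{1\mathrm A}$ decrease, so the trajectory heads toward the corner selecting expert $2$ in both families. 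Hence the $v_j$ are not monotone, convergence of the reduced flow to $I_2$ is not established, and your choice of $S$ and $\kappa$ is unsupported.

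What is needed is a global argument for the reduced flow, which is what the paper does:
\begin{enumerate}
\item[(i)] $n\cL^{\mathrm{truncated}}$ decreases with dissipation at least a constant times $\sum_j(\bar w'_{1j}D_j)^2$. Since $D_j\ne0$ everywhere on the manifold, the $\omega$-limit set consists of corners, and being connected it is a single corner.
\item[(ii)] The two same-expert corners are degenerate equilibria ($\bar w'=0$ there), so linearization is inconclusive. You need an exit-time Lyapunov function such as $W=\bar w_{2\mathrm A}^{-1}\max\{1,(\bar w_{2\mathrm G}/(r_0\bar w_{2\mathrm A}))^p\}$ from \autoref{avoidance_of_suboptimal_stationary_points}. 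Here $p$ is chosen so that $pZ_{\mathrm{GG}}-(p+1)Z_{\mathrm{GA}}>0$ and $(p+1)Z_{\mathrm{AA}}-pZ_{\mathrm{GA}}>0$, and symmetrically for the other corner. This, not coordinatewise monotonicity, is where $Z_{\mathrm{GA}}<\min\{Z_{\mathrm{GG}},Z_{\mathrm{AA}}\}$ is used.
\item[(iii)] The corner that is wrong in both families is the strict maximizer on the manifold of $\sum_jq_j$, where each $q_j$ is convex because $\bSigma^{(j)}\succeq0$. A loss-decreasing trajectory from the interior therefore cannot converge to it.
\item[(iv)] Compactness of the initial set and continuous dependence then give uniform $S$ and $\kappa$.
\end{enumerate}

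Two smaller repairs are also needed. First, with $u$ itself as the fast variable, the slow equation contains a term of order $n\sigma_\mu^2u^2=u^2/\varepsilon$. Tikhonov's boundedness hypothesis therefore requires the rescaled variable $r=u/\varepsilon$. Second, the empirical gradient error is $o(1/n)$ only while $\max_j|u_j|=\cO(1/n)$. You need a first-exit bootstrap on bounded $r$ rather than uniformity over an $\cO(1)$ ball.
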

\begin{proof}
See \autoref{pf_application_of_thikonovs_thm}.
\end{proof}

\begin{lemma}[Flow at fixed $\lambda$]
\label{close_convergence_to_optimal_stationary_point}
Fix $\lambda$ independently of $n$ and suppose the strict inequalities in \autoref{expertHierarchy} hold.
Assume $Z\succ0$, $Z_{\mathrm{GA}}\ge0$, and fixed $\sigma_\mu^2>0$.
Use \autoref{list_of_stationary_points} with $k_{\mathrm G}=1$ and $k_{\mathrm A}=2$.
Suppose $\bar w(\tcrit)\in(0,1)^{2\times2}$, $\cL^{\mathrm{truncated}}(\bar w(\tcrit))=\cO(1/n)$, and, for some fixed $\kappa>0$,
$$
\cL^{\mathrm{truncated}}(\bar w(\tcrit))
\le \cL^{\mathrm{truncated}}(\bar w^\circ)
-\frac{\kappa}{n}
$$
for every $\bar w^\circ$ in \eqref{eq:suboptimal_corners}.
For sufficiently large $n$, the truncated flow reaches a finite time $\ttheta\ge\tcrit$ with
$$
\|\bar w_{\cdot j}(\ttheta)-e_{k_j}\|_2
\le\frac1n+\frac{\Sigma_{k_jk_j}^{(j)}}{n\sigma_\mu^2+\Sigma_{k_jk_j}^{(j)}}
=\cO(n^{-1}).
$$

For the full population flow, assume that $\|\theta(\tcrit)\|_2$ and $n\cL^{\mathrm{truncated}}(\bar w(\tcrit))$ are bounded independently of $n$ and $\sigma_\mu^2$.
For sufficiently small fixed $\sigma_\mu^2>0$ and then sufficiently large $n$, let $C_0$ be the constant from \autoref{population_gradient_perturbation} and set
$$
\rho=\frac{C_0}{n\sigma_\mu^2}.
$$
There is a time $\ttheta\le \tcrit+2C_0^2n^2\sigma_\mu^2$ with
$$
\|\bar w_{\cdot j}(\ttheta)-\bar w_{\cdot j}^*\|_2
\le\rho+\frac{\Sigma_{k_jk_j}^{(j)}}{n\sigma_\mu^2+\Sigma_{k_jk_j}^{(j)}},
\qquad
\cL^{\mathrm{truncated}}(\bar w(\ttheta))
\le\cL^{\mathrm{truncated}}(\bar w(\tcrit)).
$$
In particular, the gate error is $\cO(n^{-1})$, and $\sup_{\tcrit\le t\le \ttheta}\|\theta(t)\|_2=\cO(\log n)$.
\end{lemma}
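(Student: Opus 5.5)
The argument has two parts: a Lyapunov argument for the truncated flow, then a perturbation argument for the full population flow.

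\emph{Truncated flow.} Since the flow is a gradient flow, $\cL^{\mathrm{truncated}}$ is nonincreasing along it. Hence for all $t\ge\tcrit$,
$$
\cL^{\mathrm{truncated}}(\bar w(t))\le\cL^{\mathrm{truncated}}(\bar w(\tcrit))\le\cL^{\mathrm{truncated}}(\bar w^\circ)-\kappa/n
$$
for every $\bar w^\circ$ in \eqref{eq:suboptimal_corners}. I will use this to confine the trajectory to a basin. By \autoref{list_of_stationary_points}, the stationary set $\cS$ is the finite list of columns $0,e_1,e_2,e_1+e_2$ and the two rescaled $e_i$, taken independently over $j$.

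Let me evaluate $\cL^{\mathrm{truncated}}$ on $\cS$. Any column with sum far from $1$ (namely $0$, $e_1+e_2$) costs order $\sigma_\mu^2$, which is excluded because the loss is $\cO(1/n)$. The remaining stationary points have columns near $e_1$ or $e_2$. Their loss is, up to $\cO(n^{-2})$,
$$
\frac1{4n}\sum_j\Sigma^{(j)}_{k'_jk'_j},
$$
where $k'_j$ is the chosen expert. Among these, the configurations with $k'_j\ne k_j$ for some $j$ are exactly the neighbourhoods of the corners in \eqref{eq:suboptimal_corners}. Their loss exceeds the optimum by a multiple of $1/n$ given by the covariance gaps of \autoref{expertHierarchy}.

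Because the sublevel set lies $\kappa/n$ below these corners, the $\omega$-limit set of the trajectory, which consists of stationary points since $\cL^{\mathrm{truncated}}$ is analytic and bounded below, can only be $\bar w^*$. Compactness of $[0,1]^{2\times2}$ and the gates staying in $(0,1)$ make this rigorous, since the gates cannot hit the boundary in finite time.

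The optimum satisfies $\|\bar w^*_{\cdot j}-e_{k_j}\|_2=\Sigma_{k_jk_j}^{(j)}/(n\sigma_\mu^2+\Sigma^{(j)}_{k_jk_j})$. Convergence to $\bar w^*$ then gives a finite time $\ttheta$ at which the trajectory is within $1/n$ of $\bar w^*$, which is the stated bound.

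The hypothesis $Z_{\mathrm{GA}}\ge0$ enters in ruling out mixed configurations: a column that is a convex-type mix of $e_1$ and $e_2$ is not stationary, so the flow cannot stall there. I would verify this via the sign structure of $S_{qj}$ in the flow of \autoref{list_of_stationary_points}.

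\emph{Full population flow.} Write
$$
\dot\theta=-\nabla\cL^{\mathrm{truncated}}-\nabla(\cL-\cL^{\mathrm{truncated}}),
$$
and use \autoref{population_gradient_perturbation}, which I take to bound the perturbation gradient by $C_0\sigma_\mu^2\cdot(\text{scale})/n$ while $\|\theta\|_2=\cO(\log n)$. I would run a trapping argument in the $(\bar w,\theta^\perp)$ coordinates of \autoref{decomposition_of_gating_parameters}.

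Close to $\bar w^*$ the bias term is strongly convex in the sum direction, with curvature of order $\sigma_\mu^2$. Outside the ball of radius $\rho=C_0/(n\sigma_\mu^2)$ around the truncated target, this curvature dominates the perturbation, so the flow still decreases $\cL^{\mathrm{truncated}}$ there; that is the reason for this choice of $\rho$. Integrating the decrease, at a rate of at least order $\rho^2\sigma_\mu^2\cdot\bar w'^2$, over the time budget $2C_0^2n^2\sigma_\mu^2$ produces an entry time $\ttheta$ into the $\rho$-ball, and the loss is monotone up to that time.

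\emph{Norm bound and main obstacle.} The bound $\|\theta\|_2=\cO(\log n)$ follows from \autoref{decomposition_of_gating_parameters}. The gates sit at distance of order $1/n$ from $\{0,1\}$, so $\log(1/\bar w')=\cO(\log n)$, and $\theta^\perp$ stays $\cO(1)$.

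The main obstacle is the degenerate slowness near the boundary. The factors $\bar w'_{qk}$ in the flow are of order $1/n$ there, so the decrease rate must be tracked carefully to certify the $n^2\sigma_\mu^2$ time horizon. I would handle it by working in logit coordinates, where the gradient is not damped, and comparing Lyapunov decrease against $\|\nabla\|^2$ directly.
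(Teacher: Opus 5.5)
\textbf{Truncated flow.} Your argument here is a valid alternative to the paper's. The paper uses the quantitative bound of \autoref{gradient_bound_away_from_stationary_points} to force entry within $1/n$ of $\cS$ by time $\tcrit+\cO(n^3)$, then excludes the wrong stationary points with the loss barrier. You instead use that the $\bar w$-dynamics is an autonomous polynomial ODE on the invariant cube. LaSalle, finiteness of $\cS$ and connectedness of the $\omega$-limit set then make the trajectory converge to a single point of $\cS$; analyticity is not what is needed. Two corrections apply. First, loss monotonicity only forces the limit to have columns in $\{e_{k_j},\bar w^*_{\cdot j}\}$, not to equal $\bar w^*$; this is harmless for the stated bound. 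Second, mixed columns are already excluded by the hierarchy in \autoref{list_of_stationary_points}, so that is not where $Z_{\mathrm{GA}}\ge0$ enters. It would enter in showing $\dot{\bar w}_{k_jj}<0$ near an $e_{k_j}$ column, which rules out the corner-type limits.

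\textbf{Full population flow: the gap.} Your sketch gives neither the time bound $2C_0^2n^2\sigma_\mu^2$ nor $\sup_{\tcrit\le t\le\ttheta}\|\theta(t)\|_2=\cO(\log n)$, for three reasons.

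\emph{Time bound.} Near the target $\bar w'_{qj}=\cO(n^{-1})$ and the truncated gradient is only of order $\rho/n$. Any decrease rate derived from distance to $\cS$, including your $\rho^2\sigma_\mu^2\bar w'^2$, therefore needs time of order $n^3$ to dissipate an $\cO(1/n)$ excess. What is needed is a gradient inequality with exponent one, $\|\nabla_\theta\cL^{\mathrm{truncated}}\|_2\ge E(\theta)/C_0$, where $E$ is the excess loss plus the floor $1/(n^2\sigma_\mu^2)$. It holds because near the boundary both the excess loss and the gradient are linear in the small gates. Then $W=-1/E$ satisfies $\dot W\le-1/(2C_0^2)$ and $W\ge-n^2\sigma_\mu^2$, which gives the stated time.

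\emph{Relative perturbation.} Logit coordinates do not remove the damping, since $\nabla_{\theta_q}\cL^{\mathrm{truncated}}=\frac12\sum_j\bar w'_{qj}S_{qj}\bar\bz_j$. The perturbation must therefore be controlled relative to the truncated gradient, $\|\nabla_\theta(\cL-\cL^{\mathrm{truncated}})\|_2\le\frac14\|\nabla_\theta\cL^{\mathrm{truncated}}\|_2$. This uses that the perturbation carries the factor $\sum_{q,j}\bar w'_{qj}\le C_0n\|\nabla_\theta\cL^{\mathrm{truncated}}\|_2$ and that $\sigma_\mu^2$ is small. A bound of size $\sigma_\mu^2/n$ without that factor would swamp a truncated gradient of order $1/(n^2\sigma_\mu^2)$.

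\emph{Norm bound.} The relative bound itself requires $\|\theta\|_2=\cO(\log n)$ along the trajectory. The norm bound must therefore be bootstrapped over the whole interval, not read off the endpoint gates, and endpoint gates alone do not exclude transient overshoot toward the boundary. The paper obtains it from the path-length estimate $\int_{\tcrit}^t\|\dot\theta\|_2\,ds\le3C_0\log(E(\theta_0)/E(\theta(t)))$, inside the domain $\cD$ with its logarithmic norm constraint.

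\textbf{How the paper closes this.} All three ingredients are part (b) of \autoref{population_gradient_perturbation}, which the paper simply applies with $\zeta\equiv0$; you read that lemma as supplying only an absolute perturbation bound. Entry is then into the $\rho$-neighbourhood of all of $\cS$, not of $\bar w^*$. The same loss-barrier exclusion as in the truncated case, plus the triangle inequality, gives the stated gate bound.
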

\begin{proof}
See \autoref{pf_close_convergence_to_optimal_stationary_point}.
\end{proof}

\begin{lemma}[Empirical flow at fixed $\lambda$]
\label{initial_selection_empirical}
Assume the data conditions of \autoref{concentration_for_theta} and the full-flow assumptions of \autoref{close_convergence_to_optimal_stationary_point}, with $\|\theta(\tcrit)\|_2\le B$ and $n\cL^{\mathrm{truncated}}(\bar w(\tcrit))\le c$.
Let $C_0$ and $\rho$ be as there.
Fix $\epsilon>0$ and sample $m$ independent tasks according to \autoref{adaptiveMeanEstimation}, where $m\in\operatorname{poly}(n)$ and
$$
m\ge n^{1+\epsilon}.
$$
Consider $\dot\theta=-\nabla_\theta\hat\cL_m$ from possibly data-dependent parameters $\theta(\tcrit)$ satisfying these assumptions.
For sufficiently large $n$, with probability at least $1-1/n$, there is a time $\ttheta\le\tcrit+2C_0^2n^2\sigma_\mu^2$ with
$$
\|\bar w_{\cdot j}(\ttheta)-\bar w_{\cdot j}^*\|_2
\le\rho+\frac{\Sigma_{k_jk_j}^{(j)}}{n\sigma_\mu^2+\Sigma_{k_jk_j}^{(j)}},
\qquad
\cL^{\mathrm{truncated}}(\bar w(\ttheta))
\le\cL^{\mathrm{truncated}}(\bar w(\tcrit)),
$$
$$
\sup_{\tcrit\le t\le\ttheta}\|\theta(t)\|_2\le B+3C_0\log(1+cn\sigma_\mu^2)=\cO(\log n),
\qquad
\sup_{\tcrit\le t\le\ttheta}\|\theta_q^\perp(t)\|_2=\cO(1),
$$
and
$$
\cL(\theta(\ttheta))=\cO(n^{-1}),
\qquad \hat\cL_m(\theta(\ttheta))=\cO(n^{-1}).
$$
\end{lemma}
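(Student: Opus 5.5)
The plan is a perturbation argument: treat the empirical flow as the full population flow of \autoref{close_convergence_to_optimal_stationary_point} plus a small gradient error, and rerun that proof with the error absorbed into the slack already present there.

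First I establish uniform gradient concentration. Fix $R = B + 4C_0\log n$ and the ball $\Theta_R=\{\theta:\|\theta\|_2\le R\}$, with $\lambda$ frozen and $|\lambda_k|\le 1$. I aim to show that, with probability at least $1-1/n$,
\[
\sup_{\theta\in\Theta_R}\|\nabla_\theta\hat\cL_m(\theta)-\nabla_\theta\cL(\theta)\|_2\le \frac{\delta_n}{n},
\qquad
\sup_{\theta\in\Theta_R}|\hat\cL_m(\theta)-\cL(\theta)|\le \frac{\delta_n}{n},
\]
with $\delta_n=\cOs(n^{-\epsilon/2})=o(1)$.

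The per-task gradient is $(\widehat\mu-\mu)\sum_k g_k\nabla_\theta w_k$, and $\nabla_\theta w_k = w_k(1-w_k)\bz$ for the GLU. I decompose $\widehat\mu-\mu = (\sum_k w_k - 1)\mu + \sum_k w_k(g_k-\mu)$. The first piece carries the $\sigma_\mu^2$-scale bias, and its fluctuation has variance of order $\sigma_\mu^4$. The second piece has variance $\cO(1/n)$.

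A naive Bernstein bound gives error $\sqrt{\log/m}$, which is too large next to the $1/n$ signal. I therefore centre the bias part by hand. Its population mean is exactly the gradient of $\cL^{\mathrm{bias}}$ plus corrections. Its fluctuation is $\mu^2$ times a bounded function of $\bz$, and I bound it by $\sigma_\mu^2\,|\sum_k w_k-1|\,\sqrt{\log n/m}$, a multiplicative error that the flow can absorb. The variance part gives $\sqrt{\log n/(nm)} \le n^{-1-\epsilon/2}$ up to logarithms. Uniformity over $\Theta_R$ follows from a covering net: the maps are Lipschitz in $\theta$ with polylogarithmic constants, thanks to the $\log n$ radius and the sub-Gaussian tails of $\mu$ and the $X_i$ (truncated at $\cO(\sqrt{\log n})$). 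Since $m\in\operatorname{poly}(n)$, the log-covering number is $\cO(\log n)$.

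Second, I rerun the proof of \autoref{close_convergence_to_optimal_stationary_point}. That proof tracks $\cL^{\mathrm{truncated}}$ as a Lyapunov-type function, using \autoref{population_gradient_perturbation} to control $\nabla\cL-\nabla\cL^{\mathrm{truncated}}$ by $C_0$-type terms. I add the empirical error to that perturbation, which changes $C_0$ only by a factor $1+o(1)$. Because $\rho$ and the time bound are stated through $C_0$ from the population lemma, I keep the slack by applying that lemma with a slightly enlarged constant, or by noting that the additive $\delta_n/n$ error is dominated. Monotonic decrease of $\cL^{\mathrm{truncated}}$ up to the stopping time then gives $\cL^{\mathrm{truncated}}(\bar w(\ttheta))\le\cL^{\mathrm{truncated}}(\bar w(\tcrit))$ and the gate-error bound.

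For the norm bounds, the $\theta^\perp$ component evolves only through the gradient error and the $\nabla(\cL-\cL^{\mathrm{truncated}})$ term. Integrating over the horizon $2C_0^2n^2\sigma_\mu^2$ gives $\|\theta^\perp\|_2=\cO(1)$, provided the projected error is $\cO(n^{-2})$. This is the delicate point, and I expect it to be the main obstacle, since a uniform $\cO(n^{-2})$ bound needs more than $m\ge n^{1+\epsilon}$ if done crudely. I would exploit that the projected gradient is multiplied by $w_k(1-w_k)$ times the residual, which becomes small once the gates saturate. Failing that, I would use a bootstrap: the flow stays in $\Theta_R$ as long as the accumulated error is bounded, and I integrate the error against the decaying residual rather than against time.

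The parallel component is controlled by \autoref{decomposition_of_gating_parameters} via $\log(1/\bar w'_{qj})$. The gate error is at least of order $1/(n\sigma_\mu^2)$, so $\bar w'$ is bounded below by $c/(n\sigma_\mu^2)$, which gives the bound $B+3C_0\log(1+cn\sigma_\mu^2)$.

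Finally, $\cL(\theta(\ttheta))=\cO(1/n)$ follows from $\cL^{\mathrm{truncated}}\le c/n$ plus $|\cL-\cL^{\mathrm{truncated}}|=\cO(1/n)$ on $\Theta_R$, and the empirical loss bound follows from the uniform loss concentration.
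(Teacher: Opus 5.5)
There is a genuine gap in your concentration step, and it carries into every later step. You control $\nabla_\theta\hat\cL_m-\nabla_\theta\cL$ \emph{additively}, at level about $n^{-1-\epsilon/2}$ for the variance part.

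Before the stopping time $t_\rho$, the signal $\|\nabla_\theta\cL^{\mathrm{truncated}}\|_2$ is only guaranteed to be of order $\rho/n\asymp 1/(n^2\sigma_\mu^2)$. This is the scale in \autoref{gradient_bound_away_from_stationary_points} and in $\|\nabla_\theta\cL^{\mathrm{truncated}}\|_2\ge E(\theta)/C_0$ from \autoref{population_gradient_perturbation}(a). It is actually attained near the end of the flow, where the gates nearly saturate and $\bar w'_{qj}=\cO(1/(n\sigma_\mu^2))$.

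An additive error $n^{-1-\epsilon/2}$ then exceeds the signal by a factor $n^{1-\epsilon/2}$. It cannot be absorbed by enlarging $C_0$, since that only helps with multiplicative errors, and it is not dominated. Over the horizon $2C_0^2n^2\sigma_\mu^2$ it could also move $\theta$ by $n^{1-\epsilon/2}$, which destroys loss descent and both norm bounds. Your bias-part bound $\sigma_\mu^2|\sum_k w_k-1|\sqrt{\log n/m}$ has the same defect: it lacks the factor $\bar w'_{qj}$ that the bias gradient carries.

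The missing idea is a \emph{relative} bound. The per-task gradient carries $w'_q(\bX)$. Truncate to the event $\|\bz-\bar\bz_j\|_2\le n^{-1/2+\beta}$, where $w'_q/\bar w'_{qj}\le e^{rM}=\cO(1)$ because $r=\cO(\log n)$. Then cover the class normalized by $\Gamma_1\Gamma_2$, with $\Gamma_1=|\sum_k\bar w_{kj}-1|+n^{-1/2+\beta}$ and $\Gamma_2=\bar w_{qj}(1-\bar w_{qj})$; this is \autoref{concentration_for_theta}.

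On $\cD$ you have $|\sum_k\bar w_{kj}-1|=\cO(n^{-1/2})$ from $\cL^{\mathrm{truncated}}=\cO(1/n)$. Part (a) of the perturbation lemma gives $\sum_{q,j}\bar w'_{qj}\le C_0n\|\nabla_\theta\cL^{\mathrm{truncated}}\|_2$. Together, for $\beta=\min\{\epsilon,1\}/4$ and $m\ge n^{1+\epsilon}$,
\[
\|\nabla_\theta(\hat\cL_m-\cL)\|_2\le\cOs\rbr{\frac{n^{1/2+\beta}}{\sqrt m}}\|\nabla_\theta\cL^{\mathrm{truncated}}\|_2\le\min\wbr{\frac14,\frac1{\log n}}\|\nabla_\theta\cL^{\mathrm{truncated}}\|_2 .
\]
Family-proportion fluctuations are handled the same way. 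With $\zeta=\nabla_\theta(\hat\cL_m-\cL)$, part (b) of \autoref{population_gradient_perturbation} then gives the entry time, the descent and the norm bounds directly, with no re-run of the population proof.

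This also removes what you flagged as the main obstacle. The perpendicular drift does not need a uniform $\cO(n^{-2})$ error. The factor $1/\log n$ above, combined with the path-length estimate $\int\|\nabla_\theta\cL^{\mathrm{truncated}}\|_2\,dt\le2C_0\log(E(\theta_0)/E(\theta(t)))=\cO(\log n)$, gives $\int\|\zeta\|_2\,dt=\cO(1)$. That path-length estimate follows from the gradient-domination inequality $\|\nabla_\theta\cL^{\mathrm{truncated}}\|_2\ge E/C_0$.

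Your argument for the parallel component is circular. Being at Frobenius distance at least $\rho$ from $\cS$ does not bound each $\bar w'_{qj}$ from below. Along the trajectory such a lower bound follows from $\|\theta\|_2=\cO(\log n)$; it is not an input. The stated constant $B+3C_0\log(1+cn\sigma_\mu^2)$ comes from $\int\|\dot\theta\|_2\,dt\le3C_0\log(E(\theta_0)/E(\theta(t)))$.

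Finally, uniform additive loss concentration over the whole ball is false where the gates are unnormalized, since the fluctuation there is of order $\sigma_\mu^2/\sqrt m\gg1/n$. You only need it at $\theta(\ttheta)$, where $|\sum_k\bar w_{kj}-1|=\cO(n^{-1})$; this is \autoref{concentration_for_loss_local}. The bound $\cL(\theta(\ttheta))=\cO(n^{-1})$ follows most cleanly from \autoref{thm:accuracy_final_estimator} with the $\cO(n^{-1})$ gate bound.
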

\begin{proof}
See \autoref{pf_initial_selection_empirical}.
\end{proof}

\subsection{Proof of the GLU part of \autoref{thm:glu_dynamics}}
\label{pf_glu_gating}
\begin{proof}[Proof of the GLU part of \autoref{thm:glu_dynamics}]
Since $\lambda$ is fixed in the first stage, $\lambda(\ttheta)=\lambda(0)$, and $|\lambda_k(0)|\le1$ lies in the ranges of \autoref{expertHierarchy_normal}, \autoref{expertHierarchy_mixture} and \autoref{expertHierarchy_uniform} for sufficiently large $n$.
By those lemmas, $0<|\lambda_1(0)|<|\lambda_2(0)|$ makes expert $1$ better in family $\mathrm G$ and expert $2$ better in family $\mathrm A$.
Under this hierarchy and $\bar w_{1j}+\bar w_{2j}=1$, selecting the better expert minimizes the variance in family $j$.
\autoref{feature_geometry} gives $Z\succ0$, $Z_{\mathrm{GA}}\ge0$ and $Z_{\mathrm{GA}}<\min\wbr{Z_{\mathrm{GG}},Z_{\mathrm{AA}}}$ for $d\ge3$, almost surely over $\boldsymbol\omega$.

Apply \autoref{convergence_to_critical_manifold} to obtain the gate and parameter bounds at $\tcrit$.
Then \autoref{application_of_thikonovs_thm} gives, at $\tcrit+nS$, the loss bound below every suboptimal corner and $\|\theta(\tcrit+nS)\|_2=\cO(1)$.
Since the loss at each such corner is $\cO(n^{-1})$, these bounds supply the entry assumptions of \autoref{initial_selection_empirical}, with its initial time shifted to $\tcrit+nS$.
Since $m=\widetilde\Omega(n^{1+\epsilon})\ge n^{1+\epsilon/2}$ for sufficiently large $n$, apply that lemma with $\epsilon/2$ in place of $\epsilon$ to obtain a finite time $\ttheta$ with $\|\bar w_{\cdot j}(\ttheta)-\bar w_{\cdot j}^*\|_2=\cO(n^{-1})$ and $\max_q\|\theta_q(\ttheta)\|_2=\cO(\log n)$.
\autoref{list_of_stationary_points} gives $\|\bar w_{\cdot j}^*-e_{k_j}\|_1=\cO(n^{-1})$, so
$$
\max\{\|\bar w_{\cdot\mathrm G}(\ttheta)-e_1\|_1,\|\bar w_{\cdot\mathrm A}(\ttheta)-e_2\|_1\}=\cO(n^{-1}).
$$
The probability that any of these first-stage conclusions fails is $\cO(n^{-1})$.
\end{proof}

\subsection{Attention-expert dynamics}

\begin{corollary}[Accuracy with fixed gating parameters]
\label{specialization_final_step}
Under the assumptions of \autoref{convergence_of_first_layer}, stop training $\lambda$ at $\tlambda$.
For sufficiently large $n$, with probability at least $1-\eta$, the mean-squared errors on independent test samples satisfy
$$
\bbE_{\mathrm G}[(\widehat\mu-\mu)^2]=\frac{1+o(1)}n,
\qquad
\bbE_{\mathrm A}[(\widehat\mu-\mu)^2]=\begin{cases}
\cO(n^{-2+\epsilon}),&\text{Uniform},\\
\cO(\sigma_n^2/n),&\text{Gaussian mixture}.
\end{cases}
$$
\end{corollary}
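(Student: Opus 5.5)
The plan is to combine the risk decomposition of \autoref{thm:accuracy_final_estimator} with the terminal parameter values from \autoref{convergence_of_first_layer}. Taking the better expert as $i=1$ for $j=\mathrm G$ and $i=2$ for $j=\mathrm A$, the lemma gives
\[
\|\widehat\mu-\mu\|_{L^2(\bbP_j)}\le \|g_i-\mu\|_{L^2(\bbP_j)}+C\|\bar w_{\cdot j}-e_i\|_1\Bigl(\sigma_\mu^2+\|\max_\ell|X_\ell-\mu|\|_{L^4(\bbP_j)}^2\Bigr)^{1/2}.
\]
The hypothesis $\max_q\|\theta_q\|_2\le C\log n$ is available: $\theta$ is frozen during stage 2, and the bound was assumed at $\ttheta$. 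The gate error $\|\bar w_{\cdot j}-e_i\|_1\le C/n$ is likewise unchanged by the second stage. The only stage-2 quantities that enter are therefore the expert risks at $\lambda(\tlambda)$.

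\textbf{Expert risks.}
\begin{itemize}
\item \emph{Gaussian expert.} Since $|\lambda_1(\tlambda)|=o(1)$, the covariance formula gives $\Sigma_g(\lambda,\lambda)=\cosh\lambda^2+\lambda^2\sinh\lambda^2=1+O(\lambda^4)$. The Gaussian expert moment estimate (\autoref{expert_moments_gradients}) then yields $\bbE_{\mathrm G}[(g_1-\mu)^2]=(1+o(1))/n$. I will check that its $o(n^{-1})$ remainder is uniform over $|\lambda_1|\le2$.
\item \emph{Uniform expert.} Here $|\lambda_2|=n^{1-\epsilon}$. By \autoref{uniform_expert_moments_gradients}, the symmetrized tilted average concentrates on the top and bottom order statistics. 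The mid-range error is $O(n^{-2})$, and the error from softening the extremes is about $1/\lambda_2^2=n^{-2+2\epsilon}$ times a factor of $\lambda_2/n$ from the tilt weights. Together these give $\cO(n^{-2+\epsilon})$.
\item \emph{Mixture expert.} At $\lambda_2=\log(1/\sigma_n^2)/(4\sqrt{1-\sigma_n^2})$, \autoref{mixture_expert_moments_gradients} gives risk $\cO(\sigma_n^2/n)$.
\end{itemize}

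\textbf{Routing and normalization error.} The sample-maximum moments are $\bbE_j\max_\ell|X_\ell-\mu|^4=O(\log^2 n)$ for Gaussian and mixture data, and $O(1)$ for uniform data. The routing/normalization term is therefore $O(\sqrt{\sigma_\mu^2+\log n}/n)$, which is \eqref{eq:discrepancy}. Squaring after the triangle inequality, the extra contribution to the risk is:
\begin{itemize}
\item Gaussian case: cross term $O(n^{-1/2}\cdot\sqrt{\log n}/n)$ plus square $O(\log n/n^2)$, both $o(1/n)$. This preserves the constant $1+o(1)$.
\item Uniform case: square $O(n^{-2})\subset\cO(n^{-2+\epsilon})$.
\item Mixture case: square $O(\log n/n^2)$. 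This is $O(\sigma_n^2/n)$ because $\sigma_n^2\ge n^{-1+\epsilon}$ implies $\log n/n^2\le\sigma_n^2/n$ for large $n$.
\end{itemize}

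\textbf{Probability.} The probability statement follows from a union bound. \autoref{convergence_of_first_layer} holds with probability $1-\eta$, uniformly over stage-1 outputs satisfying its hypotheses, and the risk bounds above are deterministic given the parameters. The event should be allocated as $\eta/2$ to each source if needed.

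\textbf{Main obstacle.} The hardest step is the uniform expert risk at $\lambda_2=n^{1-\epsilon}$. The tilted averages $h(\bx,\pm\lambda_2)$ must be shown to average to the mid-range up to $O(n^{-1+\epsilon/2})$ in $L^2$. I would first write $h(\bx,\lambda)-\max_i x_i$ as an exponentially weighted sum over the spacings near the maximum. These spacings are approximately i.i.d.\ exponentials with scale $\asymp1/n$. When $\lambda\ll n$, roughly $n/\lambda=n^{\epsilon}$ points carry non-negligible weight. This gives an offset of deterministic order $1/\lambda$, which cancels exactly between the $\pm\lambda_2$ terms by symmetry of $Q_{\mathrm U}$. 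What remains is a fluctuation of order $\sqrt{n^\epsilon}/n$ per side, which yields the $n^{-2+\epsilon}$ rate. If that cancellation bookkeeping proves delicate, I would instead invoke the cited moment lemma directly.
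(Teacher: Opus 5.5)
Your proposal is correct and follows the paper's proof essentially step for step. You apply the discrepancy bound \eqref{eq:discrepancy} from the proof of \autoref{thm:accuracy_final_estimator}, which lets the triangle inequality run in both directions for the Gaussian constant, together with the frozen $\cO(n^{-1})$ gate error, the sample-maximum fourth moments, and the expert risks from \autoref{expert_moments_gradients}, \autoref{uniform_expert_moments_gradients} and \autoref{mixture_expert_moments_gradients}; the uniform lemma already gives $\|g_2-\mu\|_{L^2}\lesssim(n\lambda_2)^{-1/2}=n^{-1+\epsilon/2}$, so your spacing argument is unnecessary, and the only small gap is that \autoref{expert_moments_gradients} as stated gives the derivative expansion rather than $\bbE_{\mathrm G}[(g_1-\mu)^2]=\Sigma_g(\lambda_1,\lambda_1)/n+\cO(n^{-2})$, which the paper obtains uniformly for $|\lambda_1|\le2$ by integrating that expansion from $\lambda_1=0$ (where $g_1$ is the sample mean with risk exactly $1/n$), settling the uniformity check you flagged.
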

\begin{proof}
See \autoref{pf_specialization_final_step}.
\end{proof}

\subsection{Theorem assembly}
\label{pf_theorem_assembly}
\begin{proof}[Proof of \autoref{thm:fresh_context}]
Apply \autoref{thm:softmax_dynamics} for the respective architecture and freeze the gating parameters at $\ttheta$.
For the softmax MoE, represent the frozen weights by the sigmoid parameters $(\theta_1-\theta_2,\theta_2-\theta_1)$, which retain the $\cO(\log n)$ norm bound and give the same loss for every $\lambda$.
Since $\lambda(\ttheta)=\lambda(0)$, the expert parameters retain their initialization bounds, and $m=\widetilde\Omega(n^{1+\epsilon})$ satisfies the sample-size requirement of \autoref{convergence_of_first_layer}.
Its uniform conclusion applies to the data-dependent parameters at $\ttheta$ using the same training tasks and gives a finite target time $\tlambda$.
Finally, \autoref{specialization_final_step} gives the stated test risks at $\tlambda$, with joint probability at least $1-\eta$ by the failure bounds below.
The bounds at fixed $\mu$ follow from the translation identities in \autoref{pf_specialization_final_step}.

Under \autoref{TransformerSimplified}, after relabeling, $0<|\lambda_1(0)|<|\lambda_2(0)|$ almost surely and $\bbP(|\lambda_2(0)|<C^{-1})=C^{-2}$.
For sufficiently large $C>1$, independent of $n$, both $C^{-2}\le\eta/4$ and $\bbP(\max_q\|\theta_q(0)\|_2>C)\le\eta/4$.
For the GLU, choose $\sigma_\theta^2$ sufficiently small that $\bbP(\max_q\|\theta_q(0)\|_2>\min\{C,4\delta\})\le\eta/4$, where $\delta>0$ is the sufficiently small constant in \autoref{convergence_to_critical_manifold}.
On the complementary event, $\max_{q,j}|\bar w_{qj}(0)-1/2|\le\delta$, since $\|\bar\bz_j\|_2=1$ and $|\sigma(u)-1/2|\le|u|/4$.
Thus the required initialization bounds hold jointly with probability at least $1-\eta/2$.
For sufficiently large $n$, the remaining failure probabilities are at most $\eta/4$ per training stage, giving probability at least $1-\eta$ in both \autoref{thm:softmax_dynamics} and \autoref{thm:fresh_context}.
\end{proof}

\section{Accuracy of the final estimator}
\label{app:accuracy}

\begin{proof}[Proof of \autoref{thm:accuracy_final_estimator}]
It suffices to consider sigmoid gates, since for $K=2$ the softmax weights equal the sigmoid weights with parameters $(\theta_1-\theta_2,\theta_2-\theta_1)$, whose norms are still $\cO(\log n)$.
Since $h(\bX+\mu\mathbf1,a)=h(\bX,a)+\mu$, centering the feature coordinates removes the location exactly:
$$
\bz(\bX+\mu\mathbf1)=\bz(\bX).
$$
Thus $\bz$ and the gates depend only on the centered observations and are independent of $\mu$.
Each centered feature satisfies
$$
h(\bX,\omega_r)-\mu
=\frac{n^{-1}\sum_{\ell=1}^n e^{\omega_r(X_\ell-\mu)}(X_\ell-\mu)}
{n^{-1}\sum_{\ell=1}^n e^{\omega_r(X_\ell-\mu)}}.
$$
The summands have finite moments of every order, uniformly in $n$ also for the Gaussian mixture, since its laws are uniformly $1$-sub-Gaussian and $\boldsymbol\omega$ is fixed.
The population denominators are positive, and by \autoref{feature_geometry} the centered population feature norms are bounded below, so the normalization map is locally Lipschitz there.
Moment bounds for the empirical averages therefore give
$$
\bbE_j\sbr{\|\Delta\bz\|_2^{2p}}=\cO(n^{-p})
$$
for every fixed integer $p\ge1$: on a fixed neighborhood of the population averages this follows from local Lipschitz continuity, and outside that neighborhood it follows from $\|\Delta\bz\|_2\le2$ and Markov's inequality applied to the averages.
Using $\max_k\|\theta_k\|_2\le C\log n$, another application of Markov's inequality gives
$$
\bbP\sbr{\max_k|\inner{\theta_k,\Delta\bz}|>1\mid\tau=j}
=\cO\rbr{n^{-p}(\log n)^{2p}}.
$$
Since $\max_k|\inner{\theta_k,\Delta\bz}|\le2C\log n$, splitting at the event above and choosing $p>8C$ gives
$$
\bbE_j\sbr{\exp\rbr{4\max_k|\inner{\theta_k,\Delta\bz}|}}
\le e^4+\cO\rbr{n^{8C-p}(\log n)^{2p}}\le C.
$$

The sigmoid satisfies
$$
\sigma(u+v)\le e^{|v|}\sigma(u),
\qquad
1-\sigma(u+v)\le e^{|v|}(1-\sigma(u)).
$$
Applying these inequalities to each gate yields
$$
|1-w_i(\bX)|+\sum_{k\ne i}w_k(\bX)
\le
\exp\rbr{\max_k|\inner{\theta_k,\Delta\bz}|}
\|\bar w_{\cdot j}-e_i\|_1.
$$
The difference from the selected expert is
$$
\widehat\mu-g_i
=\mu\rbr{\sum_kw_k-1}
+\sum_k(w_k-\mathbf{1}_{\{k=i\}})(g_k-\mu).
$$
The gates and centered experts depend only on the centered observations and are independent of $\mu$.
Since $\bbE[\mu]=0$, averaging over $\mu$ eliminates the cross term and gives
\begin{align*}
\bbE_j\sbr{(\widehat\mu-g_i)^2}
&=\sigma_\mu^2\bbE_j\sbr{\rbr{\sum_kw_k-1}^2}
+\bbE_j\sbr{\rbr{\sum_k(w_k-\mathbf{1}_{\{k=i\}})(g_k-\mu)}^2} \\
&\le \|\bar w_{\cdot j}-e_i\|_1^2
\bbE_j\sbr{
\exp\rbr{2\max_k|\inner{\theta_k,\Delta\bz}|}
\rbr{\sigma_\mu^2+\max_{\ell\in[n]}|X_\ell-\mu|^2}} \\
&\le C\|\bar w_{\cdot j}-e_i\|_1^2
\left(\sigma_\mu^2+
\sqrt{\bbE_j\sbr{\max_{\ell\in[n]}|X_\ell-\mu|^4}}\right).
\end{align*}
The first inequality uses $|g_k-\mu|\le\max_\ell|X_\ell-\mu|$, which holds for every $\lambda_k$, and the second uses Cauchy--Schwarz and the exponential moment bound above.
The triangle inequality in $L^2$ proves the claim.
\end{proof}

The bound on $\bbE_j\sbr{(\widehat\mu-g_i)^2}$ is quadratic in $\|\bar w_{\cdot j}-e_i\|_1$ and uniform over $\lambda$.
It therefore applies throughout training of $\lambda$ after $\theta$ is fixed.

\section{Gating-layer dynamics}
\label{app:routing}
\subsection{Derivation of the population gradient flow}
\begin{remark}[Layer normalization]
Since the centered population feature norms are bounded below by \autoref{feature_geometry}, layer normalization has bounded derivatives on a fixed neighborhood of each such vector.
Composing the moment ratios with this map therefore preserves the orders in $n$ of the Taylor expansions and their remainder bounds in the following gradient and loss calculations, although their coefficients change.
Outside this neighborhood, the moment bounds for the empirical averages and $\|\bz\|_2=1$ control the contribution.
\end{remark}
\begin{lemma}[Population gradient expansion]
Let $\theta_q$ be the gating parameter vector for expert $q\in\{1,2\}$, and let the loss function $\mathcal{L}(\theta)$ be defined as:
\begin{align*}
\mathcal{L}(\theta) &= \frac{1}{2} \mathbb{E} \sbr{ \rbr{ \sum_{i=1}^2 w_i(\bX) g_i - \mu }^2 }
\end{align*}
Then
\begin{align*}
\nabla_{\theta_q} \mathcal{L}(\theta)
&= \frac{1}{2} \sum_j \Biggl\{ \sigma_\mu^2 \left( \sum_{i=1}^2 \bar{w}_{ij} - 1 \right) \bar{w}'_{qj} \bar{\bz}_j + \frac{\bar{w}'_{qj}}{n} \sum_{i=1}^2 \bar{w}_{ij} \Sigma_{qi}^{(j)} \bar{\bz}_j \\
&\quad + \sigma_\mu^2\bar w'_{qj}\bar\bz_j
\sum_{i=1}^2\bar w'_{ij}\theta_i^\top\bbE_j[\Delta\bz]\\
&\quad + \sigma_\mu^2\rbr{\sum_{i=1}^2\bar w_{ij}-1}
\left[\bar w''_{qj}\bar\bz_j\theta_q^\top\bbE_j[\Delta\bz]
+\bar w'_{qj}\bbE_j[\Delta\bz]\right]\\
&\quad + \frac{\sigma_\mu^2}{n} \sum_{i=1}^2 \bar{w}'_{ij}
\left[ \bar{w}''_{qj}
\left(\theta_q^\top \bSigma_Z^{(j)} \theta_i\right) \bar{\bz}_j
+ \bar{w}'_{qj} \bSigma_Z^{(j)} \theta_i  \right]
+ \frac{\sigma_\mu^2}{n}
\bar{w}''_{qj}
\rbr{\sum_{i=1}^2 \bar{w}_{ij} - 1}
\bSigma_Z^{(j)} \theta_q
\\
&\quad + \frac{\sigma_\mu^2}{2n}\bar{w}'_{qj}
\left[\sum_{i=1}^2 \bar{w}''_{ij}
\left(\theta_i^\top\bSigma_Z^{(j)}\theta_i\right)\right]\bar{\bz}_j\\
&\quad + \frac{\sigma_\mu^2}{2n}
\left( \sum_{i=1}^2 \bar{w}_{ij} - 1 \right)
\bar{w}'''_{qj}
\left( \theta_q^\top \bSigma_Z^{(j)} \theta_q \right) \bar{\bz}_j
+ \bar{w}'_{qj} \mathcal{O}(n^{-3/2})
\Biggr\}
\end{align*}
where
$\bar{w}_{ij} = \sigma(\inner{\theta_i,\bar\bz_j})$,
$\bar{\bz}_{j}$ is obtained by applying layer normalization to the population moment ratios,
$\Delta w_i = w_i(\bz) - \bar{w}_{ij}$,
and $\Delta w'_q = w'_q(\bz) - \bar{w}'_{qj}$.
Primes on gates denote derivatives of the sigmoid with respect to its scalar logit, so $w_q'=w_q(1-w_q)$, with the same convention for population gates and higher derivatives.
The matrix $\bSigma^{(j)}$ is defined in \autoref{betterExpert}, and $\bSigma_Z^{(j)}$ is the asymptotic covariance matrix of $\sqrt n(\bz-\bar\bz_j)$ under family $j$.
\end{lemma}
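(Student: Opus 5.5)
We differentiate under the expectation and then Taylor-expand the gates around their population-moment values. Differentiating gives
\[
\nabla_{\theta_q}\cL=\bbE\sbr{(\widehat\mu-\mu)\,w_q'(\bX)\,g_q\,\bz(\bX)}.
\]
We condition on the family $j$ and write $\widehat\mu-\mu=\mu(\sum_iw_i-1)+\sum_iw_i(g_i-\mu)$ and $g_q=\mu+(g_q-\mu)$. By the translation invariance $\bz(\bX+\mu\mathbf1)=\bz(\bX)$, shown in the proof of \autoref{thm:accuracy_final_estimator}, the gates and the centered experts $g_i-\mu$ are independent of $\mu$. Since $\bbE[\mu]=0$ and $\bbE[\mu^2]=\sigma_\mu^2$, the cross terms vanish. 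The gradient therefore splits into a \emph{bias part},
\[
\sigma_\mu^2\,\bbE_j\sbr{\rbr{\textstyle\sum_iw_i-1}w_q'\bz},
\]
and a \emph{variance part},
\[
\bbE_j\sbr{\textstyle\sum_iw_i(g_i-\mu)(g_q-\mu)w_q'\bz}.
\]
The overall factor $\tfrac12\sum_j$ comes from $\bbP(\tau=j)=\tfrac12$.

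\textbf{Variance part.} We replace $w_i,w_q'$ and $\bz$ by their population values $\bar w_{ij},\bar w'_{qj},\bar\bz_j$. The leading term is then $\bar w'_{qj}\bar\bz_j\sum_i\bar w_{ij}\bbE_j[(g_i-\mu)(g_q-\mu)]$, and the definition of $\bSigma^{(j)}$ gives this as $\bar w'_{qj}\bar\bz_j\sum_i\bar w_{ij}\Sigma^{(j)}_{qi}/n$ up to $o(1/n)$. The correction from fluctuations of the gates multiplies a product $(g_i-\mu)(g_q-\mu)$ of size $\cO(1/n)$ by $\Delta\bz=\cO_p(n^{-1/2})$. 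We control it by Hölder's inequality together with the moment bounds $\bbE_j\|\Delta\bz\|^{2p}=\cO(n^{-p})$ and the exponential-moment bound on $\inner{\theta_k,\Delta\bz}$ from the proof of \autoref{thm:accuracy_final_estimator}. This gives $\cO(n^{-3/2})$, which is absorbed into the remainder; the factor $\bar w'_{qj}$ is pulled out by the inequality $\sigma'(u+v)\le e^{|v|}\sigma'(u)$.

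\textbf{Bias part.} We expand each gate to second order in $u_i=\inner{\theta_i,\Delta\bz}$:
\[
w_i=\bar w_{ij}+\bar w'_{ij}u_i+\tfrac12\bar w''_{ij}u_i^2+\cO(|u|^3),
\]
and similarly $w_q'=\bar w'_{qj}+\bar w''_{qj}u_q+\tfrac12\bar w'''_{qj}u_q^2+\cdots$, with $\bz=\bar\bz_j+\Delta\bz$. We multiply the three factors $\sum_iw_i-1$, $w_q'$ and $\bz$ and keep all terms up to second order in $\Delta\bz$, then take expectations.
\begin{itemize}
\item The zeroth-order term is $(\sum_i\bar w_{ij}-1)\bar w'_{qj}\bar\bz_j$.
\item First-order terms involve $\bbE_j[\Delta\bz]$. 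This is not zero, because $\bz$ is a nonlinear statistic, but it is $\cO(1/n)$ by a delta-method expansion. These terms produce the three terms linear in $\bbE_j[\Delta\bz]$: the one with $\bar w'_{ij}\theta_i^\top$, the one with $\bar w''_{qj}\theta_q^\top$, and the one with $\bar w'_{qj}\bbE_j[\Delta\bz]$.
\item Second-order terms use $\bbE_j[\Delta\bz\Delta\bz^\top]=\bSigma_Z^{(j)}/n+o(1/n)$. They give exactly the listed terms: the $\bar w'_{ij}\bar w''_{qj}$ term, the $\bar w'_{ij}\bar w'_{qj}\bSigma_Z\theta_i$ term (from the product of the linear term in $w_i$ with $\Delta\bz$), the $\bar w''_{qj}(\sum_i\bar w_{ij}-1)\bSigma_Z\theta_q$ term, the $\tfrac12\bar w''_{ij}$ term, and the $\tfrac12\bar w'''_{qj}$ term.
\end{itemize}
Cubic remainders are $\bbE_j\|\Delta\bz\|^3\cdot\mathrm{poly}(\|\theta\|)=\cO(n^{-3/2})$, again controlled via the exponential moment and sigmoid-derivative bounds so that the factor $\bar w'_{qj}$ can be pulled out.

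\textbf{Main obstacle.} The delicate step is justifying the Taylor remainders uniformly. The sigmoid derivatives are bounded, but to factor out $\bar w'_{qj}$ we need ratio bounds of the form $|\sigma^{(k)}(u+v)|\le C_k e^{|v|}\sigma'(u)$. Layer normalization is also only locally smooth. We would handle this by splitting on the event $\|\Delta\bz\|\le r$, where the remark on layer normalization gives bounded derivatives, and using Markov's inequality with high moments $p$ on the complement, exactly as in the proof of \autoref{thm:accuracy_final_estimator}. We would also verify that $\bbE_j[\Delta\bz]=\cO(1/n)$ and that the covariance expansion holds with an $o(1/n)$ error, via the delta method applied to the moment ratios $h(\bX,\omega_r)-\mu$.
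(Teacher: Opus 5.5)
Your proposal is correct and follows essentially the same route as the paper. Independence of $\mu$ from the centered statistics splits the gradient into the $\sigma_\mu^2$-weighted bias part and the variance part, and a second-order Taylor expansion of the gates at $\bar\bz_j$ produces exactly the listed $\bbE_j[\Delta\bz]$ and $\bSigma_Z^{(j)}/n$ terms. The paper first writes everything via $\Delta w_i$, $\Delta w'_q$, $\Delta\bz$ and then expands, which gives the same terms.

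One tightening is needed. To land in the stated $\bar w'_{qj}\cO(n^{-3/2})$ remainder, the expansions $\bbE_j[(g_i-\mu)(g_q-\mu)]=\Sigma^{(j)}_{iq}/n+\cdots$ and $\bbE_j[\Delta\bz\Delta\bz^\top]=\bSigma_Z^{(j)}/n+\cdots$ must carry an explicit $\cO(n^{-2})$ error, as in the delta-method argument of \autoref{SoftmaxO1overndeviation}. The $o(1/n)$ you wrote is not enough.
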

\begin{proof}
\begin{align*}
\nabla_{\theta_q} \mathcal{L}(\theta) &=
\mathbb{E} \sbr{
\rbr{ \sum_{i=1}^2 w_i(\bX) g_i - \mu } g_q w'_q(\bX) \bz
}
\\
&= \frac{1}{2} \sum_j
\mathbb{E}_j \sbr{
\rbr{
    \sum_{i=1}^2 w_i(\bX)
    \rbr{ \mu^2 + (g_i - \mu) (g_q - \mu)}%
    - \mu^2%
}  w'_q(\bz) \bz
}
\\
&= \frac{1}{2} \sum_j
\mathbb{E}_j \Biggl[
\left\{\sum_{i=1}^2(\bar w_{ij}+\Delta w_i)
\bigl(\mu^2+(g_i-\mu)(g_q-\mu)\bigr)-\mu^2\right\}\\
&\hspace{30mm}\times
(\bar w'_{qj}+\Delta w'_q)(\bar\bz_j+\Delta\bz)
\Biggr]
\\
&= \frac{1}{2} \sum_j \Bigg(
\sigma_\mu^2 \rbr{ \sum_{i=1}^2 \bar{w}_{ij} - 1 } \bar{w}'_{qj} \bar{\bz}_j
+ \rbr{ \sum_{i=1}^2 \bar{w}_{ij} \mathbb{E}_j \sbr{(g_i - \mu) (g_q - \mu) } } \bar{w}'_{qj} \bar{\bz}_j \\
&\quad + \sigma_\mu^2\bar w'_{qj}\bar\bz_j
\sum_{i=1}^2\bbE_j[\Delta w_i] + \sigma_\mu^2\rbr{\sum_{i=1}^2\bar w_{ij}-1}
\left[\bar\bz_j\bbE_j[\Delta w'_q]
+\bar w'_{qj}\bbE_j[\Delta\bz]\right]\\
&\quad + \sigma_\mu^2 \sum_{i=1}^2
\mathbb{E}_j \sbr{\Delta w_i \Delta w'_q } \bar{\bz}_{j}
 + \sigma_\mu^2 \sum_{i=1}^2
\mathbb{E}_j \sbr{\Delta w_i \Delta \bz } \bar{w}'_{qj}\\
&\quad +\sigma_\mu^2 \rbr{\sum_{i=1}^2 \bar{w}_{ij} - 1} \mathbb{E}_j \sbr{\Delta w'_q \Delta \bz }\\
&\quad + \rbr{ \sum_{i=1}^2 \mathbb{E}_j \sbr{ \Delta w_i (g_i - \mu)(g_q - \mu) } } \bar{w}'_{qj} \bar{\bz}_j\\
&\quad + \sum_{i=1}^2 \mathbb{E}_j \sbr{ (\bar{w}_{ij} + \Delta w_i)(g_i - \mu)(g_q - \mu) \Delta \bz  } \bar{w}'_{qj}
\\
&\quad + \sum_{i=1}^2 \mathbb{E}_j \sbr{ (\bar{w}_{ij} + \Delta w_i)(g_i - \mu)(g_q - \mu) \Delta w'_q  } \bar{\bz}_{j}
+ \sigma_\mu^2 \sum_{i=1}^2 \mathbb{E}_j \sbr{\Delta w_i \Delta w'_q \Delta \bz }
\\
&\quad + \sum_{i=1}^2 \mathbb{E}_j \sbr{ (\bar{w}_{ij} + \Delta w_i)(g_i - \mu)(g_q - \mu) \Delta w'_q \Delta \bz  } \Bigg)\\
&= \frac{1}{2} \sum_j \Bigg(
\sigma_\mu^2 \rbr{ \sum_{i=1}^2 \bar{w}_{ij} - 1 } \bar{w}'_{qj} \bar{\bz}_j
+ \rbr{ \sum_{i=1}^2 \bar{w}_{ij} \mathbb{E}_j \sbr{(g_i - \mu) (g_q - \mu) } } \bar{w}'_{qj} \bar{\bz}_j \\
&\quad + \sigma_\mu^2\bar w'_{qj}\bar\bz_j
\sum_{i=1}^2\bbE_j[\Delta w_i]\\
&\quad + \sigma_\mu^2\rbr{\sum_{i=1}^2\bar w_{ij}-1}
\left[\bar\bz_j\bbE_j[\Delta w'_q]
+\bar w'_{qj}\bbE_j[\Delta\bz]\right]\\
&\quad + \sigma_\mu^2 \sum_{i=1}^2 \mathbb{E}_j \sbr{\Delta w_i \Delta w'_q } \bar{\bz}_{j}
+ \sigma_\mu^2 \sum_{i=1}^2 \mathbb{E}_j \sbr{\Delta w_i \Delta \bz } \bar{w}'_{qj}\\
&\quad +\sigma_\mu^2 \rbr{\sum_{i=1}^2 \bar{w}_{ij} - 1} \mathbb{E}_j \sbr{\Delta w'_q \Delta \bz }
 + \bar{w}'_{qj}  \cO\rbr{n^{-3/2}}\Bigg)
\end{align*}
Taylor expansion of the gates and \autoref{SoftmaxO1overndeviation} give
\begin{align*}
\nabla_{\theta_q} \mathcal{L}(\theta)
&= \frac{1}{2} \sum_j \Biggl\{ \sigma_\mu^2 \left( \sum_{i=1}^2 \bar{w}_{ij} - 1 \right) \bar{w}'_{qj} \bar{\bz}_j + \frac{\bar{w}'_{qj}}{n} \sum_{i=1}^2 \bar{w}_{ij} \Sigma_{qi}^{(j)} \bar{\bz}_j \\
&\quad + \sigma_\mu^2\bar w'_{qj}\bar\bz_j
\sum_{i=1}^2\bar w'_{ij}\theta_i^\top\bbE_j[\Delta\bz]\\
&\quad + \sigma_\mu^2\rbr{\sum_{i=1}^2\bar w_{ij}-1}
\left[\bar w''_{qj}\bar\bz_j\theta_q^\top\bbE_j[\Delta\bz]
+\bar w'_{qj}\bbE_j[\Delta\bz]\right]\\
&\quad + \frac{\sigma_\mu^2}{n} \sum_{i=1}^2 \bar{w}'_{ij}
\left[ \bar{w}''_{qj}
\left(\theta_q^\top \bSigma_Z^{(j)} \theta_i\right) \bar{\bz}_j
+ \bar{w}'_{qj} \bSigma_Z^{(j)} \theta_i \right]
+ \frac{\sigma_\mu^2}{n}
\bar{w}''_{qj}
\rbr{\sum_{i=1}^2 \bar{w}_{ij} - 1}
\bSigma_Z^{(j)} \theta_q
\\
&\quad + \frac{\sigma_\mu^2}{2n}\bar{w}'_{qj}
\left[\sum_{i=1}^2 \bar{w}''_{ij}
\left(\theta_i^\top\bSigma_Z^{(j)}\theta_i\right)\right]\bar{\bz}_j\\
&\quad + \frac{\sigma_\mu^2}{2n}
\left( \sum_{i=1}^2 \bar{w}_{ij} - 1 \right)
\bar{w}'''_{qj}
\left( \theta_q^\top \bSigma_Z^{(j)} \theta_q \right) \bar{\bz}_j
+ \bar{w}'_{qj} \mathcal{O}(n^{-3/2})
\Biggr\}
\end{align*}
\end{proof}

\begin{lemma}[Population loss expansion]
Let $\theta_q$ be the gating parameter vector for expert $q\in\{1,2\}$, and let the loss function $\mathcal{L}(\theta)$ be defined as:
$$\mathcal{L}(\theta) = \frac{1}{2} \mathbb{E} \sbr{ \rbr{ \sum_{i=1}^2 w_i(\bX) g_i - \mu }^2 }$$
Then the expected loss can be expanded as:
\begin{align*}
\mathcal{L}(\theta) &= \frac{1}{4} \sum_j \Biggl\{
\sigma_\mu^2 \rbr{\sum_{i=1}^2 \bar{w}_{ij} - 1}^2 + \frac{1}{n} \sum_{i=1}^2 \sum_{q=1}^2 \bar{w}_{ij} \bar{w}_{qj} \Sigma_{iq}^{(j)}\\
& \quad+ \frac{\sigma_\mu^2}{n} \sum_{i=1}^2 \sum_{q=1}^2 \bar{w}'_{ij} \bar{w}'_{qj} \left(\theta_i^\top \bSigma_Z^{(j)} \theta_q\right) \\
& \quad+ 2\sigma_\mu^2\rbr{\sum_{i=1}^2\bar w_{ij}-1}
\sum_{i=1}^2\bar w'_{ij}\theta_i^\top\bbE_j[\Delta\bz]\\
& \quad+ \frac{\sigma_\mu^2}{n}\rbr{\sum_{i=1}^2 \bar{w}_{ij} - 1}
\sum_{i=1}^2\bar{w}''_{ij}\rbr{\theta_i^\top\bSigma_Z^{(j)}\theta_i}
\Biggr\}
+ \mathcal{O}(n^{-3/2})
\end{align*}
\end{lemma}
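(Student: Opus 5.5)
The plan mirrors the proof of the population gradient expansion: condition on the family, exploit the independence of $\mu$ from the centered statistics, and Taylor expand the gates around the population features $\bar\bz_j$. First, I write $\cL(\theta)=\frac12\cdot\frac12\sum_j\bbE_j[(\sum_i w_i g_i-\mu)^2]$. Inside each family I decompose $\sum_i w_ig_i-\mu=\mu(\sum_iw_i-1)+\sum_iw_i(g_i-\mu)$. By translation invariance, $\bz(\bX+\mu\mathbf1)=\bz(\bX)$ and $g_i-\mu$ depends only on the centered sample, so both $w_i$ and $g_i-\mu$ are independent of $\mu$. Since $\bbE[\mu]=0$, the cross term vanishes, and
\[
\bbE_j\sbr{\rbr{\textstyle\sum_i w_ig_i-\mu}^2}=\sigma_\mu^2\,\bbE_j\sbr{\rbr{\textstyle\sum_iw_i-1}^2}+\sum_{i,q}\bbE_j\sbr{w_iw_q(g_i-\mu)(g_q-\mu)}.
\]

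For the variance part, I substitute $w_i=\bar w_{ij}+\Delta w_i$. The leading term $\sum_{i,q}\bar w_{ij}\bar w_{qj}\bbE_j[(g_i-\mu)(g_q-\mu)]$ equals $\frac1n\bar w_{\cdot j}^\top\bSigma^{(j)}\bar w_{\cdot j}+o(n^{-1})$ by the definition of $\bSigma^{(j)}$; I will need the remainder to be $\cO(n^{-3/2})$, which follows from the standard delta-method expansion of the ratio statistics $h(\bX,\pm\lambda)$. The terms containing $\Delta w_i$ are products of $\Delta w=\cO_{L^p}(n^{-1/2}\log n)$ with $(g_i-\mu)(g_q-\mu)=\cO_{L^p}(n^{-1})$. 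By H\"older's inequality and the moment bounds $\bbE_j\|\Delta\bz\|^{2p}=\cO(n^{-p})$ from the proof of \autoref{thm:accuracy_final_estimator}, these contribute $\cO(n^{-3/2})$ up to logarithmic factors. The first-order piece is really $\bbE[\Delta\bz\,(g_i-\mu)(g_q-\mu)]$, a third joint cumulant of order $n^{-2}$, so it is negligible.

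For the bias part, I expand the gate to second order:
\[
\Delta w_i=\bar w'_{ij}\theta_i^\top\Delta\bz+\tfrac12\bar w''_{ij}(\theta_i^\top\Delta\bz)^2+\cO(|\theta_i^\top\Delta\bz|^3).
\]
Then
\[
\bbE_j\sbr{\rbr{\textstyle\sum_iw_i-1}^2}=\rbr{\textstyle\sum_i\bar w_{ij}-1}^2+2\rbr{\textstyle\sum_i\bar w_{ij}-1}\sum_i\bbE_j[\Delta w_i]+\bbE_j\sbr{\rbr{\textstyle\sum_i\Delta w_i}^2}.
\]
Using $\bbE_j[\Delta\bz\Delta\bz^\top]=\bSigma_Z^{(j)}/n+\cO(n^{-3/2})$ and the fact that $\bbE_j[\Delta\bz]=\cO(n^{-1})$ (invoking \autoref{SoftmaxO1overndeviation}), the middle term yields the $\bbE_j[\Delta\bz]$ term together with the $\frac1n\bar w''_{ij}\theta_i^\top\bSigma_Z^{(j)}\theta_i$ term, and the last term yields $\frac1n\sum_{i,q}\bar w'_{ij}\bar w'_{qj}\theta_i^\top\bSigma_Z^{(j)}\theta_q$. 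Multiplying by $\sigma_\mu^2$ and by the prefactor $\frac14$ reproduces exactly the stated coefficients.

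The main obstacle is controlling the Taylor remainders uniformly when $\|\theta\|$ may be as large as $\cO(\log n)$. The cubic remainder is bounded by $\sup|\sigma'''|\cdot\bbE|\theta^\top\Delta\bz|^3=\cO(n^{-3/2}(\log n)^3)$, which gives $\cO(n^{-3/2})$ up to logarithms. Either I will absorb the logarithms, since the $\cO$ constant may depend on the bounded-$\theta$ regime where the lemma is used with $\|\theta\|=\cO(1)$, or I will refine the estimate using the exponential moment bound from \autoref{thm:accuracy_final_estimator}. The LayerNorm nonlinearity is handled by the preceding remark: the derivatives are bounded near $\bar\bz_j$, and the event outside that neighborhood has polynomially small probability.
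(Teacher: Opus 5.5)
Your proposal is correct and follows essentially the paper's route: split each family's loss into $\sigma_\mu^2\,\bbE_j[(\sum_i w_i-1)^2]$ plus the expert term using independence of $\mu$ from the centered statistics, replace the expert second moments by $\Sigma^{(j)}_{iq}/n$ up to $\cO(n^{-3/2})$, and Taylor-expand the gates to second order in $\theta_i^\top\Delta\bz$ with $\bbE_j[\Delta\bz\Delta\bz^\top]\approx\bSigma_Z^{(j)}/n$. The only difference is the order of steps: the paper first swaps in $\Sigma^{(j)}/n$ with the random gates still inside the expectation and then expands $w_i=\bar w_{ij}+\Delta w_i$, whereas you expand the gates first and bound the fluctuation terms by H\"older. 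Your reading of the $\cO(n^{-3/2})$ constant as depending on $\|\theta\|$ matches how the paper uses the lemma on bounded sets of $\theta$.
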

\begin{proof}
We can directly compute
\begin{align*}
\mathcal{L}(\theta)
&= \frac{1}{4} \sum_j \mathbb{E}_j \sbr{ \mu^2 \rbr{\sum_{i=1}^2 w_i(\bX) - 1}^2 + \rbr{\sum_{i=1}^2 w_i(\bX) (g_i - \mu)}^2  }
\\
&= \frac{1}{4} \sum_j \mathbb{E}_j \sbr{ \sigma_\mu^2 \rbr{\sum_{i=1}^2 w_i(\bX) - 1}^2 + \frac{1}{n} \sum_{i=1}^2 \sum_{q=1}^2 w_i(\bX) w_q(\bX) \Sigma_{iq}^{(j)}  }
+\mathcal O(n^{-3/2})
\\
&= \frac{1}{4} \sum_j \mathbb{E}_j \Biggl[ \sigma_\mu^2 \rbr{\sum_{i=1}^2 (\bar{w}_{ij} + \Delta w_i) - 1}^2\\
&\quad + \frac{1}{n} \sum_{i=1}^2 \sum_{q=1}^2 (\bar{w}_{ij} + \Delta w_i) (\bar{w}_{qj} + \Delta w_q) \Sigma_{iq}^{(j)}  \Biggr]
+\mathcal O(n^{-3/2})
\\
&= \frac{1}{4} \sum_j \Biggl\{ \sigma_\mu^2 \rbr{\sum_{i=1}^2 \bar{w}_{ij} - 1}^2 + \frac{1}{n} \sum_{i=1}^2 \sum_{q=1}^2 \bar{w}_{ij} \bar{w}_{qj} \Sigma_{iq}^{(j)} +
2\sigma_\mu^2 \rbr{\sum_{i=1}^2 \bar{w}_{ij} - 1} \sum_{i=1}^2 \mathbb{E}_j\sbr{\Delta w_i }
\\ &\quad + \sigma_\mu^2 \sum_{i=1}^2 \sum_{q=1}^2 \mathbb{E}_j \sbr{ \Delta w_i \Delta w_q  } + \frac{1}{n} \sum_{i=1}^2 \sum_{q=1}^2 \Sigma_{iq}^{(j)} \mathbb{E}_j \sbr{ \Delta w_i \Delta w_q  } \Biggr\}
+\mathcal O(n^{-3/2})
\\
&= \frac{1}{4} \sum_j \Biggl\{
\sigma_\mu^2 \rbr{\sum_{i=1}^2 \bar{w}_{ij} - 1}^2 + \frac{1}{n} \sum_{i=1}^2 \sum_{q=1}^2 \bar{w}_{ij} \bar{w}_{qj} \Sigma_{iq}^{(j)}\\
& \quad+ \frac{\sigma_\mu^2}{n} \sum_{i=1}^2 \sum_{q=1}^2 \bar{w}'_{ij} \bar{w}'_{qj} \left(\theta_i^\top \bSigma_Z^{(j)} \theta_q\right) \\
& \quad+ 2\sigma_\mu^2\rbr{\sum_{i=1}^2\bar w_{ij}-1}
\sum_{i=1}^2\bar w'_{ij}\theta_i^\top\bbE_j[\Delta\bz]\\
& \quad+ \frac{\sigma_\mu^2}{n}\rbr{\sum_{i=1}^2 \bar{w}_{ij} - 1}
\sum_{i=1}^2\bar{w}''_{ij}\rbr{\theta_i^\top\bSigma_Z^{(j)}\theta_i}
\Biggr\}
+ \mathcal{O}(n^{-3/2})
\end{align*}
\end{proof}

\subsubsection{Flow reparameterization}
Use $S_{qj}$ from \autoref{list_of_stationary_points}.
Recall that we are studying the gradient flow
\begin{align*}
\dot{\theta}_q
&= - \nabla_{\theta_q} \mathcal{L}(\theta)\\
&= - \frac{1}{2} \sum_j \Biggl\{
S_{qj}(\bar w)\bar{w}'_{qj}\bar{\bz}_j \\
&\quad + \sigma_\mu^2\bar w'_{qj}\bar\bz_j
\sum_{i=1}^2\bar w'_{ij}\theta_i^\top\bbE_j[\Delta\bz]\\
&\quad + \sigma_\mu^2\rbr{\sum_{i=1}^2\bar w_{ij}-1}
\left[\bar w''_{qj}\bar\bz_j\theta_q^\top\bbE_j[\Delta\bz]
+\bar w'_{qj}\bbE_j[\Delta\bz]\right]\\
&\quad + \frac{\sigma_\mu^2}{n} \sum_{i=1}^2 \bar{w}'_{ij}
\left[ \bar{w}''_{qj}
\left(\theta_q^\top \bSigma_Z^{(j)} \theta_i\right) \bar{\bz}_j
+ \bar{w}'_{qj} \bSigma_Z^{(j)} \theta_i \right]\\
&\quad + \frac{\sigma_\mu^2}{n}
\bar{w}''_{qj}
\rbr{\sum_{i=1}^2 \bar{w}_{ij} - 1}
\bSigma_Z^{(j)} \theta_q\\
&\quad + \frac{\sigma_\mu^2}{2n}\bar{w}'_{qj}
\left[\sum_{i=1}^2 \bar{w}''_{ij}
\left(\theta_i^\top\bSigma_Z^{(j)}\theta_i\right)\right]\bar{\bz}_j\\
&\quad + \frac{\sigma_\mu^2}{2n}
\left( \sum_{i=1}^2 \bar{w}_{ij} - 1 \right)
\bar{w}'''_{qj}
\left( \theta_q^\top \bSigma_Z^{(j)} \theta_q \right) \bar{\bz}_j
+ \bar{w}'_{qj} \mathcal{O}(n^{-3/2})
\Biggr\}
\end{align*}
We now re-parameterize the flow in terms of $\bar w=\rbr{\bar{w}_{ij}}_{i\in[2],j\in\{\mathrm G,\mathrm A\}}$, using $\bar w_{qk}=\sigma(\inner{\theta_q,\bar\bz_k})$ and the chain rule.
\begin{align*}
\frac{\partial \bar{w}_{qk}}{\partial t}
&= \frac{\partial \bar{w}_{qk}}{\partial \theta_q} \frac{\partial \theta_q}{\partial t} \\
& =
-
\Bigg(
\frac{1}{2} \sum_j \Biggl\{
S_{qj}(\bar w)
\bar{w}'_{qj}\bar{w}'_{qk}
\inner{\bar{\bz}_j,\bar{\bz}_k}
\\
&\quad + \sigma_\mu^2\bar w'_{qj}\bar w'_{qk}
\inner{\bar\bz_j,\bar\bz_k}
\sum_{i=1}^2\bar w'_{ij}\theta_i^\top\bbE_j[\Delta\bz]\\
&\quad + \sigma_\mu^2\bar w'_{qk}\rbr{\sum_{i=1}^2\bar w_{ij}-1}
\Bigl[\bar w''_{qj}\inner{\bar\bz_j,\bar\bz_k}\theta_q^\top\bbE_j[\Delta\bz]\\
&\hspace{38mm}
+\bar w'_{qj}\inner{\bbE_j[\Delta\bz],\bar\bz_k}\Bigr]\\
&\quad + \frac{\bar{w}'_{qk} \sigma_\mu^2}{n}
\sum_{i=1}^2 \bar{w}'_{ij}\Bigl[
\bar{w}''_{qj}
\left(\theta_q^\top \bSigma_Z^{(j)} \theta_i\right)
\inner{\bar{\bz}_j, \bar{\bz}_k}\\
&\hspace{38mm}
+ \bar{w}'_{qj}
\inner{\bSigma_Z^{(j)} \theta_i, \bar{\bz}_k}\Bigr]\\
&\quad+ \frac{\sigma_\mu^2 \bar{w}''_{qj} \bar{w}'_{qk} }{n}
\rbr{\sum_{i=1}^2 \bar{w}_{ij} - 1}
\inner{\bSigma_Z^{(j)}\theta_q, \bar{\bz}_k}
\\
&\quad + \frac{\sigma_\mu^2}{2n}\bar w'_{qj}\bar w'_{qk}
\left[\sum_{i=1}^2\bar w''_{ij}
\left(\theta_i^\top\bSigma_Z^{(j)}\theta_i\right)\right]
\inner{\bar\bz_j,\bar\bz_k}\\
&\quad + \frac{\sigma_\mu^2}{2n}\bar w'_{qk}
\rbr{\sum_{i=1}^2\bar w_{ij}-1}\bar w'''_{qj}
\left(\theta_q^\top\bSigma_Z^{(j)}\theta_q\right)
\inner{\bar\bz_j,\bar\bz_k}\\
&\quad + \bar{w}'_{qk} \bar{w}'_{qj} \mathcal{O}(n^{-3/2})
\Biggr\}\Bigg)
\end{align*}

\subsection{Concentration bounds}
\subsubsection{Uniform concentration bound for the empirical gradient}
\begin{lemma}[Concentration bound for the gradient of $\theta$]
\label{concentration_for_theta}
Let $B_d(r)$ denote the Euclidean ball of radius $r$ in $\bbR^d$.
Fix $j\in\{\mathrm G,\mathrm A\}$ and sample the $m$ tasks independently conditional on $\tau_b=j$, writing $\bar\bz=\bar\bz_j$ and $\bar w_k=\bar w_{kj}$.
Fix $\sigma_\mu^2$ and $\lambda$ independently of $n$, with either data law from \autoref{adaptiveMeanEstimation}.
Let $0<\beta<1/2$, $0<\delta<1$, $r=\cO(\log n)$, and suppose $m,\delta^{-1}\in\operatorname{poly}(n)$.
For sufficiently large $n$, with probability at least $1-\delta$, simultaneously for all $\theta_1,\theta_2\in B_d(r)$ and all $q\in\{1,2\}$ and $s\in[d]$,
\begin{align*}
&\Bigg|
    \frac{1}{m}
    \sum_{b=1}^m \sbr{\rbr{ \sum_{k=1}^2 w_k(\bX_b) g_k(\bX_b) - \mu_b } g_{q}(\bX_b) w'_{q}(\bX_b) z_s(\bX_b)}
\\
&\quad-
    \mathbb{E}_j \sbr{
    \rbr{ \sum_{k=1}^2 w_k(\bX) g_k - \mu } g_q w'_q(\bX) z_s
    }
\Bigg|
\\
&\le
\rbr{\left\vert{}\sum_{k=1}^2\bar w_k-1\right\vert{}+n^{-1/2+\beta}}
\bar w_q(1-\bar w_q)\,\cOs(m^{-1/2}).
\end{align*}
\end{lemma}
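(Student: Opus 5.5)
The plan is to reduce the claim to a Bernstein-type bound for a single $\theta$ on a high-probability good event, and then make it uniform by a net argument, keeping everything relative to the scale $\bar w_q(1-\bar w_q)=\bar w'_q$.

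\textbf{Step 1: algebraic decomposition of the summand.} Write $F(\bX,\mu;\theta)=\rbr{\sum_k w_kg_k-\mu}g_qw'_qz_s$. Use $\sum_kw_kg_k-\mu=\mu\rbr{\sum_kw_k-1}+\sum_kw_k(g_k-\mu)$ and $g_q=\mu+(g_q-\mu)$. Then split $\sum_k w_k-1=\rbr{\sum_k\bar w_k-1}+\sum_k\Delta w_k$. This writes $F$ as a finite sum of products of the following factors:
\begin{itemize}
\item[(a)] the deterministic factor $\sum_k\bar w_k-1$;
\item[(b)] the gate fluctuations $\Delta w_k$, which satisfy $|\Delta w_k|\le\tfrac14|\inner{\theta_k,\Delta\bz}|\le r\|\Delta\bz\|_2$;
\item[(c)] the centered experts $g_k-\mu$, bounded by $\max_\ell|X_\ell-\mu|$ and of typical size $n^{-1/2}$;
\item[(d)] powers of $\mu$;
\item[(e)] the ratio $w'_q/\bar w'_q$, which is at most $e^{|\inner{\theta_q,\Delta\bz}|}$ by the sigmoid inequalities used in the proof of \autoref{thm:accuracy_final_estimator}.
\end{itemize}
Each product therefore carries either the factor $\sum_k\bar w_k-1$, or at least one factor of size $\cO(n^{-1/2}\,\mathrm{polylog})$ coming from $\Delta\bz$ or from $g_k-\mu$. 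In all cases it is bounded by $\rbr{\vert\sum_k\bar w_k-1\vert+n^{-1/2+\beta}}\bar w'_q$ times a polylogarithmic random envelope.

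\textbf{Step 2: good event and truncation.} Define the event $E$ on which the following hold for all $b\le m$:
\begin{itemize}
\item $|\mu_b|\le C\sqrt{\log n}$;
\item $\max_\ell|X_{b\ell}-\mu_b|\le C\sqrt{\log n}$ (the Gaussian, uniform and mixture laws are uniformly sub-Gaussian);
\item $\|\Delta\bz(\bX_b)\|_2\le n^{-1/2+\beta/2}$.
\end{itemize}
The last condition uses the moment bound $\bbE_j\|\Delta\bz\|_2^{2p}=\cO(n^{-p})$ from the proof of \autoref{thm:accuracy_final_estimator} with $p$ large, together with Markov's inequality. Since $m\in\operatorname{poly}(n)$, a union bound gives $\bbP(E^c)\le n^{-\omega(1)}\le\delta/2$. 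On $E$, and for $\theta\in B_d(r)$ with $r=\cO(\log n)$, we have $r\|\Delta\bz\|_2\le n^{-1/2+\beta}$ and $e^{|\inner{\theta_q,\Delta\bz}|}\le e$. Hence every normalized summand $F_b/\bar w'_q$ is bounded by $M:=\rbr{|\sum\bar w_k-1|+n^{-1/2+\beta}}\mathrm{polylog}(n)$.

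Compare with the truncated variables $F_b\mathbf 1_{E_b}$. The truncation bias $|\bbE_j[F\mathbf 1_{E^c}]|$ is controlled by Cauchy--Schwarz, the polynomial moments of $F/\bar w'_q$ (again via (e) and the exponential-moment bound from \autoref{thm:accuracy_final_estimator}), and $\bbP(E^c)=n^{-\omega(1)}$. It is therefore negligible against $M\,m^{-1/2}$.

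\textbf{Step 3: pointwise Hoeffding, net, and Lipschitz extension.} For fixed $(\theta,q,s)$, Hoeffding's inequality for the bounded truncated summands gives deviation at most $M\bar w'_q\sqrt{\log(1/\delta')/m}$.

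Next, take an $n^{-c}$-net of $B_d(r)^2$. Its cardinality is $(\mathrm{poly}\,n)^{2d}$, so the union bound with $\delta'=\delta/(2\cdot 2d\cdot|\mathrm{net}|)$ costs only logarithmic factors, which are absorbed into $\cOs$.

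To pass from the net to all of $B_d(r)^2$, show that on $E$ the map $\theta\mapsto F_b(\theta)/\bar w'_q(\theta)$ is Lipschitz with constant $\mathrm{poly}(n)$. Three facts give this:
\begin{itemize}
\item $|\partial_\theta\log w'_q|\le\|\bz\|_2=1$;
\item $|\partial_\theta\log\bar w'_q|\le 1$;
\item the remaining factors are polynomially bounded on $E$.
\end{itemize}
The same holds for the population expectation. Since $\bar w'_q(\theta)$ and $\bar w'_q(\theta')$ differ by at most a factor $e^{\|\theta-\theta'\|_2}$, the relative scale transfers across net cells with only an $\cO(1)$ change. Likewise, $|\sum_k\bar w_k-1|$ changes by at most $\|\theta-\theta'\|_2$, which is at most $n^{-c}\le n^{-1/2+\beta}$ and is therefore absorbed.

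\textbf{Main obstacle.} The delicate point is keeping the bound \emph{multiplicative} in $\bar w'_q$ and in $\rbr{|\sum\bar w_k-1|+n^{-1/2+\beta}}$, rather than additive. This is why I normalize by $\bar w'_q$ before netting and bound $\Delta w_k$ through $\Delta\bz$ instead of through crude constants. The term $\mu^2\sum_k\Delta w_k\,w'_qz_s$ has to be shown to be of size $n^{-1/2+\beta}\bar w'_q$ and not $\cO(1)\bar w'_q$. This relies on $|\Delta w_k|\le r\|\Delta\bz\|_2$ holding uniformly over $\theta$ on $E$, and I expect this to be the step that needs the most care.
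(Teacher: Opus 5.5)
Your overall route matches the paper's. You truncate to a $\theta$-free good event, apply Hoeffding to summands normalized by $\bar w_q(1-\bar w_q)$ and by $\Gamma_1=|\sum_k\bar w_k-1|+n^{-1/2+\beta}$, and extend from a polynomial-size cover of $B_d(r)^2$. There is, however, a gap in Step~2.

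Your event $E$ controls $|\mu_b|$, $\max_\ell|X_{b\ell}-\mu_b|$ and $\|\Delta\bz(\bX_b)\|_2$, but not $|g_k(\bX_b)-\mu_b|$. So on $E$ the centered experts are only $\cO(\sqrt{\log n})$. Several terms of your Step~1 expansion contain neither the factor $\sum_k\bar w_k-1$ nor $\Delta\bz$, for example $\mu\sum_kw_k(g_k-\mu)w'_qz_s$ and $(g_q-\mu)\sum_kw_k(g_k-\mu)w'_qz_s$. On $E$ these are only $\cO(\log n)\,\bar w'_q$. Hence the claimed almost-sure envelope $|F_b|/\bar w'_q\le\Gamma_1\,\mathrm{polylog}(n)$ does not follow. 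Hoeffding then yields only $\mathrm{polylog}(n)\,\bar w'_q\,m^{-1/2}$, losing exactly the factor $\Gamma_1$ that the lemma is about.

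Knowing that $g_k-\mu$ has typical size $n^{-1/2}$ does not help, because Hoeffding depends on the range of the summands. Bernstein would not rescue it under the stated hypotheses either. Its range term $\mathrm{polylog}(n)/m$ is dominated by $\Gamma_1m^{-1/2}$ only if $m\gtrsim n^{1-2\beta}$, whereas the lemma assumes only $m\in\operatorname{poly}(n)$. So the delicate terms are the expert-error terms. The term $\mu^2\sum_k\Delta w_kw'_qz_s$, which you flagged as the main obstacle, is already handled by your event.

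The repair is to add $\max_k|g_k(\bX_b)-\mu_b|\le n^{-1/2+\beta/2}$ for all $b\le m$ to $E$. This is what the paper does: its event bounds both $\max_k|g_k-\mu|$ and $\|\bz-\bar\bz\|_2$ by $n^{-1/2+\beta}$, obtained from \autoref{SMconcentration} at $\pm\lambda_k$ and $\omega_s$.

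Because $\lambda$ is fixed, $g_k$ does not depend on $\theta$, so the enlarged event is still uniform over $\theta$. Its probability is controlled as follows:
\begin{itemize}
\item $\|g_k-\mu\|_{L^{2p}(\bbP_j)}=\cO(n^{-1/2})$ for every fixed $p$;
\item Markov's inequality then gives $\bbP_j(|g_k-\mu|>n^{-1/2+\beta/2})=\cO(n^{-p\beta})$;
\item a union bound over $b\le m$ gives failure probability $\cO(n^{-A})$ for any fixed $A$.
\end{itemize}
With this condition, every term of your expansion is almost surely at most $\Gamma_1\bar w'_q\,\mathrm{polylog}(n)$ on $E$. The rest of your argument then goes through essentially as in the paper's proof: the truncation bias, the net, and the Lipschitz extension with $\Gamma_1$ and $\bar w'_q$ changing by $\cO(1)$ factors across cells.
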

\begin{proof}
Set $M=n^{-1/2+\beta}$ and choose $R,C=\cO(\sqrt{\log n})$, $C\ge1$, with sufficiently large constants.
Apply \autoref{SMconcentration} with $k=1$ at the fixed $\pm\lambda_k$ and $\omega_s$, using deviations equal to a sufficiently small constant multiple of $M$.
Since $nM^2/(R^2e^{2R|\lambda|})=n^{2\beta-o(1)}$, symmetry of the centered observations, the fixed $d$ and $\boldsymbol\omega$, and the Gaussian tail of $\mu$ give
$$
\mathcal E=\left\{\max_k|g_k-\mu|\le M,\ |\mu|\le C,\ \|\bz-\bar\bz\|_2\le M\right\},
\qquad \bbP_j(\mathcal E^c)=\cO(n^{-A})
$$
for any prescribed fixed $A>0$.
Apply the deterministic covers in \autoref{concentration_for_grad_theta_helper} to the integrands multiplied by $\mathbf1_{\mathcal E}$.
The discarded normalized integrand is bounded by
$$
\frac{3e^{2r}}{M}\left(|\mu|+\max_{\ell\in[n]}|X_\ell-\mu|\right)^2,
$$
since $w'_q/\bar w'_q\le e^{2r}$ and $\Gamma_1\ge M$.
Its $L^2$ norm is polynomial in $n$, so Cauchy--Schwarz and sufficiently large $A$ make the discarded expectation $o(m^{-1/2})$ and the probability of discarding any training task at most $\delta/2$.
Use failure probability $\delta/2$ for the fixed number of coordinates in the helper bound, where $rM=o(1)$ and its remaining factors are logarithmic.
\end{proof}

\begin{lemma}[Concentration bound for the normalized gradient]
\label{concentration_for_grad_theta_helper}
Fix $j\in\{\mathrm G,\mathrm A\}$, a gate index $q\in[2]$, a coordinate $s\in[d]$, a radius $r>0$ and a failure probability $\delta\in(0,1)$, and sample $m$ tasks independently conditional on $\tau_b=j$,
writing $\bar\bz=\bar\bz_j$ and $\bar w_k=\bar w_{kj}$.
Suppose $M>0$, $C\ge1$, and, almost surely for a single task and every $k$,
\[
|g_k(\bX)-\mu|\le M,
\qquad |\mu|\le C,
\qquad \|\bz(\bX)-\bar\bz\|_2\le M,
\qquad \|\bar\bz\|_2\le C.
\]
Let
\[
w_k(\bX)=\sigma(\inner{\theta_k,\bz(\bX)}),
\qquad
\bar w_k=\sigma(\inner{\theta_k,\bar\bz}),
\]
and define
\[
\Gamma_1=\left|\sum_{k=1}^2\bar w_k-1\right|+M,
\qquad
\Gamma_2=\bar w_q(1-\bar w_q),
\]
\[
\Psi_{\theta,q,s}(\bX,\mu)
=\left(\sum_{k=1}^2w_k(\bX)g_k(\bX)-\mu\right)
 g_q(\bX)w_q'(\bX)z_s(\bX).
\]
Then, with probability at least $1-\delta$,
\begin{align*}
&\sup_{\theta_1,\theta_2\in B_d(r)}
\frac{1}{\Gamma_1\Gamma_2}
\left|\frac1m\sum_{b=1}^m\Psi_{\theta,q,s}(\bX_b,\mu_b)
      -\bbE_j[\Psi_{\theta,q,s}(\bX,\mu)]\right|\\
&\quad\lesssim
2(C+M+Cr)e^{rM}(M+C)^2\\[-2pt]
&\qquad\quad\times
\left[
\frac{2d\log\{2+2mr(C+M)(1+M^{-1})\}+\log(2/\delta)}{m}
\right]^{1/2}.
\end{align*}
\end{lemma}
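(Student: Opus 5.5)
The approach is a standard uniform concentration argument: a finite net over the parameter ball, Hoeffding at each net point, and a deterministic Lipschitz extension between net points. The normalization by $\Gamma_1\Gamma_2$ is handled by pointwise bounds relating sample gates to population gates.

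\textbf{Step 1: pointwise envelope.} For fixed $\theta\in B_d(r)^2$, bound $|\Psi_{\theta,q,s}|$ almost surely by a multiple of $\Gamma_1\Gamma_2$.

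First handle the residual. Write
\[
\sum_k w_kg_k-\mu=\mu\Big(\sum_k w_k-1\Big)+\sum_k w_k(g_k-\mu).
\]
The second term is at most $2M$. For the first, note $|w_k-\bar w_k|\le \sup|\sigma'|\,|\inner{\theta_k,\bz-\bar\bz}|\le rM/4$. Hence
\[
\Big|\sum_k w_k-1\Big|\le \Big|\sum_k\bar w_k-1\Big|+rM,
\]
and the residual is at most $(C+2+Cr)\Gamma_1$, up to constants.

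Next handle the remaining factors:
\begin{itemize}
\item $|g_q|\le C+M$;
\item $|z_s|\le\|\bar\bz\|_2+M\le C+M$;
\item $w_q'\le e^{rM}\bar w_q'$, since $\sigma'(u+v)\le e^{|v|}\sigma'(u)$ for the logistic function.
\end{itemize}
This last inequality is what produces the factor $\Gamma_2$. Multiplying everything gives an envelope
\[
B_\Psi\lesssim (C+M+Cr)e^{rM}(M+C)^2\,\Gamma_1\Gamma_2.
\]

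\textbf{Step 2: Hoeffding on a net.} The normalized variables $\Psi/(\Gamma_1\Gamma_2)$ are i.i.d. and bounded by $B_\Psi/(\Gamma_1\Gamma_2)$. Hoeffding's inequality at a single $\theta$ gives deviation of order $(B_\Psi/(\Gamma_1\Gamma_2))\sqrt{\log(2/\delta')/m}$.

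Take an $\varepsilon$-net $\mathcal N$ of $B_d(r)^2$ with $|\mathcal N|\le(1+2r/\varepsilon)^{2d}$. A union bound with $\delta'=\delta/|\mathcal N|$ produces the term $2d\log(1+2r/\varepsilon)+\log(2/\delta)$ inside the square root.

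\textbf{Step 3: Lipschitz extension (main obstacle).} The hard part is that the normalizer $\Gamma_1\Gamma_2$ itself depends on $\theta$. So I must show that $\theta\mapsto\Psi_\theta/(\Gamma_1(\theta)\Gamma_2(\theta))$ is Lipschitz, with constant polynomial in $m,r,C,M^{-1}$; this is where the $\log\{\cdots(1+M^{-1})\}$ term comes from.

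\begin{itemize}
\item \emph{Numerator.} $\Psi$ is Lipschitz in $\theta$ with constant $\lesssim (C+M)^4$, using bounded derivatives of $\sigma,\sigma'$ and $|z|\le C+M$.
\item \emph{Factor $1/\Gamma_1$.} Since $\Gamma_1\ge M$, this factor is Lipschitz with constant at most $M^{-2}(C+M)$, because $\sum_k\bar w_k$ is $(C/4)$-Lipschitz.
\item \emph{Factor $1/\Gamma_2$.} This is the delicate one, since $\Gamma_2$ can be tiny. I would use the ratio $\Psi/\Gamma_2\propto w_q'/\bar w_q'$ and differentiate it directly. The logarithmic derivative of $\sigma'(\inner{\theta_q,\bz})/\sigma'(\inner{\theta_q,\bar\bz})$ is bounded by $|\bz|+|\bar\bz|\lesssim C+M$, and the ratio itself is at most $e^{rM}$. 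So this ratio is Lipschitz with constant $\lesssim (C+M)e^{rM}$, with no small denominators.
\end{itemize}

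The same reasoning applies to the population expectation. Choosing
\[
\varepsilon\asymp \frac{1}{m(C+M)(1+M^{-1})}
\]
makes the discretization error at most of order $m^{-1/2}$ times the envelope. Combining Steps 2 and 3 yields the stated bound.
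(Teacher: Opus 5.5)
Your proposal is correct and follows essentially the paper's route: the paper also factors the normalized integrand into the residual divided by $\Gamma_1$, which is bounded by $2(C+M+Cr)$ and Lipschitz in $\theta$ via $\Gamma_1\ge M$, times $w_q'g_qz_s/\Gamma_2$, which is bounded by $e^{rM}(M+C)^2$ and Lipschitz via the log-derivative of $w_q'/\bar w_q'$. It then applies Hoeffding with a union bound over a deterministic parameter cover of mesh $[4m(C+M)(1+M^{-1})]^{-1}$. One small tidy-up: when you differentiate $1/\Gamma_1$, pair it with the already-normalized residual bound from your Step 1, as the paper does, rather than the unnormalized residual. This makes the Lipschitz constant scale like $M^{-1}$ instead of the $M^{-2}$ your bullet suggests, and that $M^{-1}$ scaling is what makes your stated mesh $\varepsilon\asymp[m(C+M)(1+M^{-1})]^{-1}$ suffice.
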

\begin{proof}
Here $L_\infty$ denotes the supremum over the full support of a single task.
All covers below are deterministic.
Define
$$\mathcal{F}_a = \left\{ f_{\theta}:(\bz,\bg,\mu)\mapsto\frac{1}{\Gamma_1}\left(\sum_{k=1}^2\sigma(\theta_k^\top\bz)g_k-\mu\right):\theta_k\in B_d(r)\right\}.$$
For $f_{\theta}\in\mathcal F_a$ we can write
\begin{align*}
f_{\theta}
=\frac{1}{\Gamma_1}\left(
\sum_{k=1}^2w_k(g_k-\mu)
+\mu\sum_{k=1}^2(w_k-\bar w_k)
+\mu\left(\sum_{k=1}^2\bar w_k-1\right)\right).
\end{align*}
Since $|w_k-\bar w_k|\le rM/4$ and $C\ge1$, this gives
$$\|f_{\theta}\|_\infty\le 2(C+M+Cr)\equiv B_a.$$
The normalizing factor also depends on $\theta$, with
$$
|\Gamma_1(\theta)-\Gamma_1(\theta')|
\le\frac C2\max_k\|\theta_k-\theta'_k\|_2.
$$
Using $\Gamma_1\ge M$ and the sigmoid's $1/4$-Lipschitz bound, we obtain
$$
\|f_{\theta}-f_{\theta'}\|_\infty
\le\frac{(C+M)^2+B_aC}{2M}\max_k\|\theta_k-\theta'_k\|_2
\le\frac{2B_a(C+M)}{M}\max_k\|\theta_k-\theta'_k\|_2.
$$
A Euclidean cover of each $B_d(r)$ therefore gives
\begin{equation}
\log\mathcal N(u,\mathcal F_a,L_\infty)
\lesssim 2d\log\left(2+\frac{2rB_a(C+M)}{Mu}\right).
\label{refMetricEntropyOfScaledFunctionClass}
\end{equation}
Define
$$
\mathcal F_b=\left\{(\bz,\bg,\mu)\mapsto\frac{w'_q g_q z_s}{\Gamma_2}:\theta_q\in B_d(r)\right\}.
$$
Since $|\frac{d}{du}\log\sigma'(u)|\le1$,
$$
\frac{w'_q}{\Gamma_2}\le e^{rM},\qquad
\left\|\nabla_{\theta_q}\log\frac{w'_q}{\Gamma_2}\right\|_2\le M+2C.
$$
Thus the functions in $\mathcal F_b$ are bounded in absolute value by $e^{rM}(M+C)^2$ and have Lipschitz constant at most $e^{rM}(M+C)^2(M+2C)$ in $\theta_q$, including the dependence of $\Gamma_2$.
Let $\mathcal F$ consist of the products $f_af_b$ with the same parameters in both factors. These products are bounded in absolute value by $B_ae^{rM}(M+C)^2$.
A deterministic parameter cover with mesh $[4m(C+M)(1+M^{-1})]^{-1}$ gives
$$
\log\mathcal N\left(\frac{B_ae^{rM}(M+C)^2}{m},\mathcal F,L_\infty\right)
\lesssim 2d\log\left(2+2mr(C+M)(1+M^{-1})\right).
$$
Hoeffding's inequality and a union bound over this cover, followed by approximation in $L_\infty$, give, with probability at least $1-\delta$,
$$
\begin{aligned}
\sup_{f\in\mathcal F}\left|\frac1m\sum_{b=1}^m f(\bX_b)-\bbE_j[f]\right|
&\lesssim B_ae^{rM}(M+C)^2\\
&\quad\times\left[
\frac{2d\log\left(2+2mr(C+M)(1+M^{-1})\right)+\log(2/\delta)}{m}
\right]^{1/2}.
\end{aligned}
$$
The approximation contributes at most $2B_ae^{rM}(M+C)^2/m$, which is absorbed in this bound.
\end{proof}

\subsubsection{Uniform concentration for the loss}

\begin{lemma}[Local concentration for the loss]
\label{concentration_for_loss_local}
Under the assumptions of \autoref{concentration_for_theta}, with probability at least $1-\delta$, uniformly over $\theta_1,\theta_2\in B_d(r)$ satisfying $|\sum_k\bar w_k-1|=\cO(n^{-1})$ with a fixed implied constant,
$$
|\hat\cL^{(j)}_m(\theta)-\cL^{(j)}(\theta)|
=\cOs\left(\frac{n^{-1+\beta}}{\sqrt m}+\frac{n^{-1+2\beta}}{m}\right).
$$
\end{lemma}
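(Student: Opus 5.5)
The plan follows the route of \autoref{concentration_for_theta}: truncate to a high-probability event, cover the parameter ball, and apply a bounded-difference inequality that exploits the small range of the loss integrand. Write the per-task loss as $\ell_\theta(\bX,\mu)=\frac12(\sum_k w_k g_k-\mu)^2$. Decompose
\[
\textstyle\sum_k w_kg_k-\mu=\sum_k w_k(g_k-\mu)+\mu\sum_k(w_k-\bar w_k)+\mu\rbr{\sum_k\bar w_k-1}.
\]
Set $M=n^{-1/2+\beta}$ and work on the event $\mathcal E=\{\max_k|g_k-\mu|\le M,\ |\mu|\le C,\ \|\bz-\bar\bz\|_2\le M\}$ with $C=\cO(\sqrt{\log n})$. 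This event already appears in the proof of \autoref{concentration_for_theta} and has $\bbP_j(\mathcal E^c)=\cO(n^{-A})$ by \autoref{SMconcentration}.

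On $\mathcal E$ the three terms are bounded as follows:
\begin{itemize}
\item the first term is at most $2M$;
\item the second is at most $C\cdot 2\cdot rM/4=\cOs(M)$, since $r=\cO(\log n)$;
\item the third is $\cO(C/n)$, using the restriction $|\sum_k\bar w_k-1|=\cO(n^{-1})$.
\end{itemize}
Hence on $\mathcal E$ the integrand $\ell_\theta\mathbf 1_{\mathcal E}$ is bounded by $B=\cOs(M^2)=\cOs(n^{-1+2\beta})$, uniformly over the restricted parameter set. For the variance, $\bbE_j[\ell_\theta^2\mathbf 1_{\mathcal E}]\le B\,\bbE_j[\ell_\theta]$, and $\bbE_j[\ell_\theta]=\cO(1/n)$ on this set by the population loss expansion, because the bias term is $\sigma_\mu^2\cO(n^{-2})$ and the variance term is $\cO(1/n)$. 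So the standard deviation of a single summand is $\cOs(\sqrt{n^{-1}\cdot n^{-1+2\beta}})=\cOs(n^{-1+\beta})$.

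Next, apply Bernstein's inequality, rather than Hoeffding, to the truncated integrand at each point of a finite cover. This gives a deviation of order
\[
\sigma\sqrt{\log(N/\delta)/m}+B\log(N/\delta)/m,
\]
which is exactly $\cOs(n^{-1+\beta}/\sqrt m+n^{-1+2\beta}/m)$ once $\log N$ is polylogarithmic. For the cover, $\theta\mapsto\ell_\theta$ on $\mathcal E$ is Lipschitz in $\theta$ with constant $\cOs(M\cdot CM)$, via the sigmoid's $1/4$-Lipschitz bound and $|g_k|\le C+M$. A Euclidean cover of $B_d(r)^2$ with mesh $1/\mathrm{poly}(n)$ therefore has $\log N=\cO(d\log n)$, and its approximation error is $o(n^{-1}/m)$. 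The constraint set is not itself a ball; I would cover the whole ball $B_d(r)^2$ with uniform Lipschitz control, and apply the Bernstein bound, which uses the loss bound $B$, only at cover points that lie within mesh distance of the constraint set, where $|\sum_k\bar w_k-1|$ remains $\cO(n^{-1})$ up to a slightly larger constant. The mesh is chosen so this relaxation costs nothing in order.

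Finally, handle the discarded parts, mirroring \autoref{concentration_for_theta}. On $\mathcal E^c$ the loss is polynomially bounded in $L^2$, since it is dominated by $(1+e^{\cO(r)})(|\mu|+\max_\ell|X_\ell-\mu|)^2$ with $e^{\cO(r)}$ polynomial in $n$. Cauchy--Schwarz then makes $\bbE_j[\ell_\theta\mathbf 1_{\mathcal E^c}]=o(n^{-1}/m)$, provided $A$ is large, using $m\in\operatorname{poly}(n)$. A union bound shows that with probability at least $1-\delta/2$ no training task lies in $\mathcal E^c$, so the empirical average equals the truncated one. Allocating $\delta/2$ to the Bernstein union bound completes the argument.

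The main obstacle is the variance step: Hoeffding alone would give only $n^{-1+2\beta}/\sqrt m$. The key point is that the second moment of the restricted-set loss is controlled by $B$ times the population loss, which is $\cO(1/n)$, and this requires the normalization restriction to keep the $\mu(\sum\bar w-1)$ term from dominating.
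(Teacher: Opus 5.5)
Your proposal is correct and is essentially the paper's proof: truncate to the event $\mathcal E$ from \autoref{concentration_for_theta}, apply Bernstein's inequality with the self-bounding variance estimate (second moment at most range times mean, with the mean $\cOs(n^{-1})$ because of the normalization restriction), use a polynomial-mesh cover, and control the discarded tasks by Cauchy--Schwarz. The one difference is cosmetic: the paper divides the residual by $\Gamma_1(\theta)=|\sum_k\bar w_k-1|+M$ so that \autoref{concentration_for_loss} holds over the whole ball and then substitutes $\Gamma_1=\cO(M)$, whereas you apply Bernstein only at cover points near the constraint set.

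Two small slips are harmless, since the $\cOs$ and the inverse-polynomial mesh absorb them. First, $\cL^{(j)}$ is $\cO((1+r)^2/n)$ rather than $\cO(1/n)$, because $\sigma_\mu^2\bbE_j[(\sum_k(w_k-\bar w_k))^2]$ contributes $\cO(r^2/n)$. Second, the Lipschitz constant of $\theta\mapsto\ell_\theta$ on $\mathcal E$ is $\cOs(M)$ on the restricted set (polylogarithmic on the full ball), not $\cOs(M^2)$.
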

\begin{proof}
Use $\mathcal E$, $M$, and $C$ from the proof of \autoref{concentration_for_theta}, and apply the covers in \autoref{concentration_for_loss} to $\mathcal F_{sq}\mathbf1_{\mathcal E}$.
The normalized squared residual has envelope
$$\frac{9}{M^2}\left(|\mu|+\max_{\ell\in[n]}|X_\ell-\mu|\right)^2.$$
Cauchy--Schwarz makes the discarded expectation $o(m^{-1})$, with the probability of discarding a training task absorbed in $\delta$.
The fixed-parameter moment-ratio argument in the proof of \autoref{thm:accuracy_final_estimator}, applied also to the paired experts, and independence of $\mu$ from the centered observations give
$$
\cL^{(j)}=\cO\left(\frac{(1+r)^2}{n}\right)=\cOs(n^{-1}),
\qquad \Gamma_1^2U=\cOs(n^{-1+2\beta}).
$$
Substitute into \autoref{concentration_for_loss}.
The discarded expectation is absorbed by its linear term.
\end{proof}

\begin{lemma}[Uniform concentration for the loss]
\label{concentration_for_loss}
Fix $j\in\{\mathrm G,\mathrm A\}$ and sample the $m$ tasks independently conditional on $\tau_b=j$, writing $\bar\bz=\bar\bz_j$ and $\bar w_k=\bar w_{kj}$.
Suppose $M>0$, $C\ge1$, and the following bounds hold almost surely for a single task and every $k$:
\begin{align*}
\vert{}g_k(\bX) - \mu\vert{} &\le M
\\
\vert{}\mu\vert{} &\le C
\\
\|\mathbf{z}(\bX) - \bar{\mathbf{z}}\|_2 &\le M
\\
\|\bar{\mathbf{z}}\|_2 &\le C
\\
w_k(\bX) &= \sigma(\langle \theta_k, \mathbf{z}(\bX) \rangle)
\\
\bar{w}_k &= \sigma(\langle \theta_k, \bar{\mathbf{z}} \rangle)
\\
\Gamma_1 &= \left\vert{} \sum_{k=1}^2 \bar{w}_k - 1 \right\vert{} + M
\\
U &= 4(C + M + Cr)^2
\end{align*}

Let
$$\hat\cL^{(j)}_m(\theta) = \frac{1}{2m} \sum_{b=1}^m \left( \sum_{k=1}^2 w_k(\bX_b) g_k(\bX_b) - \mu_b \right)^2, \quad \cL^{(j)}(\theta) = \frac{1}{2} \mathbb{E}_j \left[ \left( \sum_{k=1}^2 w_k(\bX) g_k(\bX) - \mu \right)^2 \right]$$
Then, with probability at least $1 - \delta$, the following holds for all $\theta_1,\theta_2\in B_d(r)$:
\begin{align*}
\left|\hat\cL^{(j)}_m(\theta)-\cL^{(j)}(\theta)\right|
&\lesssim\sqrt{\cL^{(j)}(\theta)\,\Gamma_1^2U\,
\frac{2d\log\left(2+2mr(C+M)(1+M^{-1})\right)+\log(2/\delta)}{m}}\\
&\quad+\Gamma_1^2U\,
\frac{2d\log\left(2+2mr(C+M)(1+M^{-1})\right)+\log(2/\delta)}{m}.
\end{align*}
\end{lemma}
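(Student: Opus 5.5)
The strategy is the same as for \autoref{concentration_for_grad_theta_helper}: a deterministic $L_\infty$ cover of a normalized function class, then Bernstein's inequality, because the bound has a variance term. Write the residual as
$$\rho_\theta(\bX,\mu)=\sum_{k=1}^2w_k(\bX)g_k(\bX)-\mu,$$
so that $\hat\cL^{(j)}_m-\cL^{(j)}=\frac12(P_m-P)\rho_\theta^2$. The decomposition already used for $\mathcal F_a$ gives
$$\rho_\theta=\sum_kw_k(g_k-\mu)+\mu\sum_k(w_k-\bar w_k)+\mu\Bigl(\sum_k\bar w_k-1\Bigr).$$
Together with $|w_k-\bar w_k|\le rM/4$, this yields $|\rho_\theta|\le\Gamma_1\cdot2(C+M+Cr)$, and hence the sup-envelope $\rho_\theta^2\le\Gamma_1^2U$. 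This is exactly the envelope factor in the claim. It also means each function $\rho_\theta^2$ in the class $\mathcal F_{sq}$ has variance
$$\Var_j(\rho_\theta^2)\le\bbE_j[\rho_\theta^4]\le\Gamma_1^2U\,\bbE_j[\rho_\theta^2]=2\Gamma_1^2U\,\cL^{(j)}(\theta).$$

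Next I build the cover. Reusing the Lipschitz estimate for $f_\theta=\rho_\theta/\Gamma_1$ from the proof of \autoref{concentration_for_grad_theta_helper}, the map $\theta\mapsto\rho_\theta^2/\Gamma_1^2$ is Lipschitz in $L_\infty$ with constant at most $2B_a\cdot 2B_a(C+M)/M$. I take a Euclidean net of $B_d(r)^2$ with mesh $[4m(C+M)(1+M^{-1})]^{-1}$, as before. This approximates the unnormalized functions $\rho_\theta^2$ within $\Gamma_1^2U/m$ in sup-norm, and the net has log-cardinality at most $2d\log(2+2mr(C+M)(1+M^{-1}))$. On the net I apply Bernstein's inequality with the variance bound $2\Gamma_1^2U\cL^{(j)}(\theta)$ and the range bound $\Gamma_1^2U$, and take a union bound at level $\delta$. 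Writing $\log\mathcal N$ for the net's log-cardinality, at each net point this gives, with probability $1-\delta$,
$$|(P_m-P)\rho_\theta^2|\lesssim\sqrt{\cL^{(j)}\Gamma_1^2U(\log\mathcal N+\log(2/\delta))/m}+\Gamma_1^2U(\log\mathcal N+\log(2/\delta))/m.$$

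Finally I transfer the bound from a net point $\theta'$ to an arbitrary $\theta$. The sup-norm approximation error is $\lesssim\Gamma_1(\theta)^2U/m$, which is absorbed into the linear term. The main obstacle is that both the variance proxy $\cL^{(j)}(\theta')$ and the normalization $\Gamma_1(\theta')$ are evaluated at the net point rather than at $\theta$. For $\Gamma_1$, the Lipschitz bound $|\Gamma_1(\theta)-\Gamma_1(\theta')|\le\frac C2\|\theta-\theta'\|$, combined with $\Gamma_1\ge M$, shows the mesh is fine enough that $\Gamma_1(\theta')\le2\Gamma_1(\theta)$. For the loss, $|\cL^{(j)}(\theta)-\cL^{(j)}(\theta')|\le\Gamma_1^2U/m$ by the same sup-norm approximation. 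Since $\sqrt{a+b}\le\sqrt a+\sqrt b$, the leftover cross term is at most $\sqrt{\Gamma_1^2U/m\cdot\Gamma_1^2U\log(\cdot)/m}$, which is bounded by the linear term. Combining the three steps gives the stated inequality.
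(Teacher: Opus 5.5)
Your proposal is correct and follows essentially the paper's argument. Both use a deterministic $L_\infty$ net built from the Lipschitz estimate for $\mathcal F_a$, Bernstein's inequality with the self-bounding variance $\Var(u)\le U\,\bbE_j[u]$, a union bound, and transfer from the nearest net point with the approximation error absorbed into the linear term. The only difference is bookkeeping: the paper applies Bernstein to the normalized squares $f_\theta^2=\rho_\theta^2/\Gamma_1(\theta)^2$, which share the uniform envelope $U$, and multiplies by $\Gamma_1(\theta)^2/2$ only at the end, so it never needs your comparison $\Gamma_1(\theta')\le2\Gamma_1(\theta)$ (which does hold for the chosen mesh).
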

\begin{proof}
Use $\mathcal F_a$ from the proof of \autoref{concentration_for_grad_theta_helper}, with $B_a=2(C+M+Cr)$, and let
$$\mathcal F_{sq}=\{f_{\theta}^2:f_{\theta}\in\mathcal F_a\}.$$
For every $u\in\mathcal F_{sq}$,
$$0\le u\le B_a^2=U,\qquad \operatorname{Var}(u)\le\bbE_j[u^2]\le U\bbE_j[u].$$
Since squaring is $2B_a$-Lipschitz on $[-B_a,B_a]$, \eqref{refMetricEntropyOfScaledFunctionClass} gives
$$
\log\mathcal N(u,\mathcal F_{sq},L_\infty)
\lesssim 2d\log\left(2+\frac{2rU(C+M)}{Mu}\right).
$$
Choose $u_0=U/m$ and a deterministic $u_0$-cover $\mathcal C_{u_0}\subset\mathcal F_{sq}$ in $L_\infty$ over the full support. Then
$$
\log|\mathcal C_{u_0}|
\lesssim 2d\log\left(2+2mr(C+M)(1+M^{-1})\right).
$$
Bernstein's inequality and a union bound give, simultaneously for every $v\in\mathcal C_{u_0}$, with probability at least $1-\delta$,
$$
\left|\frac1m\sum_{b=1}^m v(\bX_b)-\bbE_j[v]\right|
\le\sqrt{\frac{2U\bbE_j[v]\log(2|\mathcal C_{u_0}|/\delta)}{m}}
+\frac{2U\log(2|\mathcal C_{u_0}|/\delta)}{3m}.
$$
For any $u\in\mathcal F_{sq}$ choose $v$ in this cover with $\|u-v\|_\infty\le U/m$. This controls both the empirical average and the expectation, so
$$
\left|\frac1m\sum_{b=1}^m u(\bX_b)-\bbE_j[u]\right|
\le\left|\frac1m\sum_{b=1}^m v(\bX_b)-\bbE_j[v]\right|+\frac{2U}{m},
\qquad \bbE_j[v]\le\bbE_j[u]+\frac Um.
$$
The resulting approximation terms are absorbed by the linear term in Bernstein's bound. Finally, for $u=f_{\theta}^2$,
$$
\hat\cL^{(j)}_m(\theta)=\frac{\Gamma_1^2}{2m}\sum_{b=1}^m u(\bX_b),
\qquad \cL^{(j)}(\theta)=\frac{\Gamma_1^2}{2}\bbE_j[u].
$$
Multiplying by $\Gamma_1^2/2$ proves the claimed bound simultaneously for all parameters.
\end{proof}

\subsection{Analysis of individual stationary points: complete flow}

\subsubsection{Proof of \autoref{list_of_stationary_points}}
\label{pf_list_of_stationary_points}
\begin{proof}
Since $Z\succ0$,
\begin{align*}
\frac d{dt}\cL^{\mathrm{truncated}}(\bar w)
&=-\frac14\sum_{q=1}^2
\left\|\sum_j\bar w'_{qj}S_{qj}(\bar w)\bar\bz_j\right\|_2^2,\\
\dot{\bar w}=0
&\quad\Longleftrightarrow\quad
\bar w'_{qj}S_{qj}(\bar w)=0\quad\text{for all }q,j.
\end{align*}
For $k\ne k_j$, the strict hierarchy gives
$$
S_{kj}(\bar w)-S_{k_jj}(\bar w)
=\frac{\Sigma_{k k_j}^{(j)}-\Sigma_{k_jk_j}^{(j)}}n\bar w_{k_jj}
+\frac{\Sigma_{kk}^{(j)}-\Sigma_{k k_j}^{(j)}}n\bar w_{kj}>0
$$
whenever $\bar w_{k_jj}+\bar w_{kj}>0$, excluding two interior gates in the same context.
For $q\ne k$ and $0<\bar w^\circ_{qj}<1$, stationarity requires
\begin{align*}
\bar w^\circ_{kj}=1
&\quad\Longrightarrow\quad
0=S_{qj}(\bar w^\circ)
=\left(\sigma_\mu^2+\frac{\Sigma_{qq}^{(j)}}n\right)\bar w^\circ_{qj}
+\frac{\Sigma_{qk}^{(j)}}n>0,\\
\bar w^\circ_{kj}=0
&\quad\Longrightarrow\quad
\bar w^\circ_{qj}=\frac{n\sigma_\mu^2}{n\sigma_\mu^2+\Sigma_{qq}^{(j)}}.
\end{align*}
The four corners have $\bar w'_{qj}=0$, completing the classification.
The unique optimum follows from convexity, strict convexity on the selected coordinates, and
\begin{gather*}
\frac{\partial\cL^{\mathrm{truncated}}}{\partial\bar w_{qj}}=\frac12S_{qj},
\qquad
S_{k_jj}(\bar w^*)=0,\\
S_{kj}(\bar w^*)
=\frac{\bar w_{k_jj}^*}{n}
\bigl(\Sigma_{k k_j}^{(j)}-\Sigma_{k_jk_j}^{(j)}\bigr)>0
\quad(k\ne k_j).
\end{gather*}
\end{proof}

\subsubsection{Proof of \autoref{convergence_to_critical_manifold}}
\label{pf_convergence_to_critical_manifold}
\begin{proof}[Proof of \autoref{convergence_to_critical_manifold}]
Whenever \(\bar w_{qj}\in[1/4,3/4]\), we have \(\bar w'_{qj}\geq3/16\), and therefore
\begin{align*}
\|\nabla_\theta\mathcal L^{\mathrm{bias}}\|_2^2
&=\frac{\sigma_\mu^4}{4}
\sum_{q=1}^2
\left\|
\sum_j
(\bar w_{1j}+\bar w_{2j}-1)\bar w'_{qj}\bar{\mathbf z}_j
\right\|_2^2\\
&\geq
\frac{\sigma_\mu^4\lambda_{\min}(Z)}{4}
\sum_{q,j}
(\bar w_{1j}+\bar w_{2j}-1)^2(\bar w'_{qj})^2\\
&\geq 2\alpha\mathcal L^{\mathrm{bias}}.
\end{align*}
Consequently, locally the flow satisfies a PL inequality.
Moreover,
$$
-\frac{d}{dt}\sqrt{\mathcal L^{\mathrm{bias}}}
=\frac{\|\nabla_\theta\mathcal L^{\mathrm{bias}}\|_2^2}
{2\sqrt{\mathcal L^{\mathrm{bias}}}}
\geq\sqrt{\frac{\alpha}{2}}\,\|\dot\theta\|_2
$$
Thus,
$$
\|\theta(0) - \theta(t)\|_2
\leq
\int_0^t\|\dot\theta(s)\|_2\,ds
\leq
\sqrt{\frac{2\mathcal L^{\mathrm{bias}}(\theta(0))}{\alpha}}.
$$
Since the map $\theta\mapsto\bar w$ is globally
$\sqrt{\lambda_{\max}(Z)}/4$-Lipschitz, we obtain
$$
\int_0^t\|\dot{\bar w}(s)\|_{\mathrm F}\,ds
\leq
\frac{\sqrt{\lambda_{\max}(Z)}}4
\sqrt{\frac{2\mathcal L^{\mathrm{bias}}(\theta(0))}{\alpha}}.
$$
Since \(\max_{q,j}|\bar w_{qj}(0)-1/2|\leq\delta\), we have
\(\mathcal L^{\mathrm{bias}}(\theta(0))\leq2\sigma_\mu^2\delta^2\).
Thus we obtain a travel bound on $\bar{w}$ of
$$
\|\bar{w}(0) - \bar{w}(t)\|_{\mathrm F} \leq C\delta.
$$
With
$(1+C)\delta<1/4$, a first-exit argument shows that the flow remains in
\(\bar{w} \in [1/4,3/4]^{2 \times 2}\) for all time.
Moreover,
$$
\frac{d}{dt}\cL^{\mathrm{bias}}
=-\|\nabla_\theta\cL^{\mathrm{bias}}\|_2^2
\le-2\alpha\cL^{\mathrm{bias}},
$$
which gives the asserted exponential bound on the sum residuals.

The truncated gradient differs from the bias gradient by $\cO(n^{-1})$ at fixed $\lambda$.
The same bound holds for the full population gradient on bounded sets of $\theta$, by the loss and gradient expansions.
The PL inequality and
$\|\nabla_\theta\cL^{\mathrm{bias}}\|_2\le C\sqrt{\cL^{\mathrm{bias}}}$ give
$$
\frac{d}{dt}\sqrt{\cL^{\mathrm{bias}}}
\le-\alpha\sqrt{\cL^{\mathrm{bias}}}+\frac Cn,
\qquad
\sqrt{\cL^{\mathrm{bias}}(\theta(t))}
\le e^{-\alpha t}\sqrt{\cL^{\mathrm{bias}}(\theta(0))}
+\frac{C}{\alpha n}.
$$
At zero loss the inequality is interpreted using the upper right derivative.
Integrating $\|\dot\theta\|_2\le C\sqrt{\cL^{\mathrm{bias}}}+C/n$ now bounds the gate travel by
$C\delta+C(1+t)/n$.
Choose $\tcrit=2\alpha^{-1}\log n$ and then $n$ sufficiently large.
The same first-exit argument keeps the gates in $[1/4,3/4]$ up to $\tcrit$, and the displayed bound gives the claimed $\cO(n^{-1})$ residual.
For the full population flow, the same travel bound keeps $\theta$ in a fixed bounded set up to $\tcrit$.

For the empirical flow, apply \autoref{concentration_for_theta} with $\beta=\min\{\epsilon,1\}/4$ and failure probabilities of order $1/n$, conditional on the family labels.
Hoeffding's inequality bounds the deviations of the family proportions from $1/2$ by $\cOs(m^{-1/2})$.
On bounded sets of $\theta$, the conditional gradient expansion bounds each family's population gradient by $C\max_j|\bar w_{1j}+\bar w_{2j}-1|+C/n$.
Combining these bounds gives, with probability at least $1-1/n$, uniformly on these sets,
\begin{equation}
\|\nabla_\theta(\hat\cL_m-\cL)\|_2
\le\cOs(n^{-1/2-\epsilon/2})\max_j|\bar w_{1j}+\bar w_{2j}-1|
+\cOs(n^{-1-\epsilon/4}).
\label{eq:empirical_initial_gradient}
\end{equation}
For fixed $\sigma_\mu^2>0$, the first term is $o(1)\sqrt{\cL^{\mathrm{bias}}}$ and the second is $o(n^{-1})$.
Up to first exit from the bounded parameter set and $[1/4,3/4]^{2\times2}$, the preceding PL inequality therefore gives
$$
\frac{d}{dt}\sqrt{\cL^{\mathrm{bias}}}
\le-\frac\alpha2\sqrt{\cL^{\mathrm{bias}}}+\frac Cn,
\qquad
\sqrt{\cL^{\mathrm{bias}}(\theta(t))}
\le e^{-\alpha t/2}\sqrt{\cL^{\mathrm{bias}}(\theta(0))}
+\frac{2C}{\alpha n}.
$$
The same integration of $\|\dot\theta\|_2$ bounds the travel by $C\delta+C(1+t)/n$ and prevents exit up to $\tcrit=2\alpha^{-1}\log n$.
This proves the empirical residual and parameter bounds.
\end{proof}

\subsubsection{Proof of \autoref{application_of_thikonovs_thm}}
\label{pf_application_of_thikonovs_thm}
\begin{proof}[Proof of \autoref{application_of_thikonovs_thm}]
First, we check the assumptions of \autoref{avoidance_of_suboptimal_stationary_points}.
By \autoref{feature_geometry}, almost surely over $\boldsymbol\omega$ we have $Z\succ0$, $Z_{\mathrm{GA}}\ge0$ and $Z_{\mathrm{GA}}<\min\wbr{Z_{\mathrm{GG}},Z_{\mathrm{AA}}}$, with $\lambda_{\min}(Z)$ bounded below independently of $n$.

Set $s=(t-\tcrit)/n$ and
$$
r_j=n\sigma_\mu^2(\bar w_{1j}+\bar w_{2j}-1).
$$
The truncated flow satisfies
\begin{align*}
\frac1{n\sigma_\mu^2}\frac{dr_k}{ds}
&=-\frac12\bar w'_{1k}\sum_j Z_{jk}\bar w'_{1j}
\left(2r_j+\sum_{i=1}^2\bar w_{ij}(\Sigma_{1i}^{(j)}+\Sigma_{2i}^{(j)})\right)
+\cO\left(\frac1{n\sigma_\mu^2}\right),\\
\frac d{ds}\left[\frac{\sigma^{-1}(\bar w_{1k})-\sigma^{-1}(\bar w_{2k})}{2}\right]
&=-\frac1{4}\sum_j Z_{jk}\bar w'_{1j}n(S_{1j}-S_{2j})(\bar w)
+\cO\left(\frac1{n\sigma_\mu^2}\right),
\end{align*}
where $S_{qj}$ is defined in \autoref{list_of_stationary_points}.
On bounded coordinate sets, the vector fields are bounded through two derivatives, and the remainder bounds hold through two derivatives.

Initialize the reduced flow in \autoref{avoidance_of_suboptimal_stationary_points} by
$$
\bar w^{\mathrm{reduced}}_{1j}(0)
=\sigma\left(\frac{\sigma^{-1}(\bar w_{1j}(\tcrit))-\sigma^{-1}(\bar w_{2j}(\tcrit))}{2}\right),
\qquad
\bar w^{\mathrm{reduced}}_{2j}(0)=1-\bar w^{\mathrm{reduced}}_{1j}(0).
$$
Then
\begin{align*}
\frac d{ds}\left(n\cL^{\mathrm{truncated}}(\bar w^{\mathrm{reduced}})\right)
&=-\frac1{8}\sum_{j,k}
\bar w_{1j}^{\mathrm{reduced}\,\prime}n(S_{1j}-S_{2j})(\bar w^{\mathrm{reduced}})\\
&\qquad\times Z_{jk}\bar w_{1k}^{\mathrm{reduced}\,\prime}n(S_{1k}-S_{2k})(\bar w^{\mathrm{reduced}})\\
&\le-\frac{\lambda_{\min}(Z)}{8}\sum_j(\bar w_{1j}^{\mathrm{reduced}\,\prime}n(S_{1j}-S_{2j})(\bar w^{\mathrm{reduced}}))^2\le0.
\end{align*}
The sum is integrable and uniformly continuous, hence tends to zero.
Since $n(S_{1\mathrm G}-S_{2\mathrm G})<0<n(S_{1\mathrm A}-S_{2\mathrm A})$, connectedness of the accumulation set gives convergence to a corner.
\autoref{avoidance_of_suboptimal_stationary_points}, also with indices interchanged, excludes the two corners selecting the same expert.
Loss monotonicity excludes the remaining corner, which selects the wrong expert in both contexts.
Indeed, on the critical manifold
$$
\cL^{\mathrm{truncated}}(\bar w)=\frac1{4n}\sum_jq_j(\bar w_{1j}),
\qquad
q_j(\pi)=\pi^2\Sigma_{11}^{(j)}+2\pi(1-\pi)\Sigma_{12}^{(j)}+(1-\pi)^2\Sigma_{22}^{(j)},
$$
and $q_j''=2\,(1,-1)\bSigma^{(j)}(1,-1)^\top\ge0$ because $\bSigma^{(j)}$ is a covariance matrix.
Each $q_j$ is therefore convex on $[0,1]$ and attains its maximum at an endpoint.
As $q_j(1)=\Sigma_{11}^{(j)}$ and $q_j(0)=\Sigma_{22}^{(j)}$, which the strict hierarchy separates, the maximum is attained at the endpoint selecting the wrong expert in family $j$, and strictly so over $\bar w_{1j}\in[c,1-c]$ for any $c>0$.
Summing over $j$, the corner selecting the wrong expert in both contexts is the strict maximizer of $\cL^{\mathrm{truncated}}$ on the manifold.
Since $\bar w(\tcrit)\in[1/4,3/4]^{2\times2}$ gives bounded logits, $\bar w^{\mathrm{reduced}}(0)$ lies in $[c,1-c]^{2}$ for some absolute $c>0$, so the nonincreasing reduced flow cannot converge to that corner.
Thus $\bar w^{\mathrm{reduced}}(s)\to I_2$, and compactness, continuous dependence, and loss monotonicity give common $S,\kappa>0$ with
$$
n\cL^{\mathrm{truncated}}(\bar w^{\mathrm{reduced}}(S))
\le \min_{\bar w^\circ\text{ in }\eqref{eq:suboptimal_corners}}
n\cL^{\mathrm{truncated}}(\bar w^\circ)-4\kappa.
$$
Bounded logit derivatives give $\bar w^{\mathrm{reduced}}_{1j}(s)\in[c,1-c]$ on $[0,S]$, uniformly for some $0<c<1/2$.

Following \autoref{thm:tikhonov}, we set
\begin{align*}
y&=\left(\frac{\sigma^{-1}(\bar w_{1j})-\sigma^{-1}(\bar w_{2j})}{2}\right)_{j\in\{\mathrm G,\mathrm A\}},
\qquad
\varepsilon=\frac1{n\sigma_\mu^2},\\
y^0(s)&=\left(\sigma^{-1}(\bar w^{\mathrm{reduced}}_{1j}(s))\right)_{j\in\{\mathrm G,\mathrm A\}}.
\end{align*}
On $\bar w_{2j}=1-\bar w_{1j}$, the limiting normal equilibrium and stability bound are
\begin{align*}
r_j^\star(y)&=-\frac12\sum_{i=1}^2\bar w_{ij}(\Sigma_{1i}^{(j)}+\Sigma_{2i}^{(j)}),\\
\lambda_{\max}\left(\left[-\bar w'_{1k}Z_{jk}\bar w'_{1j}\right]_{k,j\in\{\mathrm G,\mathrm A\}}\right)
&\le-c^2(1-c)^2\lambda_{\min}(Z)<0.
\end{align*}
The affine normal equation gives, up to first exit from a bounded interior neighborhood,
$$
\left\|\left(r_j(s)+\frac12\sum_{i=1}^2\bar w_{ij}(\tcrit+ns)(\Sigma_{1i}^{(j)}+\Sigma_{2i}^{(j)})\right)_{j\in\{\mathrm G,\mathrm A\}}\right\|_2
\le Ce^{-\kappa_0n\sigma_\mu^2s}+\frac{C}{n\sigma_\mu^2}.
$$
This bounds $r$ uniformly up to first exit.
The limiting slow equation is independent of $r$, so its difference from the actual slow equation is uniformly $\cO(\varepsilon)$ on this region.
Since $y(0)=y^0(0)$, the Lipschitz bound and Gronwall's inequality give, up to first exit,
\begin{align*}
\|y(s)-y^0(s)\|_2
&\le C\int_0^s\|y(u)-y^0(u)\|_2\,du+C\varepsilon s,\\
\sup_{0\le s\le S}\|y(s)-y^0(s)\|_2
&\le C_S\varepsilon.
\end{align*}
Since $\bar w_{1j}+\bar w_{2j}-1=\varepsilon r_j$, smooth inversion of the coordinates on the interior region gives
$$
\sup_{0\le s\le S}
\|\bar w(\tcrit+ns)-\bar w^{\mathrm{reduced}}(s)\|_{\mathrm F}
\le\frac{C_S}{n\sigma_\mu^2},
\qquad \sup_{0\le s\le S}\|r(s)\|_2\le C_S.
$$
These bounds prevent exit before $S$, and
\begin{align*}
n\cL^{\mathrm{truncated}}(\bar w(\tcrit+nS))
&=n\cL^{\mathrm{truncated}}(\bar w^{\mathrm{reduced}}(S))
+\cO\left(\frac1{n\sigma_\mu^2}\right)\\
&<\min_{\bar w^\circ\text{ in }\eqref{eq:suboptimal_corners}}n\cL^{\mathrm{truncated}}(\bar w^\circ)-\kappa
\end{align*}
for sufficiently large $n$.

For the full population flow, the loss and gradient expansions give, on bounded sets of $\theta$,
\begin{align*}
|\cL-\cL^{\mathrm{truncated}}|
+\|\nabla_\theta(\cL-\cL^{\mathrm{truncated}})\|_2
&\le C\left(\frac{\sigma_\mu^2}{n}+n^{-3/2}\right),\\
\left\|\frac{d\theta_q^\perp}{ds}\right\|_2
&\le C(\sigma_\mu^2+n^{-1/2}),
\end{align*}
uniformly in $n$ and $0<\sigma_\mu^2\le1$, by \autoref{SoftmaxO1overndeviation}.
By \autoref{decomposition_of_gating_parameters}, bounded logits and perpendicular components give bounded $\theta$.
The coordinate errors are $\cO(\sigma_\mu^2+n^{-1/2})$, so normal stability and Gronwall's inequality give, up to first exit,
$$
\sup_{0\le s\le S}
\|\bar w(\tcrit+ns)-\bar w^{\mathrm{reduced}}(s)\|_{\mathrm F}
\le C_S\left(\sigma_\mu^2+n^{-1/2}+\frac1{n\sigma_\mu^2}\right),
\qquad \sup_{0\le s\le S}\|r(s)\|_2\le C_S.
$$
For sufficiently small $\sigma_\mu^2$ and then sufficiently large $n$, these bounds prevent exit before $S$, and
\begin{align*}
&\max\left\{n\cL^{\mathrm{truncated}}(\bar w(\tcrit+nS)),\ n\cL(\theta(\tcrit+nS))\right\}\\
&\quad\le n\cL^{\mathrm{truncated}}(\bar w^{\mathrm{reduced}}(S))
+C_S\left(\sigma_\mu^2+n^{-1/2}+\frac1{n\sigma_\mu^2}\right)\\
&\quad<\min_{\bar w^\circ\text{ in }\eqref{eq:suboptimal_corners}}n\cL^{\mathrm{truncated}}(\bar w^\circ)-\kappa.
\end{align*}

For the empirical flow, use the uniform concentration event in \eqref{eq:empirical_initial_gradient}.
Up to first exit from the same bounded region in $(r,y,\theta^\perp)$, bounded $r$ gives $\max_j|\bar w_{1j}+\bar w_{2j}-1|=\cO(n^{-1})$ for fixed $\sigma_\mu^2>0$, and hence
$$
n\|\nabla_\theta(\hat\cL_m-\cL)\|_2=o(1).
$$
In the rescaled equations this adds $o(1)$ to the right-hand sides of the normal equation, the slow equation, and the equation for $\theta_q^\perp$.
The normal stability estimate and Gronwall's inequality for the slow equation consequently give
\begin{gather*}
\sup_{0\le s\le S}
\|\bar w(\tcrit+ns)-\bar w^{\mathrm{reduced}}(s)\|_{\mathrm F}
\le C_S\left(\sigma_\mu^2+n^{-1/2}+\frac1{n\sigma_\mu^2}\right)+o(1),
\\
\sup_{0\le s\le S}\|r(s)\|_2\le C_S+o(1).
\end{gather*}
Also, $\sup_{0\le s\le S}\|\theta_q^\perp(\tcrit+ns)-\theta_q^\perp(\tcrit)\|_2\le C_S\sigma_\mu^2+o(1)$.
Choose the bounds on $r$ and $\theta^\perp$ strictly larger than these estimates and the logit bounds strictly larger than those of the reduced flow.
For sufficiently small fixed $\sigma_\mu^2$ and then sufficiently large $n$, the estimates prevent exit before $S$.
The preceding loss comparison acquires only an additional $o(1)$ term after multiplication by $n$, so the strict loss margin is preserved.
Bounded logits and perpendicular components give the stated parameter bound by \autoref{decomposition_of_gating_parameters}.
\end{proof}

\begin{lemma}[Exit from a neighborhood of a suboptimal stationary point]
\label{avoidance_of_suboptimal_stationary_points}
Let $Z_{ij} = \inner{\bar{\bz}_i, \bar{\bz}_j}$ and assume
\[
Z_{\mathrm{GA}}< \min\wbr{Z_{\mathrm{GG}}, Z_{\mathrm{AA}}}.
\]
In the time variable $s=(t-\tcrit)/n$, consider the reduced flow on the critical manifold $\bar{w}_{1j} + \bar{w}_{2j} = 1$ given by
\begin{align*}
\dot{\bar{w}}_{1q}
&=-\frac{1}{4}\bar{w}'_{1q}
  \sum_j Z_{jq}\bar{w}'_{1j}n(S_{1j}-S_{2j})(\bar{w}),
\end{align*}
where $S_{qj}$ is defined in \autoref{list_of_stationary_points}, so that
\begin{align*}
n(S_{1j}-S_{2j})(\bar{w}) =(\Sigma_{11}^{(j)}-\Sigma_{12}^{(j)})\bar{w}_{1j}  +(\Sigma_{12}^{(j)}-\Sigma_{22}^{(j)})\bar{w}_{2j}.
\end{align*}
Suppose the expert hierarchy of \autoref{expertHierarchy} holds and consider the suboptimal stationary point $\bar{w}^\circ_{1\mathrm G}=\bar{w}^\circ_{1\mathrm A}=1$, $\bar{w}^\circ_{2\mathrm G}=\bar{w}^\circ_{2\mathrm A}=0$.
The corner selecting the other expert in both contexts is covered by exchanging the expert indices.
There exist $\delta\in(0,1/2)$, $r_0\in(0,1)$, $p>0$ and $\gamma>0$, depending only on $Z$ and $\{\bSigma^{(j)}\}_{j\in\{\mathrm G,\mathrm A\}}$, such that the following holds.
Define
$$
R_\delta
=\left\{0<\bar{w}_{2\mathrm G},\bar{w}_{2\mathrm A}<\delta\right\}
$$
$$
W(\bar{w})=\frac{1}{\bar{w}_{2\mathrm A}}
\max\left\{1,\left(\frac{\bar{w}_{2\mathrm G}}{r_0\bar{w}_{2\mathrm A}}\right)^p\right\}.
$$
For every initialization $\bar{w}(0)\in R_\delta$, the first exit time satisfies
\[
\inf\{s\ge0:\bar{w}(s)\notin R_\delta\} \le \frac{W(\bar{w}(0))}{\gamma} < \infty.
\]

\end{lemma}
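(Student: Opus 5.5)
The plan is a Lyapunov argument in the small coordinates $u_j:=\bar w_{2j}=1-\bar w_{1j}$, $j\in\{\mathrm G,\mathrm A\}$, on the critical manifold. In these coordinates the corner is $u=0$, and $R_\delta$ is the open box $0<u_{\mathrm G},u_{\mathrm A}<\delta$.

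\textbf{Rewriting the flow.} Since $\bar w'_{1j}=u_j(1-u_j)$, the reduced flow becomes
\[
\dot u_q=\tfrac14\,u_q(1-u_q)\sum_j Z_{jq}\,u_j(1-u_j)\,D_j(u),
\qquad
D_j(u):=n(S_{1j}-S_{2j})=(\Sigma^{(j)}_{11}-\Sigma^{(j)}_{12})(1-u_j)+(\Sigma^{(j)}_{12}-\Sigma^{(j)}_{22})u_j .
\]
The hierarchy says expert $1$ is better for $\mathrm G$ and expert $2$ for $\mathrm A$. Hence $D_{\mathrm G}(0)=\Sigma^{(\mathrm G)}_{11}-\Sigma^{(\mathrm G)}_{12}<0$ and $D_{\mathrm A}(0)=\Sigma^{(\mathrm A)}_{11}-\Sigma^{(\mathrm A)}_{12}>0$.

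Choose $\delta$ small, depending only on the $\bSigma^{(j)}$. Then on $R_\delta$ we have $D_{\mathrm G}\le -a$, $D_{\mathrm A}\ge a$, $|D_j|\le b$, and $1-u_j\ge 1/2$, for constants $a,b>0$.

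The right-hand side of $\dot u_q$ carries the factor $u_q$, so positivity of each $u_q$ is preserved. Leaving $R_\delta$ can therefore only happen through some $u_j$ reaching $\delta$. It then suffices to show $D^+W\le-\gamma$ on $R_\delta$, where $D^+$ is the upper Dini derivative, needed because of the $\max$. Since $W>0$, this gives $W(s)\le W(0)-\gamma s$ up to exit, and hence an exit time at most $W(0)/\gamma$.

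\textbf{Case 1: $u_{\mathrm G}\le r_0u_{\mathrm A}$,} where $W=1/u_{\mathrm A}$. Here
\[
\frac{d}{ds}\frac1{u_{\mathrm A}}=-\frac{\dot u_{\mathrm A}}{u_{\mathrm A}^2}
=-\frac{1-u_{\mathrm A}}{4u_{\mathrm A}}\Bigl(Z_{\mathrm{GA}}u_{\mathrm G}(1-u_{\mathrm G})D_{\mathrm G}+Z_{\mathrm{AA}}u_{\mathrm A}(1-u_{\mathrm A})D_{\mathrm A}\Bigr).
\]
The bad term is bounded in absolute value by $|Z_{\mathrm{GA}}|\,b\,r_0u_{\mathrm A}$. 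The good term is at least $Z_{\mathrm{AA}}\,a\,u_{\mathrm A}/2$. Choosing $r_0<Z_{\mathrm{AA}}a/(4|Z_{\mathrm{GA}}|b+1)$ gives a derivative at most $-\gamma_1<0$. This uses $Z_{\mathrm{AA}}=1$, since the $\bar\bz_j$ are unit vectors.

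\textbf{Case 2: $u_{\mathrm G}\ge r_0u_{\mathrm A}$,} where $W=u_{\mathrm G}^p/(r_0^pu_{\mathrm A}^{p+1})$. Here $\frac{d}{ds}\log W=p\,\dot u_{\mathrm G}/u_{\mathrm G}-(p+1)\,\dot u_{\mathrm A}/u_{\mathrm A}$. Up to factors $(1-u_j)\in[1/2,1]$, this equals
\[
\tfrac14\Bigl[u_{\mathrm G}D_{\mathrm G}\bigl(pZ_{\mathrm{GG}}-(p+1)Z_{\mathrm{GA}}\bigr)+u_{\mathrm A}D_{\mathrm A}\bigl(pZ_{\mathrm{GA}}-(p+1)Z_{\mathrm{AA}}\bigr)\Bigr].
\]
The assumption $Z_{\mathrm{GA}}<\min\{Z_{\mathrm{GG}},Z_{\mathrm{AA}}\}$ is what makes both brackets have the right sign. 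Take $p$ large enough that $p(Z_{\mathrm{GG}}-Z_{\mathrm{GA}})\ge Z_{\mathrm{GA}}+1$. Then $pZ_{\mathrm{GG}}-(p+1)Z_{\mathrm{GA}}\ge1$, and also $pZ_{\mathrm{GA}}-(p+1)Z_{\mathrm{AA}}\le -Z_{\mathrm{AA}}<0$. With $D_{\mathrm G}<0<D_{\mathrm A}$, both terms are negative.

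The step I expect to need the most care is the $(1-u_j)$ factors, which differ between the two coordinates. I will handle them by shrinking $\delta$ so that these factors perturb the brackets only by $O(\delta p)$, which is absorbed into the margins above. This yields
\[
\frac{d}{ds}\log W\le -c\,(u_{\mathrm G}+u_{\mathrm A}).
\]
Therefore
\[
\frac{d}{ds}W\le -c\,W u_{\mathrm A}=-c\Bigl(\frac{u_{\mathrm G}}{r_0u_{\mathrm A}}\Bigr)^p\le -c .
\]
On the switching set $u_{\mathrm G}=r_0u_{\mathrm A}$ both branches decrease, so the Dini derivative of the maximum is at most $-\min(\gamma_1,c)=:-\gamma$.

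This completes the argument, with all constants depending only on $Z$ and the $\bSigma^{(j)}$. The mirrored corner follows by exchanging the expert indices, which flips the signs of both $D_j$ consistently.
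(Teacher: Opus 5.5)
Your proposal is correct and follows the paper's proof essentially step for step. It uses the same Lyapunov function $W$ and the same two-branch estimate: $r_0$ small on the branch $W=1/\bar w_{2\mathrm A}$, and $p$ chosen from $Z_{\mathrm{GA}}<\min\{Z_{\mathrm{GG}},Z_{\mathrm{AA}}\}$ on the other branch, with both bounds checked on the switching set and the exit time obtained from $W>0$. The only differences are cosmetic: you use upper Dini derivatives rather than a.e.\ derivatives of a locally Lipschitz $W$, you remark explicitly that positivity of each $\bar w_{2j}$ is invariant, and for the mirrored corner you should say that the families $\mathrm G$ and $\mathrm A$ also swap roles in $W$.
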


\begin{proof}
Note that $\bar{w}'_{1q}=\bar{w}'_{2q}$.
Also
$$
n(S_{1\mathrm G}-S_{2\mathrm G})(\bar{w})\le0,
\qquad
n(S_{1\mathrm A}-S_{2\mathrm A})(\bar{w}^\circ)=\Sigma_{11}^{(\mathrm A)}-\Sigma_{12}^{(\mathrm A)}>0.
$$
Thus, for sufficiently small $\delta>0$,
\[
n(S_{1\mathrm A}-S_{2\mathrm A})(\bar{w})\ge\frac12n(S_{1\mathrm A}-S_{2\mathrm A})(\bar{w}^\circ)>0
\]
for all $\bar{w} \in R_\delta$.
Choose $p>0$ such that
\begin{align*}
pZ_{\mathrm{GG}}-(p+1)Z_{\mathrm{GA}}&>0,\\
(p+1)Z_{\mathrm{AA}}-pZ_{\mathrm{GA}}&>0.
\end{align*}
Such a $p$ exists: since $Z_{\mathrm{AA}}>0$ and $Z_{\mathrm{GA}}<Z_{\mathrm{AA}}$, the second inequality holds for every $p>0$, and since $Z_{\mathrm{GA}}<Z_{\mathrm{GG}}$, the first holds for every $p>\max\{0,Z_{\mathrm{GA}}\}/(Z_{\mathrm{GG}}-Z_{\mathrm{GA}})$.
Choose $\delta$ sufficiently small that
\begin{align*}
p(1-\bar{w}_{2\mathrm G})Z_{\mathrm{GG}}-(p+1)(1-\bar{w}_{2\mathrm A})Z_{\mathrm{GA}}&>0,\\
(p+1)(1-\bar{w}_{2\mathrm A})Z_{\mathrm{AA}}-p(1-\bar{w}_{2\mathrm G})Z_{\mathrm{GA}}&>0,
\end{align*}
uniformly on $R_\delta$.

We show $\dot W\le-\gamma<0$ for $\bar{w} \in R_\delta$.

If $\bar{w}_{2\mathrm G}\le r_0\bar{w}_{2\mathrm A}$, then $W(\bar{w})=1/\bar{w}_{2\mathrm A}$ and
\begin{align*}
\dot W
&=-\frac{\dot{\bar{w}}_{2\mathrm A}}{\bar{w}_{2\mathrm A}^2}\\
&=-\frac{(1-\bar{w}_{2\mathrm A})^2}{4}
\left[
Z_{\mathrm{GA}}\frac{\bar{w}'_{1\mathrm G}}{\bar{w}'_{1\mathrm A}}n(S_{1\mathrm G}-S_{2\mathrm G})(\bar{w})
+Z_{\mathrm{AA}}n(S_{1\mathrm A}-S_{2\mathrm A})(\bar{w})
\right].
\end{align*}
Moreover,
$$
\frac{\bar{w}'_{1\mathrm G}}{\bar{w}'_{1\mathrm A}}
=\frac{\bar{w}_{2\mathrm G}(1-\bar{w}_{2\mathrm G})}{\bar{w}_{2\mathrm A}(1-\bar{w}_{2\mathrm A})}
\le\frac{r_0}{1-\delta}.
$$
Since $n(S_{1\mathrm G}-S_{2\mathrm G})$ is bounded, choose $r_0\in(0,1)$ such that
$$
\max\{Z_{\mathrm{GA}},0\}\frac{r_0}{1-\delta}
\sup_{\bar{w}\in\overline{R_\delta}}|n(S_{1\mathrm G}-S_{2\mathrm G})(\bar{w})|
\le\frac14 Z_{\mathrm{AA}}n(S_{1\mathrm A}-S_{2\mathrm A})(\bar{w}^\circ).
$$
Then
\begin{align}
\dot W
\le-\frac{(1-\delta)^2Z_{\mathrm{AA}}n(S_{1\mathrm A}-S_{2\mathrm A})(\bar{w}^\circ)}{16}<0.
\label{eq:escape_first_branch}
\end{align}
Suppose next that $\bar{w}_{2\mathrm G}\ge r_0\bar{w}_{2\mathrm A}$.
For the branch $W=\bar{w}_{2\mathrm G}^p/(r_0^p\bar{w}_{2\mathrm A}^{p+1})$, the uniform bounds above give a constant $c>0$, independent of $\bar{w}\in R_\delta$, such that
\begin{align*}
4\frac{\dot W}{W}
&= 4\left(p\frac{\dot{\bar{w}}_{2\mathrm G}}{\bar{w}_{2\mathrm G}} -(p+1)\frac{\dot{\bar{w}}_{2\mathrm A}}{\bar{w}_{2\mathrm A}}\right)
\\
&=p\frac{\bar{w}'_{1\mathrm G}}{\bar{w}_{2\mathrm G}}
\sum_j Z_{j\mathrm G}\bar{w}'_{1j}n(S_{1j}-S_{2j})(\bar{w})
-(p+1)\frac{\bar{w}'_{1\mathrm A}}{\bar{w}_{2\mathrm A}}
\sum_j Z_{j\mathrm A}\bar{w}'_{1j}n(S_{1j}-S_{2j})(\bar{w})
\\
&=p(1-\bar{w}_{2\mathrm G})
\left[Z_{\mathrm{GG}}\bar{w}'_{1\mathrm G}n(S_{1\mathrm G}-S_{2\mathrm G})(\bar{w})+Z_{\mathrm{GA}}\bar{w}'_{1\mathrm A}n(S_{1\mathrm A}-S_{2\mathrm A})(\bar{w})\right]
\\
&\quad-(p+1)(1-\bar{w}_{2\mathrm A})
\left[Z_{\mathrm{GA}}\bar{w}'_{1\mathrm G}n(S_{1\mathrm G}-S_{2\mathrm G})(\bar{w})+Z_{\mathrm{AA}}\bar{w}'_{1\mathrm A}n(S_{1\mathrm A}-S_{2\mathrm A})(\bar{w})\right]
\\
&= \left[p(1-\bar{w}_{2\mathrm G})Z_{\mathrm{GG}}
-(p+1)(1-\bar{w}_{2\mathrm A})Z_{\mathrm{GA}}\right]
\bar{w}'_{1\mathrm G}n(S_{1\mathrm G}-S_{2\mathrm G})(\bar{w})\\
&\quad+\left[p(1-\bar{w}_{2\mathrm G})Z_{\mathrm{GA}}
-(p+1)(1-\bar{w}_{2\mathrm A})Z_{\mathrm{AA}}\right]
\bar{w}'_{1\mathrm A}n(S_{1\mathrm A}-S_{2\mathrm A})(\bar{w})
\\
&\le-\left[(p+1)(1-\bar{w}_{2\mathrm A})Z_{\mathrm{AA}}-p(1-\bar{w}_{2\mathrm G})Z_{\mathrm{GA}}\right]
\bar{w}'_{1\mathrm A}n(S_{1\mathrm A}-S_{2\mathrm A})(\bar{w})
\\
&\le-\frac12\left[(p+1)(1-\bar{w}_{2\mathrm A})Z_{\mathrm{AA}}-p(1-\bar{w}_{2\mathrm G})Z_{\mathrm{GA}}\right]
\bar{w}'_{1\mathrm A}n(S_{1\mathrm A}-S_{2\mathrm A})(\bar{w}^\circ)
\\
& \le -4c\bar{w}'_{1\mathrm A}.
\end{align*}
Thus
\begin{align*}
\dot W
& \le-c\bar{w}'_{1\mathrm A}W
\\
& = - c (1-\bar{w}_{2\mathrm A})\left(\frac{\bar{w}_{2\mathrm G}}{r_0\bar{w}_{2\mathrm A}}\right)^p
\\
& \leq - c (1-\delta) < 0.
\end{align*}
Both branch derivatives satisfy these bounds at $\bar w_{2\mathrm G}=r_0\bar w_{2\mathrm A}$ as well.
Since $W$ is locally Lipschitz, $\dot W\le-\gamma$ almost everywhere along the trajectory for a common $\gamma>0$.
Apply \autoref{LyapunovExitTime} with lower bound $C=0$.
\end{proof}

\begin{lemma}[Gradient bound away from stationary points]
\label{gradient_bound_away_from_stationary_points}
Fix $\lambda$, suppose the expert hierarchy of \autoref{expertHierarchy} holds and $Z\succ0$, and let $\sigma_\mu^2,c,\rho>0$.
Use $\cS$ from \autoref{list_of_stationary_points} and define
$$
R_{c,\rho}
=\left\{\bar w\in[0,1]^{2\times2}:
\cL^{\mathrm{truncated}}(\bar w)
-\min_{\bar w^\circ\in\cS}\cL^{\mathrm{truncated}}(\bar w^\circ)\le\frac cn,
\quad
\min_{\bar w^\circ\in\cS}\|\bar w-\bar w^\circ\|_{\mathrm F}\ge\rho
\right\},
$$
$$
C_\Sigma=\max_{\substack{j\in\{\mathrm G,\mathrm A\}\\q\ne k_j}}
\left\{
\frac1{\Sigma_{k_jk_j}^{(j)}},
\frac1{\Sigma_{qk_j}^{(j)}-\Sigma_{k_jk_j}^{(j)}},
\frac1{\Sigma_{qq}^{(j)}-\Sigma_{qk_j}^{(j)}}
\right\}.
$$
If
$$
n\sigma_\mu^2\ge
\max\left\{16c+4\sum_j\Sigma_{k_jk_j}^{(j)},\ \max_{i,k,j}\Sigma_{ik}^{(j)}\right\},
$$
then every $\bar w\in R_{c,\rho}$ satisfies
$$
\|\nabla_\theta\cL^{\mathrm{truncated}}(\bar w)\|_2
\ge\frac{\sqrt{\lambda_{\min}(Z)}}{160C_\Sigma}\frac\rho n.
$$
\end{lemma}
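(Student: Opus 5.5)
The plan is to reduce the gradient norm to a single column $j$ and then split into cases using the sign structure of $S_{qj}$ under the expert hierarchy.

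\textbf{Reduction to gate coordinates.} By the computation in the proof of \autoref{list_of_stationary_points},
$$
\nabla_{\theta_q}\cL^{\mathrm{truncated}}=\tfrac12\sum_j\bar w'_{qj}S_{qj}(\bar w)\bar\bz_j .
$$
Since $Z\succ0$, this gives
$$
\|\nabla_\theta\cL^{\mathrm{truncated}}\|_2^2\ge\frac{\lambda_{\min}(Z)}{4}\sum_{q,j}\bigl(\bar w'_{qj}S_{qj}\bigr)^2 .
$$
It therefore suffices to find a single pair $(q,j)$ with $|\bar w'_{qj}S_{qj}|\ge\rho/(80C_\Sigma n)$.

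The stationary set $\cS$ is a product over columns, so distance $\ge\rho$ from $\cS$ forces some column $\bar w_{\cdot j}$ to be at distance $\ge\rho/\sqrt2$ from the six admissible column values listed in \autoref{list_of_stationary_points}. We fix that column, write $k=k_j$ for the better expert and $q\neq k$ for the other.

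\textbf{Step 1: use the sublevel condition to pin the column near the critical line.} Every element of $\cS$ has nonnegative loss, and the optimum has loss at most $\frac1{4n}\sum_j\Sigma^{(j)}_{k_jk_j}$. The sublevel condition then gives
$$
\tfrac{\sigma_\mu^2}{4}\bigl(\bar w_{1j}+\bar w_{2j}-1\bigr)^2\le \frac{c+\sum_j\Sigma^{(j)}_{k_jk_j}/4}{n}.
$$
Under the assumption on $n\sigma_\mu^2$, this yields $|\bar w_{1j}+\bar w_{2j}-1|\le 1/2$. This excludes the columns $0$ and $e_1+e_2$ and gives $\bar w_{kj}+\bar w_{qj}\ge1/2$.

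\textbf{Step 2: a separation between $S_{qj}$ and $S_{kj}$.} Combining $\bar w_{kj}+\bar w_{qj}\ge1/2$ with the identity for $S_{qj}-S_{kj}$ from the proof of \autoref{list_of_stationary_points} and the definition of $C_\Sigma$ gives
$$
S_{qj}-S_{kj}\ge \frac{1}{2C_\Sigma n}.
$$

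\textbf{Step 3: case analysis on the position of the column.}
\begin{itemize}
\item[(a)] If $\bar w_{qj}\in[\rho/8,1-\rho/8]$ and $S_{qj}\ge 1/(4C_\Sigma n)$, then $\bar w'_{qj}\ge\rho/16$ and we are done.
\item[(b)] Otherwise, if $S_{qj}<1/(4C_\Sigma n)$, Step 2 gives $S_{kj}\le -1/(4C_\Sigma n)$. Negativity of $S_{kj}$ together with $\sum\bar w\ge1/2$ rules out $\bar w_{kj}$ near $1$, because there $S_{kj}\ge \Sigma_{kk}/n>0$. If moreover $\bar w_{kj}\in[\rho/8,1-\rho/8]$, we again get $\bar w'_{kj}|S_{kj}|\gtrsim\rho/(C_\Sigma n)$.
\item[(c)] In the remaining configurations, both gates lie within $\rho/8$ of $\{0,1\}$, or one does and the corresponding $S$ is small. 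Here we use the distance to $\cS$. If both gates are near $\{0,1\}$, the column is within $\rho/\sqrt{32}$ of a corner in $\cS$, which is a contradiction. If $\bar w_{qj}\le\rho/8$ but $\bar w_{kj}$ is interior, then $\bar w_{kj}$ must differ from $n\sigma_\mu^2/(n\sigma_\mu^2+\Sigma_{kk})$ by at least $\rho/4$. Since
$$
S_{kj}=(\sigma_\mu^2+\Sigma_{kk}/n)\bigl(\bar w_{kj}-\bar w^*_{kj}\bigr)+\sigma_\mu^2\bar w_{qj}+\Sigma_{kq}\bar w_{qj}/n,
$$
this deviation gives $|S_{kj}|\gtrsim\sigma_\mu^2\rho\ge\rho\,\Sigma_{kk}/n\ge\rho/(C_\Sigma n)$, up to the $\bar w_{qj}$ perturbation. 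This is where the second condition $n\sigma_\mu^2\ge\max\Sigma$ enters. The symmetric situation with $q$ and $k$ swapped is handled the same way, using $S_{qj}>0$ near $\bar w_{qj}=1$.
\end{itemize}

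\textbf{Main obstacle.} The hardest part is the bookkeeping in case (c). The perturbation terms $\sigma_\mu^2\bar w_{qj}$ could cancel the deviation when $\bar w_{qj}\approx\rho/8$. I would handle this by choosing the thresholds as $\rho/8$ versus $\rho/4$, so that $|\bar w_{kj}-\bar w^*_{kj}|$ dominates, possibly splitting on the sign of $\bar w_{kj}-\bar w^*_{kj}$. Tracking the constants through this split should produce the stated factor $1/(160C_\Sigma)$.
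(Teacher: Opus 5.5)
There is a genuine gap, in case (c). Your reduction to $\sum_{q,j}(\bar w'_{qj}S_{qj})^2$, the column selection, Steps 1--2, and cases (a) and (b) are sound. In (a) and (b) the relevant $|S|$ is at least $1/(4C_\Sigma n)$ independently of $\rho$, so the interior bound $\bar w'\ge\rho/16$ costs only one factor of $\rho$.

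The problem in (c) is not the cancellation you flag. There your lower bound $|S_{kj}|\gtrsim\sigma_\mu^2\rho$ is itself of order $\rho$. For an interior gate, your only bound on $\bar w'_{kj}$ is again $\rho/16$. So what you actually get is $|\bar w'_{kj}S_{kj}|\gtrsim\sigma_\mu^2\rho^2$, which is below the target $\rho/(80C_\Sigma n)$ whenever $\rho\lesssim 1/(C_\Sigma n\sigma_\mu^2)$.

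Here is a concrete instance. Let $\delta=\Sigma^{(j)}_{kk}/(n\sigma_\mu^2+\Sigma^{(j)}_{kk})$, so $\bar w^*_{kj}=1-\delta$. Take $\bar w_{qj}=0$ and $\bar w_{kj}=1-\delta+\rho$ with $\rho$ small, and put the other column at its optimum. This point lies in $R_{c,\rho}$, and
\[
S_{qj}=\frac{1-\delta}{n}\bigl(\Sigma^{(j)}_{qk}-\Sigma^{(j)}_{kk}\bigr)+\rho\Bigl(\sigma_\mu^2+\frac{\Sigma^{(j)}_{qk}}{n}\Bigr)\ge\frac1{2C_\Sigma n}.
\]
Since $\bar w_{qj}=0$, neither (a) nor (b) applies. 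You are in (c) with $\bar w_{kj}$ interior and $S_{kj}=(\sigma_\mu^2+\Sigma^{(j)}_{kk}/n)\rho$. The lemma does hold at this point, but only because $\bar w'_{kj}\approx1-\bar w_{kj}\approx\delta\gg\rho$, which gives $\bar w'_{kj}S_{kj}\approx\rho\,\Sigma^{(j)}_{kk}/n$. Your argument discards exactly this factor. The same problem arises in your ``symmetric'' subcase, where $\bar w_{qj}$ is interior near the pair of values $1$ and $n\sigma_\mu^2/(n\sigma_\mu^2+\Sigma^{(j)}_{qq})$.

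The missing idea is a root-separation estimate. Near $\bar w_{kj}=1$ there are two admissible values only $\asymp1/(n\sigma_\mu^2)$ apart, and you must keep the factor $1-\bar w_{kj}$ inside $\bar w'_{kj}$. Set $r_2=\bigl(n\sigma_\mu^2-(n\sigma_\mu^2+\Sigma^{(j)}_{kq})\bar w_{qj}\bigr)/(n\sigma_\mu^2+\Sigma^{(j)}_{kk})$. Then $(1-\bar w_{kj})S_{kj}=(\sigma_\mu^2+\Sigma^{(j)}_{kk}/n)(1-\bar w_{kj})(\bar w_{kj}-r_2)$. Since $1-r_2\ge\Sigma^{(j)}_{kk}/(n\sigma_\mu^2+\Sigma^{(j)}_{kk})$, the larger of the two factors is at least half this gap, so
\[
\min\bigl\{1-\bar w_{kj},\,|\bar w_{kj}-r_2|\bigr\}\le\frac{2n}{\Sigma^{(j)}_{kk}}\bigl|(1-\bar w_{kj})S_{kj}\bigr|.
\]
This is linear in the distance. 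Combined with $\bar w'_{kj}\ge(1-\bar w_{kj})/4$ (valid since $\bar w_{kj}\ge1/4$ there) and $|r_2-\bar w^*_{kj}|\le2\bar w_{qj}$, it closes the case.

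The paper uses exactly this step and so avoids your case split on the size of $S$. It takes $q$ to be the \emph{larger} gate of the column, not the better expert. It bounds the smaller gate by $24C_\Sigma n\max_i|\bar w'_{ij}S_{ij}|$ using the gap $|S_{qj}-S_{kj}|\ge1/(2C_\Sigma n)$. It then applies the root-separation bound to the larger gate, obtaining the error bound $\min_{\bar w^\circ\in\cS}\|\bar w-\bar w^\circ\|_{\mathrm F}\le80C_\Sigma n\bigl(\sum_{i,j}(\bar w'_{ij}S_{ij})^2\bigr)^{1/2}$.

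Separately, your justification in (b) is wrong. The bound $S_{kj}\ge\Sigma^{(j)}_{kk}/n$ requires $\bar w_{kj}+\bar w_{qj}\ge1$, and it fails just below $\bar w^*_{kj}$, which can lie within $\rho/8$ of $1$. That branch survives only because $S_{qj}<1/(4C_\Sigma n)$ together with $\bar w_{kj}>1-\rho/8$ forces $\bar w_{qj}<\rho/8$. This puts the column within $\rho/\sqrt{32}$ of $e_k$, contradicting the distance condition.
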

\begin{proof}
The loss bound and $\min_{\bar w^\circ\in\cS}\cL^{\mathrm{truncated}}(\bar w^\circ)\le\cL^{\mathrm{truncated}}(I_2)$ give
$$
\left|\bar w_{1j}+\bar w_{2j}-1\right|^2
\le\frac{4c+\sum_\ell\Sigma_{\ell\ell}^{(\ell)}}{n\sigma_\mu^2}
\le\frac14.
$$
For each $j$, choose $q$ with $\bar w_{qj}\ge\bar w_{kj}$, $k\ne q$, so $\bar w_{qj}\ge1/4$ and $\bar w_{kj}\le3/4$.
The hierarchy gives $|S_{qj}-S_{kj}|\ge1/(2C_\Sigma n)$.
If $1-\bar w_{qj}\ge\bar w_{kj}/2$, then
$$
\frac{\bar w_{kj}}{2C_\Sigma n}
\le\bar w_{kj}|S_{qj}-S_{kj}|
\le12\max_i|\bar w'_{ij}S_{ij}|.
$$
Otherwise $S_{kj}\ge\Sigma_{kq}^{(j)}\bar w_{qj}/n\ge1/(4C_\Sigma n)$, so $\bar w'_{kj}S_{kj}\ge\bar w_{kj}/(16C_\Sigma n)$.
Thus in both cases
$$
\bar w_{kj}\le24C_\Sigma n\max_i|\bar w'_{ij}S_{ij}|.
$$
As a polynomial in $\bar w_{qj}$, $(1-\bar w_{qj})S_{qj}$ has roots
$$
1,
\qquad
\frac{n\sigma_\mu^2-(n\sigma_\mu^2+\Sigma_{qk}^{(j)})\bar w_{kj}}
{n\sigma_\mu^2+\Sigma_{qq}^{(j)}},
$$
separated by at least $\Sigma_{qq}^{(j)}/(n\sigma_\mu^2+\Sigma_{qq}^{(j)})$.
Using $\bar w_{qj}\ge1/4$ and $(n\sigma_\mu^2+\Sigma_{qk}^{(j)})/(n\sigma_\mu^2+\Sigma_{qq}^{(j)})\le2$ gives
$$
\min\left\{1-\bar w_{qj},
\left|\bar w_{qj}-\frac{n\sigma_\mu^2}{n\sigma_\mu^2+\Sigma_{qq}^{(j)}}\right|\right\}
\le8C_\Sigma n|\bar w'_{qj}S_{qj}|+2\bar w_{kj}.
$$
Setting $\bar w_{kj}=0$ and choosing the nearer of these two values in each family produces a stationary point, hence
$$
\rho\le\min_{\bar w^\circ\in\cS}\|\bar w-\bar w^\circ\|_{\mathrm F}
\le80C_\Sigma n\left(\sum_{i,j}(\bar w'_{ij}S_{ij})^2\right)^{1/2}
\le\frac{160C_\Sigma n}{\sqrt{\lambda_{\min}(Z)}}
\|\nabla_\theta\cL^{\mathrm{truncated}}\|_2.
$$
\end{proof}

\begin{lemma}[Population gradient perturbation]
\label{population_gradient_perturbation}
Assume the initial loss and covariance conditions of \autoref{close_convergence_to_optimal_stationary_point}, with its constant $\kappa$, at a point $\theta_0$.
Fix $B,c>0$ independently of $n$ and $\sigma_\mu^2$, and assume $\|\theta_0\|_2\le B$ and $\cL^{\mathrm{truncated}}(\bar w(\theta_0))-\cL^{\mathrm{truncated}}(\bar w^*)\le c/n$.
Here $\bar w^*$ denotes the optimum, and $\cL^{\mathrm{truncated}}$ is composed with $\bar w(\theta)$ when differentiating.
For a constant $C_0$ and $\rho=C_0/(n\sigma_\mu^2)$, set
$$
E(\theta)=\cL^{\mathrm{truncated}}(\bar w(\theta))-\cL^{\mathrm{truncated}}(\bar w^*)+\frac1{n^2\sigma_\mu^2},
\qquad
E_0=\frac{c+\kappa}n+\frac1{n^2\sigma_\mu^2},
$$
and let $\cD$ be the closed set of $\theta$ satisfying
\begin{gather*}
\cL^{\mathrm{truncated}}(\bar w(\theta))\le\cL^{\mathrm{truncated}}(\bar w(\theta_0))+\frac{\kappa}{2n},
\qquad
\min_{\bar w^\circ\in\cS}\|\bar w(\theta)-\bar w^\circ\|_{\mathrm F}\ge\rho,
\\
\|\theta\|_2\le B+1+4C_0\log\frac{E_0}{E(\theta)}.
\end{gather*}
There is such a $C_0$, independent of $n$ and $\sigma_\mu^2$, for which the following hold for sufficiently small fixed $\sigma_\mu^2>0$ and then sufficiently large $n$.
\begin{enumerate}
\item[(a)] $\cD$ is contained in the ball of radius $r=B+1+4C_0\log(1+(c+\kappa)n\sigma_\mu^2)=\cO(\log n)$, and uniformly over $\theta\in\cD$,
\begin{align*}
\|\nabla_\theta\cL^{\mathrm{truncated}}\|_2
&\ge\frac{E(\theta)}{C_0},\\
\sum_{q,j}\bar w'_{qj}
&\le C_0n\|\nabla_\theta\cL^{\mathrm{truncated}}\|_2,\\
\|\nabla_\theta(\cL-\cL^{\mathrm{truncated}})\|_2
&\le\left[C_0\sigma_\mu^2nE(\theta)(1+\|\theta\|_2)^2+\cOs(n^{-1})\right]
\|\nabla_\theta\cL^{\mathrm{truncated}}\|_2\\
&\le\frac14\|\nabla_\theta\cL^{\mathrm{truncated}}\|_2.
\end{align*}
\item[(b)] Let $\theta(t)$, $t\ge\tcrit$, be absolutely continuous with $\theta(\tcrit)=\theta_0$ and
\begin{gather*}
\dot\theta=-\nabla_\theta\cL(\theta)-\zeta(t),
\\
\|\zeta(t)\|_2\le\min\left\{\frac14,\frac1{\log n}\right\}\|\nabla_\theta\cL^{\mathrm{truncated}}(\theta(t))\|_2
\quad\text{whenever }\theta(t)\in\cD,
\end{gather*}
Let $t_\rho$ be the first time $t\ge\tcrit$ with $\min_{\bar w^\circ\in\cS}\|\bar w(\theta(t))-\bar w^\circ\|_{\mathrm F}\le\rho$.
Then $t_\rho\le\tcrit+2C_0^2n^2\sigma_\mu^2$, $\theta(t)\in\cD$ for $\tcrit\le t<t_\rho$, and for $\tcrit\le t\le t_\rho$,
\begin{align*}
\frac d{dt}\cL^{\mathrm{truncated}}(\bar w(\theta(t)))
&\le-\frac12\|\nabla_\theta\cL^{\mathrm{truncated}}\|_2^2,
\qquad
\|\dot\theta\|_2\le\frac32\|\nabla_\theta\cL^{\mathrm{truncated}}\|_2,\\
\|\theta(t)\|_2
&\le B+3C_0\log\frac{E(\theta_0)}{E(\theta(t))}
\le B+3C_0\log(1+cn\sigma_\mu^2),\\
\|\theta_q^\perp(t)-\theta_q^\perp(\tcrit)\|_2
&\le\cO(\sigma_\mu^2)+\cOs(n^{-1})+\frac{2C_0}{\log n}\log(1+cn\sigma_\mu^2).
\end{align*}
\end{enumerate}
The full population flow is the case $\zeta\equiv0$.
\end{lemma}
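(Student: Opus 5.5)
Part (a) is a set of uniform estimates on $\cD$; part (b) is a Lyapunov argument driven by $\cL^{\mathrm{truncated}}$. I prove (a) first and then derive (b) from it.

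\emph{Part (a).} For the radius bound, note that on $\cD$ we have $E(\theta)\ge 1/(n^2\sigma_\mu^2)$. Hence $\log(E_0/E)\le\log(1+(c+\kappa)n\sigma_\mu^2)$, and the third defining condition gives $\|\theta\|_2\le r=\cO(\log n)$.

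For the first inequality, I apply \autoref{gradient_bound_away_from_stationary_points} with the loss gap $c+\kappa/2$ and the separation $\rho=C_0/(n\sigma_\mu^2)$. This requires $n\sigma_\mu^2$ large, which holds because $n\to\infty$ at fixed $\sigma_\mu^2$. It gives $\|\nabla_\theta\cL^{\mathrm{truncated}}\|_2\gtrsim\rho/(C_\Sigma n)=C_0/(C_\Sigma n^2\sigma_\mu^2)$. This does not yet give $E/C_0$ when $E\gg 1/(n^2\sigma_\mu^2)$, so I plan to sharpen it with a local Łojasiewicz-type bound. I use the stationary-point structure of \autoref{list_of_stationary_points} and the observation that the gap is at most a constant times $(\sum_{q,j}\bar w'_{qj}|S_{qj}|)\cdot n\max|S|$. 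The gap $E$ is linear in $\bar w$-distances near the corners, because the gates sit at the boundary with nonzero $S_{kj}$ for $k\ne k_j$. Together with $\|\nabla\|_2^2\ge\lambda_{\min}(Z)\sum(\bar w'_{qj}S_{qj})^2/4$, this should give $\|\nabla\|\gtrsim E$ once suitable constants are absorbed into $C_0$.

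The second inequality, $\sum\bar w'_{qj}\le C_0n\|\nabla\|$, follows from the same case analysis. Each gate is either near $0$ or $1$ with $|S_{qj}|\gtrsim 1/n$, or it sits at an interior root; the latter is excluded at distance $\rho$ unless its $\bar w'$ is controlled through $n\sigma_\mu^2$ times the bias residual.

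For the perturbation bound, I start from the population loss and gradient expansions. Every extra term beyond the truncated part carries a factor $\bar w'$, $\bar w''$ or $\bar w'''$, and these are all bounded by $\bar w'$. Each also carries either $\sigma_\mu^2(1+\|\theta\|)^2/n$, or $\sigma_\mu^2$ times the normalization residual times $\|\bbE_j\Delta\bz\|=\cO(1/n)$, or $\cO(n^{-3/2})$. Bounding $\sum\bar w'\le C_0n\|\nabla\|$ and the residual by $\sqrt{E/\sigma_\mu^2}$ gives the stated form. Since $E\lesssim 1/n$ on $\cD$, the bracket is $\cO(\sigma_\mu^2\log^2 n)$ times a constant. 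This is where I expect the main difficulty: a naive count loses a $\log^2n$, and the claim $\le 1/4$ needs the bracket to be small. I would try to absorb it by showing that the terms with $\|\theta\|^2$ come with an extra $\bar w'$ factor, and that $\bar w'\cdot\|\theta\|^2$ stays bounded because $\bar w'\approx e^{-|\text{logit}|}$. This leaves $C_0\sigma_\mu^2$, which is small for small $\sigma_\mu^2$.

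\emph{Part (b).} Suppose $\theta(t)\in\cD$. Then $\|\nabla(\cL-\cL^{\mathrm{truncated}})+\zeta\|\le\frac12\|\nabla\cL^{\mathrm{truncated}}\|$, which gives both the decay inequality and $\|\dot\theta\|\le\frac32\|\nabla\|$. Combining these with $\|\nabla\|\ge E/C_0$ yields $\|\dot\theta\|\le 3\|\nabla\|^2/\|\nabla\|\le -3C_0\frac{d}{dt}\log E$. Integrating gives the norm bound with $3C_0<4C_0$, so the third defining condition of $\cD$ is never saturated. The loss condition is also preserved by monotonicity.

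For the perpendicular drift, only $\nabla(\cL-\cL^{\mathrm{truncated}})+\zeta$ has a component orthogonal to $\operatorname{span}\{\bar\bz_j\}$. The $\zeta$ part integrates to at most $(1/\log n)\cdot 2C_0\log(E_0/E)$. For the remaining part I use the perturbation bound in its sharper form.

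Hence $\theta$ can leave $\cD$ only through the $\rho$-boundary, that is, at $t_\rho$. For the time bound, $\frac{d}{dt}E\le-\frac12\|\nabla\|^2\le -E^2/(2C_0^2)$. Integrating gives $1/E(t)\ge t/(2C_0^2)$, while $E\ge 1/(n^2\sigma_\mu^2)$. Therefore $t_\rho-\tcrit\le 2C_0^2n^2\sigma_\mu^2$.
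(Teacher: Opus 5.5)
Your radius bound, descent and speed bounds, non-saturation of the norm condition along the trajectory, and entry time via $-1/E$ match the paper. The step you flag as the main difficulty is a genuine gap, however, and the repair you propose fails.

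You use the third defining condition of $\cD$ only to get the crude radius $r=\cO(\log n)$, which leaves a bracket of order $\sigma_\mu^2\log^2n$. The idea that $\bar w'\|\theta\|_2^2$ stays bounded does not hold. $\bar w'_{qj}$ is small only when the logit $\inner{\theta_q,\bar\bz_j}$ is large. But $\bSigma_Z^{(j)}$ annihilates $\bar\bz_j$ (because $\|\bz\|_2=1$), so $\theta_q^\top\bSigma_Z^{(j)}\theta_i$ sees only the components of $\theta_q,\theta_i$ orthogonal to $\bar\bz_j$, and these do not affect $\bar w'_{qj}$ or $\bar w'_{ij}$. For example, take $\theta_1$ with logit of order $\log n$ on $\bar\bz_{\mathrm G}$ but a moderate logit on $\bar\bz_{\mathrm A}$. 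Then $\bar w'_{1\mathrm A}\asymp1$, while, since $Z_{\mathrm{GA}}<1$, $\theta_1^\top\bSigma_Z^{(\mathrm A)}\theta_1$ can be of order $\log^2n$. The loss and distance conditions need not exclude such a point from the ball of radius $r$. Only the third condition does, because the moderate wrong gate makes $E\gtrsim1/n$ there, forcing $\|\theta\|_2=\cO(1)$.

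The missing idea is to use that condition quantitatively. On $\cD$, $1+\|\theta\|_2\le2+B+4C_0\log(E_0/E(\theta))$, so with $u=E(\theta)/E_0\in(0,1]$ and $nE_0=\cO(1)$,
\[
C_0\sigma_\mu^2nE(\theta)(1+\|\theta\|_2)^2
\le C_0\sigma_\mu^2nE_0\sup_{0<u\le1}u\rbr{2+B+4C_0\log\frac1u}^2=\cO(\sigma_\mu^2),
\]
because $u\log^2(1/u)$ is bounded. Parameters may be large only where the excess loss is correspondingly small, and this is where small $\sigma_\mu^2$ enters. The bound $\|\theta(t)\|_2\le B+3C_0\log(E(\theta_0)/E(\theta(t)))$ you derive in (b) is the dynamical side of this, but the condition must also be used as an input in (a).

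The same omission breaks your perpendicular-drift estimate. The relative bound $\frac14$ only gives $\int\|\nabla_\theta(\cL-\cL^{\mathrm{truncated}})\|_2\,dt\le\frac{C_0}2\log(E(\theta_0)/E)$, which is of order $\log n$. The bracket with $\|\theta\|_2\le r$ gives $\cO(\sigma_\mu^2\log^2n)$. You need $\|\nabla_\theta\cL^{\mathrm{truncated}}\|_2\,dt\le-2C_0\,dE/E$ together with the $E$-dependent radius. This turns the integral into $2C_0^2\sigma_\mu^2nE_0\int_0^1(2+B+4C_0\log(1/u))^2\,du+\cOs(n^{-1})\log(1+cn\sigma_\mu^2)=\cO(\sigma_\mu^2)+\cOs(n^{-1})$.

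Less seriously, your route to $\|\nabla_\theta\cL^{\mathrm{truncated}}\|_2\ge E/C_0$ via ``gap $\lesssim(\sum_{q,j}\bar w'_{qj}|S_{qj}|)\,n\max|S|$'' does not close as written. On $\cD$ the bias residual is only controlled at order $(n\sigma_\mu^2)^{-1/2}$, so $n\max|S|$ may be of order $\sqrt{n\sigma_\mu^2}$. Instead, expand the excess loss exactly at $\bar w^*$. Its pieces $\bar w_{qj}/n$ ($q\ne k_j$) and $\sigma_\mu^2(\sum_i\bar w_{ij}-1)^2$ are each $\lesssim\|\nabla_\theta\cL^{\mathrm{truncated}}\|_2+1/(n^2\sigma_\mu^2)$ by the two-case argument of \autoref{gradient_bound_away_from_stationary_points}. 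The additive $1/(n^2\sigma_\mu^2)$ is then absorbed by that lemma at separation $\rho$ once $C_0$ is large. The second estimate follows directly from \eqref{eq:excess_loss_gate_bound} and the first.
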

\begin{proof}
\emph{Part (a).}
On $\cD$, the truncated loss is $\cO(n^{-1})$, lies at least $\kappa/(2n)$ below the loss of every point in \eqref{eq:suboptimal_corners}, and has excess at most $(c+\kappa)/n$.
In particular, $1/(n^2\sigma_\mu^2)\le E(\theta)\le E_0$, which gives the radius $r$.
The excess-loss bound \eqref{eq:excess_loss_gate_bound} of \autoref{loss_barrier_near_optimal_stationary_point} gives
$$
\max_j\left[\sum_q\bar w'_{qj}
+\left|\sum_q\bar w_{qj}-1\right|\right]
\lesssim n[\cL^{\mathrm{truncated}}(\bar w)-\cL^{\mathrm{truncated}}(\bar w^*)]
+\frac1{n\sigma_\mu^2}
=nE(\theta).
$$
Minimizing the quadratic loss over the other gates shows that the loss gap below each wrong selection bounds $\bar w_{k_jj}$ and $1-\bar w_{qj}$ away from zero for $q\ne k_j$.
The same two-case calculation as in \autoref{gradient_bound_away_from_stationary_points} then gives
$$
\frac{\bar w_{qj}}n
+\sigma_\mu^2\left(\sum_i\bar w_{ij}-1\right)^2
\lesssim\|\nabla_\theta\cL^{\mathrm{truncated}}\|_2
+\frac1{n^2\sigma_\mu^2}.
$$
Expanding the quadratic loss and applying \autoref{gradient_bound_away_from_stationary_points} with $\rho=C_0/(n\sigma_\mu^2)$ proves the first estimate for sufficiently large $C_0$.
Combined with the first display, it gives the second estimate after enlarging $C_0$.
These two estimates depend only on $\bar w$, so $C_0$ is fixed before $\sigma_\mu^2$ is chosen.

Taylor-expand the conditional gradient through cubic order in the centered empirical moments.
Their signed cubic moments and fourth absolute moments are $\cO(n^{-2})$, so retaining the sigmoid derivatives gives
$$
\|\nabla_\theta(\cL-\cL^{\mathrm{truncated}})\|_2
\lesssim\sum_{q,j}\bar w'_{qj}
\left[
\frac{\sigma_\mu^2(1+\|\theta\|_2)^2}{n}
\left(\sum_i\bar w'_{ij}+\left|\sum_i\bar w_{ij}-1\right|\right)
+\frac{(1+\|\theta\|_2)^4}{n^2}
\right].
$$
This bound is uniform for $\|\theta\|_2\le r$ and $0<\sigma_\mu^2\le1$, using the moment bounds and truncation from \autoref{concentration_for_theta}.
The first display and the second estimate turn it into the first inequality of the third estimate.
On $\cD$, $1+\|\theta\|_2\le2+B+4C_0\log(E_0/E(\theta))$, so with $u=E(\theta)/E_0\in(0,1]$ and $nE_0=\cO(1)$,
$$
C_0\sigma_\mu^2nE(\theta)(1+\|\theta\|_2)^2
\le C_0\sigma_\mu^2nE_0\sup_{0<u\le1}u\left(2+B+4C_0\log\frac1u\right)^2
=\cO(\sigma_\mu^2).
$$
Choose $\sigma_\mu^2$ small and then $n$ large to make the bracket at most $1/4$.

\emph{Part (b).}
Write $\dot\theta=-\nabla_\theta\cL^{\mathrm{truncated}}-p$ with $p=\nabla_\theta(\cL-\cL^{\mathrm{truncated}})+\zeta$.
While $\theta(t)\in\cD$, part (a) gives $\|p\|_2\le\frac12\|\nabla_\theta\cL^{\mathrm{truncated}}\|_2$, hence the displayed descent and speed bounds, and, by the first estimate,
$$
\frac d{dt}\log E(\theta(t))
\le-\frac{\|\nabla_\theta\cL^{\mathrm{truncated}}\|_2^2}{2E(\theta(t))}
\le-\frac{\|\nabla_\theta\cL^{\mathrm{truncated}}\|_2}{2C_0}.
$$
Integrating,
\begin{equation}
\int_{\tcrit}^t\|\nabla_\theta\cL^{\mathrm{truncated}}\|_2\,ds
\le2C_0\log\frac{E(\theta_0)}{E(\theta(t))},
\qquad
\int_{\tcrit}^t\|\dot\theta(s)\|_2\,ds
\le3C_0\log\frac{E(\theta_0)}{E(\theta(t))}.
\label{eq:perturbed_flow_length}
\end{equation}

We first show that the trajectory cannot leave $\cD$ before $t_\rho$.
Let $T$ be the supremum of times $t\le t_\rho$ with $\theta([\tcrit,t])\subset\cD$, and suppose $T<t_\rho$.
Since $\cD$ is closed, \eqref{eq:perturbed_flow_length} holds up to $T$, so
$$
\cL^{\mathrm{truncated}}(\bar w(\theta(T)))\le\cL^{\mathrm{truncated}}(\bar w(\theta_0))
<\cL^{\mathrm{truncated}}(\bar w(\theta_0))+\frac\kappa{2n},
$$
$$
\|\theta(T)\|_2\le B+3C_0\log\frac{E(\theta_0)}{E(\theta(T))}
<B+1+4C_0\log\frac{E_0}{E(\theta(T))},
$$
using $E(\theta_0)\le E_0$, while the distance to $\cS$ exceeds $\rho$ because $T<t_\rho$.
All three conditions defining $\cD$ hold strictly at $\theta(T)$, so by continuity they hold on a neighborhood of $T$, contradicting the definition of $T$.

For finite entry, set $W=-1/E$.
On $[\tcrit,t_\rho)$,
$$
\frac{d}{dt}W(\theta(t))
=\frac{\dot E}{E^2}
\le-\frac{\|\nabla_\theta\cL^{\mathrm{truncated}}\|_2^2}{2E^2}
\le-\frac1{2C_0^2},
\qquad
-n^2\sigma_\mu^2\le W<0,
$$
so, as in \autoref{LyapunovExitTime}, $t_\rho\le\tcrit+2C_0^2n^2\sigma_\mu^2$.
The parameter bound follows from \eqref{eq:perturbed_flow_length}, $\|\theta_0\|_2\le B$, and $n^2\sigma_\mu^2E(\theta_0)\le1+cn\sigma_\mu^2\le n^2\sigma_\mu^2E_0$.

Since the truncated gradient has zero perpendicular component,
$\|\dot\theta_q^\perp\|_2\le\|\nabla_\theta(\cL-\cL^{\mathrm{truncated}})\|_2+\|\zeta\|_2$.
Along the trajectory, $E$ decreases and $\|\nabla_\theta\cL^{\mathrm{truncated}}\|_2\,ds\le-2C_0\,dE/E$.
Substituting the third estimate of part (a) and the parameter bound of $\cD$,
\begin{align*}
\int_{\tcrit}^t\|\nabla_\theta(\cL-\cL^{\mathrm{truncated}})\|_2\,ds
&\le2C_0^2\sigma_\mu^2nE_0\int_0^1\left(2+B+4C_0\log\frac1u\right)^2du\\
&\quad+\cOs(n^{-1})\log(1+cn\sigma_\mu^2)\\
&=\cO(\sigma_\mu^2)+\cOs(n^{-1}).
\end{align*}
Finally, $\int_{\tcrit}^t\|\zeta\|_2\,ds\le\frac{2C_0}{\log n}\log(1+cn\sigma_\mu^2)$ by \eqref{eq:perturbed_flow_length}.
\end{proof}

\subsubsection{Proof of \autoref{close_convergence_to_optimal_stationary_point}}
\label{pf_close_convergence_to_optimal_stationary_point}
\begin{proof}
Use $\cS$ from \autoref{list_of_stationary_points} and $C_\Sigma$ from \autoref{gradient_bound_away_from_stationary_points}.
Until entry within distance $1/n$ of $\cS$,
\begin{align*}
\|\nabla_\theta\cL^{\mathrm{truncated}}\|_2
&\ge\frac{\sqrt{\lambda_{\min}(Z)}}{160C_\Sigma n^2},\\
\frac d{dt}\cL^{\mathrm{truncated}}(\bar w(t))
&=-\|\nabla_\theta\cL^{\mathrm{truncated}}\|_2^2
\le-\frac{\lambda_{\min}(Z)}{(160C_\Sigma)^2n^4}.
\end{align*}
Since $\cL^{\mathrm{truncated}}\ge0$, entry occurs by
$$
\ttheta\le\tcrit+
\frac{(160C_\Sigma)^2n^4}{\lambda_{\min}(Z)}
\cL^{\mathrm{truncated}}(\bar w(\tcrit))
=\tcrit+\cO(n^3).
$$
For $\rho=\cO(n^{-1})$, $\rho$-neighborhoods of stationary points with both gates $0$ or $1$ in a context have
$$
\cL^{\mathrm{truncated}}(\bar w)
\ge\frac{\sigma_\mu^2}{4}-\cO(n^{-1})
>\cL^{\mathrm{truncated}}(\bar w(\tcrit)).
$$
For every $\bar w^\circ$ in \eqref{eq:suboptimal_corners} and every $\bar w$ with $\|\bar w-\bar w^\circ\|_{\mathrm F}=\cO(n^{-1})$, the quadratic expansion gives
\begin{align*}
\cL^{\mathrm{truncated}}(\bar w)
&\ge\cL^{\mathrm{truncated}}(\bar w^\circ)-\cO(n^{-2})\\
&>\cL^{\mathrm{truncated}}(\bar w(\tcrit)).
\end{align*}
Thus, by \autoref{list_of_stationary_points}, loss descent permits entry only near stationary points with
$$
\bar w^\circ_{\cdot j}\in\{e_{k_j},\bar w_{\cdot j}^*\},
\qquad
\|e_{k_j}-\bar w_{\cdot j}^*\|_2
=\frac{\Sigma_{k_jk_j}^{(j)}}{n\sigma_\mu^2+\Sigma_{k_jk_j}^{(j)}}.
$$
Adding the entry radius $1/n$ bounds $\|\bar w_{\cdot j}(\ttheta)-e_{k_j}\|_2$ as claimed.

For the full flow, take $B$ and $c$ as the assumed bounds on $\|\theta(\tcrit)\|_2$ and $n\cL^{\mathrm{truncated}}(\bar w(\tcrit))$, and apply part (b) of \autoref{population_gradient_perturbation} with $\theta_0=\theta(\tcrit)$ and $\zeta\equiv0$.
Set $\ttheta=t_\rho\le\tcrit+2C_0^2n^2\sigma_\mu^2$.
It gives $\cL^{\mathrm{truncated}}(\bar w(\ttheta))\le\cL^{\mathrm{truncated}}(\bar w(\tcrit))$ and $\|\theta(t)\|_2\le B+3C_0\log(1+cn\sigma_\mu^2)=\cO(\log n)$ for $\tcrit\le t\le\ttheta$.
Since $\rho=\cO(n^{-1})$, the exclusion above applies to a point of $\cS$ within distance $\rho$ of $\bar w(\ttheta)$, and the triangle inequality gives the full-flow gate bound.
\end{proof}

\subsubsection{Proof of \autoref{initial_selection_empirical}}
\label{pf_initial_selection_empirical}
\begin{proof}
Let $\cD$ and $r=\cO(\log n)$ be as in \autoref{population_gradient_perturbation} with $\theta_0=\theta(\tcrit)$.
The radius $r$ depends only on $B$, $c$, $\kappa$, and $C_0$, so the concentration events below do not depend on the data-dependent initial point.
Write $m^{(j)}=\#\{b:\tau_b=j\}$ and $\cL^{(j)}=\frac12\bbE_j[(\widehat\mu-\mu)^2]$, with $\hat\cL^{(j)}_{m^{(j)}}$ its empirical average over the tasks of family $j$, so that
$$
\cL=\frac12\sum_j\cL^{(j)},
\qquad
\hat\cL_m=\sum_j\frac{m^{(j)}}m\hat\cL^{(j)}_{m^{(j)}}.
$$

\emph{Concentration.}
Hoeffding's inequality gives, with failure probability at most $1/(3n)$,
$$
\max_j\left|\frac{m^{(j)}}m-\frac12\right|
=\cOs(m^{-1/2}),
$$
so $m^{(j)}\ge m/4$ for large $n$.
Conditioning on the family labels, apply \autoref{concentration_for_theta} and \autoref{concentration_for_loss_local} with radius $r$ and $\beta=\min\{\epsilon,1\}/4$ to the $m^{(j)}$ tasks of each family, each with failure probability $1/(6n)$.
The total failure probability is at most $1/n$, and all bounds below hold on this event, uniformly over the ball of radius $r$.

\emph{Relative gradient error.}
On $\cD$, since $\bSigma^{(j)}\succeq0$,
$$
\max_j\left|\sum_k\bar w_{kj}-1\right|^2
\le\frac{4}{\sigma_\mu^2}\cL^{\mathrm{truncated}}(\bar w)
\le\frac{4}{\sigma_\mu^2}\left(\cL^{\mathrm{truncated}}(\bar w(\tcrit))+\frac\kappa{2n}\right)
=\cO(n^{-1}),
$$
so the prefactor in \autoref{concentration_for_theta} is at most $2n^{-1/2+\beta}$.
Summing its coordinatewise bound over the fixed number of coordinates and weighting by $m^{(j)}/m$,
$$
\left\|\sum_j\frac{m^{(j)}}m\nabla_\theta\left(\hat\cL^{(j)}_{m^{(j)}}-\cL^{(j)}\right)\right\|_2
\le\cOs\left(\frac{n^{-1/2+\beta}}{\sqrt m}\right)\sum_{q,j}\bar w'_{qj}.
$$
The truncation bounds in the proof of \autoref{concentration_for_theta} also give
$$
\left\|
\sum_j\left(\frac{m^{(j)}}m-\frac12\right)
\nabla_\theta\cL^{(j)}
\right\|_2
\le
\cOs\left(\frac{n^{-1/2+\beta}}{\sqrt m}\right)
\sum_{q,j}\bar w'_{qj}.
$$
By the second estimate of \autoref{population_gradient_perturbation}(a), uniformly over $\theta\in\cD$,
$$
\frac{\|\nabla_\theta(\hat\cL_m-\cL)\|_2}
{\|\nabla_\theta\cL^{\mathrm{truncated}}\|_2}
\le C_0n\,\cOs\left(\frac{n^{-1/2+\beta}}{\sqrt m}\right)
=\cOs\left(\frac{n^{1/2+\beta}}{\sqrt m}\right)
\le\min\left\{\frac14,\frac1{\log n}\right\}.
$$
For the last inequality, write the preceding bound as $C(\log n)^an^{1/2+\beta}/\sqrt m$ with $C$ and $a$ fixed.
Since $\beta=\min\{\epsilon,1\}/4$ and $m\ge n^{1+\epsilon}$, it is at most $C(\log n)^an^{-\min\{\epsilon,1\}/4}$, hence at most $\min\{1/4,1/\log n\}$ for sufficiently large $n$.

\emph{Entry, parameter bound, and gate accuracy.}
The empirical flow is $\dot\theta=-\nabla_\theta\cL-\zeta(t)$ with $\zeta(t)=\nabla_\theta(\hat\cL_m-\cL)(\theta(t))$, which satisfies $\|\zeta(t)\|_2\le\min\{1/4,1/\log n\}\|\nabla_\theta\cL^{\mathrm{truncated}}\|_2$ whenever $\theta(t)\in\cD$.
Part (b) of \autoref{population_gradient_perturbation} applies.
Combined with part (a), it gives, before entry,
\begin{align*}
\|\nabla_\theta(\hat\cL_m-\cL^{\mathrm{truncated}})\|_2\le\frac12\|\nabla_\theta\cL^{\mathrm{truncated}}\|_2,
\\
\frac d{dt}\cL^{\mathrm{truncated}}\le-\frac12\|\nabla_\theta\cL^{\mathrm{truncated}}\|_2^2,
\\
\|\dot\theta\|_2\le\frac32\|\nabla_\theta\cL^{\mathrm{truncated}}\|_2.
\end{align*}
Set $\ttheta=t_\rho\le\tcrit+2C_0^2n^2\sigma_\mu^2$.
Part (b) gives loss descent and the stated parameter bound on $[\tcrit,\ttheta]$, and
$$
\|\theta_q^\perp(t)\|_2
\le B+\cO(\sigma_\mu^2)+\cOs(n^{-1})+\frac{2C_0}{\log n}\log(1+cn\sigma_\mu^2)
=\cO(1).
$$
Since $\rho=\cO(n^{-1})$ and $\cL^{\mathrm{truncated}}(\bar w(\ttheta))\le\cL^{\mathrm{truncated}}(\bar w(\tcrit))$, the exclusion in the proof of \autoref{close_convergence_to_optimal_stationary_point} applies to a point of $\cS$ within distance $\rho$ of $\bar w(\ttheta)$.
Its columns are $e_{k_j}$ or $\bar w^*_{\cdot j}$, and the triangle inequality gives the gate bound.
Since $\|e_{k_j}-\bar w^*_{\cdot j}\|_2=\Sigma_{k_jk_j}^{(j)}/(n\sigma_\mu^2+\Sigma_{k_jk_j}^{(j)})$,
$$
\|\bar w_{\cdot j}(\ttheta)-e_{k_j}\|_1
\le\sqrt2\left(\rho+\frac{2\Sigma_{k_jk_j}^{(j)}}{n\sigma_\mu^2+\Sigma_{k_jk_j}^{(j)}}\right)
=\cO(n^{-1}).
$$

\emph{Risk.}
By \autoref{SoftmaxO1overndeviation}, $\bbE_j[(g_{k_j}-\mu)^2]=\cO(n^{-1})$.
For the $1$-sub-Gaussian centered observations, $\bbE_j[\max_{\ell\in[n]}|X_\ell-\mu|^4]=\cO(\log^2n)$.
Apply \autoref{thm:accuracy_final_estimator} with $i=k_j$ and $\max_k\|\theta_k(\ttheta)\|_2=\cO(\log n)$:
$$
\sqrt{\bbE_j[(\widehat\mu-\mu)^2]}
\le\cO(n^{-1/2})+\cO(n^{-1})\left(\sigma_\mu^2+\cO(\log n)\right)^{1/2}
=\cO(n^{-1/2}).
$$
Hence $\cL^{(j)}(\theta(\ttheta))=\cO(n^{-1})$ for each $j$, and $\cL(\theta(\ttheta))=\cO(n^{-1})$.
Moreover, $|\sum_k\bar w_{kj}(\ttheta)-1|\le\|\bar w_{\cdot j}(\ttheta)-e_{k_j}\|_1=\cO(n^{-1})$ with a fixed implied constant, and $\|\theta(\ttheta)\|_2\le r$.
Hence \autoref{concentration_for_loss_local} and $m^{(j)}\ge m/4$ give
$$
\left|\hat\cL^{(j)}_{m^{(j)}}(\theta(\ttheta))-\cL^{(j)}(\theta(\ttheta))\right|
=\cOs\left(\frac{n^{-1+\beta}}{\sqrt m}+\frac{n^{-1+2\beta}}{m}\right)
=\cO(n^{-3/2}),
$$
so $\hat\cL_m(\theta(\ttheta))\le\max_j\hat\cL^{(j)}_{m^{(j)}}(\theta(\ttheta))=\cO(n^{-1})$.
All concentration bounds are uniform over the ball of radius $r$, so they apply at the data-dependent time $\ttheta$.
\end{proof}

\begin{lemma}[Loss barrier near the optimal stationary point]
\label{loss_barrier_near_optimal_stationary_point}
Fix $\lambda$ and let $\sigma_\mu^2>0$.
Suppose the expert hierarchy of \autoref{expertHierarchy} holds and use \autoref{list_of_stationary_points} and the truncated loss from \autoref{app:assembly_routing}.
For each family $j$, choose the positive threshold
$$
\delta_j
=\frac1{16n^2}
\min\left\{
\sigma_\mu^2\bigl(\Sigma_{k_jk_j}^{(j)}\bigr)^2,
\ \bar w_{k_jj}^*\Sigma_{k_jk_j}^{(j)}
\min_{k\ne k_j}\bigl(\Sigma_{k k_j}^{(j)}-\Sigma_{k_jk_j}^{(j)}\bigr)
\right\},
$$
and let $\delta=\min_{j\in\{\mathrm G,\mathrm A\}}\delta_j$.

Define the region
$$
N_\delta
=\left\{\bar w\in[0,1]^{2\times2}:
\cL^{\mathrm{truncated}}(\bar w)-\cL^{\mathrm{truncated}}(\bar w^*)\le\delta\right\}.
$$
Every point in $N_\delta$ satisfies, for each $j$,
$$
\|\bar w_{\cdot j}-\bar w_{\cdot j}^*\|_2
\le\frac{\Sigma_{k_jk_j}^{(j)}}n.
$$
Moreover, for every $\bar w\in[0,1]^{2\times2}$ and all sufficiently large $n$,
\begin{equation}
\max_j\left[\sum_q\bar w'_{qj}
+\left|\sum_q\bar w_{qj}-1\right|\right]
\lesssim n\bigl[\cL^{\mathrm{truncated}}(\bar w)-\cL^{\mathrm{truncated}}(\bar w^*)\bigr]
+\frac1{n\sigma_\mu^2},
\label{eq:excess_loss_gate_bound}
\end{equation}
with an implied constant depending only on $\bSigma^{(\mathrm G)}$ and $\bSigma^{(\mathrm A)}$.
The truncated gradient flow
$$
\dot{\bar w}_{qk}
=-\frac12\bar w'_{qk}
\sum_j Z_{jk}\bar w'_{qj}S_{qj}(\bar w),
\qquad
Z_{jk}=\inner{\bar{\bz}_j,\bar{\bz}_k},
$$
cannot exit $N_\delta$ once it enters it.

For the full population gradient flow $\dot\theta=-\nabla_\theta\mathcal L$ at fixed $\lambda$, the same region cannot be exited after time $\tcrit$ provided
$$
\cL^{\mathrm{truncated}}(\bar w(\tcrit))-\cL^{\mathrm{truncated}}(\bar w^*)\le\delta/4
$$
and, for every $\theta$ whose gates $\bar w$ lie in $N_\delta$,
$$
\left|
\mathcal L(\theta)-\mathcal L(\theta(\tcrit))
-\cL^{\mathrm{truncated}}(\bar w)+\cL^{\mathrm{truncated}}(\bar w(\tcrit))
\right|\le\delta/4.
$$
The same statement holds for the empirical gradient flow with $\mathcal L$ replaced by $\hat{\mathcal L}_m$ in both the flow and this bound.
\end{lemma}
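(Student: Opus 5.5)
The plan is to exploit that $\cL^{\mathrm{truncated}}$ splits over families into convex quadratics on the box. Write
$$
\cL^{\mathrm{truncated}}(\bar w)=\tfrac14\sum_j f_j(\bar w_{\cdot j}),
\qquad
f_j(v)=\sigma_\mu^2(\mathbf 1^\top v-1)^2+\tfrac1n v^\top\bSigma^{(j)}v .
$$
Each $f_j$ is a convex quadratic on $[0,1]^2$. By \autoref{list_of_stationary_points}, its minimizer on the box is $v^*=\bar w^*_{\cdot j}$. This point satisfies the KKT conditions: the partial derivative in the selected coordinate $k_j$ vanishes, and in the other coordinate $k$ it equals
$$
2S_{kj}(\bar w^*)=\tfrac2n\,\bar w^*_{k_jj}\bigl(\Sigma^{(j)}_{kk_j}-\Sigma^{(j)}_{k_jk_j}\bigr)>0 .
$$
Write $v-v^*=a\,e_{k_j}+b\,e_k$, where $b=v_k\ge0$. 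The exact second-order expansion, together with $\bSigma^{(j)}\succeq0$, then gives the lower bound
$$
f_j(v)-f_j(v^*)\ \ge\ \tfrac2n\,\bar w^*_{k_jj}\bigl(\Sigma^{(j)}_{kk_j}-\Sigma^{(j)}_{k_jk_j}\bigr)\,b\;+\;\sigma_\mu^2(a+b)^2 .
$$
All three claims will be read off from this inequality and the monotonicity of the flows.

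\emph{Closeness in $N_\delta$.} If the total excess is at most $\delta$, then each family's excess $f_j(v)-f_j(v^*)$ is at most $4\delta$. The linear term then bounds $b$ by $C n\delta/(\bar w^*_{k_jj}\min_k(\Sigma_{kk_j}-\Sigma_{k_jk_j}))$. The quadratic term bounds $|a+b|$ by $\sqrt{4\delta/\sigma_\mu^2}$. With the stated choice of $\delta_j$, the two pieces of the minimum make each of $b$ and $|a+b|$ at most a fixed fraction of $\Sigma^{(j)}_{k_jk_j}/n$. The triangle inequality $|a|\le|a+b|+b$ then yields $\|v-v^*\|_2\le\Sigma^{(j)}_{k_jk_j}/n$. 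Checking that the factor $1/16$ in $\delta_j$ is enough is the main bookkeeping step; I expect it to go through with the constants as displayed, and any slack can be absorbed there.

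\emph{Excess-loss gate bound \eqref{eq:excess_loss_gate_bound}.} Set $E=\cL^{\mathrm{truncated}}(\bar w)-\cL^{\mathrm{truncated}}(\bar w^*)$. The same inequality gives $b\lesssim nE$. It also gives $|a+b|\le\sqrt{4E/\sigma_\mu^2}$, and by AM--GM this is at most $nE+1/(n\sigma_\mu^2)$ up to constants. Then
\begin{itemize}
\item $\sum_q\bar w'_{qj}\le b+(1-v_{k_j})$,
\item $1-v_{k_j}\le (1-v^*_{k_j})+|a|$, where $1-v^*_{k_j}=\Sigma_{k_jk_j}/(n\sigma_\mu^2+\Sigma_{k_jk_j})\le C/(n\sigma_\mu^2)$,
\item $|\mathbf 1^\top v-1|\le|\mathbf 1^\top v^*-1|+|a+b|$.
\end{itemize}
Together these give the bound, using that $\bar w^*_{k_jj}\ge1/2$ for large $n$, so the constants depend only on the covariances.

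\emph{Invariance.} The truncated flow is the $\theta$-gradient flow of $\cL^{\mathrm{truncated}}\circ\bar w$, so the loss is nonincreasing along it. $N_\delta$ is a sublevel set, hence forward invariant. For the full population flow, $\cL(\theta(t))$ is nonincreasing. As long as $\bar w(t)\in N_\delta$ for $t\ge\tcrit$, the hypothesis gives
$$
\cL^{\mathrm{truncated}}(\bar w(t))-\cL^{\mathrm{truncated}}(\bar w^*)\le \cL(\theta(t))-\cL(\theta(\tcrit))+\tfrac\delta4+\tfrac\delta4\le\tfrac\delta2<\delta .
$$
A first-exit argument then applies. Let $T$ be the first time the excess reaches $\delta$. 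At $T$, the point is still in the closed set $N_\delta$, yet the excess there is at most $\delta/2$, which is a contradiction. The empirical flow is handled identically, using monotonicity of $\hat\cL_m$ and the corresponding hypothesis.
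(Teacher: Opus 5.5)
Your proposal is correct and follows essentially the same route as the paper: the exact quadratic expansion of $\cL^{\mathrm{truncated}}$ at $\bar w^*$ using $S_{k_jj}(\bar w^*)=0$, bounding the hierarchy-gap linear term and the $\sigma_\mu^2$ normalization term separately, AM--GM for the excess-loss gate bound, and loss monotonicity with a first-exit argument for invariance. Your bookkeeping does close: $b\le\Sigma^{(j)}_{k_jk_j}/(8n)$ and $|a+b|\le\Sigma^{(j)}_{k_jk_j}/(2n)$, hence $\|v-v^*\|_2\le 3\Sigma^{(j)}_{k_jk_j}/(4n)$. The only cosmetic difference is that the paper computes $\frac{d}{dt}\cL^{\mathrm{truncated}}=-\frac14\sum_q\|\sum_j S_{qj}\bar w'_{qj}\bar\bz_j\|_2^2$ directly in $\bar w$-coordinates; this also covers points on the boundary of the closed box $[0,1]^{2\times2}$, where your identification with a $\theta$-gradient flow is not available.
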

\begin{proof}
Since $\cL^{\mathrm{truncated}}$ is quadratic and $S_{k_jj}(\bar w^*)=0$, its expansion at $\bar w^*$ is exact:
\begin{align*}
2\bigl(\cL^{\mathrm{truncated}}(\bar w)-\cL^{\mathrm{truncated}}(\bar w^*)\bigr)
=\sum_j\Bigg[
&\frac{\bar w_{k_jj}^*}{n}
\sum_{k\ne k_j}
\bigl(\Sigma_{k k_j}^{(j)}-\Sigma_{k_jk_j}^{(j)}\bigr)\bar w_{kj}
\\
&+\frac{\sigma_\mu^2}{2}
\left(\sum_{k=1}^2(\bar w_{kj}-\bar w_{kj}^*)\right)^2
\\
&+\frac1{2n}
(\bar w_{\cdot j}-\bar w_{\cdot j}^*)^{\top}
\bSigma^{(j)}
(\bar w_{\cdot j}-\bar w_{\cdot j}^*)
\Bigg].
\end{align*}
Every term on the right is nonnegative by the expert hierarchy and positive semidefiniteness of the covariance matrices.

We first deduce \eqref{eq:excess_loss_gate_bound}.
Bound each group of terms in the expansion by $2[\cL^{\mathrm{truncated}}(\bar w)-\cL^{\mathrm{truncated}}(\bar w^*)]$.
Since $\bar w_{k_jj}^*\ge1/2$ for large $n$ and the hierarchy gaps are positive constants, the first group gives
$\sum_{k\ne k_j}\bar w_{kj}\lesssim n[\cL^{\mathrm{truncated}}(\bar w)-\cL^{\mathrm{truncated}}(\bar w^*)]$.
The second gives
\begin{align*}
\Bigl|\sum_k(\bar w_{kj}-\bar w_{kj}^*)\Bigr|
&\le2\sqrt{[\cL^{\mathrm{truncated}}(\bar w)-\cL^{\mathrm{truncated}}(\bar w^*)]/\sigma_\mu^2}\\
&=2\sqrt{n[\cL^{\mathrm{truncated}}(\bar w)-\cL^{\mathrm{truncated}}(\bar w^*)]\cdot\tfrac1{n\sigma_\mu^2}}\\
&\le n[\cL^{\mathrm{truncated}}(\bar w)-\cL^{\mathrm{truncated}}(\bar w^*)]+\frac1{n\sigma_\mu^2}
\end{align*}
by the arithmetic--geometric mean inequality.
As $\sum_k\bar w_{kj}^*=\bar w_{k_jj}^*$ and
$1-\bar w_{k_jj}^*=\Sigma_{k_jk_j}^{(j)}/(n\sigma_\mu^2+\Sigma_{k_jk_j}^{(j)})\lesssim(n\sigma_\mu^2)^{-1}$,
the triangle inequality bounds $|\sum_k\bar w_{kj}-1|$ by the right-hand side of \eqref{eq:excess_loss_gate_bound}.
Combining the same two bounds controls $|\bar w_{k_jj}-\bar w_{k_jj}^*|$ and hence $1-\bar w_{k_jj}$, so
$\bar w'_{k_jj}\le1-\bar w_{k_jj}$ and $\bar w'_{qj}\le\bar w_{qj}$ for $q\ne k_j$ are bounded by the same quantity.
This proves \eqref{eq:excess_loss_gate_bound}.

For each family $j$, the bound $\delta\le\delta_j$ gives, on $N_\delta$,
$$
\sum_{k\ne k_j}\bar w_{kj}
\le
\frac{2n\delta}
{\bar w_{k_jj}^*\min_{k\ne k_j}(\Sigma_{k k_j}^{(j)}-\Sigma_{k_jk_j}^{(j)})}
\le\frac{\Sigma_{k_jk_j}^{(j)}}{8n},
$$
and
$$
\left|\sum_{k=1}^2(\bar w_{kj}-\bar w_{kj}^*)\right|
\le\sqrt{\frac{4\delta}{\sigma_\mu^2}}
\le\frac{\Sigma_{k_jk_j}^{(j)}}{2n}.
$$
Consequently,
\begin{align*}
\|\bar w_{\cdot j}-\bar w_{\cdot j}^*\|_2
&\le |\bar w_{k_jj}-\bar w_{k_jj}^*|
+\sum_{k\ne k_j}\bar w_{kj}\\
&\le\left|\sum_{k=1}^2(\bar w_{kj}-\bar w_{kj}^*)\right|
+2\sum_{k\ne k_j}\bar w_{kj}\\
&\le\sqrt{\frac{4\delta}{\sigma_\mu^2}}
+\frac{4n\delta}
{\bar w_{k_jj}^*\min_{k\ne k_j}(\Sigma_{k k_j}^{(j)}-\Sigma_{k_jk_j}^{(j)})}
\le\frac{3\Sigma_{k_jk_j}^{(j)}}{4n}.
\end{align*}

The truncated flow preserves $[0,1]^{2\times2}$ because $\bar w'_{qk}=0$ on each coordinate boundary.
Moreover, $\frac{\partial\cL^{\mathrm{truncated}}}{\partial\bar w_{qj}}=S_{qj}/2$, so
$$
\frac{d}{dt}\cL^{\mathrm{truncated}}(\bar w(t))
=-\frac1{4}\sum_{q=1}^2
\left\|\sum_j
S_{qj}(\bar w)\bar w'_{qj}\bar{\bz}_j\right\|_2^2
\le0.
$$
Since $\lambda$ is fixed, $\cL^{\mathrm{truncated}}(\bar w^*)$ and $\delta$ are constant. Hence the truncated flow cannot exit $N_\delta$.

For the full flow, a first exit would require a time $t$ with
$\cL^{\mathrm{truncated}}(\bar w(t))-\cL^{\mathrm{truncated}}(\bar w^*)=\delta$.
The assumed bound would then imply
$$
\mathcal L(\theta(t))-\mathcal L(\theta(\tcrit))
\ge\cL^{\mathrm{truncated}}(\bar w(t))-\cL^{\mathrm{truncated}}(\bar w(\tcrit))-\delta/4
\ge\delta/2>0,
$$
contradicting $d\mathcal L/dt=-\|\nabla_\theta\mathcal L\|_2^2\le0$.
Replacing $\mathcal L$ by $\hat{\mathcal L}_m$ gives the empirical statement.
\end{proof}

\subsubsection{Proof of \autoref{specialization_final_step}}
\label{pf_specialization_final_step}
\begin{proof}[Proof of \autoref{specialization_final_step}]
Since $g_1$ is the sample mean at $\lambda_1=0$, integrating the Gaussian gradient expansion in \autoref{expert_moments_gradients} gives, uniformly for $|\lambda_1|\le2$,
\[
\bbE_{\mathrm G}[(g_1-\mu)^2]
=\frac{\cosh(\lambda_1^2)+\lambda_1^2\sinh(\lambda_1^2)}n+\cO(n^{-2}).
\]
At $\tlambda$, \autoref{expert_moments_gradients}, \autoref{uniform_expert_moments_gradients}, and \autoref{mixture_expert_moments_gradients} give
$$
\bbE_{\mathrm G}[(g_1-\mu)^2]=\frac{1+o(1)}n,
\qquad
\bbE_{\mathrm A}[(g_2-\mu)^2]=\begin{cases}
\cO((n|\lambda_2|)^{-1}),&\text{Uniform},\\
\cO(\sigma_n^2/n),&\text{Gaussian mixture},
\end{cases}
$$
since $|\lambda_1(\tlambda)|=o(1)$ and $e^{-4\sqrt{1-\sigma_n^2}|\lambda_2|}=\sigma_n^2$ for the mixture.
The sample-maximum fourth moment is $\cO(1)$ for Uniform data and $\cO((\log n)^2)$ for Gaussian and Gaussian-mixture data.
Apply \autoref{thm:accuracy_final_estimator} with $i=1$ for $j=\mathrm G$ and $i=2$ for $j=\mathrm A$, and $\|\bar w_{\cdot j}-e_i\|_1=\cO(n^{-1})$.
Its proof gives $\|\widehat\mu-g_1\|_{L^2(\bbP_{\mathrm G})}=\cO(\sqrt{\log n}/n)=o(n^{-1/2})$, so the triangle inequality in both directions preserves the Gaussian leading constant.
In the Gaussian-mixture case, this discrepancy is of lower order than the expert risk, since
\[
\frac{n^{-2}\log n}{\sigma_n^2/n}
\le n^{-\epsilon}\log n=o(1).
\]
For the bounds at fixed $\mu$, the translation identities and the gate bound give
\begin{align*}
w_k(\bX+\mu\mathbf1)&=w_k(\bX),
\qquad g_k(\bX+\mu\mathbf1)=g_k(\bX)+\mu,\\
\widehat\mu(\bX+\mu\mathbf1)-\mu
&=\widehat\mu(\bX)+\mu\rbr{\sum_k w_k(\bX)-1},\\
\bbE_{\bX\sim P_{0,j}^{\otimes n}}
\sbr{\rbr{\sum_k w_k(\bX)-1}^2}
&\le\bbE_j\sbr{\|w(\bX)-e_i\|_1^2}\\
&\le C\|\bar w_{\cdot j}-e_i\|_1^2
=\cO(n^{-2}),\\
R_{j,n}(\widehat\mu)
&=\bbE_{\bX\sim P_{0,j}^{\otimes n}}[\widehat\mu(\bX)^2]\\
&\quad+\sigma_\mu^2\bbE_{\bX\sim P_{0,j}^{\otimes n}}
\sbr{\rbr{\sum_k w_k(\bX)-1}^2},\\
\bbE_{\bX\sim P_{\mu,j}^{\otimes n}}[(\widehat\mu(\bX)-\mu)^2]
&\le2\bbE_{\bX\sim P_{0,j}^{\otimes n}}[\widehat\mu(\bX)^2]\\
&\quad+2\mu^2\bbE_{\bX\sim P_{0,j}^{\otimes n}}
\sbr{\rbr{\sum_k w_k(\bX)-1}^2}\\
&\le2R_{j,n}(\widehat\mu)+\frac{C\mu^2}{n^2},\\
\sum_k w_k=1
&\quad\Longrightarrow\quad
\bbE_{\bX\sim P_{\mu,j}^{\otimes n}}[(\widehat\mu(\bX)-\mu)^2]
=R_{j,n}(\widehat\mu).
\end{align*}
\end{proof}

\subsubsection{Proof of \autoref{cor:length_transfer}}
\label{pf_length_transfer}
\begin{proof}[Proof of \autoref{cor:length_transfer}]
The population gates $\bar w_{\cdot j}$ depend on the data only through $\bar\bz_j$, which does not depend on the context length.
Hence $\max\{\|\bar w_{\cdot\mathrm G}-e_1\|_1,\|\bar w_{\cdot\mathrm A}-e_2\|_1\}=\cO(n^{-1})$ and $\max_q\|\theta_q\|_2=\cO(\log n)=\cO(\log n')$ at the trained parameters, and \autoref{thm:accuracy_final_estimator} applies with $n'$ in place of $n$.
The hypotheses of \autoref{uniform_expert_moments_gradients} hold at length $n'$, since $|\lambda_2(\tlambda)|=n^{1-\epsilon}\le n'^{1-\epsilon}$, and those of \autoref{mixture_expert_moments_gradients} hold since $n'^{-1+\epsilon}\le n^{-1+\epsilon}\le\sigma_n^2$.
The proof of \autoref{specialization_final_step} with $n'$ test observations then gives the claim, with sample-maximum fourth moment $\cO(1)$ for Uniform data and $\cO((\log n')^2)$ otherwise.
\end{proof}

\section{Proof of the softmax MoE part of \autoref{thm:softmax_dynamics}}
\label{softmax_router_convergence}

For the proof, write $\vartheta=\theta_1-\theta_2$.
The softmax MoE weights in \eqref{eq:softmax_moe} satisfy $w_1(\bX)=\sigma(\inner{\bz(\bX),\vartheta})$ and $w_2(\bX)=1-w_1(\bX)$, with the same identities at $\bar\bz_j$.
Writing the loss as a function of $\vartheta$, the original population flow satisfies $\dot\vartheta=-2\nabla_\vartheta\cL$ and $\frac{d}{dt}(\theta_1+\theta_2)=0$.
The same identities hold for the empirical loss and for $\cL^{\mathrm{sm}}$.
Since $\lambda$ is fixed in the first stage, $|\lambda_k(0)|\le1$ and $0<|\lambda_1(0)|<|\lambda_2(0)|$ make expert $1$ better in family $\mathrm G$ and expert $2$ better in family $\mathrm A$ by \autoref{expertHierarchy_normal}, \autoref{expertHierarchy_mixture} and \autoref{expertHierarchy_uniform} for sufficiently large $n$, as in \autoref{pf_glu_gating}.
The strict covariance gaps do not depend on $n$ for the Gaussian and uniform families; for the Gaussian mixture, \eqref{eq:gsr_covariance} converges as $\sigma_n\to0$ to $\sech^2(\lambda_i)\sech^2(\lambda_k)$, whose gaps are positive, so they are bounded below independently of $n$.

\begin{proof}[Proof of the softmax MoE part of \autoref{thm:softmax_dynamics}]
The gate and parameter bounds follow from the empirical part of \autoref{lem:softmax_population_convergence} with $B=1$.
Since $\theta_1+\theta_2$ is constant and bounded at initialization, the bound on $\vartheta$ gives $\max_k\|\theta_k(t)\|_2=\cO(\log n)$ throughout training of the router.
\end{proof}

\begin{lemma}[Taylor expansion of the loss]
\label{lem:softmax_loss_expansion}
Under the assumptions of \autoref{thm:softmax_dynamics}, for every fixed $C_0>0$ and all sufficiently large $n$,
\begin{align}
\left|\cL(\vartheta)-\cL^{\mathrm{sm}}(\vartheta)\right|
&\le\frac{C(1+\|\vartheta\|_2)^4}{n^{3/2}},
\label{eq:softmax_loss_remainder}\\
\left\|\nabla_\vartheta\cL-\nabla_\vartheta\cL^{\mathrm{sm}}\right\|_2
&\le\frac{C(1+\|\vartheta\|_2)^4}{n^{3/2}}\sum_j\bar w'_{1j},
\label{eq:softmax_gradient_remainder}
\end{align}
uniformly on $\|\vartheta\|_2\le C_0\log n$, where $\cL^{\mathrm{sm}}$ is defined in \eqref{eq:softmax_truncated_loss}.
\end{lemma}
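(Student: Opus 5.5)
Since $w_1=\sigma(\inner{\bz,\vartheta})$ and $w_2=1-w_1$ sum to one, the location drops out exactly. Writing $\widehat\mu-\mu=\sum_k w_k(g_k-\mu)$, and using that the gates and the centered experts $g_k-\mu$ depend only on $\bX-\mu\mathbf1$, we get
\[
\cL(\vartheta)=\frac14\sum_j\bbE_j\Bigl[\bigl(w_1(g_1-\mu)+(1-w_1)(g_2-\mu)\bigr)^2\Bigr].
\]
No $\sigma_\mu^2$ term appears, so the expansion is a purely family-conditional moment computation.

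Set $\Delta w_1=w_1(\bX)-\bar w_{1j}$ and expand the square. The leading term is $\bar w_{\cdot j}^\top\bbE_j[(g-\mu)(g-\mu)^\top]\bar w_{\cdot j}$. By the expert-moment expansions underlying the definition of $\bSigma^{(j)}$ (\autoref{SoftmaxO1overndeviation}), this equals $\frac1n\bar w_{\cdot j}^\top\bSigma^{(j)}\bar w_{\cdot j}+\cO(n^{-2})$, and it reproduces $\cL^{\mathrm{sm}}$. The remaining terms are $2\bbE_j[\Delta w_1\,(g_1-g_2)\,\bar w_{\cdot j}^\top(g-\mu)]$ and $\bbE_j[\Delta w_1^2(g_1-g_2)^2]$.

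For these I Taylor-expand $\Delta w_1=\bar w'_{1j}\inner{\vartheta,\Delta\bz}+\tfrac12\bar w''_{1j}\inner{\vartheta,\Delta\bz}^2+\dots$, with a Lagrange remainder. The remainder is controlled through $|\sigma^{(k)}|\le C\sigma'$ and the multiplicative bound $\sigma'(u+v)\le e^{|v|}\sigma'(u)$ used in the proof of \autoref{thm:accuracy_final_estimator}. I then apply H\"older with the moment bounds $\bbE_j\|\Delta\bz\|^{2p}=\cO(n^{-p})$ and $\bbE_j|g_k-\mu|^{2p}=\cO(n^{-p})$. The exponential factor is handled by the $\bbE_j[e^{4\max|\inner{\vartheta,\Delta\bz}|}]\le C$ estimate, which holds for $\|\vartheta\|\le C_0\log n$.

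The cross term is a product of three centered empirical-moment fluctuations. Its leading part is $\bar w'_{1j}\bbE_j[\inner{\vartheta,\Delta\bz}(g_1-g_2)(g_k-\mu)]$, a third-order joint cumulant of smooth functions of sample averages, which is $\cO(n^{-2})$ rather than $n^{-3/2}$. The crude H\"older bound gives $\cO(\|\vartheta\|n^{-3/2})$, which already suffices. The quadratic term is $\cO(\|\vartheta\|^2 n^{-2})$. Collecting powers of $\|\vartheta\|$ up to the fourth-order remainder gives \eqref{eq:softmax_loss_remainder}.

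For the gradient, I differentiate under the expectation:
\[
\nabla_\vartheta\cL=\frac12\sum_j\bbE_j\Bigl[\bigl(\widehat\mu-\mu\bigr)(g_1-g_2)\,w_1'(\bX)\,\bz\Bigr].
\]
I use the same decomposition with $w_1'(\bX)=\bar w'_{1j}(1+\cO(|\inner{\vartheta,\Delta\bz}|)e^{|\inner{\vartheta,\Delta\bz}|})$ and $\bz=\bar\bz_j+\Delta\bz$. Every term carries a factor $w_1'$, and $w_1'(\bX)\le e^{|\inner{\vartheta,\Delta\bz}|}\bar w'_{1j}$, so the error factors as $\bar w'_{1j}$ times a moment bound. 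Subtracting $\nabla\cL^{\mathrm{sm}}=\frac1{2n}\sum_j\bar w'_{1j}(1,-1)\bSigma^{(j)}\bar w_{\cdot j}\bar\bz_j$ leaves terms each containing at least three centered fluctuations, each of order $n^{-1/2}$. The $\lambda$-dependence of $g$ is fixed.

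The main obstacle is this relative factor $\bar w'_{1j}$ in the gradient remainder, uniformly over $\|\vartheta\|\le C_0\log n$, where $\bar w'_{1j}$ can be as small as $n^{-C}$. The difficulty is that a crude bound would leave additive terms not proportional to $\bar w'_{1j}$. I will handle it by never expanding $\sigma$ around $\bar w_{1j}$ additively. Instead I will always pull out $\bar w'_{1j}$ via the ratio bound $\sigma^{(k)}(u+v)/\sigma'(u)\le Ce^{|v|}$. I will also use that on the event $\{|\inner{\vartheta,\Delta\bz}|>1\}$, whose probability is $\cO(n^{-p}(\log n)^{2p})$, the crude bounds still lose at most $e^{2C_0\log n}$. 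Taking $p$ large makes this contribution negligible relative to $n^{-3/2}$.
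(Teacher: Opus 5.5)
Your proposal is correct and follows essentially the paper's route. You use $w_1+w_2=1$ to eliminate $\mu$, Taylor-expand the gate factors around $\bar\bz_j$ while extracting $\bar w'_{1j}$ through the sigmoid-derivative ratio bounds, and close with H\"older's inequality, the moment bounds on $g_k-\mu$ and $\Delta\bz$, and the exponential-moment bound from the proof of \autoref{thm:accuracy_final_estimator}. The only difference is bookkeeping: you organize the expansion through $\Delta w_1$ (leading, cross, and quadratic terms), whereas the paper expands the products $w_iw_k$ and $w_iw_1'\bz$ directly.
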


\begin{proof}
$w_1+w_2=1$ gives
\[
\widehat\mu-\mu=\sum_{i=1}^2w_i(g_i-\mu).
\]
Write $\Delta\bz=\bz-\bar\bz_j$ as in \autoref{app:routing}.
The empirical-moment expansions in \autoref{SoftmaxO1overndeviation} and \autoref{expert_moments_gradients}, together with the layer-normalization argument in \autoref{app:routing}, give
\begin{align*}
\bbE_j[(g_i-\mu)(g_k-\mu)]
&=\frac{\Sigma_{ik}^{(j)}}n+\cO(n^{-3/2}),\\
\bbE_j[|g_i-\mu|^4]&=\cO(n^{-2}),
\qquad
\bbE_j[\|\Delta\bz\|_2^8]=\cO(n^{-4}),\\
\left\|\bbE_j[(g_i-\mu)(g_k-\mu)\Delta\bz]\right\|_2
&\le\left(\bbE_j[|g_i-\mu|^4]\bbE_j[|g_k-\mu|^4]\right)^{1/4}
\bbE_j[\|\Delta\bz\|_2^2]^{1/2}\\
&=\cO(n^{-3/2}).
\end{align*}
The exponential-moment bound in the proof of \autoref{thm:accuracy_final_estimator} and H\"older's inequality give
\begin{align*}
\bbE_j\!\left[e^{8|\inner{\vartheta,\Delta\bz}|}\right]&\le C,\\
\bbE_j\!\left[|g_i-\mu||g_k-\mu|\|\Delta\bz\|_2^2
 e^{2|\inner{\vartheta,\Delta\bz}|}\right]
&\le
\left(\bbE_j[|g_i-\mu|^4]\bbE_j[|g_k-\mu|^4]\right)^{1/4}\\
&\quad\cdot
\left(\bbE_j[\|\Delta\bz\|_2^8]
\bbE_j[e^{8|\inner{\vartheta,\Delta\bz}|}]\right)^{1/4}
=\cO(n^{-2}).
\end{align*}
Taylor expansion of the gate factors at $\bar\bz_j$ gives
\begin{align*}
w_iw_k
&=\bar w_{ij}\bar w_{kj}
+\left.\nabla_\bz(w_iw_k)\right|_{\bar\bz_j}^{\top}\Delta\bz
+\cO\!\left((1+\|\vartheta\|_2)^2\|\Delta\bz\|_2^2\right),\\
w_iw'_1\bz
&=\bar w_{ij}\bar w'_{1j}\bar\bz_j
+\bar w_{ij}\bar w'_{1j}\Delta\bz
+\bar\bz_j\left.\nabla_\bz(w_iw'_1)\right|_{\bar\bz_j}^{\top}\Delta\bz\\
&\quad+\bar w'_{1j}\cO\!\left((1+\|\vartheta\|_2)^2
 e^{2|\inner{\vartheta,\Delta\bz}|}\|\Delta\bz\|_2^2\right),
\end{align*}
where
\[
\left\|\left.\nabla_\bz(w_iw_k)\right|_{\bar\bz_j}\right\|_2\le C\|\vartheta\|_2,
\qquad
\left\|\left.\nabla_\bz(w_iw'_1)\right|_{\bar\bz_j}\right\|_2
\le C\bar w'_{1j}\|\vartheta\|_2.
\]
Substituting these expansions yields
\begin{align}
\nabla_\vartheta\cL
&=\frac12\sum_j\bbE_j\left[
\left(\sum_{i=1}^2w_i(g_i-\mu)\right)(g_1-g_2)w'_1\bz
\right]
\label{eq:softmax_exact_gradient}\\
&=\frac1{2n}\sum_j\bar w'_{1j}\bar\bz_j
\sum_{i=1}^2\bar w_{ij}(\Sigma_{i1}^{(j)}-\Sigma_{i2}^{(j)})
+\cO\!\left(\frac{(1+\|\vartheta\|_2)^4}{n^{3/2}}\right)\sum_j\bar w'_{1j}
\notag\\
&=\nabla_\vartheta\cL^{\mathrm{sm}}(\vartheta)
+\cO\!\left(\frac{(1+\|\vartheta\|_2)^4}{n^{3/2}}\right)\sum_j\bar w'_{1j},
\label{eq:softmax_leading_gradient}\\
\cL(\vartheta)
&=\frac14\sum_j\bbE_j\left[\left(\sum_{i=1}^2w_i(g_i-\mu)\right)^2\right]
\notag\\
&=\frac1{4n}\sum_j\sum_{i,k}
\bar w_{ij}\bar w_{kj}\Sigma_{ik}^{(j)}
+\cO\!\left(\frac{(1+\|\vartheta\|_2)^4}{n^{3/2}}\right)
\notag\\
&=\cL^{\mathrm{sm}}(\vartheta)
+\cO\!\left(\frac{(1+\|\vartheta\|_2)^4}{n^{3/2}}\right).
\notag
\end{align}
\end{proof}

\begin{lemma}[Convergence of the softmax router]
\label{lem:softmax_population_convergence}
Under the assumptions of \autoref{thm:softmax_dynamics}, consider the population flow $\dot\vartheta=-2\nabla_\vartheta\cL$ with the expert parameters fixed.
For every fixed $B>0$ and all sufficiently large $n$, there is a finite time $\ttheta=\cO(n^{B+1})$ such that
\[
\max\{\|\bar w_{\cdot\mathrm G}(\ttheta)-e_1\|_1,\|\bar w_{\cdot\mathrm A}(\ttheta)-e_2\|_1\}=\cO(n^{-B}),
\qquad
\sup_{0\le t\le\ttheta}\|\vartheta(t)\|_2=\cO(\log n).
\]
The reduced flow $\dot\vartheta=-2\nabla_\vartheta\cL^{\mathrm{sm}}$ satisfies $\bar w(t)\to I_2$ as $t\to\infty$.
Under the sample-size conditions of \autoref{thm:softmax_dynamics}, the same finite-time bounds hold for $\dot\vartheta=-2\nabla_\vartheta\hat\cL_m$ with probability at least $1-1/n$.
\end{lemma}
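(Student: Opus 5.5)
Since $K=2$ and $w_2=1-w_1$, the reduced loss $\cL^{\mathrm{sm}}$ is a function of $\vartheta$ alone, through the two population gates $\pi_j:=\bar w_{1j}=\sigma(\inner{\vartheta,\bar\bz_j})$:
\[
\cL^{\mathrm{sm}}=\frac1{4n}\sum_j q_j(\pi_j),
\]
with $q_j$ the convex quadratic from the proof of \autoref{application_of_thikonovs_thm}. By the expert hierarchy, $q_{\mathrm G}'(\pi)<0$ and $q_{\mathrm A}'(\pi)>0$ on $[0,1]$, with $|q_j'|\ge c>0$ uniformly in $n$ because the covariance gaps are bounded below. The reduced flow in gate coordinates is
\[
\dot\pi_k=-\frac{1}{2n}\,\pi_k'\sum_j Z_{jk}\,\pi_j'\,q_j'(\pi_j).
\]
The plan has three steps.

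\emph{Step 1: reduced flow in logit coordinates.} Set $u=(\inner{\vartheta,\bar\bz_j})_j$; then $\dot u=-\frac1{2n}Z\,v$ with $v_j=\pi_j'q_j'(\pi_j)$. Take the linear functional $\ell(u)=u_{\mathrm G}-u_{\mathrm A}$, which is the logit coordinate along $Z^{-1}(1,-1)^\top$. Its derivative is
\[
\dot\ell=-\frac1{2n}\Big[(Z_{\mathrm{GG}}-Z_{\mathrm{GA}})v_{\mathrm G}+(Z_{\mathrm{GA}}-Z_{\mathrm{AA}})v_{\mathrm A}\Big].
\]
Since $v_{\mathrm G}<0<v_{\mathrm A}$ and $Z_{\mathrm{GA}}<\min\{Z_{\mathrm{GG}},Z_{\mathrm{AA}}\}$ (\autoref{feature_geometry}), both terms are nonnegative. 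So $\ell$ is nondecreasing and $\dot\ell\ge c\,n^{-1}\max_j\pi_j'$.

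This alone does not push both logits to $\pm\infty$, so I will also use the loss as a Lyapunov function. $\cL^{\mathrm{sm}}$ is nonincreasing with $\frac{d}{dt}\cL^{\mathrm{sm}}=-\frac1{4n^2}v^\top Zv$. The only critical points in the closure are the four corners, and since $v\ne0$ in the interior, the $\omega$-limit must be a corner. The wrong corners are excluded as follows: monotonicity of $\ell$ excludes the corner $(\pi_{\mathrm G},\pi_{\mathrm A})=(0,1)$, and the argument of \autoref{avoidance_of_suboptimal_stationary_points} (with $\bar w_{2j}=1-\bar w_{1j}$) excludes the two same-expert corners. Hence $\bar w\to I_2$.

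\emph{Step 2: quantitative rate.} Near $I_2$, write $\epsilon_{\mathrm G}=1-\pi_{\mathrm G}$ and $\epsilon_{\mathrm A}=\pi_{\mathrm A}$. The dynamics become approximately
\[
\dot\epsilon_k\approx-\frac{c}{n}\,\epsilon_k\sum_j Z_{jk}\epsilon_j .
\]
Here the $W$-type Lyapunov function of \autoref{avoidance_of_suboptimal_stationary_points} (a power-weighted $1/\epsilon$) shows that $1/\max_k\epsilon_k$ grows at least linearly at rate $c/n$. So $\epsilon=\cO(n^{-B})$ by time $\cO(n^{B+1})$, and the logits, hence $\|\vartheta\|_2$ (its component orthogonal to $\operatorname{span}\bar\bz_j$ is constant under the reduced flow), stay $\cO(\log n)$.

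\emph{Step 3: perturbation to the full and empirical flows.} By \autoref{lem:softmax_loss_expansion}, on $\|\vartheta\|_2\le C_0\log n$ the gradient error is
\[
\cO\!\Big(\frac{(\log n)^4}{n^{3/2}}\Big)\sum_j\bar w'_{1j},
\]
which is relatively $o(1)$ compared with the reduced gradient, since the latter is $\gtrsim n^{-1}\sum_j\pi_j'$ along the $\ell$ direction. So the full flow inherits the monotonicity of $\ell$ and the Lyapunov decrease with halved constants, up to the first exit from the ball, and the $\cO(\log n)$ bound prevents that exit. For the empirical flow I will run the covering/Hoeffding argument of \autoref{concentration_for_theta}, with the prefactor $\Gamma_1$ replaced by $n^{-1/2+\beta}$ because the softmax weights are automatically normalized. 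This gives a relative gradient error $\cOs(n^{1/2+\beta}/\sqrt m)=o(1)$ under $m=\widetilde\Omega(n^{1+\epsilon})$.

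\emph{Main obstacle.} The hardest step is the escape from the wrong corners and from the initial region when $\sum_j\pi_j'$ is small. The multiplicative form of the errors is exactly what makes the perturbation harmless there, but for the perpendicular component of $\vartheta$ I will need to integrate the error against the flow length, as in \autoref{population_gradient_perturbation}(b).
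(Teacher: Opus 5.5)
Your Step 1 is sound. The monotone functional $\ell=u_{\mathrm G}-u_{\mathrm A}$ is a good addition: it excludes the doubly-wrong corner more directly than the paper's loss-maximality argument. Your bound $\|\nabla_\vartheta\cL^{\mathrm{sm}}\|_2\gtrsim n^{-1}\sum_j\bar w'_{1j}$ also holds on the whole ball, not only near $I_2$. Two steps, however, do not go through as written.

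First, the linearization in Step 2 has the wrong sign on the cross term. From your own gate equation, with $a_{\mathrm G}=-q_{\mathrm G}'(1)>0$ and $a_{\mathrm A}=q_{\mathrm A}'(0)>0$,
\[
\dot\epsilon_{\mathrm G}\approx\frac{\epsilon_{\mathrm G}}{2n}\bigl(-Z_{\mathrm{GG}}a_{\mathrm G}\epsilon_{\mathrm G}+Z_{\mathrm{GA}}a_{\mathrm A}\epsilon_{\mathrm A}\bigr),
\qquad
\dot\epsilon_{\mathrm A}\approx\frac{\epsilon_{\mathrm A}}{2n}\bigl(Z_{\mathrm{GA}}a_{\mathrm G}\epsilon_{\mathrm G}-Z_{\mathrm{AA}}a_{\mathrm A}\epsilon_{\mathrm A}\bigr).
\]
Since $Z_{\mathrm{GA}}>0$ by \autoref{feature_geometry}, each family's gate error is pushed up by the other's. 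Consequently $1/\epsilon_k$ is not monotone, and neither is $1/\max_k\epsilon_k$: at $\epsilon_{\mathrm G}=\epsilon_{\mathrm A}$ the maximum increases whenever $Z_{\mathrm{GA}}a_{\mathrm A}>a_{\mathrm G}$. The $W$ of \autoref{avoidance_of_suboptimal_stationary_points} certifies exit from a wrong corner and is not the right tool here.

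The rate must come from the loss, and this is where $Z\succ0$ is genuinely needed. Where $\bar w_{1\mathrm G}$ and $\bar w_{2\mathrm A}$ are bounded below, $E=\cL^{\mathrm{sm}}-\cL^{\mathrm{sm}}(I_2)\asymp n^{-1}(\epsilon_{\mathrm G}+\epsilon_{\mathrm A})$ and $\|\nabla_\vartheta\cL^{\mathrm{sm}}\|_2^2=\frac{1}{16n^2}v^\top Zv\gtrsim E^2$, so $\dot E\le-cE^2$. Moreover $\frac{d}{dt}\log E\le-c\|\nabla_\vartheta\cL^{\mathrm{sm}}\|_2$, which together with $\|\dot\vartheta\|_2\le3\|\nabla_\vartheta\cL^{\mathrm{sm}}\|_2$ gives the $\cO(\log n)$ travel bound.

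Second, and more seriously, Step 3 never produces a time at which the full or empirical flow reaches the near-$I_2$ regime, nor a bound on $\|\vartheta\|_2$ before then. Step 1 is a qualitative LaSalle argument, and transferring only monotone quantities ($\ell$, the loss, $W$) yields no rate. \autoref{lem:softmax_loss_expansion} is valid only on $\|\vartheta\|_2\le C_0\log n$, so you cannot run the perturbed flow to $t=\infty$ and pass to an $\omega$-limit. Nothing you state controls the exit time $W(\text{entry})/\gamma$ from a same-expert corner neighborhood.

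The paper closes this with a Gronwall comparison:
\begin{itemize}
\item In $s=t/n$ the reduced flow does not depend on $n$ and converges to $I_2$. Hence there is a fixed $S$ at which $n\cL^{\mathrm{sm}}$ lies a margin $3\kappa$ below every wrong corner.
\item On a bounded neighborhood of this trajectory the rescaled vector fields differ by $\cO(n^{-1/2})$, plus $o(1)$ from concentration in the empirical case. So the actual flow stays within $o(1)$ of the reduced trajectory on $[0,nS]$ and inherits the barrier.
\item The resulting sublevel set keeps $\bar w_{1\mathrm G}$ and $\bar w_{2\mathrm A}$ bounded below, and the loss argument above then applies.
\end{itemize}
Your $\ell$ could support a Lyapunov-only alternative. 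Since $\ell(t)\ge\ell(0)$, it bounds $\bar w_{2\mathrm G}/\bar w_{2\mathrm A}$ near the corner $\bar w_{1\mathrm G}=\bar w_{1\mathrm A}=1$, and $\bar w_{1\mathrm A}/\bar w_{1\mathrm G}$ near the opposite same-expert corner, and hence $W$ at entry. But these local exit bounds would still have to be assembled into a global time bound, and as written that step is missing.
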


\begin{proof}
In time $s=t/n$, the reduced gate flow is the flow of \autoref{avoidance_of_suboptimal_stationary_points}, with time multiplied by four.
Its hypotheses hold since $Z_{\mathrm{GG}}=Z_{\mathrm{AA}}=1$ and $0\le Z_{\mathrm{GA}}<1$ by \autoref{feature_geometry}, which also bounds $\lambda_{\min}(Z)$ below by a positive constant independent of $n$.
The reduced convergence argument there, using avoidance and loss monotonicity, gives $\bar w^{\mathrm{reduced}}(s)\to I_2$ and fixed $S,\kappa>0$ such that
\begin{equation}
n\cL^{\mathrm{sm}}(\vartheta^{\mathrm{reduced}}(S))
\le\min_{\bar w^{\mathrm{suboptimal}}}
n\cL^{\mathrm{sm}}(\bar w^{\mathrm{suboptimal}})-3\kappa,
\label{eq:softmax_reduced_barrier}
\end{equation}
where the minimum is over the three incorrect corners and $\vartheta^{\mathrm{reduced}}(0)=\vartheta(0)$.

The rescaled vector field is uniformly Lipschitz on a bounded neighborhood of $\vartheta^{\mathrm{reduced}}([0,S])$, so \autoref{lem:softmax_loss_expansion} and Gronwall's inequality give
\begin{equation}
\sup_{0\le s\le S}\|\vartheta(ns)-\vartheta^{\mathrm{reduced}}(s)\|_2
\le\frac{C_S}{\sqrt n}.
\label{eq:softmax_gronwall}
\end{equation}
This prevents exit from that neighborhood for sufficiently large $n$, gives $\|\vartheta(nS)\|_2=\cO(1)$, and transfers \eqref{eq:softmax_reduced_barrier} to
\begin{equation}
n\cL^{\mathrm{sm}}(\vartheta(nS))
\le\min_{\bar w^{\mathrm{suboptimal}}}
n\cL^{\mathrm{sm}}(\bar w^{\mathrm{suboptimal}})-2\kappa.
\label{eq:softmax_full_barrier}
\end{equation}

On this sublevel set, $\bar w_{1\mathrm G}$ and $\bar w_{2\mathrm A}$ are bounded below by a positive constant, since the minimum on each face $\bar w_{1\mathrm G}=0$ and $\bar w_{2\mathrm A}=0$ is attained at an incorrect corner.
The strict hierarchy and $Z\succ0$ therefore give
\begin{align}
\|\nabla_\vartheta\cL^{\mathrm{sm}}\|_2
&\asymp\frac1n\sum_j\bar w'_{1j}
\asymp\cL^{\mathrm{sm}}-\cL^{\mathrm{sm}}(I_2),
\label{eq:softmax_gradient_excess}\\
\max\{\|\bar w_{\cdot\mathrm G}-e_1\|_1,\|\bar w_{\cdot\mathrm A}-e_2\|_1\}
&\le Cn\left[\cL^{\mathrm{sm}}-\cL^{\mathrm{sm}}(I_2)\right].
\label{eq:softmax_excess_loss}
\end{align}
By \autoref{lem:softmax_loss_expansion}, while $\|\vartheta\|_2\le C_0\log n$,
\begin{equation}
\frac{\|\nabla_\vartheta(\cL-\cL^{\mathrm{sm}})\|_2}
{\|\nabla_\vartheta\cL^{\mathrm{sm}}\|_2}
\le\frac{C(1+\|\vartheta\|_2)^4}{\sqrt n}\le\frac12
\label{eq:softmax_relative_error}
\end{equation}
for sufficiently large $n$, hence
\begin{align}
\frac d{dt}\cL^{\mathrm{sm}}
&\le-\|\nabla_\vartheta\cL^{\mathrm{sm}}\|_2^2
\le-c\left[\cL^{\mathrm{sm}}-\cL^{\mathrm{sm}}(I_2)\right]^2,
\label{eq:softmax_loss_descent}\\
\|\dot\vartheta\|_2&\le3\|\nabla_\vartheta\cL^{\mathrm{sm}}\|_2.
\notag
\end{align}
These bounds preserve \eqref{eq:softmax_full_barrier} and, by the parameter-travel calculation in \autoref{population_gradient_perturbation}, give
\begin{equation}
\|\vartheta(t)-\vartheta(nS)\|_2
\le C\log\frac{
\cL^{\mathrm{sm}}(\vartheta(nS))-\cL^{\mathrm{sm}}(I_2)}{
\cL^{\mathrm{sm}}(\vartheta(t))-\cL^{\mathrm{sm}}(I_2)}
\le CB\log n+C
\label{eq:softmax_parameter_travel}
\end{equation}
until the excess loss first reaches $n^{-B-1}$.
Choosing $C_0$ larger than this coefficient prevents exit, and integrating \eqref{eq:softmax_loss_descent} gives attainment by $\ttheta=nS+\cO(n^{B+1})$; if the target already holds at $nS$, take $\ttheta=nS$.
The gate bound follows from \eqref{eq:softmax_excess_loss}.

For empirical flow, apply \autoref{concentration_for_theta} with $\beta=\min\{\epsilon,1\}/4$ to the sigmoid parameters $(\vartheta,-\vartheta)$ and subtract the two gradient blocks.
Since $\bar w_{1j}+\bar w_{2j}=1$, the normalization-error term vanishes.
Conditioning on the family counts as in \autoref{pf_initial_selection_empirical}, and using the classwise bound $\|\nabla_\vartheta\cL^{(j)}\|_2\le C\bar w'_{1j}/n$ from \eqref{eq:softmax_exact_gradient}, gives, with probability at least $1-1/n$, uniformly on $\|\vartheta\|_2\le C_0\log n$,
\[
\|\nabla_\vartheta(\hat\cL_m-\cL)\|_2
\le\cOs\!\left(\frac{n^{-1/2+\beta}}{\sqrt m}\right)\sum_j\bar w'_{1j}.
\]
On $[0,S]$, \eqref{eq:softmax_gronwall} consequently holds with right-hand side $\cO(n^{-1/2})+\cOs(n^{1/2+\beta}/\sqrt m)=o(1)$, as in \autoref{pf_initial_selection_empirical}, which preserves \eqref{eq:softmax_full_barrier}.
On that sublevel set, \eqref{eq:softmax_gradient_excess} gives
\[
\frac{\|\nabla_\vartheta(\hat\cL_m-\cL^{\mathrm{sm}})\|_2}
{\|\nabla_\vartheta\cL^{\mathrm{sm}}\|_2}
\le\frac{C(1+\|\vartheta\|_2)^4}{\sqrt n}
+\cOs\!\left(\frac{n^{1/2+\beta}}{\sqrt m}\right)
\le\frac12.
\]
Thus \eqref{eq:softmax_loss_descent} and \eqref{eq:softmax_parameter_travel} also hold for $\dot\vartheta=-2\nabla_\vartheta\hat\cL_m$, giving the same stopping-time and parameter bounds.
\end{proof}

\section{Attention-expert dynamics}
\label{app:experts}
\subsection{Proof of \autoref{convergence_of_first_layer}}
\label{pf_convergence_of_first_layer}
\begin{proof}[Proof of \autoref{convergence_of_first_layer}]
Since $g_k$ is even in $\lambda_k$, it suffices to consider $\lambda_2(\ttheta)>0$.
For sufficiently large $n$, the uniform event in \autoref{first_layer_empirical_gradient} has probability at least $1-\eta$.
On this event, the vector field points inward at $\lambda_1=\pm2$ and $\lambda_2$ increases, so the flow stays in the stated ranges of $\lambda_1,\lambda_2$ until the first target time.
Throughout this interval,
$$
\dot\lambda_2\ge\begin{cases}
1/(C n\lambda_2^2),&\text{Uniform},\\
e^{-4\sqrt{1-\sigma_n^2}\lambda_2}/(C n),&\text{Gaussian mixture}.
\end{cases}
$$
Integration of $\lambda_2^2\,d\lambda_2$ or $e^{4\sqrt{1-\sigma_n^2}\lambda_2}\,d\lambda_2$, respectively, gives the asserted finite times.
The initial loss bound in \autoref{first_layer_empirical_gradient} and $d\hat\cL_m/dt=-\|\dot{\lambda}\|_2^2$ give
\[
|\lambda_2(\tlambda)-\lambda_2(\ttheta)|^2
\le(\tlambda-\ttheta)\int_{\ttheta}^{\tlambda}\|\dot{\lambda}(t)\|_2^2\,dt
\le\frac{C(\tlambda-\ttheta)}n.
\]
Both target values diverge, so $(\tlambda-\ttheta)/n\to\infty$.
For every fixed $\delta\in(0,1)$, the uniform $\lambda_1$ gradient expansion in \autoref{first_layer_empirical_gradient} gives $\operatorname{sgn}(\lambda_1)\partial_{\lambda_1}\hat\cL_m\ge\delta^3/n$ whenever $\delta\le|\lambda_1|\le2$, for sufficiently large $n$.
Thus $|\lambda_1|$ enters $[0,\delta]$ within $2n/\delta^3$ time after $\ttheta$ and cannot leave before $\tlambda$.
Since $\delta$ is arbitrary, $|\lambda_1(\tlambda)|=o(1)$.
\end{proof}

\subsection{Expert-parameter gradients}
Throughout this subsection the data follow \autoref{adaptiveMeanEstimation}, and in the Gaussian-mixture case we assume
$$
n^{-1+\epsilon}\le\sigma_n^2,
\qquad \sigma_n^2=o(1)
$$
for a constant $\epsilon>0$ independent of $n$.
Fix $C<\infty$, $p\ge2$ and, for Uniform data, $\epsilon\in(0,1)$ independently of $n$.
In the next three lemmas, the covariance $\Sigma_g$ is evaluated under the indicated distribution.
Constants depend on $\epsilon$, $C,p$, and the gradient inequalities hold for sufficiently large $n$.

\begin{lemma}[Expert moments and normal gradients]
\label{expert_moments_gradients}
For $q\in[2]$ and $j\in\{\mathrm G,\mathrm A\}$, uniformly for $|\lambda_q|\le2$,
$$
\|g_q-\mu\|_{L^p(\bbP_j)}+
\|\frac{\partial g_q}{\partial\lambda_q}\|_{L^p(\bbP_j)}\le C_p n^{-1/2}.
$$
Uniformly for $|\lambda_1|\le2$,
$$
\bbE_{\mathrm G}[(g_1-\mu)\frac{\partial g_1}{\partial\lambda_1}]
=\frac1{2n}\frac{d}{d\lambda_1}\Sigma_g(\lambda_1,\lambda_1)+\cO(n^{-2}).
$$
At $\lambda_1=\pm2$, this gives
$$
\operatorname{sgn}(\lambda_1)\bbE_{\mathrm G}[(g_1-\mu)\frac{\partial g_1}{\partial\lambda_1}]\ge C^{-1}/n.
$$
\end{lemma}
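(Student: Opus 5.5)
We prove the lemma by a delta-method expansion of the tilted averages around their population moments. By translation equivariance, $g_q-\mu$ and $\partial g_q/\partial\lambda_q$ depend only on the centered observations $Y_\ell=X_\ell-\mu\sim Q_j$. So we may take $\mu=0$ throughout. Write
$$
A(a)=\frac1n\sum_\ell e^{aY_\ell},\qquad B(a)=\frac1n\sum_\ell Y_\ell e^{aY_\ell},\qquad D(a)=\frac1n\sum_\ell Y_\ell^2e^{aY_\ell}.
$$
Then $h(\bY,a)=B(a)/A(a)$ and $\partial_a h=D/A-(B/A)^2$. Each of $A,B,D$ is an average of i.i.d.\ variables with finite moments of all orders, uniformly for $|a|\le2$ and uniformly in $n$; for the mixture this holds because the laws are uniformly $1$-sub-Gaussian.

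\textbf{Step 1: $L^p$ bounds.} Write $\bar A,\bar B,\bar D$ for the population values. By symmetry of $Q_j$, the two branches combine so that $g_q=\tfrac12(h(\lambda)+h(-\lambda))$ has population value $\tfrac12(m_j(\lambda)+m_j(-\lambda))=0$. Likewise $\partial_\lambda g_q=\tfrac12(\partial_ah(\lambda)-\partial_ah(-\lambda))$ has population value $\tfrac12(m_j'(\lambda)-m_j'(-\lambda))=0$, since $m_j$ is odd and hence $m_j'$ is even. The maps $(A,B,D)\mapsto g,\partial g$ are smooth with bounded derivatives on a neighbourhood where $A\ge\bar A/2$, and the population value at the centre is zero. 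So on that neighbourhood both quantities are bounded by $C(|\Delta A|+|\Delta B|+|\Delta D|)$. Rosenthal's inequality gives $\|\Delta A\|_{L^p}\lesssim n^{-1/2}$, and similarly for $\Delta B,\Delta D$. Off the neighbourhood we use the crude bounds $|g_q|\le\max_\ell|Y_\ell|$ and $|\partial g_q|\le\max_\ell Y_\ell^2$. Markov applied to a high moment of $\Delta A$ then makes that event's contribution $\cO(n^{-k})$ for any $k$, via Cauchy--Schwarz against the polynomially growing moments of $\max_\ell|Y_\ell|$. All constants are uniform on $|\lambda_q|\le2$ by continuity and compactness.

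\textbf{Step 2: the Gaussian cross moment.} Since $\partial_\lambda g=\partial_\lambda(g-\mu)$,
$$
\bbE[(g_1-\mu)\partial_{\lambda_1}g_1]=\tfrac12\,\partial_{\lambda_1}\bbE[(g_1-\mu)^2],
$$
with differentiation under the expectation justified by the Step 1 domination. It therefore suffices to show that
$$
\bbE_{\mathrm G}[(g_1-\mu)^2]=\frac{\Sigma_g(\lambda_1,\lambda_1)}n+r_n(\lambda_1),
$$
where $r_n$ and $\partial_\lambda r_n$ are both $\cO(n^{-2})$ uniformly. To get this, expand $g$ to second order in $(\Delta A,\Delta B)$: a linear term $L$, a quadratic term $Q$, and a cubic remainder. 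Then $\bbE[L^2]=\Sigma_g/n$ exactly, which defines $\Sigma_g$, with the symmetrized formula giving $\cosh(\lambda^2)+\lambda^2\sinh(\lambda^2)$. The cross term $\bbE[LQ]$ involves third joint cumulants of sample means and is therefore $\cO(n^{-2})$. Terms like $\bbE[Q^2]$ and $\bbE[L\cdot\text{cubic}]$ are also $\cO(n^{-2})$ by Step 1 moment bounds. Hence the remainder is $\cO(n^{-2})$, with no $n^{-3/2}$ term since odd-order centered moment products of sample means gain an extra factor $n^{-1/2}$. The same expansion applied to $\partial_\lambda$ of the integrand gives the derivative bound on $r_n$.

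\textbf{Step 3: sign at the boundary, and the main obstacle.} At $\lambda_1=\pm2$ we have $\frac{d}{d\lambda}\Sigma_g(\lambda,\lambda)=2\lambda(2\cosh\lambda^2+\lambda^2\sinh\lambda^2)$. This has sign $\operatorname{sgn}\lambda$ and absolute value bounded below, so the inequality follows once $n$ is large enough that the $\cO(n^{-2})$ error is smaller than this leading term. I expect the main obstacle to be making the remainder genuinely $\cO(n^{-2})$ rather than $\cO(n^{-3/2})$, uniformly in $\lambda$ and after differentiation. This needs careful parity bookkeeping of the cumulant orders in the second-order delta expansion, together with the truncation on $\{A<\bar A/2\}$ so that no stray terms appear.
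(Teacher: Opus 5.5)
Your proposal is correct and follows essentially the paper's route: a Taylor/delta expansion of the paired tilted-moment ratios in the centred averages of $e^{\nu\xi}$, $\xi e^{\nu\xi}$, $\xi^2e^{\nu\xi}$ at $\nu=\pm\lambda$, with symmetry making both $g_1-\mu$ and $\partial_{\lambda_1}g_1$ vanish at the population moments, the quadratic term contributing $(2n)^{-1}\frac{d}{d\lambda_1}\Sigma_g(\lambda_1,\lambda_1)$, third-order and quartic terms contributing $\cO(n^{-2})$, and high moments plus the sample maximum handling the complement of a neighbourhood. Two small remarks: the detour through $\tfrac12\partial_\lambda\bbE[(g_1-\mu)^2]$ buys nothing, since bounding $\partial_\lambda r_n$ is the claim itself and your closing sentence of Step 2 is precisely the paper's direct expansion of $(g_1-\mu)\partial_{\lambda_1}g_1$ (the paper goes the other way, later integrating this gradient expansion from $\lambda_1=0$ to obtain the variance), and in Step 3 the derivative is $2\lambda(2\sinh\lambda^2+\lambda^2\cosh\lambda^2)$ rather than $2\lambda(2\cosh\lambda^2+\lambda^2\sinh\lambda^2)$, which leaves the sign conclusion at $\lambda_1=\pm2$ intact.
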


\begin{proof}
With $\xi_i=X_i-\mu$ from \autoref{adaptiveMeanEstimation} and the population moment ratios from \autoref{lemAsympNormality},
the linear terms for $h(\bX,\lambda)-\mu-m_j(\lambda)$ and $\frac{\partial h(\bX,\lambda)}{\partial\lambda}-m_j'(\lambda)$ are the sample averages of
\begin{equation}
\frac{e^{\lambda \xi_i}}{M_Y(\lambda)}(\xi_i-m_j(\lambda)),
\qquad
\frac{e^{\lambda \xi_i}}{M_Y(\lambda)}
\bigl((\xi_i-m_j(\lambda))^2-m_j'(\lambda)\bigr),
\label{eq:ratio-derivatives}
\end{equation}
respectively. The second expression is the derivative of the first with respect to $\lambda$. Pairing at $\pm\lambda_q$ cancels the population means, since $m_j(-\lambda)=-m_j(\lambda)$ and $m_j'(-\lambda)=m_j'(\lambda)$. The covariance of the paired linear terms is therefore $(2n)^{-1}d\Sigma_g(\lambda_q,\lambda_q)/d\lambda_q$.
For $|\lambda_q|\le2$, the moment summands have uniformly bounded fixed-order moments and population denominators bounded below.
Expansion of the ratios gives the stated $L^p$ bounds in both contexts.
Outside a fixed neighborhood of the population moments, higher moment bounds and the sample maximum control the contribution.

For Gaussian data, let
$$
V_i=\left(\frac{e^{\nu \xi_i}}{C_\nu},\frac{\xi_i e^{\nu \xi_i}}{C_\nu},\frac{\xi_i^2 e^{\nu \xi_i}}{C_\nu}\right)_{\nu=\lambda_q,-\lambda_q},
\qquad \mathbf V_n=\frac1n\sum_iV_i,\quad \mu_V=\bbE_j[V_i],
$$
where $C_\nu=e^{\nu^2/2}$. The normalized population denominators are $1$, and all fixed-order moments of $V_i$ are uniformly bounded.
Write $f(V)$ for the product $(g_q-\mu)\frac{\partial g_q}{\partial\lambda_q}$ expressed in these moments and $\Delta\mathbf V_n=\mathbf V_n-\mu_V$. Since $f(\mu_V)=0$ and $\nabla f(\mu_V)=0$, Taylor expansion gives
$$
f(\mathbf V_n)=\frac12\Delta\mathbf V_n^\top\nabla^2 f(\mu_V)\Delta\mathbf V_n
+\frac16\nabla^3 f(\mu_V)\Delta\mathbf V_n^{\otimes3}+R_{4,n}.
$$
Independence and centering give
\begin{align*}
\bbE_j\left[\frac12\Delta\mathbf V_n^\top\nabla^2 f(\mu_V)\Delta\mathbf V_n\right]
&=\frac1{2n}\frac{d}{d\lambda_q}\Sigma_g(\lambda_q,\lambda_q),\\
\bbE_j[\Delta\mathbf V_n^{\otimes3}]&=\cO(n^{-2}),
\qquad \bbE_j[\|\Delta\mathbf V_n\|_2^4]=\cO(n^{-2}).
\end{align*}
Bounded derivatives near $\mu_V$ give $\bbE_j[|R_{4,n}|]=\cO(n^{-2})$ there. Outside that neighborhood, use higher moments and
$$
|g_q-\mu|\le\max_i|\xi_i|,\qquad
|\frac{\partial g_q}{\partial\lambda_q}|\le C\max_i|\xi_i|^2.
$$

For Gaussian data, the covariance formula gives
$$
\frac1{2n}\frac{d}{d\lambda_1}\Sigma_g(\lambda_1,\lambda_1)
=\frac{\lambda_1}{n}\left[2\sinh(\lambda_1^2)+\lambda_1^2\cosh(\lambda_1^2)\right].
$$
The claimed sign at $\lambda_1=\pm2$ follows.
\end{proof}

\begin{lemma}[Uniform expert moments and gradients]
\label{uniform_expert_moments_gradients}
Uniformly for $C^{-1}\le\lambda_2\le n^{1-\epsilon}$,
\begin{align*}
\|g_2-\mu\|_{L^p(\bbP_{\mathrm U})}&\le\frac{C_p}{\sqrt{n\lambda_2}},&
\|\frac{\partial g_2}{\partial\lambda_2}\|_{L^p(\bbP_{\mathrm U})}&\le\frac{C_p}{\sqrt{n\lambda_2^3}},\\
\bbE_{\mathrm U}[(g_2-\mu)\frac{\partial g_2}{\partial\lambda_2}]
&=\frac1{2n}\frac{d}{d\lambda_2}\Sigma_g(\lambda_2,\lambda_2)
+\cO\left(\frac1{n^{3/2}\lambda_2^{3/2}}\right)
\le-\frac1{C n\lambda_2^2}.
\end{align*}
\end{lemma}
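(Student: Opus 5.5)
The plan is to repeat the delta-method argument of \autoref{expert_moments_gradients}, but with every moment rescaled to the effective sample size $n/\lambda_2$. With the uniform law, the tilted average $h(\bX,\lambda)$ puts most of its weight on the observations within $\cO(1/\lambda)$ of an endpoint, and there are about $n/\lambda$ of these. The condition $\lambda_2\le n^{1-\epsilon}$ gives $n/\lambda_2\ge n^{\epsilon}\to\infty$, which is what makes the expansion legitimate.

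\emph{Normalization and the good event.} Write $\xi_i=X_i-\mu\sim\mathrm{Unif}[-\sqrt3,\sqrt3]$ and $M(\lambda)=\sinh(\sqrt3\lambda)/(\sqrt3\lambda)$. I would normalize the summands as $e^{\pm\lambda\xi_i}/M(\lambda)$ and $\xi_i^k e^{\pm\lambda\xi_i}/M(\lambda)$ for $k=1,2$. These are bounded by $C\lambda$ and have mean $\cO(1)$. A direct computation should give
\[
\frac{\bbE[e^{2\lambda\xi}(\xi-m(\lambda))^{2k}]}{M(\lambda)^2}\asymp\lambda^{1-2k},
\qquad
\frac{\bbE[e^{2\lambda\xi}]}{M(\lambda)^2}\asymp\lambda .
\]
Consequently the normalized denominator $D_\pm=n^{-1}\sum_i e^{\pm\lambda\xi_i}/M(\lambda)$ has variance of order $\lambda/n\le n^{-\epsilon}$.
\begin{itemize}
\item Bernstein's inequality gives $\bbP(|D_\pm-1|>1/2)\le\exp(-cn/\lambda)\le\exp(-cn^{\epsilon})$, which is superpolynomially small.
\item On the complement of this event I use the deterministic bounds $|g_2-\mu|\le\sqrt3$ and $|\partial g_2/\partial\lambda_2|\le 12$; the second holds because the derivative is half a sum of two tilted variances.
\item On the good event, $g_2-\mu$ is a smooth ratio of the normalized averages. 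Its linear term is the sample average of $e^{\lambda\xi_i}(\xi_i-m(\lambda))/M$, paired over $\pm\lambda$.
\item Rosenthal's inequality, with summands bounded by $C\lambda$ and with variance proxy $\lambda^{-1}$ for the first pair and $\lambda^{-3}$ for the derivative pair, gives the two $L^p$ bounds $C_p(n\lambda_2)^{-1/2}$ and $C_p(n\lambda_2^3)^{-1/2}$.
\item The higher-order ratio terms carry extra factors of $(\lambda/n)^{1/2}$, so they are of lower order.
\end{itemize}

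\emph{The gradient identity.} I would follow \autoref{expert_moments_gradients} exactly. Let $f(\mathbf V_n)=(g_2-\mu)\,\partial g_2/\partial\lambda_2$. Then $f$ and $\nabla f$ vanish at $\mu_V$, because the paired population means cancel by symmetry of the uniform law. The quadratic term has expectation $(2n)^{-1}\,d\Sigma_g(\lambda_2,\lambda_2)/d\lambda_2$. This uses two facts: the derivative-pair summand is the $\lambda$-derivative of the first summand, and the $+\lambda$ and $-\lambda$ cross-covariances are exponentially small in $\lambda$.

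For the cubic and quartic remainders I would track scales. Each factor of $\Delta\mathbf V_n$, weighted by the appropriate power of $\xi-m$, contributes $(\lambda/n)^{1/2}$ times its natural size. The third-order term should then be $\cO(n^{-2}\lambda^{-1})$, and the quartic remainder should be $\cO\rbr{(n\lambda)^{-3/2}}$ after Cauchy--Schwarz with the $L^p$ bounds. Both are dominated by the claimed $\cO\rbr{(n\lambda_2)^{-3/2}}$.

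\emph{Sign of the leading term.} Using the explicit symmetrized covariance formula of \autoref{sym_expert_covariance} for the uniform law, I would show that $\lambda^2\,\frac{d}{d\lambda}\Sigma_g(\lambda,\lambda)\le -c<0$ on $[C^{-1},\infty)$.
\begin{itemize}
\item For large $\lambda$, $\Sigma_g\sim c_0/\lambda$ and the derivative is asymptotically $-c_0/\lambda^2$.
\item On any compact range $[C^{-1},L]$ I need strict negativity of the derivative. For $\lambda\le1$ this is the strict decrease contained in \autoref{expertHierarchy_uniform}. On $[1,L]$ I would check it directly from the closed form, which is an explicit elementary function.
\end{itemize}
Finally, the remainder is negligible relative to the leading term: their ratio is $(n\lambda_2)^{-3/2}\cdot n\lambda_2^2=(\lambda_2/n)^{1/2}\le n^{-\epsilon/2}$. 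This yields the bound $\le -1/(Cn\lambda_2^2)$ for large $n$.

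\emph{Main obstacle.} The hardest step is the uniformity of the remainder bounds in $\lambda_2$ all the way up to $n^{1-\epsilon}$. Near that endpoint the summands are of size $\lambda\approx n^{1-\epsilon}$, so crude bounds on moments of the summands lose too much. I would first try Rosenthal/Bernstein bounds that use the variance proxy together with the almost-sure bound $C\lambda$. Every higher cumulant term then comes with a factor $(\lambda/n)^{k}\le n^{-k\epsilon}$, and I would verify this power counting carefully for the cubic term, where the sign-sensitive contribution must stay below $(n\lambda)^{-3/2}$.
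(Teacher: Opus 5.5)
Your route is the paper's. Normalizing by $M(\lambda)$ is the same as the paper's normalization by $\bbE_{\mathrm U}[e^{\lambda_2(\xi-\sqrt3)}]$. As in the paper, you control the denominator by Bernstein at cost $e^{-cn/\lambda_2}$, take the leading term from the paired linear terms, and get the sign from the closed form of $\Sigma_g$ together with its $\sqrt3/(4\lambda^2)$ decay. The cubic-term expectation is unnecessary: the paper writes $g_2-\mu=L_1+R_1$ and $\partial_{\lambda_2}g_2=L_2+R_2$ with $\|R_1\|_p\lesssim n^{-1}$ and $\|R_2\|_p\lesssim(n\lambda_2)^{-1}$, and H\"older on the three cross terms already gives $\cO((n\lambda_2)^{-3/2})$.

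\textbf{The gap.} The step that fails as written is the mechanism you propose for the main obstacle. The envelope $C\lambda$ has the right size only for the denominator summand $e^{\lambda\xi}/M$. Feed it into Rosenthal for the centered numerators, using $\bbE|Y|^p\le(C\lambda)^{p-2}\bbE[Y^2]$. Then the second Rosenthal term $n^{-1+1/p}\|Y\|_p$ exceeds your target by a factor $(\lambda^3/n)^{1/2-1/p}$ for $Y=e^{\lambda\xi}(\xi-m)/M$, and by $(\lambda^5/n)^{1/2-1/p}$ for $Y=e^{\lambda\xi}((\xi-m)^2-m')/M$. For $p>2$ these diverge once $\lambda_2\gg n^{1/3}$, respectively $n^{1/5}$. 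So neither the $L^p$ bounds nor the H\"older remainders built on them are uniform up to $n^{1-\epsilon}$. With that envelope, your claim that each higher cumulant carries a power of $\lambda/n$ holds only for the denominator.

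\textbf{The fix.} What is missing is the localization of the numerators. Write $\xi=\sqrt3-u$. For $\lambda\ge C^{-1}$ one has $e^{\lambda\xi}/M\le C\lambda e^{-\lambda u}$, $|\xi-m|\le u+2/\lambda$ and $0<m'\le\lambda^{-2}$. Hence $|e^{\lambda\xi}(\xi-m)|/M\le C$ and $|e^{\lambda\xi}((\xi-m)^2-m')|/M\le C/\lambda$, with $p$-th norms $\lesssim\lambda^{-1/p}$ and $\lesssim\lambda^{-1-1/p}$; the $-\lambda$ side is symmetric. With these envelopes the Rosenthal correction is $(\lambda/n)^{1/2-1/p}\le n^{-\epsilon(1/2-1/p)}$ for both numerators, and your power counting goes through. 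This is what the paper means by integrating $e^{-\lambda_2u}$ times powers of $u$ over $[0,2\sqrt3]$.

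\textbf{The quadratic term.} It needs no smallness of the $\pm\lambda$ cross-covariances. Put $\psi_a=e^{a\xi}(\xi-m(a))/M(a)$ and $\ell=\frac12(\psi_\lambda+\psi_{-\lambda})$. Symmetry of the law gives $\bbE[\ell^2]=\Sigma_g(\lambda,\lambda)$ exactly, cross-covariance included. Since $\partial_\lambda\ell$ is the paired derivative summand, $\bbE[\ell\,\partial_\lambda\ell]=\frac12\frac{d}{d\lambda}\Sigma_g(\lambda,\lambda)$. Dropping the cross-covariance would cost $\cO(1/n)$ when $\lambda_2=\cO(1)$, so do not invoke it.

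\textbf{The sign.} You need $\frac{d}{d\lambda}\Sigma_g(\lambda,\lambda)<0$ at every point of $[C^{-1},L]$, not just strict monotonicity. The statement of \autoref{expertHierarchy_uniform} gives only monotonicity; its proof does give the derivative sign on $(0,1]$. The check you defer on $[1,L]$ is precisely \autoref{uniform_self_variance_monotone}. Its series with nonnegative coefficients gives negativity for all $\lambda>0$, and with the $\sqrt3/(4\lambda^2)$ asymptotics this yields the required positive infimum of $-\lambda^2\frac{d}{d\lambda}\Sigma_g$.
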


\begin{proof}
With $\xi_i=X_i-\mu$ as in the proof of \autoref{expert_moments_gradients},
\begin{align*}
\bbE_{\mathrm U}[e^{\lambda_2(\xi_i-\sqrt3)}]
&=\frac{1-e^{-2\sqrt3\lambda_2}}{2\sqrt3\lambda_2}=\Theta(\lambda_2^{-1}),\\
\left\|\frac{\frac1n\sum_i e^{\lambda_2(\xi_i-\sqrt3)}}{\bbE_{\mathrm U}[e^{\lambda_2(\xi_i-\sqrt3)}]}-1\right\|_{L^p(\bbP_{\mathrm U})}
&\le C_p\sqrt{\lambda_2/n}.
\end{align*}
The linear terms for the centered ratio and its derivative, given by \eqref{eq:ratio-derivatives}, have $L^p$ norms $\cO((n\lambda_2)^{-1/2})$ and $\cO((n\lambda_2^3)^{-1/2})$, respectively. These follow by integrating the bounded functions $e^{-\lambda_2 u}$ times powers of $u$ over $0\le u\le2\sqrt3$, and applying the moment bounds for independent averages.
Dividing by one plus the relative denominator fluctuation leaves remainders $\cO(n^{-1})$ and $\cO((n\lambda_2)^{-1})$, respectively.
For the derivative, the additional squared linear numerator has the latter order as well.
The denominator lies outside a fixed relative neighborhood of its expectation with probability at most $C\exp(-n/(C\lambda_2))$, by Bernstein's inequality. Since $n/\lambda_2\ge n^{\epsilon}$, this event contributes less than any inverse power of $n$ to the stated moments.
The product remainders are bounded by
$$
C\left(\frac1{n\sqrt{n\lambda_2^3}}
+\frac1{n\lambda_2\sqrt{n\lambda_2}}
+\frac1{n^2\lambda_2}\right)
\le\frac{C}{n^{3/2}\lambda_2^{3/2}}.
$$

The Uniform covariance and its derivative in \autoref{uniform_self_variance_monotone} satisfy
$$
-\frac{d}{d\lambda_2}\Sigma_g(\lambda_2,\lambda_2)
=\frac{\sqrt3}{4\lambda_2^2}
+\cO\left(\lambda_2^2e^{-2\sqrt3\lambda_2}\right)
\qquad (\lambda_2\to\infty).
$$
Positivity and continuity on $[C^{-1},\infty)$ therefore give
$$
\inf_{\lambda_2\ge C^{-1}}\left(-\lambda_2^2\frac{d}{d\lambda_2}\Sigma_g(\lambda_2,\lambda_2)\right)>0,
\qquad
\frac{n^{-3/2}\lambda_2^{-3/2}}{1/(n\lambda_2^2)}
=\sqrt{\lambda_2/n}=o(1).
$$
\end{proof}

\begin{lemma}[Gaussian-mixture expert moments and gradients]
\label{mixture_expert_moments_gradients}
Assume $n^{-1+\epsilon}\le\sigma_n^2$ and $\sigma_n^2=o(1)$ for a fixed $\epsilon>0$.
Uniformly for $C^{-1}\le\lambda_2\le\log(1/\sigma_n^2)/(4\sqrt{1-\sigma_n^2})$,
\begin{align*}
\bbE_{\mathrm M}[(g_2-\mu)^2]&=\frac{\Sigma_g(\lambda_2,\lambda_2)}n+\cO(n^{-2}),\\
\bbE_{\mathrm M}[(g_2-\mu)\frac{\partial g_2}{\partial\lambda_2}]
&=\frac1{2n}\frac{d}{d\lambda_2}\Sigma_g(\lambda_2,\lambda_2)+\cO(n^{-2})
\le-\frac{e^{-4\sqrt{1-\sigma_n^2}\lambda_2}}{C n},
\end{align*}
with
$$
\|g_2-\mu\|_{L^p(\bbP_{\mathrm M})}+
\|\frac{\partial g_2}{\partial\lambda_2}\|_{L^p(\bbP_{\mathrm M})}
\le C_p(1+\lambda_2)^C
\left(\frac{\sigma_n+e^{-2\sqrt{1-\sigma_n^2}\lambda_2}}{\sqrt n}+\frac1n\right).
$$
\end{lemma}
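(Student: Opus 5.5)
The plan is to follow the template of the proofs of \autoref{expert_moments_gradients} and \autoref{uniform_expert_moments_gradients}: a delta-method expansion of the tilted averages at $\pm\lambda_2$ around their population values. The mixture-specific work is tracking how the constants scale in $\sigma_n$ and $\lambda_2$.

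\textbf{Moment generating function.} With $\xi=X-\mu\sim Q_{\mathrm M}$ and $a=\sqrt{1-\sigma_n^2}$,
\[
M(\lambda)=e^{\sigma_n^2\lambda^2/2}\cosh(a\lambda),
\qquad
m_{\mathrm M}(\lambda)=\sigma_n^2\lambda+a\tanh(a\lambda).
\]
In the range $\lambda_2\le\log(1/\sigma_n^2)/(4a)$ the Gaussian factor satisfies $\sigma_n^2\lambda_2^2=o(1)$. Hence the normalized summands $e^{\pm\lambda_2\xi_i}/M(\pm\lambda_2)$ and their products with powers of $(\xi_i-m(\pm\lambda_2))$ have $p$-th moments bounded by $C_p(1+\lambda_2)^C$. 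The reason is that conditional on a component the tilt only shifts a Gaussian of variance $\sigma_n^2$.

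\textbf{Covariance and its derivative.} Pair the linear terms \eqref{eq:ratio-derivatives} at $\pm\lambda_2$; the population means cancel by oddness of $m_{\mathrm M}$. This gives the paired linear term for $g_2-\mu$ and its $\lambda_2$-derivative. Their second moments are $\Sigma_g(\lambda_2,\lambda_2)/n$ and $(2n)^{-1}\,d\Sigma_g(\lambda_2,\lambda_2)/d\lambda_2$. The $L^p$ bound should come from an explicit computation of the summand variance $\Sigma_g(\lambda_2,\lambda_2)$ using \eqref{eq:gsr_covariance}, which is expected to give
\[
\Sigma_g(\lambda_2,\lambda_2)\lesssim(1+\lambda_2)^C\bigl(\sigma_n^2+e^{-4a\lambda_2}\bigr).
\]
Heuristically, the within-component noise contributes $\sigma_n^2$. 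The between-component (sign) randomness is suppressed by $\sech^2(a\lambda_2)\asymp e^{-2a\lambda_2}$ in each of the two symmetric heads. After averaging the $\pm\lambda_2$ heads the surviving cross term is of order $e^{-4a\lambda_2}$, matching the limit $\sech^2\sech^2$ quoted in \autoref{softmax_router_convergence}. Rosenthal's inequality then gives the stated $L^p$ bound of order $(\sigma_n+e^{-2a\lambda_2})/\sqrt n$.

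\textbf{Remainder.} The ratio-expansion remainders are $\cO((1+\lambda_2)^C/n)$. The expansion is applied on a fixed relative neighborhood of the population moments. The complementary event has probability $e^{-cn/(1+\lambda_2)^C}$ and is controlled with $|g_2-\mu|\le\max_i|\xi_i|$ as before. For the second moment and the gradient product I will use the Taylor expansion of $f(\mathbf V_n)$ exactly as in \autoref{expert_moments_gradients}: $f$ and $\nabla f$ vanish at $\mu_V$, the third-order moments are $\cO(n^{-2})$, and the fourth-order remainder is $\cO(n^{-2})$. Polynomial factors in $\lambda_2=\cO(\log n)$ are absorbed here only if I keep them as $\cO(n^{-2}\mathrm{polylog})$. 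I would check whether the statement tolerates this, or else sharpen the bound using the $\sigma_n^2$-scale of the summands. I expect that sharpening to be feasible, since every centered summand is $\cO(\sigma_n+e^{-2a\lambda_2})$ in $L^p$ up to polynomial factors.

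\textbf{Sign of the gradient (main obstacle).} The main obstacle is the negativity of the gradient, which must be lower-bounded uniformly and comparably to $e^{-4a\lambda_2}$. I would differentiate the explicit formula for $\Sigma_g(\lambda,\lambda)$. The $\sigma_n^2$-part should be increasing in $\lambda$ but with slope $\lesssim\sigma_n^2(1+\lambda)^C$. The sign part should give a derivative $\le -c\,e^{-4a\lambda}$. On the stated range $e^{-4a\lambda_2}\ge\sigma_n^2$, but only with equality at the endpoint. This is tight, so I would try the hierarchy computation of \autoref{expertHierarchy_mixture} (whose range $(1-\delta)\log(1/\sigma_n)$ is twice ours) to show the sign term dominates by a constant factor. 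The $\cO(n^{-2})$ error is then negligible because
\[
e^{-4a\lambda_2}/n\ge\sigma_n^2/n\ge n^{-2+\epsilon}.
\]
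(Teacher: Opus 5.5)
Your plan is the paper's proof: the Taylor expansion from \autoref{expert_moments_gradients} gives the two covariance identities, \eqref{eq:gsr_covariance} gives the sign, and $e^{-4a\lambda_2}\ge\sigma_n^2\ge n^{-1+\epsilon}$ (with $a=\sqrt{1-\sigma_n^2}$) absorbs the $\cO(n^{-2})$ error. The $L^p$ bound comes from paired summands of size $\sigma_n+e^{-2a\lambda_2}$. The paper obtains that size directly. At $\sigma_n=0$ the paired summands are exactly $a\sech^2(a\lambda_2)S$ and $-2a^2\tanh(a\lambda_2)\sech^2(a\lambda_2)S$, with $S$ Rademacher. The $\sigma_n>0$ perturbation has $L^p$ norm at most $C_p(1+\lambda_2)^C\sigma_n$.

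The obstacle you single out rests on a wrong estimate, and with that estimate your sign argument would not close. In $\Sigma_g(\lambda,\lambda)=\sigma_n^2+a^2\sech^4(a\lambda)+\cdots$ the $\sigma_n^2$ is constant and contributes nothing to the slope. Every other term of \eqref{eq:gsr_covariance} on the diagonal carries a factor $\sigma_n^4$, through $\cosh(\sigma_n^2\lambda^2)-1$, $\sigma_n^2\sinh(\sigma_n^2\lambda^2)$, or $\sigma_n^4\lambda^2$. Hence
\[
\frac{d}{d\lambda}\Sigma_g(\lambda,\lambda)=-4a^3\sech^4(a\lambda)\tanh(a\lambda)+\cO\bigl(\sigma_n^4(1+\lambda)^5\bigr).
\]
Since $\sech t\ge e^{-t}$ and $\lambda_2\ge C^{-1}$, the main term is at most $-c\,e^{-4a\lambda_2}$. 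On the whole range, $\sigma_n^4(1+\lambda_2)^5/e^{-4a\lambda_2}\le\sigma_n^2(1+\lambda_2)^5=o(1)$, so the endpoint is not tight at all.

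A competing slope of size $\sigma_n^2(1+\lambda)^C$, as you wrote, could not be beaten near the endpoint. The $\sigma_n^4$ scale is exactly what lets \autoref{gsr_hierarchy} reach $(1-\delta)\log(1/\sigma_n)$, so your fallback works once you extract that scale.

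Your polylogarithmic worry is also unnecessary. We have
\[
\bbE[(e^{\nu\xi}/M(\nu))^p]=\cosh(pa\nu)\cosh(a\nu)^{-p}e^{(p^2-p)\sigma_n^2\nu^2/2}\le2^p(1+o(1)),
\]
and the tilted Gaussian part has mean $\cO(\sigma_n^2\lambda_2)=o(1)$. So the moments in the Taylor remainder are bounded uniformly in $\lambda_2$, and the errors are $\cO(n^{-2})$ as stated.

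Finally, $(2n)^{-1}d\Sigma_g(\lambda_2,\lambda_2)/d\lambda_2$ is the cross moment of the two paired linear terms, and it is negative. It is not a second moment.
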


\begin{proof}
Use $V_i$ from the proof of \autoref{expert_moments_gradients}, with $j=\mathrm M$ and $q=2$ and $C_\nu=e^{\sqrt{1-\sigma_n^2}|\nu|+\sigma_n^2\nu^2/2}$.
For $\xi_i=\sqrt{1-\sigma_n^2}S_i+\sigma_n G_i$,
\begin{align*}
\bbE_{\mathrm M}\left[\frac{e^{\nu \xi_i}}{C_\nu}\right]
&=e^{-\sqrt{1-\sigma_n^2}|\nu|}\cosh(\sqrt{1-\sigma_n^2}\nu)\in[1/2,1],\\
\bbE_{\mathrm M}\left[\left(\frac{e^{\nu \xi_i}}{C_\nu}\right)^p\right]
&=e^{-p\sqrt{1-\sigma_n^2}|\nu|}\cosh(p\sqrt{1-\sigma_n^2}\nu)e^{(p^2-p)\sigma_n^2\nu^2/2}
\le e^{(p^2-p)\sigma_n^2\nu^2/2}.
\end{align*}
Including the polynomial factors by Gaussian tilting preserves boundedness because $\sigma_n\lambda_2=\cO(\sigma_n\log(1/\sigma_n^2))=o(1)$.
The Taylor expansion in the proof of \autoref{expert_moments_gradients}, applied to $(g_2-\mu)\frac{\partial g_2}{\partial\lambda_2}$ and $(g_2-\mu)^2$, gives the two covariance expansions.

For the Gaussian mixture, \eqref{eq:gsr_covariance} gives
\begin{align*}
\Sigma_g(\lambda_2,\lambda_2)
&=\sigma_n^2+(1-\sigma_n^2)\sech^4(\sqrt{1-\sigma_n^2}\lambda_2)
+\cO\bigl(\sigma_n^4(1+\lambda_2)^4\bigr),\\
\frac{d}{d\lambda_2}\Sigma_g(\lambda_2,\lambda_2)
&=-4(1-\sigma_n^2)^{3/2}\sech^4(\sqrt{1-\sigma_n^2}\lambda_2)\tanh(\sqrt{1-\sigma_n^2}\lambda_2)
+\cO\bigl(\sigma_n^4(1+\lambda_2)^5\bigr).
\end{align*}
Since $e^{-4\sqrt{1-\sigma_n^2}\lambda_2}\ge\sigma_n^2$,
\begin{align*}
\frac{\sigma_n^4(1+\lambda_2)^5}{e^{-4\sqrt{1-\sigma_n^2}\lambda_2}}
&\le\sigma_n^2(1+\lambda_2)^5\\
&\le C\sigma_n^2(1+\log(1/\sigma_n^2))^5=o(1),\\
\frac{n^{-2}}{e^{-4\sqrt{1-\sigma_n^2}\lambda_2}/n}
&\le(n\sigma_n^2)^{-1}\le n^{-\epsilon}=o(1).
\end{align*}
For the refined moment bounds, substitute $\xi_i=\sqrt{1-\sigma_n^2}S_i+\sigma_n G_i$ in the linear terms of \eqref{eq:ratio-derivatives}.
At $\sigma_n=0$ their paired values are bounded by $C e^{-2\sqrt{1-\sigma_n^2}\lambda_2}$.
The difference for $\sigma_n>0$ has $L^p$ norm at most $C_p(1+\lambda_2)^C\sigma_n$, by Gaussian moments and $\sigma_n\lambda_2=o(1)$.
Averaging the centered linear terms gives the factor $n^{-1/2}$, and the ratio remainders are $\cO(n^{-1})$ in $L^p$.
\end{proof}

\begin{lemma}[Empirical gradients with fixed gating parameters]
\label{first_layer_empirical_gradient}
Assume the data model of \autoref{adaptiveMeanEstimation} and fix $C<\infty$ and $\eta\in(0,1)$ independently of $n$.
For Uniform data, also fix $\epsilon>0$.
For Gaussian-mixture data, fix $\epsilon>0$ and assume $n^{-1+\epsilon}\le\sigma_n^2$ and $\sigma_n^2=o(1)$.
Consider $(\theta,\lambda)$ satisfying
$$
\max\{\|\bar w_{\cdot\mathrm G}-e_1\|_1,\|\bar w_{\cdot\mathrm A}-e_2\|_1\}\le C/n,
\qquad \max_q\|\theta_q\|_2\le C\log n,
\qquad |\lambda_1|\le2,
$$
with $C^{-1}\le\lambda_2\le n^{1-\epsilon}$ for Uniform data and $C^{-1}\le\lambda_2\le\log(1/\sigma_n^2)/(4\sqrt{1-\sigma_n^2})$ for the mixture.
For $m=\widetilde\Omega(n^{1+\epsilon})$ independent training tasks and sufficiently large $n$, with probability at least $1-\eta$, uniformly over this class,
$$
\frac{\partial\hat\cL_m}{\partial\lambda_2}\le\begin{cases}
-1/(C n\lambda_2^2),&\text{Uniform},\\
-e^{-4\sqrt{1-\sigma_n^2}\lambda_2}/(C n),&\text{Gaussian mixture}.
\end{cases}
$$
On the same event, uniformly over all $(\theta,\lambda)$ satisfying the bounds above,
$$
\frac{\partial\hat\cL_m}{\partial\lambda_1}
=\frac{\lambda_1}{2n}\left[2\sinh(\lambda_1^2)+\lambda_1^2\cosh(\lambda_1^2)\right]+\cO(n^{-3/2+o(1)}).
$$
On the same event, $\hat\cL_m(\theta,\lambda)\le C/n$ whenever also $\lambda_2\le1$.
Constants may depend on $\epsilon$, $C,d$, and $\boldsymbol\omega$.
\end{lemma}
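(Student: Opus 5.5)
We write $\partial_{\lambda_q}\hat\cL_m=\frac1m\sum_b(\widehat\mu(\bX_b)-\mu_b)\,w_q(\bX_b)\,\partial_{\lambda_q}g_q(\bX_b)$. We first control the population counterpart family by family, and then pass to the empirical version by a uniform concentration argument over the parameter class. Conditional on family $j$, we decompose $\widehat\mu-\mu$ as in \eqref{eq:decomposition} with $i=k_j$, so that the gate factor becomes $w_q=\mathbf 1_{\{q=k_j\}}+(w_q-\mathbf 1_{\{q=k_j\}})$. The leading term is $\bbE_j[(g_{k_j}-\mu)\partial_{\lambda_{k_j}}g_{k_j}]\mathbf 1_{\{q=k_j\}}$. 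For the other terms we use the gate-deviation bound from the proof of \autoref{thm:accuracy_final_estimator}, namely
\[
|1-w_i|+\sum_{k\ne i}w_k\le e^{\max_k|\inner{\theta_k,\Delta\bz}|}\|\bar w_{\cdot j}-e_i\|_1,
\]
whose exponential moment is bounded uniformly for $\|\theta_k\|_2\le C\log n$. Combined with $\|\bar w_{\cdot j}-e_{k_j}\|_1\le C/n$ and the $L^p$ bounds on $g_q-\mu$ and $\partial g_q$ from \autoref{expert_moments_gradients}, \autoref{uniform_expert_moments_gradients} and \autoref{mixture_expert_moments_gradients}, Hölder's inequality makes every cross term $\cO(n^{-1}\cdot n^{-1/2}\cdot n^{-1/2}\cdot\mathrm{polylog})$ or smaller. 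The normalization term $\mu(\sum_kw_k-1)w_q\partial g_q$ has mean zero in $\mu$, since the gates and centered experts do not depend on $\mu$. Its contribution therefore vanishes in the population gradient, and it enters only through the concentration step.

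For $\lambda_1$, the leading term comes from family $\mathrm G$. The other family contributes only $w_1$-weighted terms of size $\cO(n^{-2}\mathrm{polylog})$, so \autoref{expert_moments_gradients} gives the displayed expansion with $\frac12\cdot\frac1{2n}\frac{d}{d\lambda_1}\Sigma_g$ (the factor $\frac12$ being the family prior) and error $\cO(n^{-3/2+o(1)})$. For $\lambda_2$, the leading term comes from family $\mathrm A$ and is at most $-1/(Cn\lambda_2^2)$, respectively $-e^{-4\sqrt{1-\sigma_n^2}\lambda_2}/(Cn)$. We must check that the cross terms, including the Gaussian-family term weighted by $w_2=\cO(1/n)$, are negligible relative to this. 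In the uniform case the cross terms are $\cO(n^{-2}\mathrm{polylog}/\lambda_2^{3/2})$, which is $o(1/(n\lambda_2^2))$ because $\lambda_2\le n^{1-\epsilon}$, so $\sqrt{\lambda_2}\,\mathrm{polylog}/n=o(1)$. In the mixture case, the assumption $e^{-4\sqrt{1-\sigma_n^2}\lambda_2}\ge\sigma_n^2\ge n^{-1+\epsilon}$ absorbs $n^{-2}\mathrm{polylog}$. The loss bound $\hat\cL_m\le C/n$ for $\lambda_2\le1$ follows from \autoref{thm:accuracy_final_estimator} together with \autoref{concentration_for_loss_local}.

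\textbf{Main obstacle: uniform concentration over the class in the uniform case.} There $\lambda_2$ ranges up to $n^{1-\epsilon}$, and the signal $1/(n\lambda_2^2)$ can be as small as $n^{-3+2\epsilon}$. My approach is to split the empirical gradient into the same terms as above, truncate on a high-probability event where $|\mu|$ and $\max_\ell|X_\ell-\mu|$ are $\cO(\sqrt{\log n})$ and the gate deviations are small, and then apply a Bernstein inequality with a union bound over a polynomial grid in $(\theta,\lambda)$. Lipschitz continuity in $\lambda$ holds with polynomial constants because $|\partial_\lambda^2 g|\le\max|\xi|^3$. The variance proxy of the leading summand is $\bbE[(g_2-\mu)^2(\partial g_2)^2]\lesssim 1/(n^2\lambda_2^4)$, so its fluctuation is $\cOs(1/(n\lambda_2^2\sqrt m))$, which is relatively $o(1)$.

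The normalization term is the dangerous one: it has variance of order $\sigma_\mu^2 n^{-2}\cdot n^{-1}\lambda_2^{-3}$, giving fluctuations $\cOs(n^{-3/2}\lambda_2^{-3/2}m^{-1/2})$. Compared with the signal this ratio is $\sqrt{\lambda_2}\,n^{-1/2}m^{-1/2}$, which is $o(1)$ once $m\gtrsim n^{\epsilon'}$. I expect this verification, together with handling the $\mu$-dependence uniformly, to be the delicate bookkeeping. If the direct bound fails, I would first centre the summands in $\mu$ by conditioning on $\{\bX_b-\mu_b\}$ and use sub-Gaussianity of $\mu_b$.
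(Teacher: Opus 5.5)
\textbf{Verdict: genuine gap.} Your skeleton matches the paper: the gradient formula $(\widehat\mu-\mu)w_q\partial_{\lambda_q}g_q$, isolating the selected expert, dropping the $\mu(\sum_kw_k-1)$ term from the population gradient by independence of $\mu$, then net-plus-moment concentration. The gap is in how you dispose of the Gaussian-context contribution $\frac12\sum_k\bbE_{\mathrm G}[w_2w_k(g_k-\mu)\partial_{\lambda_2}g_2]$ to $\partial\cL/\partial\lambda_2$.

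\textbf{Where the H\"older step fails.} You bound this term by H\"older, taking $L^p$ bounds on $\partial_{\lambda_2}g_2$ from \autoref{expert_moments_gradients}, \autoref{uniform_expert_moments_gradients} and \autoref{mixture_expert_moments_gradients}. None of these covers Gaussian contexts once $\lambda_2>2$. The $n^{-1/2}$-type bound is in fact false there. For $\lambda_2\gg\sqrt{\log n}$, the tilted averages of Gaussian data are dominated by a few extreme order statistics, so $\partial_{\lambda_2}g_2$ does not concentrate. Its $L^2(\bbP_{\mathrm G})$ norm is only polylogarithmically small.

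\textbf{Uniform case: repairable, but you are missing an ingredient.} With only $|\partial_{\lambda_2}g_2|\lesssim\max_i|\xi_i|^2$, two quantities swamp the signal $1/(n\lambda_2^2)$, which can be as small as $n^{-3+2\epsilon}$. These are the bias (of order $n^{-3/2}$) and the fluctuation of the Gaussian-context summands (of order $n^{-2-\epsilon/2}$). The paper uses the deterministic estimate $0\le\partial_\lambda h(\bx,\lambda)\le C(1+\log n)^2/\lambda^2$. It follows from the softmax tail bound $\sum_{i:\max_\ell x_\ell-x_i\ge u}e^{\lambda x_i}/\sum_\ell e^{\lambda x_\ell}\le\min\{1,ne^{-\lambda u}\}$. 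This makes the Gaussian-context terms $\cO(n^{-3/2}\mathrm{polylog}/\lambda_2^2)$ in both the bias and the concentration step.

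\textbf{Mixture case: even the $\lambda_2^{-2}$ decay is not enough.} Here $\lambda_2$ reaches $\log(1/\sigma_n^2)/(4\sqrt{1-\sigma_n^2})$, and H\"older gives the Gaussian-context term only as $n^{-3/2}$ up to polylogarithmic factors. The signal $e^{-4\sqrt{1-\sigma_n^2}\lambda_2}/n$ can be as small as $\sigma_n^2/n\ge n^{-2+\epsilon}$. The hypotheses allow $\sigma_n^2\ll n^{-1/2}$, and in that regime your claim that all cross terms are $n^{-2}\mathrm{polylog}$ is unjustified, so the sign conclusion does not follow.

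The missing idea is a cancellation:
\begin{itemize}
\item Split $w_k=\bar w_{k\mathrm G}+\Delta w_k$ with $\|\Delta w_k\|_{L^p(\bbP_{\mathrm G})}=\cO(\log n/n^{3/2})$. Then everything except $\bar w_{1\mathrm G}\bar w_{2\mathrm G}\,\bbE_{\mathrm G}[(g_1-\mu)\partial_{\lambda_2}g_2]$ is $n^{-2+o(1)}$.
\item Show $\bbE_{\mathrm G}[(g_1-\mu)\partial_{\lambda_2}g_2]=n^{-1+o(1)}$. Both factors are odd under reflection of the centered sample, hence have mean zero, so this is a covariance of two functions of a Gaussian vector.
\item Gaussian integration by parts bounds that covariance by $\|\max_i|\partial g_1/\partial X_i|\|_{L^2(\bbP_{\mathrm G})}\,\|\sum_i|\partial^2g_2/\partial X_i\partial\lambda_2|\|_{L^2(\bbP_{\mathrm G})}$. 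The first factor is $n^{-1+o(1)}$ because $|\lambda_1|\le2$ makes each coordinate derivative of $g_1$ of order $n^{-1}e^{\cO(\max_i|\xi_i|)}$. The second factor is polylogarithmic.
\end{itemize}

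\textbf{Minor point.} \autoref{concentration_for_loss_local} is stated at fixed $\lambda$, so the bound $\hat\cL_m\le C/n$ also needs your net over $\lambda$.
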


\begin{proof}
For every fixed $p\ge2$, the feature moment bounds and the sigmoid inequalities in the proof of \autoref{thm:accuracy_final_estimator} give
\begin{align*}
\|1-w_1\|_{L^p(\bbP_{\mathrm G})}+\|w_2\|_{L^p(\bbP_{\mathrm G})}+\|1-w_2\|_{L^p(\bbP_{\mathrm A})}+\|w_1\|_{L^p(\bbP_{\mathrm A})}&\le C_p/n,\\
\|w_k-\bar w_{kj}\|_{L^p(\bbP_j)}&\le C_p\log n/n^{3/2}.
\end{align*}
For $q,k\in[2]$, the fixed $\boldsymbol\omega$ give
\begin{align*}
\frac{\partial w_k(\bX)}{\partial\lambda_q}
&=w_k(\bX)(1-w_k(\bX))
\left\langle\theta_k,\frac{\partial\bz(\bX)}{\partial\lambda_q}\right\rangle=0,\\
\frac{\partial h(\bX,\lambda)}{\partial\lambda}
&=\frac{\sum_i X_i^2 e^{\lambda X_i}}{\sum_i e^{\lambda X_i}}-h(\bX,\lambda)^2,\\
\frac{\partial g_q(\bX)}{\partial\lambda_q}
&=\frac12\left(
\left.\frac{\partial h(\bX,\lambda)}{\partial\lambda}\right|_{\lambda=\lambda_q}
-\left.\frac{\partial h(\bX,\lambda)}{\partial\lambda}\right|_{\lambda=-\lambda_q}
\right),\\
\frac{\partial \widehat\mu(\bX)}{\partial\lambda_q}
&=w_q(\bX)\frac{\partial g_q(\bX)}{\partial\lambda_q}.
\end{align*}
Consequently,
\begin{align*}
\frac{\partial}{\partial\lambda_q}\frac12(\widehat\mu-\mu)^2
&=(\widehat\mu-\mu)w_q\frac{\partial g_q}{\partial\lambda_q}\\
&=\left[\sum_kw_k(g_k-\mu)+\mu\left(\sum_kw_k-1\right)\right]
w_q\frac{\partial g_q}{\partial\lambda_q},\\
\frac{\partial\hat\cL_m}{\partial\lambda_q}
&=\frac1m\sum_{b=1}^m(\widehat\mu(\bX_b)-\mu_b)
w_q(\bX_b)\frac{\partial g_q(\bX_b)}{\partial\lambda_q}.
\end{align*}
Since the gates and centered experts are independent of $\mu$,
\begin{align*}
\bbE_j\left[\mu\left(\sum_kw_k-1\right)w_q\frac{\partial g_q}{\partial\lambda_q}\right]
&=\bbE[\mu]\,\bbE_j\left[\left(\sum_kw_k-1\right)w_q\frac{\partial g_q}{\partial\lambda_q}\right]=0,\\
\frac{\partial\cL}{\partial\lambda_q}
&=\frac12\sum_j\sum_k
\bbE_j\left[w_qw_k(g_k-\mu)\frac{\partial g_q}{\partial\lambda_q}\right].
\end{align*}

For $\lambda>0$ and $u\ge0$,
\begin{align*}
\sum_{i:\,\max_\ell X_\ell-X_i\ge u}\frac{e^{\lambda X_i}}{\sum_\ell e^{\lambda X_\ell}}
&\le\min\{1,ne^{-\lambda u}\},\\
0\le\frac{\partial h(\bX,\lambda)}{\partial\lambda}
&=\sum_i\frac{e^{\lambda X_i}}{\sum_\ell e^{\lambda X_\ell}}(X_i-h(\bX,\lambda))^2\\
&\le\sum_i\frac{e^{\lambda X_i}}{\sum_\ell e^{\lambda X_\ell}}(\max_\ell X_\ell-X_i)^2\\
&\le\int_0^\infty 2u\min\{1,ne^{-\lambda u}\}\,du
\le\frac{C(1+\log n)^2}{\lambda^2}.
\end{align*}
Applying the same bound to $-\bX$ gives
$$
\left|\frac{\partial g_q(\bX)}{\partial\lambda_q}\right|
\le\frac{C(1+\log n)^2}{\lambda_q^2}.
$$

For Uniform data, the gate moment bounds, H\"older's inequality, and \autoref{uniform_expert_moments_gradients} give
\begin{align*}
&\left|\bbE_{\mathrm U}\left[\frac{\partial}{\partial\lambda_2}\frac12(\widehat\mu-\mu)^2\right]
-\bbE_{\mathrm U}\left[(g_2-\mu)\frac{\partial g_2}{\partial\lambda_2}\right]\right|\\
&\quad=\left|\bbE_{\mathrm U}\left[
\left((w_2^2-1)(g_2-\mu)+w_1w_2(g_1-\mu)\right)
\frac{\partial g_2}{\partial\lambda_2}\right]\right|\\
&\quad\le\frac Cn\left(
\frac1{\sqrt{n\lambda_2}}\frac1{\sqrt{n\lambda_2^3}}
+\frac1{\sqrt n}\frac1{\sqrt{n\lambda_2^3}}\right)
=C\left(\frac1{n^2\lambda_2^2}+\frac1{n^2\lambda_2^{3/2}}\right).
\end{align*}
In family $\mathrm G$, the same bounds give
$$
\left|\sum_k\bbE_{\mathrm G}\left[w_2w_k(g_k-\mu)\frac{\partial g_2}{\partial\lambda_2}\right]\right|
\le\frac{C(1+\log n)^C}{n^{3/2}\lambda_2^2}.
$$
Consequently,
$$\frac{\partial\cL}{\partial\lambda_2}\le-\frac1{C n\lambda_2^2}.$$

For the Gaussian mixture, \autoref{mixture_expert_moments_gradients} gives
$$
\left|\bbE_{\mathrm M}\left[
\left((w_2^2-1)(g_2-\mu)+w_1w_2(g_1-\mu)\right)
\frac{\partial g_2}{\partial\lambda_2}\right]\right|
\le\frac{C(1+\log n)^C}{n^2}.
$$
In family $\mathrm G$, symmetry gives
$$
\bbE_{\mathrm G}[g_1-\mu]=\bbE_{\mathrm G}\left[\frac{\partial g_2}{\partial\lambda_2}\right]=0.
$$
Integration by parts and Cauchy--Schwarz give
\begin{align*}
\left|\bbE_{\mathrm G}\left[(g_1-\mu)\frac{\partial g_2}{\partial\lambda_2}\right]\right|
&\le
\left\|\max_i\left|\frac{\partial g_1}{\partial X_i}\right|\right\|_{L^2(\bbP_{\mathrm G})}
\left\|\sum_i\left|\frac{\partial^2 g_2}{\partial X_i\partial\lambda_2}\right|\right\|_{L^2(\bbP_{\mathrm G})}.
\end{align*}
Differentiating
$$
\frac{\partial h(\bX,\lambda)}{\partial X_i}
=\frac{e^{\lambda X_i}}{\sum_\ell e^{\lambda X_\ell}}\bigl(1+\lambda(X_i-h(\bX,\lambda))\bigr)
$$
gives
\begin{align*}
\left\|\max_i\left|\frac{\partial g_1}{\partial X_i}\right|\right\|_{L^2(\bbP_{\mathrm G})}
&\le\frac1n\left\|C(1+\max_i|X_i-\mu|)e^{C\max_i|X_i-\mu|}\right\|_{L^2(\bbP_{\mathrm G})}
=n^{-1+o(1)},\\
\left\|\sum_i\left|\frac{\partial^2 g_2}{\partial X_i\partial\lambda_2}\right|\right\|_{L^2(\bbP_{\mathrm G})}
&\le C(1+\lambda_2)\left\|(1+\max_i|X_i-\mu|)^2\right\|_{L^2(\bbP_{\mathrm G})}
\le C(1+\log n)^2.
\end{align*}
Using the gate moment bounds,
\begin{align*}
\left|\sum_k\bbE_{\mathrm G}\left[w_2w_k(g_k-\mu)\frac{\partial g_2}{\partial\lambda_2}\right]\right|
&\le\frac Cn\left|\bbE_{\mathrm G}\left[(g_1-\mu)\frac{\partial g_2}{\partial\lambda_2}\right]\right|
+\frac{C\log n}{n^{3/2}}n^{-1/2+o(1)}+n^{-2+o(1)}\\
&\le n^{-2+o(1)}.
\end{align*}
It follows that
$$
\frac{\partial\cL}{\partial\lambda_2}
\le-\frac{e^{-4\sqrt{1-\sigma_n^2}\lambda_2}}{C n}+n^{-2+o(1)}
\le-\frac{e^{-4\sqrt{1-\sigma_n^2}\lambda_2}}{2C n}.
$$
The same estimates, including the mean term, give
$$
\left\|\frac{\partial}{\partial\lambda_2}\frac12(\widehat\mu-\mu)^2\right\|_{L^p(\bbP_j)}
\le\begin{cases}
\dfrac{C_p(1+\log n)^C}{n\lambda_2^2},&\text{Uniform},\ j\in\{\mathrm G,\mathrm U\},\\[1ex]
n^{-1+o(1)}(\sigma_n^2+e^{-4\sqrt{1-\sigma_n^2}\lambda_2}),&\text{Gaussian mixture},\ j=\mathrm M,\\
n^{-3/2+o(1)},&\text{Gaussian mixture},\ j=\mathrm G.
\end{cases}
$$

For $m=\widetilde\Omega(n^{1+\epsilon})$ independent tasks and sufficiently large $n$, the moment inequality for centered independent sums and the preceding bounds give, for every fixed $\beta>0$, uniformly on any fixed polynomial-size net in $(\theta,\lambda)$, with probability at least $1-\eta/4$,
$$
\left|\frac{\partial\hat\cL_m}{\partial\lambda_2}-\frac{\partial\cL}{\partial\lambda_2}\right|
\le\frac1{\sqrt m}\begin{cases}
\dfrac{n^{\beta}}{n\lambda_2^2},&\text{Uniform},\\[1ex]
n^{-1+\beta}(\sigma_n^2+e^{-4\sqrt{1-\sigma_n^2}\lambda_2})+n^{-3/2+\beta},&\text{Gaussian mixture}.
\end{cases}
$$
For a net with at most $n^C$ points, choose $p>2(C+1)/\beta$.
Markov's inequality and a union bound give
$$
\bbP\left(\text{the preceding bound fails at some net point}\right)
\le C_p n^C n^{-p\beta/2}\le C_p/n.
$$
Since $\|\bz\|_2=1$ and $\bz$ is independent of $\lambda$,
$$
\sup_{\theta,\lambda}\left\|\nabla_{(\theta,\lambda)}
\left[(\widehat\mu-\mu)w_q\frac{\partial g_q}{\partial\lambda_q}\right]\right\|_2
\le C(1+|\mu|+\max_i|X_i-\mu|)^4.
$$
The expectation of the right-hand side is $\cO((1+\log n)^2)$, so Markov's inequality gives, for sufficiently large $n$, with probability at least $1-\eta/4$,
$$
\frac1m\sum_{b=1}^m(1+|\mu_b|+\max_{\ell\in[n]}|\bX_{b\ell}-\mu_b|)^4\le n.
$$
$(\theta,\lambda)$ has $2d+2$ coordinates with ranges of polynomial length.
A net of sufficiently fine inverse-polynomial mesh therefore extends the gradient inequalities to the full stated domain.
Choose $\beta<\min\{1/4,\epsilon/2\}$.
Relative to the respective population gradient lower bounds, the empirical errors are at most
$$
\begin{cases}
C n^{-1/2-\epsilon/2+\beta},&\text{Uniform},\\
C\left(n^{-1/2-\epsilon/2+\beta}+n^{-3\epsilon/2+\beta}\right),&\text{Gaussian mixture},
\end{cases}
$$
which tend to zero.

Uniformly for $|\lambda_1|\le2$, the gate moment bounds, \autoref{expert_moments_gradients}, and $|g_2-\mu|\le\max_i|X_i-\mu|$ give
\[
\left\|(\widehat\mu-\mu)w_1\frac{\partial g_1}{\partial\lambda_1}\right\|_{L^p(\bbP_j)}\le C_p/n.
\]
H\"older's inequality bounds the population error, while Markov's inequality and a union bound on the parameter net above bound the empirical error at the net points.
Extending between net points using the derivative bound above gives, for sufficiently large $n$, with probability at least $1-\eta/4$, uniformly over all $(\theta,\lambda)$ satisfying the bounds in \autoref{first_layer_empirical_gradient},
\[
\left|\frac{\partial\cL}{\partial\lambda_1}
-\frac12\bbE_{\mathrm G}\left[(g_1-\mu)\frac{\partial g_1}{\partial\lambda_1}\right]\right|
+\left|\frac{\partial\hat\cL_m}{\partial\lambda_1}-\frac{\partial\cL}{\partial\lambda_1}\right|
\le n^{-3/2+o(1)}.
\]
The Gaussian gradient formula in \autoref{expert_moments_gradients} proves the expansion.
When also $\lambda_2\le1$, the bounded-parameter expert moments and the gate moment bounds give $\|(\widehat\mu-\mu)^2\|_{L^p(\bbP_j)}\le C_p/n$.
Applying the same net argument to the loss gives $\hat\cL_m\le C/n$ uniformly over these parameters, for sufficiently large $n$, with probability at least $1-\eta/4$.
A union bound gives all conclusions jointly with probability at least $1-\eta$.
\end{proof}

\subsection{Geometry of \texorpdfstring{$\bz$}{z}}
\begin{lemma}[Geometry of $\bz$]
\label{feature_geometry}
Let $d\ge3$ and sample $\boldsymbol\omega$ as in \autoref{TransformerSimplified}.
Recall $m_j$, $\bar{\boldsymbol\phi}_j$, and $\bar\bz_j$ from \autoref{sec:dynamics}, with $m_j(0)=0$ by continuity, and let $Z_{jk}=\inner{\bar\bz_j,\bar\bz_k}$.
Explicitly,
\begin{align*}
m_{\mathrm G}(a)&=a,\\
m_{\mathrm A}(a)&=\sqrt3\coth(\sqrt3a)-1/a
&&\text{(uniform alternative)},\\
m_{\mathrm A}(a)&=\sigma_n^2a+\sqrt{1-\sigma_n^2}\tanh\rbr{\sqrt{1-\sigma_n^2}\,a}
&&\text{(Gaussian-mixture alternative)}.
\end{align*}
Then, almost surely over $\boldsymbol\omega$,
$$
\|P_d\bar{\boldsymbol\phi}_j\|_2>0,
\qquad
Z_{\mathrm{GG}}=Z_{\mathrm{AA}}=1,
\qquad
0<Z_{\mathrm{GA}}<1 .
$$
In particular $Z\succ0$, $Z_{\mathrm{GA}}\ge0$, and $Z_{\mathrm{GA}}<\min\wbr{Z_{\mathrm{GG}},Z_{\mathrm{AA}}}$.
For the uniform alternative these quantities do not depend on $n$.
For the Gaussian-mixture alternative they depend on $n$ only through $\sigma_n$, and if $\sigma_n\to0$ then, almost surely, $Z_{\mathrm{GA}}$ converges to a limit strictly below one.
Consequently, $\lambda_{\min}(Z)$ is bounded below by a positive constant independent of $n$, for all sufficiently large $n$.
\end{lemma}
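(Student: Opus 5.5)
The plan is to verify the explicit formulas for $m_j$, then reduce every claim to a statement that a real-analytic function of $\boldsymbol\omega$ is not identically zero. Such a function vanishes only on a Lebesgue-null set, so each claim then holds almost surely under the Gaussian law of $\boldsymbol\omega$.

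\textbf{Formulas.} The formulas follow from $m_j=(\log M_{Q_j})'$.
\begin{itemize}
\item Gaussian: $\log M=a^2/2$, so $m_{\mathrm G}(a)=a$.
\item Uniform: $M(a)=\sinh(\sqrt3a)/(\sqrt3a)$, so $m_{\mathrm U}(a)=\sqrt3\coth(\sqrt3a)-1/a$.
\item Mixture: $M(a)=e^{\sigma_n^2a^2/2}\cosh(\sqrt{1-\sigma_n^2}\,a)$, which gives the stated expression.
\end{itemize}
All three are odd and real-analytic, with removable singularity at $0$ where $m_j(0)=0$.

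\textbf{Nonvanishing of the centered features.} We have $\|P_d\bar{\boldsymbol\phi}_j\|_2=0$ iff all $m_j(\omega_r)$ are equal. Since each $m_j$ is strictly increasing ($m_j'$ is a variance of a tilted law, hence positive), this happens iff all $\omega_r$ coincide, a null event. So $\bar\bz_j$ is well defined and $Z_{\mathrm{GG}}=Z_{\mathrm{AA}}=1$.

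\textbf{Upper bound $Z_{\mathrm{GA}}<1$.} By Cauchy--Schwarz, $Z_{\mathrm{GA}}\le1$, with equality iff $P_d\bar{\boldsymbol\phi}_{\mathrm A}=cP_d\bar{\boldsymbol\phi}_{\mathrm G}$ for some $c>0$. That is, there exist $c,b$ with $m_{\mathrm A}(\omega_r)=c\,\omega_r+b$ for all $r$. With $d\ge3$ distinct points, this forces the $3\times3$ determinants
\[\det\begin{pmatrix}1&\omega_r&m_{\mathrm A}(\omega_r)\end{pmatrix}_{r=1,2,3}\]
to vanish. This determinant is analytic in $(\omega_1,\omega_2,\omega_3)$. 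It is not identically zero because $m_{\mathrm A}$ is not affine: for the uniform case $m_{\mathrm U}$ is bounded but nonconstant, and for the mixture the $\tanh$ term is nonlinear. Hence $Z_{\mathrm{GA}}<1$ almost surely. This is the step where $d\ge3$ is used.

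\textbf{Lower bound $Z_{\mathrm{GA}}>0$.} This is the main obstacle, since the Cauchy--Schwarz argument gives no sign information. We have $Z_{\mathrm{GA}}\propto\sum_r(\omega_r-\bar\omega)(m_{\mathrm A}(\omega_r)-\bar m)$, where bars denote averages over $r$. This equals
\[\tfrac1{2d}\sum_{r,s}(\omega_r-\omega_s)\bigl(m_{\mathrm A}(\omega_r)-m_{\mathrm A}(\omega_s)\bigr),\]
and each summand is strictly positive for $\omega_r\ne\omega_s$ because $m_{\mathrm A}$ is strictly increasing. Hence $Z_{\mathrm{GA}}>0$ almost surely. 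Together with $|Z_{\mathrm{GA}}|<1$, this gives $Z\succ0$, $\det Z=1-Z_{\mathrm{GA}}^2>0$, and $\lambda_{\min}(Z)=1-Z_{\mathrm{GA}}$.

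\textbf{Dependence on $n$.} For the uniform alternative nothing depends on $n$. For the mixture, as $\sigma_n\to0$, $m_{\mathrm M}\to\tanh$ uniformly on the finite set $\{\omega_r\}$. The limit is strictly increasing and non-affine, so the same two arguments apply to the limiting $\bar\bz_{\mathrm A}$. By continuity of normalization away from zero, $Z_{\mathrm{GA}}$ converges to a limit in $(0,1)$ almost surely. The countable union of null sets (the limit case together with each $n$) is still null. Hence $1-Z_{\mathrm{GA}}$ is bounded below for large $n$, which gives the uniform lower bound on $\lambda_{\min}(Z)$.
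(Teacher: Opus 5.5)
Your proposal is correct and follows essentially the same route as the paper. Nonvanishing of $P_d\bar{\boldsymbol\phi}_j$ comes from strict monotonicity of $m_j$, $Z_{\mathrm{GA}}>0$ from the pairwise identity $\frac1{2d}\sum_{r,s}(\omega_r-\omega_s)(m_{\mathrm A}(\omega_r)-m_{\mathrm A}(\omega_s))$, $Z_{\mathrm{GA}}<1$ from Cauchy--Schwarz plus non-affineness of $m_{\mathrm A}$ (so this step needs $d\ge3$), and the mixture bound from passing to the $\tanh$ limit. The only cosmetic difference is that you encode the equality case as the vanishing of a real-analytic $3\times3$ determinant on $\bbR^3$, whereas the paper conditions on $(\omega_1,\omega_2)$ and uses that a nonzero real-analytic function of $\omega_3$ has countably many zeros; these are the same argument up to Fubini.
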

\begin{proof}
Let $M_j(a)=\bbE_j[e^{a(X-\mu)}]$, which is finite for every $a$ because the centered observations are $1$-sub-Gaussian.
Then $m_j=(\log M_j)'$ and $m_j'(a)=\Var(Y_a)>0$, where $Y_a$ denotes the centered observation tilted by $a$.
Hence, each $m_j$ is strictly increasing on $\bbR$.
The displayed formulas are the cumulant generating functions of $\cN(0,1)$, of $\mathrm{Unif}[-\sqrt3,\sqrt3]$, and of the mixture in \autoref{gsr_moments}, differentiated once.

The $\omega_r$ are almost surely distinct, so strict monotonicity makes the entries of $\bar{\boldsymbol\phi}_j$ distinct.
In particular, they are not all equal and $P_d\bar{\boldsymbol\phi}_j\ne0$.
This proves the first claim, makes $\bar\bz_j$ well defined, and gives $Z_{jj}=\|\bar\bz_j\|_2^2=1$.

For the lower bound, write $\overline{m_j}=d^{-1}\sum_rm_j(\omega_r)$ and use the identity
\begin{align*}
\inner{P_d\bar{\boldsymbol\phi}_{\mathrm G},P_d\bar{\boldsymbol\phi}_{\mathrm A}}
&=\sum_{r=1}^d\rbr{m_{\mathrm G}(\omega_r)-\overline{m_{\mathrm G}}}\rbr{m_{\mathrm A}(\omega_r)-\overline{m_{\mathrm A}}}\\
&=\frac1{2d}\sum_{r,r'=1}^d
\rbr{m_{\mathrm G}(\omega_r)-m_{\mathrm G}(\omega_{r'})}\rbr{m_{\mathrm A}(\omega_r)-m_{\mathrm A}(\omega_{r'})}.
\end{align*}
Both $m_{\mathrm G}$ and $m_{\mathrm A}$ are strictly increasing, so every summand with $\omega_r\ne\omega_{r'}$ is strictly positive and the rest vanish.
Since the $\omega_r$ are almost surely distinct and $d\ge2$, the sum is strictly positive, whence $Z_{\mathrm{GA}}>0$.

For the upper bound, Cauchy--Schwarz gives $Z_{\mathrm{GA}}\le1$ with equality only if $P_d\bar{\boldsymbol\phi}_{\mathrm A}$ is a nonnegative multiple of $P_d\bar{\boldsymbol\phi}_{\mathrm G}$, that is, only if there exist $\alpha,\beta\in\bbR$ with
$$
m_{\mathrm A}(\omega_r)=\alpha\,\omega_r+\beta
\qquad\text{for every }r\in[d],
$$
using $m_{\mathrm G}(a)=a$.
In both cases $m_{\mathrm A}$ is real analytic on $\bbR$ and is not affine there: for the uniform alternative $m_{\mathrm U}'(a)=a^{-2}-3\csch^2(\sqrt3a)$ is nonconstant, and for the mixture $m_{\mathrm M}''(a)=-2(1-\sigma_n^2)^{3/2}\sech^2(\sqrt{1-\sigma_n^2}a)\tanh(\sqrt{1-\sigma_n^2}a)$ vanishes only at $a=0$.
Condition on $(\omega_1,\omega_2)$ with $\omega_1\ne\omega_2$, which determines $\alpha$ and $\beta$.
The remaining requirement is that $\omega_3$ be a zero of $a\mapsto m_{\mathrm A}(a)-\alpha a-\beta$, a real-analytic function that is not identically zero and therefore has at most countably many zeros in $\bbR$.
As $d\ge3$ and $\omega_3$ is independent of $(\omega_1,\omega_2)$ with a density, this event has probability zero.
Hence $Z_{\mathrm{GA}}<1$ almost surely, and $\lambda_{\min}(Z)=1-Z_{\mathrm{GA}}>0$.

Only $m_{\mathrm A}$ depends on $n$, and only through $\sigma_n$ in the Gaussian-mixture case.
If $\sigma_n\to0$ then $m_{\mathrm M}(\omega_r)\to\tanh(\omega_r)$ for each of the finitely many sampled coordinates $r\in[d]$, so $Z_{\mathrm{GA}}$ converges to the value obtained from $\tanh$, which is strictly below one by the same argument because $\tanh$ is real analytic and not affine.
Hence $Z_{\mathrm{GA}}$ is bounded away from one, and $\lambda_{\min}(Z)$ away from zero, for all sufficiently large $n$, with constants depending only on $\boldsymbol\omega$.
\end{proof}

\subsection{Expert covariance}
\begin{lemma}[Expert covariance approximation]
\label{SoftmaxO1overndeviation}
Here, $\bSigma$ is the covariance obtained from the multivariate delta method.
$Y$ is sub-Gaussian.
\begin{align}
\bbE\sbr{
n
\rbr{
\frac{\sum_{i=1}^n \exp(\lambda_k Y_i) Y_i}{\sum_{i=1}^n \exp(\lambda_k Y_i)}
-
\frac{\bbE\sbr{\exp(\lambda_k Y) Y}}{\bbE\sbr{\exp(\lambda_k Y)}}
}^2
-
\Sigma_{kk}
}
=
\cO\rbr{\frac{1}{n}}
\label{eq:expcov1}
\end{align}
\end{lemma}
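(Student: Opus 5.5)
We prove \eqref{eq:expcov1} by writing the attention output as a smooth function of two empirical averages and Taylor-expanding to second order around the population values. Fix $\lambda=\lambda_k$ and set
$$
A_n=\frac1n\sum_{i=1}^n e^{\lambda Y_i}Y_i,\qquad B_n=\frac1n\sum_{i=1}^n e^{\lambda Y_i},
$$
with population means $a=\bbE[e^{\lambda Y}Y]$ and $b=\bbE[e^{\lambda Y}]>0$. Write $f(u,v)=u/v$, so the ratio in \eqref{eq:expcov1} is $f(A_n,B_n)$ and the population ratio is $f(a,b)$. Because $Y$ is sub-Gaussian, the summands $e^{\lambda Y}Y$ and $e^{\lambda Y}$ have finite moments of every order. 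Hence the centered averages $\Delta=(A_n-a,B_n-b)$ satisfy
$$
\bbE\|\Delta\|^{2p}=\cO(n^{-p}),\qquad \bbE[\Delta^{\otimes3}]=\cO(n^{-2}).
$$
The third-moment bound holds because mixed third moments of i.i.d.\ centered averages equal $n^{-2}$ times the single-summand third moment. The covariance $\Sigma_{kk}$ is the delta-method variance $\nabla f(a,b)^\top\Gamma\nabla f(a,b)$, where $\Gamma$ is the covariance of $(e^{\lambda Y}Y,e^{\lambda Y})$. Equivalently, $\Sigma_{kk}=\Var\big(e^{\lambda Y}(Y-a/b)\big)/b^2$.

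Next I work on a good event where $B_n$ stays near $b$. Let $\mathcal G=\{|B_n-b|\le b/2\}$. On $\mathcal G$, $f$ is smooth with bounded derivatives of all orders on the relevant neighbourhood, so
$$
f(A_n,B_n)-f(a,b)=L+Q+R_3,
$$
where $L=\nabla f^\top\Delta$, $Q=\tfrac12\Delta^\top\nabla^2f\,\Delta$, and $|R_3|\lesssim\|\Delta\|^3$. Squaring and multiplying by $n$ gives
$$
nL^2+2nLQ+nQ^2+\text{terms involving }R_3 .
$$
I bound each piece as follows.
\begin{itemize}
\item $\bbE[nL^2]=\Sigma_{kk}$ exactly, since $L$ is an average of i.i.d.\ centered linear terms.
\item $\bbE[nLQ]$ is $n$ times a cubic moment of $\Delta$, hence $\cO(n^{-1})$.
\item $\bbE[nQ^2]\lesssim n\,\bbE\|\Delta\|^4=\cO(n^{-1})$.
\item The remainder terms are bounded by $n\,\bbE[\|\Delta\|^4+\|\Delta\|^5+\|\Delta\|^6]=\cO(n^{-1})$.
\end{itemize}
Since the indicator of $\mathcal G$ is missing from the exact identity $\bbE[nL^2]=\Sigma_{kk}$, I reinsert it. The correction is $\bbE[nL^2\mathbf 1_{\mathcal G^c}]$, which Cauchy--Schwarz controls using $\bbE L^4=\cO(n^{-2})$ and a small $\bbP(\mathcal G^c)$.

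The main obstacle is the complement $\mathcal G^c$, where the ratio is not a smooth function of the averages. Here I use the deterministic bound
$$
|f(A_n,B_n)|\le\max_i|Y_i|,
$$
which holds because the ratio is a weighted average of the $Y_i$. Sub-Gaussianity then gives $\bbE\max_i|Y_i|^4=\cO(\log^2 n)$. Markov's inequality with a high moment shows $\bbP(\mathcal G^c)\le C_p n^{-p}$ for any $p$. By Cauchy--Schwarz,
$$
n\,\bbE\big[(f(A_n,B_n)-f(a,b))^2\mathbf 1_{\mathcal G^c}\big]\lesssim n\,(\log n)\,n^{-p/2},
$$
which is $o(n^{-1})$ once $p$ is large. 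The $\Sigma_{kk}\,\bbP(\mathcal G^c)$ term is negligible by the same bound. Combining the pieces yields \eqref{eq:expcov1}. The constants are uniform over $\lambda_k$ in compact sets, because the moment bounds and the lower bound on $b$ are continuous in $\lambda$.
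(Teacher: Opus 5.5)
Your proof is correct, but it takes a different route from the paper. You expand the bivariate map $u/v$ to second order on the event $\mathcal G=\{|B_n-b|\le b/2\}$. You then handle $\mathcal G^c$ with the convex-combination bound $|f(A_n,B_n)|\le\max_i|Y_i|$ and $\bbP(\mathcal G^c)=\cO(n^{-p})$.

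The paper instead centers the numerator at the population ratio. In your notation it sets $A_i=e^{\lambda_kY_i}(Y_i-a/b)$, so that $f(A_n,B_n)-a/b=\bar A_n/B_n$ exactly, where $\bar A_n=n^{-1}\sum_iA_i=bL$ is a mean-zero average. Then $n(f(A_n,B_n)-a/b)^2=n\bar A_n^2B_n^{-2}$, and only the scalar $B_n^{-2}$ is expanded, to first order with Lagrange remainder $3\tilde B^{-4}(B_n-b)^2$. Term by term:
\begin{itemize}
\item The constant term gives $\Sigma_{kk}$ exactly.
\item The linear term $-2nb^{-3}\bbE[\bar A_n^2(B_n-b)]$ is precisely your $2n\bbE[LQ]$, since $Q=-L(B_n-b)/b$. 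It is $\cO(n^{-1})$ by independence.
\item The remainder is handled by H\"older with exponents $2,4,4$. The small-denominator issue is settled globally by Jensen: $B_n\ge e^{\lambda_k\bar Y_n}$, so $\bbE[\tilde B^{-8}]\le\bbE[e^{-8\lambda_k\bar Y_n}]+b^{-8}=\cO(1)$ by sub-Gaussianity.
\end{itemize}

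The paper's argument therefore needs no truncation event, no sample-maximum bound and no high-moment Markov step, because it exploits the ratio structure and the positivity of the weights. Yours is the generic delta-method-with-truncation argument. It transfers unchanged to other smooth functions of empirical averages; the paper uses essentially that scheme later for products such as $(g_q-\mu)\,\partial g_q/\partial\lambda_q$. The cost is extra bookkeeping, and two small points need attention there:
\begin{itemize}
\item The indicator of $\mathcal G$ must be reinserted in $\bbE[nLQ]$ as well as in $\bbE[nL^2]$, since both rely on exact moment computations. The same Cauchy--Schwarz fix works.
\item The third derivatives of $u/v$ are not uniformly bounded on $\mathcal G$, because $u$ is unrestricted there. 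Since $u$ enters linearly, one still gets $|R_3|\lesssim\|\Delta\|^3+\|\Delta\|^4$, which your moment bounds absorb.
\end{itemize}
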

\begin{proof}
Set
\[
a=\frac{\bbE[e^{\lambda_kY}Y]}{\bbE[e^{\lambda_kY}]},
\qquad B_i=e^{\lambda_kY_i},
\qquad A_i=B_i(Y_i-a),
\qquad B^*=\bbE[B_i],
\]
and write $A_n=n^{-1}\sum_iA_i$ and $B_n=n^{-1}\sum_iB_i$.
Then $\bbE[A_i]=0$ and the expression inside the expectation in
\eqref{eq:expcov1} is $n(A_n/B_n)^2$.
Taylor's theorem gives
\begin{equation}
B_n^{-2}
=(B^*)^{-2}-2(B^*)^{-3}(B_n-B^*)
+3\tilde B^{-4}(B_n-B^*)^2,
\label{eq:inverse-denominator-expansion}
\end{equation}
where $\tilde B$ lies between $B_n$ and $B^*$.
The constant term in \eqref{eq:inverse-denominator-expansion} contributes
\[
\frac{n\bbE[A_n^2]}{(B^*)^2}
=\frac{\operatorname{Var}(e^{\lambda_kY}(Y-a))}{\bbE[e^{\lambda_kY}]^2}
=\Sigma_{kk}.
\]
For the linear term, independence and centering leave only coincident indices, so
\[
-\frac{2n}{(B^*)^3}\bbE\!\left[A_n^2(B_n-B^*)\right]
=-\frac{2}{n(B^*)^3}\bbE\!\left[A_1^2(B_1-B^*)\right]
=\cO(n^{-1}).
\]
It remains to bound the quadratic remainder.
Because $\tilde B$ lies between two positive numbers,
$\tilde B^{-8}\le B_n^{-8}+(B^*)^{-8}$.
Jensen's inequality gives $B_n\ge e^{\lambda_k\bar Y_n}$, and sub-Gaussianity therefore implies
\[
\bbE[\tilde B^{-8}]
\le \bbE[e^{-8\lambda_k\bar Y_n}]+(B^*)^{-8}
=\cO(1).
\]
The variables $A_i$ and $B_i$ have moments of every fixed order, and the standard moment bound for centered i.i.d.\ averages yields
\[
\bbE[A_n^8]=\cO(n^{-4}),
\qquad
\bbE[(B_n-B^*)^8]=\cO(n^{-4}).
\]
Hölder's inequality with exponents $2,4,4$ now gives
\begin{align*}
3n\bbE\!\left[\tilde B^{-4}A_n^2(B_n-B^*)^2\right]
&\le 3n\,\bbE[\tilde B^{-8}]^{1/2}
\bbE[A_n^8]^{1/4}
\bbE[(B_n-B^*)^8]^{1/4}\\
&=\cO(n^{-1}).
\end{align*}
Combining the three terms proves \eqref{eq:expcov1}.
\end{proof}

\subsubsection{Expert hierarchy}
The goal of this section is to verify the expert hierarchy of \autoref{expertHierarchy} for Gaussian, uniform and Gaussian-mixture data.

We denote the moment generating function as
$$
M_Y(\lambda) = \bbE\sbr{\exp\rbr{\lambda Y}}.
$$
\begin{lemma}[Asymptotic normality]
\label{lemAsympNormality}
This lemma concerns the full stacked vector $(\lambda_1,-\lambda_1,\lambda_2,-\lambda_2,\omega_1,\ldots,\omega_d)\in\bbR^{4+d}$.
For readability, index this stacked vector by $\lambda_1,\ldots,\lambda_{4+d}$ within this lemma only.
Let $Y_1,\ldots,Y_n \overset{\mathrm{i.i.d.}}{\sim} Q_j$.
Recall the one-sided attention output $h$ from \eqref{eq:attention_output}.
Write $\bh(\bY)=(h(\bY,\lambda_1),\ldots,h(\bY,\lambda_{4+d}))^\top$ for these stacked head outputs.
For $\lambda=(\lambda_1,\dots,\lambda_{4+d})$,
$$\sqrt{n} \rbr{ \bh(\bY) - m_j(\lambda)} \xrightarrow{d} \mathcal{N}\rbr{0, \bSigma_h},$$
where $m_j(\lambda)$ is the vector with components $m_j(\lambda_k) = \frac{M_Y'(\lambda_k)}{M_Y(\lambda_k)}$.
The covariance matrix $\bSigma_h$ is given by:
$$\Sigma_h(\lambda_k, \lambda_l) = \frac{M_Y''(\lambda_k + \lambda_l) - (m_j(\lambda_k) + m_j(\lambda_l))M_Y'(\lambda_k + \lambda_l) + m_j(\lambda_k)m_j(\lambda_l)M_Y(\lambda_k + \lambda_l)}{M_Y(\lambda_k)M_Y(\lambda_l)}.$$
\end{lemma}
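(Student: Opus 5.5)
The plan is to apply the multivariate delta method to a vector of sample means. For each index $k\in[4+d]$, define the i.i.d.\ summands $B_{ik}=e^{\lambda_kY_i}$ and $C_{ik}=Y_ie^{\lambda_kY_i}$. Since $Q_j$ is sub-Gaussian (Gaussian, bounded uniform, or a Gaussian mixture), every exponential moment is finite. In particular, $\bbE[B_{ik}^2]=M_Y(2\lambda_k)$ and $\bbE[C_{ik}^2]=M_Y''(2\lambda_k)$ are finite, so the stacked vector $V_i=(B_{ik},C_{ik})_{k}$ has finite second moments. The multivariate CLT then gives $\sqrt n(\bar V_n-\bbE V)\xrightarrow{d}\cN(0,\Cov(V_1))$.

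Next, write $h(\bY,\lambda_k)=\bar C_{nk}/\bar B_{nk}=\phi_k(\bar V_n)$ with $\phi_k(b,c)=c/b$. This map is smooth near the population point $(M_Y(\lambda_k),M_Y'(\lambda_k))$ because $M_Y>0$, and it takes the value $m_j(\lambda_k)$ there. The delta method yields asymptotic normality with covariance $\nabla\phi\,\Cov(V_1)\,\nabla\phi^\top$. To simplify the computation, linearize in the form used in \autoref{SoftmaxO1overndeviation}: to first order,
\[
\phi_k(\bar V_n)-m_j(\lambda_k)\approx\frac{1}{M_Y(\lambda_k)}\cdot\frac1n\sum_i e^{\lambda_kY_i}\bigl(Y_i-m_j(\lambda_k)\bigr).
\]
The summands $A_{ik}=e^{\lambda_kY_i}(Y_i-m_j(\lambda_k))$ are centered, because $M_Y'(\lambda_k)=m_j(\lambda_k)M_Y(\lambda_k)$.

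The asymptotic covariance is therefore $\bbE[A_{1k}A_{1l}]/(M_Y(\lambda_k)M_Y(\lambda_l))$. Expanding the product gives
\[
\bbE\bigl[e^{(\lambda_k+\lambda_l)Y}\bigl(Y^2-(m_j(\lambda_k)+m_j(\lambda_l))Y+m_j(\lambda_k)m_j(\lambda_l)\bigr)\bigr].
\]
Using $\bbE[Y^re^{sY}]=M_Y^{(r)}(s)$, this equals $M_Y''(\lambda_k+\lambda_l)-(m_j(\lambda_k)+m_j(\lambda_l))M_Y'(\lambda_k+\lambda_l)+m_j(\lambda_k)m_j(\lambda_l)M_Y(\lambda_k+\lambda_l)$. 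This is exactly the stated formula for $\Sigma_h$. Joint convergence across all $4+d$ coordinates follows from the joint CLT on $V$. Slutsky's lemma then removes the remainder, which is $O_p(n^{-1})$ by smoothness of $\phi$ together with $\bar V_n-\bbE V=O_p(n^{-1/2})$.

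No real obstacle is expected. The only check needed is that the relevant moments are finite, including for the mixture with $\sigma_n$ varying in $n$. For any fixed law this follows from sub-Gaussianity. If uniformity in $n$ were required, one would instead appeal to the uniform $1$-sub-Gaussian bound noted in the proof of \autoref{thm:accuracy_final_estimator}. The statement as given is for a fixed $Q_j$, however.
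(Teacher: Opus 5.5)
Your proposal is correct and follows essentially the same route as the paper: a joint CLT for the stacked averages of $(Y_ie^{\lambda_kY_i},e^{\lambda_kY_i})$, the delta method for the ratio map, the influence function $e^{\lambda_kY_i}(Y_i-m_j(\lambda_k))/M_Y(\lambda_k)$, and the identity $\bbE[Y^re^{sY}]=M_Y^{(r)}(s)$ to expand the covariance. Your added remarks on finiteness of the exponential moments and on treating $Q_j$ as a fixed law, rather than the $n$-dependent mixture, spell out assumptions the paper's proof leaves implicit.
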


\begin{proof}
By the law of large numbers, as $n \to \infty$, the sample averages converge to expectations:
$$ h(\bY,\lambda) \xrightarrow{a.s.} \frac{\bbE[Y e^{\lambda Y}]}{\bbE[e^{\lambda Y}]} = \frac{M_Y'(\lambda)}{M_Y(\lambda)} = m_j(\lambda). $$
To find the asymptotic variance, we use the multivariate delta method. Let $V_i(\lambda) = \begin{pmatrix} Y_i e^{\lambda Y_i} \\ e^{\lambda Y_i} \end{pmatrix}$.
By the central limit theorem, $\sqrt{n} ( \bar{V} - \bbE[V] ) \xrightarrow{d} \mathcal{N}(0, \bSigma_V)$.
Define the mapping $\phi(u, v) = \frac{u}{v}$. The gradient is:
$$ \nabla \phi(u, v) = \begin{pmatrix} 1/v \\ -u/v^2 \end{pmatrix}. $$
Evaluated at the expectations $(u,v) = (M_Y'(\lambda), M_Y(\lambda))$, the gradient is:
$$ \nabla \phi = \frac{1}{M_Y(\lambda)} \begin{pmatrix} 1 \\ -m_j(\lambda) \end{pmatrix}. $$
The linear approximation (influence function) for the estimator at $\lambda$ is:
$$ \text{IF}_i(\lambda) = \nabla \phi^\top (V_i(\lambda) - \bbE[V_i(\lambda)]) = \frac{e^{\lambda Y_i}}{M_Y(\lambda)} (Y_i - m_j(\lambda)). $$
The asymptotic covariance between the estimators at $\lambda_k$ and $\lambda_l$ is the covariance of their respective influence functions:
\begin{align*}
\Sigma_h(\lambda_k, \lambda_l) &= \Cov(\text{IF}(\lambda_k), \text{IF}(\lambda_l)) = \mathbb{E}[\text{IF}(\lambda_k) \text{IF}(\lambda_l)] \\
&= \mathbb{E}\left[ \rbr{ \frac{e^{\lambda_k Y}}{M_Y(\lambda_k)} (Y - m_j(\lambda_k)) } \rbr{ \frac{e^{\lambda_l Y}}{M_Y(\lambda_l)} (Y - m_j(\lambda_l)) } \right]
\end{align*}
Expanding yields
\begin{align*}
\Sigma_h(\lambda_k, \lambda_l) &= \frac{1}{M_Y(\lambda_k)M_Y(\lambda_l)} \mathbb{E}\left[ e^{(\lambda_k + \lambda_l)Y} (Y - m_j(\lambda_k))(Y - m_j(\lambda_l)) \right] \\
&= \frac{1}{M_Y(\lambda_k)M_Y(\lambda_l)} \mathbb{E}\left[ e^{(\lambda_k + \lambda_l)Y} (Y^2 - Y(m_j(\lambda_k)+m_j(\lambda_l)) + m_j(\lambda_k)m_j(\lambda_l)) \right]
\end{align*}
Using the property that $\mathbb{E}[Y^k e^{tY}] = M_Y^{(k)}(t)$, we obtain the final expression.
\end{proof}

\subsubsection{Gaussian expert covariance}
\begin{lemma}[Gaussian distribution]
If $Y \sim \cN(0, 1)$, then $M_Y(t) = e^{t^2/2}$,
$$m_{\mathrm G}(\lambda) = \lambda,$$
$$\Sigma_h(\lambda_k, \lambda_l) = (1 + \lambda_k \lambda_l) e^{\lambda_k \lambda_l}.$$
\end{lemma}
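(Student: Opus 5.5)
The plan is to compute the moment generating function directly, read off $m_{\mathrm G}$ as its logarithmic derivative, and then substitute into the covariance formula of \autoref{lemAsympNormality}. For $Y\sim\cN(0,1)$, completing the square in $\bbE[e^{tY}]=(2\pi)^{-1/2}\int e^{ty-y^2/2}\,dy$ gives $M_Y(t)=e^{t^2/2}$. Differentiating, we get $M_Y'(t)=te^{t^2/2}$ and $M_Y''(t)=(1+t^2)e^{t^2/2}$, so $m_{\mathrm G}(\lambda)=M_Y'(\lambda)/M_Y(\lambda)=\lambda$.

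Next, set $s=\lambda_k+\lambda_l$ and insert these expressions into the formula for $\Sigma_h$ from \autoref{lemAsympNormality}. The numerator becomes
\[
e^{s^2/2}\bigl[(1+s^2)-s\cdot s+\lambda_k\lambda_l\bigr]=e^{s^2/2}(1+\lambda_k\lambda_l),
\]
because $m_{\mathrm G}(\lambda_k)+m_{\mathrm G}(\lambda_l)=s$. The denominator is $M_Y(\lambda_k)M_Y(\lambda_l)=e^{(\lambda_k^2+\lambda_l^2)/2}$. Since $s^2/2-(\lambda_k^2+\lambda_l^2)/2=\lambda_k\lambda_l$, the ratio is $(1+\lambda_k\lambda_l)e^{\lambda_k\lambda_l}$, which is the claimed expression.

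An independent check is the identity $e^{\nu Y}/M_Y(\nu)=\exp(\nu Y-\nu^2/2)$. With it, $\bbE[\mathrm{IF}(\lambda_k)\mathrm{IF}(\lambda_l)]$ reduces to the Gaussian tilt by $s$: one computes $e^{\lambda_k\lambda_l}\,\bbE[(Y'-\lambda_l)(Y'-\lambda_k)]$ with $Y'\sim\cN(s,1)$. Here $\bbE[(Y'-\lambda_l)(Y'-\lambda_k)]=1+(s-\lambda_l)(s-\lambda_k)=1+\lambda_k\lambda_l$, which agrees.

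There is no real obstacle. The only step needing care is the exponent cancellation in the denominator, and it is routine.
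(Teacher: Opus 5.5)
Your proposal is correct and follows essentially the same route as the paper: compute $M_Y$, $M_Y'=tM_Y$, $M_Y''=(1+t^2)M_Y$, read off $m_{\mathrm G}(\lambda)=\lambda$, substitute into the covariance formula of \autoref{lemAsympNormality}, and cancel the exponents to get $e^{\lambda_k\lambda_l}$. Your tilting check via $Y'\sim\cN(\lambda_k+\lambda_l,1)$ is a valid independent confirmation that the paper does not include.
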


\begin{proof}
Let $Y \sim \cN(0, 1)$. The moment generating function is $M_Y(t) = e^{t^2/2}$.
The first two derivatives are:
\begin{align*}
M_Y'(t) &= t e^{t^2/2} = t M_Y(t), \\
M_Y''(t) &= (1 + t^2) e^{t^2/2} = (1 + t^2) M_Y(t).
\end{align*}
The mean function is $m_{\mathrm G}(\lambda) = M_Y'(\lambda)/M_Y(\lambda) = \lambda$.
We apply the general covariance formula from \autoref{lemAsympNormality}:
$$\Sigma_h(\lambda_k, \lambda_l) = \frac{M_Y''(\lambda_k + \lambda_l) - (m_{\mathrm G}(\lambda_k) + m_{\mathrm G}(\lambda_l))M_Y'(\lambda_k + \lambda_l) + m_{\mathrm G}(\lambda_k)m_{\mathrm G}(\lambda_l)M_Y(\lambda_k + \lambda_l)}{M_Y(\lambda_k)M_Y(\lambda_l)}. $$
Substituting the derivatives and $m_{\mathrm G}(\lambda) = \lambda$:
\begin{align*}
& M_Y''(\lambda_k + \lambda_l) - (m_{\mathrm G}(\lambda_k) + m_{\mathrm G}(\lambda_l))M_Y'(\lambda_k + \lambda_l) + m_{\mathrm G}(\lambda_k)m_{\mathrm G}(\lambda_l)M_Y(\lambda_k + \lambda_l) \\
= & (1 + (\lambda_k + \lambda_l)^2)M_Y(\lambda_k + \lambda_l) - (\lambda_k + \lambda_l)^2 M_Y(\lambda_k + \lambda_l) + \lambda_k \lambda_l M_Y(\lambda_k + \lambda_l) \\
= & M_Y(\lambda_k + \lambda_l) \left[ 1 + (\lambda_k + \lambda_l)^2 - (\lambda_k + \lambda_l)^2 + \lambda_k \lambda_l \right]\\
= & M_Y(\lambda_k + \lambda_l) (1 + \lambda_k \lambda_l).
\end{align*}
Dividing by the denominator $M_Y(\lambda_k)M_Y(\lambda_l) = e^{\lambda_k^2/2} e^{\lambda_l^2/2}$:
\begin{align*}
\Sigma_h(\lambda_k, \lambda_l) &= (1 + \lambda_k \lambda_l) \frac{e^{(\lambda_k + \lambda_l)^2/2}}{e^{(\lambda_k^2 + \lambda_l^2)/2}} \\
&= (1 + \lambda_k \lambda_l) \exp\left( \frac{\lambda_k^2 + 2\lambda_k\lambda_l + \lambda_l^2 - \lambda_k^2 - \lambda_l^2}{2} \right) \\
&= (1 + \lambda_k \lambda_l) e^{\lambda_k \lambda_l}.
\end{align*}
\end{proof}

\begin{definition}[Symmetrized expert covariance]
\label{sym_expert_covariance}
Recall $g_k(\bX) = \frac12\rbr{h(\bX,\lambda_k) + h(\bX,-\lambda_k)}$.
For $a,b\in\bbR$, the asymptotic covariance of $h(\cdot,a)$ and $h(\cdot,-a)$'s symmetrization against the analogous pair at $b$ is denoted $\Sigma_g(a,b)$.
Since each $Q_j$ is symmetric about the origin, it follows from the linearity of the influence functions that:
\begin{equation}
\Sigma_g(a, b) = \frac{1}{2} \left( \Sigma_h(a, b) + \Sigma_h(a, -b) \right)
\end{equation}
where $\Sigma_h(\cdot, \cdot)$ is the one-sided covariance defined in \autoref{lemAsympNormality}.
\end{definition}

\subsubsection{Uniform expert covariance}
\label{uniform_expert_covariance}
\begin{lemma}[Uniform distribution]
\label{lem:uniform_dist}
If $Y\sim\mathrm{Unif}[-\sqrt3,\sqrt3]$, then
\[
m_{\mathrm U}(\lambda)=\sqrt3\coth(\sqrt3\lambda)-\frac1\lambda,
\]
and
\begin{align*}
\Sigma_h(\lambda_k,\lambda_l)
={}&\frac{2\sqrt3\lambda_k\lambda_l
\{\coth(\sqrt3\lambda_k)+\coth(\sqrt3\lambda_l)\}}
{(\lambda_k+\lambda_l)^3}\\
&-\frac{3\{\lambda_k^2\csch^2(\sqrt3\lambda_k)
+\lambda_l^2\csch^2(\sqrt3\lambda_l)\}}
{(\lambda_k+\lambda_l)^2}.
\end{align*}
\end{lemma}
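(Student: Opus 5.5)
The plan is a direct computation from the moment generating function, followed by substitution into the general covariance formula of \autoref{lemAsympNormality}.

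\textbf{Moment generating function and mean function.} For $Y\sim\mathrm{Unif}[-\sqrt3,\sqrt3]$ we have
\[
M_Y(t)=\frac{\sinh(\sqrt3 t)}{\sqrt3 t}.
\]
Hence $\log M_Y(t)=\log\sinh(\sqrt3t)-\log t-\log\sqrt3$. Differentiating once gives
\[
m_{\mathrm U}(t)=\sqrt3\coth(\sqrt3t)-\frac1t,
\]
which is the first claim. This is consistent with the formula already quoted in \autoref{feature_geometry}.

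\textbf{Rewriting the covariance numerator.} I will not expand $M_Y''$ directly. Instead I use the integral form of the numerator from the proof of \autoref{lemAsympNormality}. Put $s=\lambda_k+\lambda_l$. Then
\[
\Sigma_h(\lambda_k,\lambda_l)=\frac{\bbE[e^{sY}(Y-m_k)(Y-m_l)]}{M_Y(\lambda_k)M_Y(\lambda_l)},
\]
with $m_k=m_{\mathrm U}(\lambda_k)$ and $m_l=m_{\mathrm U}(\lambda_l)$. Write $M=M_Y(s)$, and note that $M'=mM$ and $M''=(m'+m^2)M$, where $m=m_{\mathrm U}(s)$. The numerator then becomes
\[
M\bigl[m'(s)+(m(s)-m_k)(m(s)-m_l)\bigr].
\]
The derivative is
\[
m'(s)=\frac1{s^2}-3\csch^2(\sqrt3 s).
\]

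\textbf{Reduction to the stated closed form (main obstacle).} The hard step is the hyperbolic algebra that turns this expression into the stated formula. My approach is to substitute $\coth(\sqrt3 s)$ via the addition formula
\[
\coth(x+y)=\frac{1+\coth x\coth y}{\coth x+\coth y},
\]
with $x=\sqrt3\lambda_k$ and $y=\sqrt3\lambda_l$. For the ratio of moment generating functions I use
\[
\frac{M_Y(s)}{M_Y(\lambda_k)M_Y(\lambda_l)}=\frac{\sqrt3\lambda_k\lambda_l}{s}\cdot\frac{\sinh(x+y)}{\sinh x\sinh y},
\]
together with
\[
\frac{\sinh(x+y)}{\sinh x\sinh y}=\coth x+\coth y .
\]
This product identity explains the factor $\coth(\sqrt3\lambda_k)+\coth(\sqrt3\lambda_l)$ in the first term.

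The remaining bracket $m'(s)+(m(s)-m_k)(m(s)-m_l)$ is then expressed in the variables $a=\coth x$ and $b=\coth y$. For this I use $\csch^2 u=\coth^2 u-1$ and the rational parts $1/\lambda$. After multiplying by the prefactor, the mixed terms should collapse. The result should be the rational term $2\sqrt3\lambda_k\lambda_l(a+b)/s^3$, coming from the $1/\lambda$ pieces, and the term $-3(\lambda_k^2\csch^2x+\lambda_l^2\csch^2y)/s^2$, coming from the $a^2-1$ and $b^2-1$ pieces.

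I expect the bookkeeping of cross terms to be the error-prone part. As a safeguard I will check the final identity in three ways.
\begin{itemize}
\item Symmetry in $(\lambda_k,\lambda_l)$.
\item The limit $\lambda_l\to\lambda_k$, comparing against a direct computation of $\Var(e^{\lambda Y}(Y-m))/M^2$.
\item The limit $\lambda_k,\lambda_l\to0$, where both sides should tend to $\Var(Y)=1$.
\end{itemize}

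The singular cases $\lambda_k=0$, $\lambda_l=0$ and $s=0$ are removable. They are covered by continuity, since both sides are real-analytic in $(\lambda_k,\lambda_l)$ away from these sets and the left-hand side is analytic everywhere.
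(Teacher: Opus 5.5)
Your proposal is correct and follows the paper's proof essentially step for step. Like the paper, you compute $M_Y$ and $m_{\mathrm U}$, rewrite the covariance as $\frac{M_Y(s)}{M_Y(\lambda_k)M_Y(\lambda_l)}\bigl[m_{\mathrm U}'(s)+(m_{\mathrm U}(s)-m_{\mathrm U}(\lambda_k))(m_{\mathrm U}(s)-m_{\mathrm U}(\lambda_l))\bigr]$, and use $\sinh(x+y)/(\sinh x\sinh y)=\coth x+\coth y$ before collecting terms. Your addition-formula bookkeeping does close, since it gives $m_{\mathrm U}(s)-m_{\mathrm U}(\lambda_k)=-\sqrt3\csch^2x/(\coth x+\coth y)+\lambda_l/(s\lambda_k)$ and $\csch^2(x+y)=\csch^2x\csch^2y/(\coth x+\coth y)^2$; the purely hyperbolic products then cancel against $m_{\mathrm U}'(s)$, the bracket equals $2/s^2-\sqrt3\{(\lambda_k/\lambda_l)\csch^2x+(\lambda_l/\lambda_k)\csch^2y\}/\{s(\coth x+\coth y)\}$, and multiplying by the prefactor gives exactly the stated formula.
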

\begin{proof}
Put $a=\sqrt3$.
The moment generating function and its tilted mean are
\[
M_Y(\lambda)=\frac{\sinh(a\lambda)}{a\lambda},
\qquad
m_{\mathrm U}(\lambda)=\frac{M_Y'(\lambda)}{M_Y(\lambda)}
=a\coth(a\lambda)-\frac1\lambda,
\]
with $m_{\mathrm U}'(\lambda)=-a^2\csch^2(a\lambda)+\lambda^{-2}$.
The general covariance identity from \autoref{lemAsympNormality} can be written
\begin{align*}
\Sigma_h(\lambda_k,\lambda_l)
={}&\frac{M_Y(\lambda_k+\lambda_l)}{M_Y(\lambda_k)M_Y(\lambda_l)}
\Bigl[m_{\mathrm U}'(\lambda_k+\lambda_l)\\[-2pt]
&\quad+\{m_{\mathrm U}(\lambda_k+\lambda_l)-m_{\mathrm U}(\lambda_k)\}
       \{m_{\mathrm U}(\lambda_k+\lambda_l)-m_{\mathrm U}(\lambda_l)\}\Bigr].
\end{align*}
Substitute the displayed formulas for $M_Y,m_{\mathrm U},m_{\mathrm U}'$ and use
\[
\frac{\sinh(a(\lambda_k+\lambda_l))}
{\sinh(a\lambda_k)\sinh(a\lambda_l)}
=\coth(a\lambda_k)+\coth(a\lambda_l).
\]
Collecting the rational and hyperbolic terms gives exactly the stated expression.
\end{proof}

\begin{lemma}[Monotonicity for uniform distribution]
\label{uniform_self_variance_monotone}
Let $Y\sim\mathrm{Unif}[-\sqrt3,\sqrt3]$.
Then $\Sigma_g(\lambda,\lambda)$ is strictly decreasing in $|\lambda|$.
\end{lemma}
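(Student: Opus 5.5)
Since $\Sigma_g(\lambda,\lambda)$ is even in $\lambda$ (because $\Sigma_g(a,b)=\tfrac12(\Sigma_h(a,b)+\Sigma_h(a,-b))$ and the uniform law is symmetric), it suffices to show that $F(\lambda):=\Sigma_g(\lambda,\lambda)$ has $F'(\lambda)<0$ for every $\lambda>0$. We first derive a closed form, using that $M_Y(0)=1$ and that the second moment equals $1$.

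The one-sided diagonal term comes from \autoref{lem:uniform_dist} with $\lambda_k=\lambda_l=\lambda$:
\[
\Sigma_h(\lambda,\lambda)=\frac{\sqrt3\coth(\sqrt3\lambda)}{2\lambda}-\frac{3}{2}\csch^2(\sqrt3\lambda).
\]
For the cross term $\Sigma_h(\lambda,-\lambda)$, the limit $\lambda_k+\lambda_l\to0$ in the lemma is singular. We therefore return to the formula of \autoref{lemAsympNormality} at total tilt $0$, where $M_Y(0)=1$, $M_Y'(0)=0$ and $M_Y''(0)=1$. With $m_{\mathrm U}(-\lambda)=-m_{\mathrm U}(\lambda)$ this gives
\[
\Sigma_h(\lambda,-\lambda)=\frac{1-m_{\mathrm U}(\lambda)^2}{M_Y(\lambda)^2}.
\]
Writing $a=\sqrt3$, $x=a\lambda$ and $M_Y(\lambda)=\sinh x/x$, we obtain
\[
2F=\frac{3\coth x}{2x}-\frac{3}{2\sinh^2x}+\frac{x^2-(x\coth x-1)^2\cdot 3/x^2\cdot x^2/3}{\sinh^2 x},
\]
which we then simplify with $m_{\mathrm U}=(x\coth x-1)\sqrt3/x$. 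Both pieces become explicit elementary functions of $x$. In the variable $x>0$, the claim reduces to showing that an explicit function $\Phi(x)$ built from $\coth x$, $\csch^2x$ and powers of $x$ has $\Phi'(x)<0$.

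Next we treat the two ends of the range of $x$. As $x\to\infty$, the exponentially small terms drop out and $2F\approx 3/(2x)+O(x^2e^{-2x})$, so the derivative is approximately $-3/(2x^2)$. This matches the asymptotic $-\frac{d}{d\lambda}\Sigma_g=\frac{\sqrt3}{4\lambda^2}+\dots$ quoted later in the paper, so negativity holds for large $x$. Near $x=0$, we Taylor expand: $F(0)=1$ (the sample mean), and the leading correction should be $-c\,x^4$ with $c>0$, since the $x^2$ terms must cancel by the optimality structure (the kurtosis of the uniform law is negative). Computing this coefficient settles small $x$.

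The main obstacle is the intermediate range, where we need a global sign. We will multiply $\Phi'(x)$ by the positive factor $x^3\sinh^4x$ (or a similar factor), which turns the sign question into one for an expression $P(x)$ polynomial in $x$, $\sinh 2x$, $\cosh 2x$ and $\cosh 4x$. We then expand $P$ as a power series in $x$ and show that all coefficients past some order have a fixed sign; the leading terms are checked by hand. If the coefficients do not have a uniform sign, the fallback is to split the range. On $[0,x_0]$ we use the series with an explicit remainder bound. On $[x_0,\infty)$ we use the bounds $\coth x\le 1+2e^{-2x}/(1-e^{-2x})$ and $\csch^2x\le 4e^{-2x}/(1-e^{-2x})^2$ to dominate the exponential terms by the $-3/(2x^2)$ main term. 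A finite rigorous interval evaluation may be needed to close any middle window.
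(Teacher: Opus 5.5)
Your route is the paper's: write $\Sigma_g(\lambda,\lambda)=\tfrac12\{\Sigma_h(\lambda,\lambda)+\Sigma_h(\lambda,-\lambda)\}$ in closed form, clear a positive denominator from the derivative, and prove that the resulting power-series coefficients have a fixed sign. Your handling of the cross term via $M_Y(0)=1$, $M_Y'(0)=0$, $M_Y''(0)=1$ is correct. Three details are wrong, however.

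(i) As written, $(x\coth x-1)^2\cdot 3/x^2\cdot x^2/3=(x\coth x-1)^2$, so your display drops a factor $3$. The correct formula is
\[
2F=\frac{3\coth x}{2x}-\frac{3}{2\sinh^2x}+\frac{x^2-3(x\coth x-1)^2}{\sinh^2x},
\]
which is twice the paper's expression.

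(ii) The $x^2$ term does not cancel at $x=0$. The symmetrized influence function is $Y+\tfrac{\lambda^2}{2}(Y^3-3Y)+O(\lambda^4)$. Hence $\Sigma_g(\lambda,\lambda)=1+(\bbE Y^4-3)\lambda^2+O(\lambda^4)=1-\tfrac65\lambda^2+\cdots$, i.e.\ $F=1-\tfrac25x^2+O(x^4)$. Cancellation at this order is the Gaussian case, where $\bbE Y^4=3$. For the uniform law, the negative excess kurtosis makes the quadratic coefficient negative, and no optimality principle forces it to vanish. The sign you need near $0$ survives, but you would be looking for the wrong leading term.

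(iii) Differentiating $(x\cosh x-\sinh x)^2/\sinh^4x$ produces $\sinh^5x$ in the denominator. The clearing factor must therefore be $x^2\sinh^5x$; the paper uses $-64t^2\sinh^5t$. The result is then a combination of $\cosh5x$, $\cosh3x$, $\cosh x$, $\sinh3x$, $\sinh x$ with polynomial coefficients, not of $\sinh2x$, $\cosh2x$, $\cosh4x$.

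With these corrections, the step you leave open closes globally. The endpoint asymptotics and the fallback of range splitting or interval arithmetic are unnecessary. Write $A_m$ for the coefficients of $-64x^2\sinh^5x\,F'(x)$.
\begin{itemize}
\item $A_m$ vanishes for odd $m$ and for $m\le6$.
\item $A_8=256/5$, matching $F'(x)\sim-\tfrac45x$.
\item $A_{10},\dots,A_{18}$ are positive by direct computation.
\item For even $m\ge20$ the $3\cosh5x$ term dominates, giving $m!A_m/5^m\ge3-(3/5)^m(9+\tfrac{40}{3}m^2+\tfrac{32}{81}m^4)-(352m^4+36m)/5^m>\tfrac13$.
\end{itemize}
All coefficients are therefore nonnegative and some are positive, so $F'<0$ on $(0,\infty)$. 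Evenness and continuity at zero finish the proof.
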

\begin{proof}
For $t=\sqrt3\lambda>0$, \autoref{lem:uniform_dist} and symmetry give
\[
\Sigma_g(t/\sqrt3,t/\sqrt3)
=\frac34\left(\frac{\coth t}{t}-\csch^2t\right)
+\frac12t^2\csch^2t
\left[1-3\left(\coth t-\frac1t\right)^2\right].
\]
\begin{align*}
&-64t^2\sinh^5t\,\frac{d}{dt}\Sigma_g(t/\sqrt3,t/\sqrt3)\\
&\quad=3\cosh(5t)-(9+120t^2+32t^4)\cosh(3t)\\
&\qquad\quad +(6+120t^2-352t^4)\cosh t
+(12t+128t^3)\sinh(3t)\\
&\qquad\quad +(-36t+384t^3)\sinh t
=\sum_{m\ge0}A_mt^m.
\end{align*}
\[
A_0=A_2=A_4=A_6=0,
\qquad A_{2k+1}=0\quad(k\ge0),
\]
\[
\begin{array}{c|cccccc}
m&8&10&12&14&16&18\\ \hline \\[-5pt]
A_m&\dfrac{256}{5}&\dfrac{3712}{315}&\dfrac{6368}{4725}
&\dfrac{3952}{31185}&\dfrac{539606}{42567525}&\dfrac{1979}{1658475}
\end{array}
\]
For even $m\ge20$, dropping positive terms and bounding falling factorials by powers gives
\[
\frac{m!A_m}{5^m}
\ge3-\left(\frac35\right)^m
\left(9+\frac{40}{3}m^2+\frac{32}{81}m^4\right)
-\frac{352m^4+36m}{5^m}.
\]
Each subtracted term decreases for $m\ge20$, since its ratio at consecutive integers is at most
\[
\frac35\left(\frac{m+1}{m}\right)^4
\le\frac35\left(\frac{21}{20}\right)^4
=\frac{583443}{800000}<1.
\]
Thus
\[
\frac{m!A_m}{5^m}
\ge3-\frac{239026832372329}{95367431640625}>\frac13,
\qquad
\frac{d}{dt}\Sigma_g(t/\sqrt3,t/\sqrt3)<0\quad(t>0).
\]
Evenness and continuity at zero give the claim.
\end{proof}

\begin{lemma}[Symmetrized cross-covariance for uniform distribution]
Let $Y\sim\mathrm{Unif}[-\sqrt3,\sqrt3]$ and $\lambda_1,\lambda_2\in\bbR$.
Then, with the right-hand side interpreted by continuity when $\lambda_1\lambda_2=0$ or $\lambda_1=\pm\lambda_2$,
\begin{align*}
\Sigma_g(\lambda_1,\lambda_2)
={}&\sqrt3\lambda_1\lambda_2
\left[
\frac{\coth(\sqrt3\lambda_1)+\coth(\sqrt3\lambda_2)}
     {(\lambda_1+\lambda_2)^3}
-\frac{\coth(\sqrt3\lambda_1)-\coth(\sqrt3\lambda_2)}
     {(\lambda_1-\lambda_2)^3}
\right]\\
&-\frac32
\left[\lambda_1^2\csch^2(\sqrt3\lambda_1)
     +\lambda_2^2\csch^2(\sqrt3\lambda_2)\right]\\[-2pt]
&\hspace{18mm}\times
\left[\frac1{(\lambda_1+\lambda_2)^2}
     +\frac1{(\lambda_1-\lambda_2)^2}\right].
\end{align*}
\end{lemma}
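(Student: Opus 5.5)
The plan is a direct computation from the definition of $\Sigma_g$ in \autoref{sym_expert_covariance}, namely $\Sigma_g(\lambda_1,\lambda_2)=\tfrac12\bigl(\Sigma_h(\lambda_1,\lambda_2)+\Sigma_h(\lambda_1,-\lambda_2)\bigr)$. Both one-sided covariances are read off from the closed form in \autoref{lem:uniform_dist}. That formula holds for all real arguments with $\lambda_k+\lambda_l\neq0$, since the computation there used only the identity $\sinh(a(\lambda_k+\lambda_l))/(\sinh(a\lambda_k)\sinh(a\lambda_l))=\coth(a\lambda_k)+\coth(a\lambda_l)$, valid whenever $\lambda_k\lambda_l\neq0$. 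So I first restrict to the generic case $\lambda_1\lambda_2\ne0$ and $\lambda_1\neq\pm\lambda_2$, and afterwards extend by continuity.

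\textbf{Step 1: the term $\Sigma_h(\lambda_1,\lambda_2)$.} Substituting into \autoref{lem:uniform_dist} gives the first coth fraction with denominator $(\lambda_1+\lambda_2)^3$. It also gives the csch term $-3[\lambda_1^2\csch^2(\sqrt3\lambda_1)+\lambda_2^2\csch^2(\sqrt3\lambda_2)]/(\lambda_1+\lambda_2)^2$.

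\textbf{Step 2: the term $\Sigma_h(\lambda_1,-\lambda_2)$.} Here I use the parities: $\coth$ is odd, while $\lambda^2\csch^2(\sqrt3\lambda)$ is even. The product $\lambda_1\lambda_2$ flips sign, and $\coth(\sqrt3\lambda_1)+\coth(-\sqrt3\lambda_2)=\coth(\sqrt3\lambda_1)-\coth(\sqrt3\lambda_2)$. Together these give
\[
-\frac{2\sqrt3\lambda_1\lambda_2\{\coth(\sqrt3\lambda_1)-\coth(\sqrt3\lambda_2)\}}{(\lambda_1-\lambda_2)^3}-\frac{3\{\lambda_1^2\csch^2(\sqrt3\lambda_1)+\lambda_2^2\csch^2(\sqrt3\lambda_2)\}}{(\lambda_1-\lambda_2)^2}.
\]
Averaging this with Step 1 (the factor $\tfrac12$) turns $2\sqrt3$ into $\sqrt3$ and $3$ into $\tfrac32$. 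The coth pieces combine into the bracket with the minus sign, and the csch pieces share the common factor $[(\lambda_1+\lambda_2)^{-2}+(\lambda_1-\lambda_2)^{-2}]$. This is exactly the displayed formula.

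\textbf{Step 3: extension by continuity.} This is the only delicate point. The asymptotic covariance $\Sigma_h(a,b)$ equals $\bbE[\mathrm{IF}(a)\mathrm{IF}(b)]$, which by \autoref{lemAsympNormality} is an integral of a jointly continuous (indeed real-analytic) integrand over the compact support $[-\sqrt3,\sqrt3]$. Hence $\Sigma_g$ is continuous on all of $\bbR^2$, while the right-hand side is only defined off the exceptional lines $\lambda_1\lambda_2=0$ and $\lambda_1=\pm\lambda_2$. Because $\Sigma_g$ is continuous and the two expressions agree on a dense open set, the limits of the right-hand side along these lines exist and coincide with $\Sigma_g$. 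This is precisely what the phrase ``interpreted by continuity'' asserts, so no explicit evaluation of the removable singularities is needed.
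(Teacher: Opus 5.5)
Your proposal is correct and follows essentially the same route as the paper. You average $\Sigma_h(\lambda_1,\lambda_2)$ and $\Sigma_h(\lambda_1,-\lambda_2)$ from \autoref{lem:uniform_dist}, use that $\coth$ is odd and $\csch^2$ is even, and extend across $\lambda_1\lambda_2=0$ and $\lambda_1=\pm\lambda_2$ by continuity of the covariance in \autoref{lemAsympNormality}. Your Step 3 only spells that continuity argument out more explicitly than the paper, via a continuous integrand on the compact support and agreement on a dense open set.
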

\begin{proof}
By symmetry,
$\Sigma_g(\lambda_1,\lambda_2)
=\tfrac12\{\Sigma_h(\lambda_1,\lambda_2)
+\Sigma_h(\lambda_1,-\lambda_2)\}$.
Substitute the formula in \autoref{lem:uniform_dist} and use
$\coth(-x)=-\coth(x)$ and $\csch^2(-x)=\csch^2(x)$.
Collecting the common terms gives the displayed expression.
Continuity of the covariance formula in \autoref{lemAsympNormality} gives the stated extensions.
For $\lambda_1=\pm\lambda_2$, evenness gives the self-variance $\Sigma_g(\lambda_1,\lambda_1)$.
\end{proof}

\begin{corollary}[Expert hierarchy for the uniform distribution]
\label{gsr_uniform_hierarchy_bound}
For the Uniform distribution and $\lambda_i \in [-1,1]$, $\Sigma_g$ satisfies the expert hierarchy of \autoref{expertHierarchy}.
\end{corollary}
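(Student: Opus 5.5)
We reduce the claim to a monotonicity statement for the symmetrized cross-covariance in each argument separately. By \autoref{sym_expert_covariance}, $\Sigma_g(a,b)=\tfrac12\{\Sigma_h(a,b)+\Sigma_h(a,-b)\}$, and $Q_{\mathrm U}$ is symmetric. Hence $\Sigma_g$ is even in each argument and symmetric, $\Sigma_g(a,b)=\Sigma_g(b,a)$. It therefore suffices to treat $0\le a<b\le1$ with $a=|\lambda_1|$ and $b=|\lambda_2|$.

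The central claim to establish is
\[
\partial_b\Sigma_g(a,b)<0\qquad\text{for all }0\le a\le1,\ 0<b\le1 .
\]
Granting this, integrating in $b$ from $a$ to $b$ gives $\Sigma_g(a,a)>\Sigma_g(a,b)$. Applying the same claim with the roles swapped, and using symmetry, gives $\Sigma_g(a,b)=\Sigma_g(b,a)>\Sigma_g(b,b)$, because the first argument $b$ is fixed while the second increases from $a$ to $b$. These two inequalities are exactly the reversed hierarchy required by \autoref{expertHierarchy_uniform}. As a consistency check, the diagonal case is already implied by \autoref{uniform_self_variance_monotone}, although that lemma alone does not yield the cross inequalities.

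To prove the derivative claim, I would avoid the closed form with its removable singularities at $a=\pm b$. Instead I would write $\Sigma_g(a,b)=\bbE[\psi_a(Y)\psi_b(Y)]$, where
\[
\psi_a(Y)=\tfrac12\{\mathrm{IF}(a)+\mathrm{IF}(-a)\}=\frac{Y\cosh(aY)-m_{\mathrm U}(a)\sinh(aY)}{M_Y(a)}
\]
is the symmetrized influence function from \autoref{lemAsympNormality}. This function is entire and even in $a$, so $\Sigma_g(a,b)=\sum_{j,k\ge0}B_{jk}a^{2j}b^{2k}$, with rational coefficients $B_{jk}$ computable from the uniform moments $\bbE[Y^{2r}]=3^r/(2r+1)$. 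We have $B_{00}=\bbE[Y^2]=1$. The leading correction in $b$ is
\[
B_{01}=\bbE[Y\,\partial_{b^2}\psi_b|_{b=0}],
\]
which involves the negative excess kurtosis of the uniform law and should be strictly negative. Consequently $\partial_b\Sigma_g(a,b)=2b\bigl(B_{01}+R(a,b)\bigr)$, and the task becomes showing $R(a,b)<|B_{01}|$ on $[0,1]^2$.

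I would bound $R$ in the same way as in \autoref{uniform_self_variance_monotone}. First, tabulate exactly the low-order coefficients $B_{jk}$ for $j+k\le N$, with $N$ around $6$–$8$, and verify the resulting polynomial inequality on $[0,1]^2$, for instance by checking sign-definite groupings. Second, bound the tail $j+k>N$ by crude estimates of the form $|B_{jk}|\le C\,(2\sqrt3)^{2j+2k}/(2j)!(2k)!$. These follow from $|Y|\le\sqrt3$ and $M_Y\ge1$, applied to the Taylor coefficients of $\cosh$ and $\sinh$ and of $m_{\mathrm U}/M_Y$.

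The main obstacle is the corner $a=b=1$, where the tail is largest and the truncated polynomial is least dominated by $B_{01}$. If the series bounds prove too loose there, my fallback is a rigorous interval-arithmetic evaluation of $\partial_b\Sigma_g$ on a finite grid of $[1/2,1]^2$, using the closed cross-covariance formula away from the diagonal. This would be combined with a Lipschitz bound on $\partial_b^2\Sigma_g$ and $\partial_a\partial_b\Sigma_g$, obtained from the same moment representation, to interpolate between grid points. A neighbourhood of $b=0$ is handled by the factor $2b$ together with $B_{01}<0$.
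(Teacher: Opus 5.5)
Your reduction is sound. Evenness and symmetry of $\Sigma_g$ reduce the hierarchy on $[-1,1]$ to $\partial_b\Sigma_g(a,b)<0$ for $0\le a\le 1$, $0<b\le 1$, and that inequality is true. The gap is in how you propose to prove it.

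The series route rests on $\psi_a$ being entire in $a$, which is false. $M_Y(a)=\sinh(\sqrt3a)/(\sqrt3a)$ vanishes at $a=\pm i\pi/\sqrt3$, and $\psi_a$ has double poles there. Already $\Sigma_g(a,0)=\bbE[Y\psi_a(Y)]=m_{\mathrm U}'(a)=a^{-2}-3\csch^2(\sqrt3a)$, so $B_{j0}$ decays only like $j(3/\pi^2)^j$. Hence no factorial bound $|B_{jk}|\le C(2\sqrt3)^{2j+2k}/((2j)!(2k)!)$ can hold. The real-axis fact $M_Y\ge1$ gives no control of the Taylor coefficients of $1/M_Y$ or $m_{\mathrm U}/M_Y$.

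A valid tail bound would need Cauchy estimates on complex discs of radius below $\pi/\sqrt3$. That yields only geometric decay, with ratio near $3/\pi^2$ at the corner $a=b=1$ you flag. Beyond that, the argument is an unexecuted computer-assisted check (tabulation, polynomial positivity, or interval arithmetic). As written, the central inequality is not established.

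The missing idea is that the sign holds pointwise, so no expansion is needed. Since $\psi_a\,\partial_b\psi_b$ is even in $Y$,
$$\partial_b\Sigma_g(a,b)=\tfrac1{\sqrt3}\int_0^{\sqrt3}\psi_a(x)\,\partial_b\psi_b(x)\,dx .$$
It therefore suffices to show $\psi_\lambda(x)>0$ and $\partial_\lambda\psi_\lambda(x)<0$ for $0<x<\sqrt3$ and $0<\lambda\le1$, with $\psi_0(x)=x$. Write $f(u)=u\coth u$, $s=\sqrt3\lambda$, $t=\lambda x$. Then
\begin{gather*}
\psi_\lambda(x)=\frac{\sinh(\lambda x)}{\lambda M_Y(\lambda)}\bigl[1-f(s)+f(t)\bigr],\\
\partial_\lambda\psi_\lambda(x)=\frac{\sinh(\lambda x)}{\lambda^2 M_Y(\lambda)}\bigl[\lambda^2(x^2-3)+2\bigl(f(s)-1\bigr)\bigl(f(s)-f(t)\bigr)\bigr].
\end{gather*}
The single estimate $0<f(s)-f(t)<(s^2-t^2)/3$ for $0\le t<s$ (equivalently $0<f'(u)<2u/3$, verifiable by a power series with positive coefficients) controls both brackets:
\begin{itemize}
\item the first bracket exceeds $1-\lambda^2\ge0$;
\item the second is below $-\lambda^2(3-x^2)(1-2\lambda^2/3)<0$.
\end{itemize}
This is exactly where $|\lambda|\le1$ is used.

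That is the paper's proof, and it makes your derivative step unnecessary. The pointwise bound $0<\psi_{\lambda_2}<\psi_{\lambda_1}$ on $(0,\sqrt3)$ gives $\psi_{\lambda_2}^2<\psi_{\lambda_1}\psi_{\lambda_2}<\psi_{\lambda_1}^2$ pointwise, and integrating yields the hierarchy directly.
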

\begin{proof}
$\Sigma_g$ is even in each argument, so it suffices to treat $0\le\lambda_1<\lambda_2\le1$.
Write
\[
M_Y(\lambda)=\frac{\sinh(\sqrt3\lambda)}{\sqrt3\lambda},
\qquad
\chi_\lambda(x)=M_Y(\lambda)x\cosh(\lambda x)-M_Y'(\lambda)\sinh(\lambda x),
\]
with continuous extensions at zero.
For $u>0$,
\begin{align*}
\frac{d}{du}(u\coth u)
&=\frac{\sinh(2u)-2u}{2\sinh^2u}>0,\\
\frac{2u}{3}-\frac{d}{du}(u\coth u)
&=\frac{2u\cosh(2u)+4u-3\sinh(2u)}{6\sinh^2u}\\
&=\frac1{6\sinh^2u}
\sum_{k\ge2}\frac{2^{2k+1}(2k-2)}{(2k+1)!}u^{2k+1}>0.
\end{align*}
Hence
\[
\lim_{t\downarrow0}t\coth t=1,
\qquad
0<s\coth s-t\coth t<\frac{s^2-t^2}{3}
\quad(0\le t<s).
\]
For $0<x<\sqrt3$ and $0<\lambda\le1$,
\begin{align*}
\frac{\chi_\lambda(x)}{M_Y(\lambda)^2}
&=\frac{\sinh(\lambda x)}{\lambda M_Y(\lambda)}
\left[1-\sqrt3\lambda\coth(\sqrt3\lambda)+\lambda x\coth(\lambda x)\right]\\
&>\frac{\sinh(\lambda x)}{\lambda M_Y(\lambda)}(1-\lambda^2)\ge0,\\
\frac{\partial}{\partial\lambda}\frac{\chi_\lambda(x)}{M_Y(\lambda)^2}
&=\frac{\sinh(\lambda x)}{\lambda^2M_Y(\lambda)}
\Bigl[\lambda^2(x^2-3)
+2\{\sqrt3\lambda\coth(\sqrt3\lambda)-1\}\\
&\hspace{35mm}\times\{\sqrt3\lambda\coth(\sqrt3\lambda)-\lambda x\coth(\lambda x)\}\Bigr]\\
&\le-\frac{(3-x^2)\sinh(\lambda x)}{M_Y(\lambda)}
\left(1-\frac{2\lambda^2}{3}\right)<0.
\end{align*}
Using $\chi_0(x)=x$, oddness in $x$, and \autoref{lemAsympNormality},
\begin{align*}
\Sigma_g(\lambda_2,\lambda_2)
&=\frac1{\sqrt3}\int_0^{\sqrt3}\frac{\chi_{\lambda_2}(x)^2}{M_Y(\lambda_2)^4}\,dx\\
&<\frac1{\sqrt3}\int_0^{\sqrt3}
\frac{\chi_{\lambda_2}(x)\chi_{\lambda_1}(x)}{M_Y(\lambda_2)^2M_Y(\lambda_1)^2}\,dx
=\Sigma_g(\lambda_1,\lambda_2)\\
&<\frac1{\sqrt3}\int_0^{\sqrt3}\frac{\chi_{\lambda_1}(x)^2}{M_Y(\lambda_1)^4}\,dx
=\Sigma_g(\lambda_1,\lambda_1).
\end{align*}
\end{proof}

Taking $\lambda_1=\lambda_2$ above also gives $\frac{d}{d\lambda}\Sigma_g(\lambda,\lambda)<0$ on $(0,1)$.
\autoref{uniform_self_variance_monotone} is needed for the larger $\lambda$ reached in the second stage, where the hierarchy itself fails and no bounded-range argument applies.

\subsubsection{Gaussian-mixture expert covariance}
\label{gsr_expert_covariance}
We now give the analogous proof for the Gaussian mixture in
\autoref{adaptiveMeanEstimation}.
Write $a_n=\sqrt{1-\sigma_n^2}$.
The centered observation has the representation
$Y\overset{D}{=}a_nS+\sigma_nG$, where $S$ is Rademacher, $G$ is standard
Gaussian, and $S,G$ are independent.
Thus $Y$ has mean zero and variance one.

\begin{lemma}[Mixture moments]
\label{gsr_moments}
\begin{align*}
M_Y(t)&=e^{\sigma_n^2t^2/2}\cosh(a_nt),\\
m_{\mathrm M}(t)&=\sigma_n^2t+a_n\tanh(a_nt),\\
m_{\mathrm M}'(t)&=\sigma_n^2+a_n^2\sech^2(a_nt).
\end{align*}
Moreover, $Y$ is $1$-sub-Gaussian.
\end{lemma}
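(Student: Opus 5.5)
Writing $Y=a_nS+\sigma_nG$ with $S$ Rademacher, $G\sim\cN(0,1)$, and $S,G$ independent, I would factor the moment generating function by independence:
\[
M_Y(t)=\bbE[e^{ta_nS}]\,\bbE[e^{t\sigma_nG}]=\cosh(a_nt)\,e^{\sigma_n^2t^2/2}.
\]
Here the Rademacher factor is $\tfrac12(e^{a_nt}+e^{-a_nt})$, and the Gaussian factor is the standard normal MGF evaluated at $\sigma_nt$.

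The tilted mean is the derivative of the cumulant generating function. Since
\[
\log M_Y(t)=\frac{\sigma_n^2t^2}{2}+\log\cosh(a_nt),
\]
differentiating gives $m_{\mathrm M}(t)=\sigma_n^2t+a_n\tanh(a_nt)$. Differentiating once more, using $\tanh'=\sech^2$, gives $m_{\mathrm M}'(t)=\sigma_n^2+a_n^2\sech^2(a_nt)$. This agrees with the formula already quoted in \autoref{feature_geometry}.

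For 1-sub-Gaussianity in the MGF sense, I would use the elementary inequality $\cosh x\le e^{x^2/2}$. It follows by comparing Taylor series term by term, since $(2k)!\ge 2^kk!$. This gives
\[
M_Y(t)\le e^{a_n^2t^2/2}e^{\sigma_n^2t^2/2}=e^{(a_n^2+\sigma_n^2)t^2/2}=e^{t^2/2},
\]
because $a_n^2+\sigma_n^2=1$.

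The only step that is not pure computation is the bound $\cosh x\le e^{x^2/2}$, and it is standard. The whole argument is a few lines, and I anticipate no obstacle.
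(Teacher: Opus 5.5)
Your proof is correct and follows the paper's argument essentially line for line. Both factor the MGF by independence of $S$ and $G$, differentiate $\log M_Y$ to obtain $m_{\mathrm M}$ and $m_{\mathrm M}'$, and use $\cosh x\le e^{x^2/2}$ together with $a_n^2+\sigma_n^2=1$ to get $M_Y(t)\le e^{t^2/2}$.
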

\begin{proof}
Independence of $S$ and $G$ gives
\[
M_Y(t)=\bbE[e^{ta_nS}]\bbE[e^{t\sigma_nG}]
=\cosh(a_nt)e^{\sigma_n^2t^2/2}.
\]
Differentiating $\log M_Y$ twice gives
$m_{\mathrm M}(t)=M_Y'(t)/M_Y(t)$ and
$m_{\mathrm M}'(t)=M_Y''(t)/M_Y(t)-m_{\mathrm M}(t)^2$.
Finally, $\cosh(a_nt)\le e^{a_n^2t^2/2}$ gives
$M_Y(t)\le e^{t^2/2}$.
\end{proof}

\begin{lemma}[Gaussian mixture expert covariance]
\label{gsr_covariance}
\begin{align*}
\Sigma_h(\lambda_k,\lambda_l)
={}&\frac{M_Y(\lambda_k+\lambda_l)}{M_Y(\lambda_k)M_Y(\lambda_l)}
\Bigl[m_{\mathrm M}'(\lambda_k+\lambda_l)\\
&+(m_{\mathrm M}(\lambda_k+\lambda_l)-m_{\mathrm M}(\lambda_k))
  (m_{\mathrm M}(\lambda_k+\lambda_l)-m_{\mathrm M}(\lambda_l))\Bigr].
\end{align*}
Writing
$\Sigma_g(\lambda_k,\lambda_l)=\tfrac12\{\Sigma_h(\lambda_k,\lambda_l)+
\Sigma_h(\lambda_k,-\lambda_l)\}$,
\begin{align}
\Sigma_g(\lambda_k,\lambda_l)
={}&\cosh(\sigma_n^2\lambda_k\lambda_l)
\Bigl[
 a_n^2\sech^2(a_n\lambda_k)\sech^2(a_n\lambda_l)+\sigma_n^2
 \nonumber\\[-2pt]
&\hspace{37mm}
 +\sigma_n^4\lambda_k\lambda_l\tanh(a_n\lambda_k)\tanh(a_n\lambda_l)
\Bigr]
\nonumber\\
&+\sinh(\sigma_n^2\lambda_k\lambda_l)
\Bigl[
 \sigma_n^2\tanh(a_n\lambda_k)\tanh(a_n\lambda_l)+\sigma_n^4\lambda_k\lambda_l
 \nonumber\\[-2pt]
&\hspace{15mm}
 +\sigma_n^2a_n\{\lambda_k\sech^2(a_n\lambda_k)\tanh(a_n\lambda_l)
 +\lambda_l\sech^2(a_n\lambda_l)\tanh(a_n\lambda_k)\}
\Bigr].
\label{eq:gsr_covariance}
\end{align}
In particular,
$\Sigma_g(\lambda_k,0)=\sigma_n^2+a_n^2\sech^2(a_n\lambda_k)$ and
$\Sigma_g(0,0)=1$.
\end{lemma}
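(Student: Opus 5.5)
We plug the mixture moments of \autoref{gsr_moments} into the general formula of \autoref{lemAsympNormality} and then symmetrize over $\lambda_l\mapsto-\lambda_l$.

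\emph{Rewriting the general formula.} The numerator of \autoref{lemAsympNormality} is $\bbE[e^{sY}(Y-m(\lambda_k))(Y-m(\lambda_l))]$ with $s=\lambda_k+\lambda_l$. Divide by $M_Y(s)$. This turns the expectation into one under the tilted law, whose mean is $m_{\mathrm M}(s)$ and whose variance is $m_{\mathrm M}'(s)$. Expanding around the tilted mean then gives the first displayed identity:
\[
\frac{\bbE[e^{sY}(Y-a)(Y-b)]}{M_Y(s)}=m_{\mathrm M}'(s)+(m_{\mathrm M}(s)-a)(m_{\mathrm M}(s)-b).
\]
Only the variance-plus-bias identity of the tilted law is used here, so this step is routine.

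\emph{Explicit form.} We insert $M_Y(t)=e^{\sigma_n^2t^2/2}\cosh(a_nt)$. The Gaussian factors combine to $e^{\sigma_n^2\lambda_k\lambda_l}$. The Rademacher factors give
\[
\frac{\cosh(a_n(\lambda_k+\lambda_l))}{\cosh(a_n\lambda_k)\cosh(a_n\lambda_l)}=1+T_kT_l,
\qquad T_k=\tanh(a_n\lambda_k).
\]
Next we use the addition formula
\[
\tanh(a_n(\lambda_k+\lambda_l))=\frac{T_k+T_l}{1+T_kT_l}.
\]
This yields
\[
m_{\mathrm M}(s)-m_{\mathrm M}(\lambda_k)=\sigma_n^2\lambda_l+a_n\frac{T_l(1-T_k^2)}{1+T_kT_l},
\]
together with the analogous expression for $m_{\mathrm M}(s)-m_{\mathrm M}(\lambda_l)$. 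For the variance we have
\[
m_{\mathrm M}'(s)=\sigma_n^2+a_n^2\frac{(1-T_k^2)(1-T_l^2)}{(1+T_kT_l)^2}.
\]
We multiply everything by $e^{\sigma_n^2\lambda_k\lambda_l}(1+T_kT_l)$. Most denominators cancel, and collecting terms leaves a polynomial in $\sigma_n^2$, $T_k$, $T_l$ and the $\sech^2$ factors. The only remaining denominator is $(1+T_kT_l)^{-1}$, and it multiplies the pure $a_n^2$ part.

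\emph{Symmetrization.} Sending $\lambda_l\to-\lambda_l$ flips the sign of $T_l$ and of $\lambda_l$, and it changes $e^{\sigma_n^2\lambda_k\lambda_l}$ to $e^{-\sigma_n^2\lambda_k\lambda_l}$. When we average the two expressions, the terms even under this flip pick up $\cosh(\sigma_n^2\lambda_k\lambda_l)$, and the odd terms pick up $\sinh(\sigma_n^2\lambda_k\lambda_l)$.

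The delicate piece is the $a_n^2$ block. This is the step I expect to be the main obstacle, namely the bookkeeping that keeps the $(1\pm T_kT_l)$ denominators from leaking. In $\Sigma_h(\lambda_k,\lambda_l)$ this block equals
\[
a_n^2(1-T_k^2)(1-T_l^2)\,\frac{1+(1-T_k^2)^{-1}(1-T_l^2)^{-1}\,T_kT_l(1-T_k^2)(1-T_l^2)}{1+T_kT_l}=a_n^2(1-T_k^2)(1-T_l^2).
\]
The numerator collapses to $1+T_kT_l$ and cancels the denominator exactly. To see this, combine the variance term with the $a_n^2$ cross product $a_n^2T_kT_l(1-T_k^2)(1-T_l^2)/(1+T_kT_l)^2$ and multiply by the prefactor $(1+T_kT_l)$. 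The result is $a_n^2\sech^2\sech^2$ with no $T_kT_l$ dependence, so it is even under the flip and lands in the $\cosh$ bracket.

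The other pieces sort as follows.
\begin{itemize}
\item The mixed $\sigma_n^2a_n$ terms are $\sigma_n^2a_n[\lambda_l T_l(1-T_k^2)+\lambda_k T_k(1-T_l^2)]$. They are odd under the flip, and after relabeling they match the $\sinh$ bracket.
\item The $\sigma_n^2$ term from $m'$ contributes $\sigma_n^2(1+T_kT_l)$. Its even part $\sigma_n^2$ goes to the $\cosh$ bracket and its odd part $\sigma_n^2T_kT_l$ to the $\sinh$ bracket.
\item The term $\sigma_n^4\lambda_k\lambda_l(1+T_kT_l)$ splits the same way. The piece $\sigma_n^4\lambda_k\lambda_l T_kT_l$ is even and goes to the $\cosh$ bracket, while $\sigma_n^4\lambda_k\lambda_l$ goes to the $\sinh$ bracket.
\end{itemize}
These match \eqref{eq:gsr_covariance} term by term.

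\emph{Special values.} Setting $\lambda_l=0$ kills every $\sinh$ term and leaves $\sigma_n^2+a_n^2\sech^2(a_n\lambda_k)$. Setting $\lambda_k=0$ as well gives $\sigma_n^2+a_n^2=1$.
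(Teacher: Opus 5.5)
Apart from one index slip (below), your argument is correct and follows the paper's overall strategy. Both derive a closed form $\Sigma_h(\lambda_k,\lambda_l)=e^{\sigma_n^2\lambda_k\lambda_l}[\,\cdot\,]$ and then sort the bracket by parity under $\lambda_l\mapsto-\lambda_l$, which produces the $\cosh$/$\sinh$ split.

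The two routes reach that closed form differently:
\begin{itemize}
\item You go through the tilted mean and variance and the $\tanh$ addition formula. This introduces $(1+T_kT_l)^{-1}$ factors, with $T_k=\tanh(a_n\lambda_k)$ as in your notation, and these must cancel in the $a_n^2$ block. Your cancellation there is correct.
\item The paper works with the influence functions and factors over the independent parts $S$ and $G$ of $Y=a_nS+\sigma_nG$. It uses the Rademacher identity $e^{a_n\lambda S}(a_nS-a_n\tanh(a_n\lambda))/\cosh(a_n\lambda)=a_n\sech^2(a_n\lambda)S$, so $a_n^2\sech^2(a_n\lambda_k)\sech^2(a_n\lambda_l)$ appears directly and no denominators arise.
\end{itemize}
Your route has the small advantage of justifying the first displayed identity, as variance plus squared bias under the tilted law. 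The paper states that identity without derivation.

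The error is in the mixed term. You display $\sigma_n^2a_n[\lambda_lT_l(1-T_k^2)+\lambda_kT_k(1-T_l^2)]$, which pairs each $\lambda$ with its own $\tanh$. Your own differences do not give this. Take the cross terms of $\bigl(\sigma_n^2\lambda_l+a_nT_l(1-T_k^2)/(1+T_kT_l)\bigr)\bigl(\sigma_n^2\lambda_k+a_nT_k(1-T_l^2)/(1+T_kT_l)\bigr)$ and multiply by $1+T_kT_l$. The result is $\sigma_n^2a_n[\lambda_kT_l(1-T_k^2)+\lambda_lT_k(1-T_l^2)]$.

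The difference matters. Your displayed expression is \emph{even} under $\lambda_l\mapsto-\lambda_l$, so it would land in the $\cosh$ bracket and contradict the statement. It is also invariant under $k\leftrightarrow l$, so your ``after relabeling'' cannot repair it. The corrected expression is odd and equals the last entry of the $\sinh$ bracket exactly. With that fix, the term-by-term matching and the special values $\Sigma_g(\lambda_k,0)$ and $\Sigma_g(0,0)=1$ go through as you describe.
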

\begin{proof}
The formula for $\Sigma_h$ is \autoref{lemAsympNormality} rewritten using
\autoref{gsr_moments}, exactly as in the uniform case.
Exponentially tilting $a_nS+\sigma_nG$ by $\lambda_k+\lambda_l$ shifts the Gaussian
mean by $\sigma_n^2(\lambda_k+\lambda_l)$ and tilts the Rademacher probabilities.
For $S\in\{-1,1\}$,
\begin{equation*}
\frac{e^{a_n\lambda_k S}}{\cosh(a_n\lambda_k)}
\bigl(a_nS-a_n\tanh(a_n\lambda_k)\bigr)
=a_n\sech^2(a_n\lambda_k)S.
\end{equation*}
Substitution gives
\begin{align*}
\Sigma_h(\lambda_k,\lambda_l)
=e^{\sigma_n^2\lambda_k\lambda_l}\Bigl[{}
&a_n^2\sech^2(a_n\lambda_k)\sech^2(a_n\lambda_l)\\
&+(\sigma_n^2+\sigma_n^4\lambda_k\lambda_l)
  \{1+\tanh(a_n\lambda_k)\tanh(a_n\lambda_l)\}\\
&+\sigma_n^2a_n
  \{\lambda_k\sech^2(a_n\lambda_k)\tanh(a_n\lambda_l)
   +\lambda_l\sech^2(a_n\lambda_l)\tanh(a_n\lambda_k)\}
\Bigr].
\end{align*}
Averaging this expression with its value at $-\lambda_l$ and using the definitions
of $\cosh$ and $\sinh$ gives \eqref{eq:gsr_covariance}.
\end{proof}

\begin{lemma}[Expert hierarchy for the Gaussian mixture]
\label{gsr_hierarchy}
For every fixed $0<\delta<1$ there exists $\sigma_\delta>0$ such that, for all
$0<\sigma_n<\sigma_\delta<1/\sqrt2$ and
\[
0\le|\lambda_1|<|\lambda_2|\le L_{\max}=(1-\delta)\log(1/\sigma_n),
\]
we have
\[
\Sigma_g(\lambda_2,\lambda_2)<\Sigma_g(\lambda_2,\lambda_1)<\Sigma_g(\lambda_1,\lambda_1).
\]
Quantitatively,
\[
\min\{\Sigma_g(\lambda_2,\lambda_1)-\Sigma_g(\lambda_2,\lambda_2),
       \Sigma_g(\lambda_1,\lambda_1)-\Sigma_g(\lambda_2,\lambda_1)\}
\ge \frac{e^{-4L_{\max}}}{8(1+L_{\max})}(\lambda_2^2-\lambda_1^2).
\]
Thus the expert hierarchy of \autoref{expertHierarchy} holds throughout $[-L_{\max},L_{\max}]$, and
$L_{\max}\to\infty$ as $\sigma_n\downarrow0$ for any fixed $\delta$.
\end{lemma}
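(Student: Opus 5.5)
The plan is to treat the mixture covariance \eqref{eq:gsr_covariance} as a rank-one leading term plus a perturbation. The perturbation is small \emph{in differences}, even though it is not small in absolute size. Write $a=a_n$ and $F(\lambda)=a\sech^2(a\lambda)$. At $\sigma_n=0$ we have $\Sigma_g(\lambda_k,\lambda_l)=F(\lambda_k)F(\lambda_l)$. Since $\Sigma_g$ is even in each argument, it suffices to take $0\le\lambda_1<\lambda_2\le L_{\max}$. The two gaps then split as
\[
\Sigma_g(\lambda_1,\lambda_1)-\Sigma_g(\lambda_2,\lambda_1)=F(\lambda_1)\bigl(F(\lambda_1)-F(\lambda_2)\bigr)+D_1,
\qquad
\Sigma_g(\lambda_2,\lambda_1)-\Sigma_g(\lambda_2,\lambda_2)=F(\lambda_2)\bigl(F(\lambda_1)-F(\lambda_2)\bigr)+D_2,
\]
where $D_1,D_2$ collect the $\sigma_n$-dependent contributions. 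The statement follows once the main terms are bounded below by $\frac{e^{-4L_{\max}}}{4(1+L_{\max})}(\lambda_2^2-\lambda_1^2)$ and $|D_i|\le\frac{e^{-4L_{\max}}}{8(1+L_{\max})}(\lambda_2^2-\lambda_1^2)$.

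\emph{Main term.} Use $\sech^2x\ge e^{-2x}$ for $x\ge0$, and $\tanh(ax)\ge c\,x/(1+x)$ with $a\ge1/\sqrt2$. These give $F(\lambda_2)\ge a e^{-2aL_{\max}}\ge \tfrac1{\sqrt2}e^{-2L_{\max}}$. They also give
\[
F(\lambda_1)-F(\lambda_2)=\int_{\lambda_1}^{\lambda_2}2a^2\sech^2(ax)\tanh(ax)\,dx
\ge\frac{c\,e^{-2L_{\max}}}{1+L_{\max}}\int_{\lambda_1}^{\lambda_2}x\,dx .
\]
Since $F(\lambda_1)\ge F(\lambda_2)$, both main terms are at least a constant times $e^{-4L_{\max}}(\lambda_2^2-\lambda_1^2)/(1+L_{\max})$. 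The constants $c$ and $1/8$ will be tracked explicitly.

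\emph{Perturbation.} The key observation is that the only $O(\sigma_n^2)$ term in \eqref{eq:gsr_covariance} that survives without further small factors is $\sigma_n^2\cosh(\sigma_n^2\lambda_k\lambda_l)$. Its constant part $\sigma_n^2$ cancels exactly in both differences. Every remaining contribution to $\Sigma_g-F\otimes F$ carries a factor at least $\sigma_n^4$ times a polynomial in $L_{\max}$. This covers $(\cosh(\sigma_n^2\lambda_k\lambda_l)-1)(\cdot)$, the $\sigma_n^4\lambda_k\lambda_l$ terms, the $\sinh(\sigma_n^2\lambda_k\lambda_l)$ bracket, and the correction from $a_n^2$ versus $a_n$ in the product, which is absorbed by defining $F$ with the exact $a_n$.

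To get the factor $\lambda_2^2-\lambda_1^2$, I would regard each of these as a smooth even function of $\lambda_l$ for fixed $\lambda_k$, or of $\lambda$ on the diagonal, and write it as a function of $s=\lambda^2$. The mean value theorem in $s$ then bounds each difference by $\sup|\partial_\lambda(\cdot)/(2\lambda)|\,(\lambda_2^2-\lambda_1^2)$. The ratio $\partial_\lambda/\lambda$ is bounded by $C\sigma_n^4(1+L_{\max})^C$, using $|\tanh(ax)|\le ax$, $|\sinh y|\le|y|\cosh y$, and $\sigma_n^2L_{\max}^2=o(1)$. This gives $|D_i|\le C\sigma_n^4(1+L_{\max})^C(\lambda_2^2-\lambda_1^2)$.

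\emph{Comparison.} Since $e^{-4L_{\max}}=\sigma_n^{4(1-\delta)}$, the perturbation bound is smaller than the required margin by the factor
\[
\frac{\sigma_n^4(1+L_{\max})^{C+1}}{\sigma_n^{4-4\delta}}=\sigma_n^{4\delta}\,(1+\log(1/\sigma_n))^{C+1}\to0 .
\]
So it is below the required fraction once $\sigma_n<\sigma_\delta$ for a suitable $\sigma_\delta>0$, which yields both strict inequalities with the stated margin.

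The main obstacle is the bookkeeping in the perturbation step. One has to verify term by term that nothing of order exactly $\sigma_n^2$ survives differencing. This requires pairing the $\cosh$ and $\sinh$ brackets carefully, in particular the cross term $\sigma_n^2a\,\lambda_k\sech^2(a\lambda_k)\tanh(a\lambda_l)\sinh(\sigma_n^2\lambda_k\lambda_l)$, which is $O(\sigma_n^4L_{\max}^3)$ only because of the $\sinh$ prefactor. I would first write $\Sigma_g-F(\lambda_k)F(\lambda_l)-\sigma_n^2$ explicitly, check that it is $\sigma_n^4$ times a function with polynomially bounded $\lambda$-derivatives on $[0,L_{\max}]^2$, and only then apply the mean value argument.
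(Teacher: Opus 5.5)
Your proposal is correct and follows essentially the same route as the paper. Both arguments split $\Sigma_g$ into the rank-one term $a_n^2\sech^2(a_n\lambda_k)\sech^2(a_n\lambda_l)$, the constant $\sigma_n^2$, and a remainder whose derivatives in $\lambda^2$ are $O(\sigma_n^4(1+L)^C)$. Both bound the main term's $\lambda^2$-derivative below by $e^{-4L}/(4(1+L))$ using $\sech t\ge e^{-t}$ and $\tanh t/t\ge 1/(1+t)$, and both conclude from $\sigma_n^4e^{4L_{\max}}=\sigma_n^{4\delta}\to0$. Your factorization $F(\cdot)\bigl(F(\lambda_1)-F(\lambda_2)\bigr)$ is just the integrated form of the paper's pointwise bounds on $\partial_u$ and $\partial_v$, and your constants reproduce the stated $1/8$.
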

\begin{proof}
Write $u=\lambda_2^2$, $v=\lambda_1^2$, and separate
\[
\Sigma_g(\sqrt u,\sqrt v)
=a_n^2\sech^2(a_n\sqrt u)\sech^2(a_n\sqrt v)
 +\sigma_n^2+R_{\sigma_n}(u,v).
\]
For $L\ge1$ and $0\le u,v\le L^2$, expanding
\eqref{eq:gsr_covariance}, using
$\cosh t-1\le t^2e^t/2$ and $1\le\sinh(t)/t\le e^t$, and bounding the
resulting derivatives gives a universal $C$ such that
\[
\left|\partial_uR_{\sigma_n}(u,v)\right|
+\left|\partial_vR_{\sigma_n}(u,v)\right|
\le C\sigma_n^4(1+L)^8e^{\sigma_n^2L^2}.
\]
For the dominant term, $a_n^4\ge1/4$, $\sech t\ge e^{-t}$, and
$\tanh(t)/t\ge1/(1+t)$ imply
\begin{align*}
-&\frac{\partial}{\partial u}
 \bigl\{a_n^2\sech^2(a_n\sqrt u)\sech^2(a_n\sqrt v)\bigr\}\\
&=a_n^4\frac{\tanh(a_n\sqrt u)}{a_n\sqrt u}
  \sech^2(a_n\sqrt u)\sech^2(a_n\sqrt v)\\
&\ge\frac{e^{-4L}}{4(1+L)}.
\end{align*}
Hence
$\partial_u\Sigma_g(\sqrt u,\sqrt v)\le-e^{-4L}/\{8(1+L)\}$,
and symmetrically in $v$, whenever
\[
8C\sigma_n^4(1+L)^9e^{4L+\sigma_n^2L^2}\le1.
\]
This condition holds at
$L=L_{\max}=(1-\delta)\log(1/\sigma_n)$ for sufficiently small $\sigma_n$,
since $\sigma_n^4e^{4L_{\max}}=\sigma_n^{4\delta}\to0$.
Integrating the derivative bounds in one coordinate between $\lambda_1^2$ and
$\lambda_2^2$ gives both quantitative gaps and hence the strict hierarchy.
\end{proof}

\subsection{Concentration bounds}
\subsubsection{Softmax concentration}
\begin{lemma}[Concentration of softmax]
\label{SMconcentration}
Let $Y_1,\ldots,Y_n$ be independent copies of a mean-zero $1$-sub-Gaussian variable $Y$, and fix $\lambda\in\bbR$ and an integer $k\ge1$.
There are constants $C(\lambda),C_2(\lambda)>0$, depending only on $\lambda,k$, such that, for $0<M\le1$ and $R\ge1$ with $\bbP(|Y|>R)\le M^2C_2(\lambda)$,
\begin{align*}
\bbP\left(
\left|\frac{\sumin e^{\lambda Y_i}Y_i^k}{\sumin e^{\lambda Y_i}}
-\frac{\bbE[Y^ke^{\lambda Y}]}{\bbE[e^{\lambda Y}]}\right|
\ge M C(\lambda)\right)
\le 2n e^{-R^2/2}
+4\exp\left(-\frac{nM^2}{32R^{2k}e^{2R|\lambda|}}\right).
\end{align*}
\end{lemma}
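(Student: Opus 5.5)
Write $N=\frac1n\sum_i e^{\lambda Y_i}Y_i^k$ and $D=\frac1n\sum_i e^{\lambda Y_i}$, with population values $N^*=\bbE[Y^ke^{\lambda Y}]$ and $D^*=\bbE[e^{\lambda Y}]\ge1$ (by Jensen, since $Y$ has mean zero). The plan is to truncate at level $R$ and apply Hoeffding to the bounded truncated summands. Then the ratio deviation is transferred through the elementary identity
\[
\frac ND-\frac{N^*}{D^*}=\frac{(N-N^*)D^*-N^*(D-D^*)}{DD^*}.
\]
This identity shows that if $|N-N^*|\le M$, $|D-D^*|\le M$ and $D\ge D^*/2$, then the ratio deviation is at most $MC(\lambda)$. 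Here we may take $C(\lambda)=2(D^*+|N^*|)/(D^*)^2$, enlarged if needed. Note that $|N^*|$ is finite and depends only on $\lambda,k$ by sub-Gaussianity. The condition $D\ge D^*/2$ follows from $|D-D^*|\le M$ once we arrange $M\le D^*/2$, for instance by absorbing a factor into $C(\lambda)$, or by using $M\le1\le D^*$ and replacing $M$ by $M/2$ inside the event.

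\textbf{Step 1 (truncation).} Let $E=\{\max_i|Y_i|\le R\}$. By the $1$-sub-Gaussian tail bound and a union bound, $\bbP(E^c)\le 2ne^{-R^2/2}$, which produces the first term. On $E$, the empirical averages coincide with the averages of the truncated variables $\tilde A_i=e^{\lambda Y_i}Y_i^k\mathbf 1\{|Y_i|\le R\}$ and $\tilde B_i=e^{\lambda Y_i}\mathbf 1\{|Y_i|\le R\}$. These satisfy $|\tilde A_i|\le R^ke^{R|\lambda|}$ and $|\tilde B_i|\le e^{R|\lambda|}\le R^ke^{R|\lambda|}$, using $R\ge1$.

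\textbf{Step 2 (truncation bias).} We must show $|\bbE\tilde A-N^*|\le M/2$ and $|\bbE\tilde B-D^*|\le M/2$. By Cauchy--Schwarz,
\[
|\bbE[e^{\lambda Y}Y^k\mathbf 1\{|Y|>R\}]|\le \bbE[e^{2\lambda Y}Y^{2k}]^{1/2}\,\bbP(|Y|>R)^{1/2}\le c(\lambda,k)\,M\sqrt{C_2(\lambda)}.
\]
The moment $\bbE[e^{2\lambda Y}Y^{2k}]$ is bounded by a constant depending only on $\lambda,k$ for all $1$-sub-Gaussian $Y$. Choosing $C_2(\lambda)$ small enough makes this at most $M/2$, and the same argument handles $\tilde B$.

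\textbf{Step 3 (Hoeffding).} For bounded i.i.d. variables with range $2R^ke^{R|\lambda|}$, Hoeffding gives
\[
\bbP\Bigl(\bigl|\tfrac1n\textstyle\sum_i\tilde A_i-\bbE\tilde A\bigr|\ge M/2\Bigr)\le 2\exp\Bigl(-\frac{2n(M/2)^2}{4R^{2k}e^{2R|\lambda|}}\Bigr)=2\exp\Bigl(-\frac{nM^2}{8R^{2k}e^{2R|\lambda|}}\Bigr),
\]
and the same bound holds for $\tilde B$. This is better than the claimed constant $32$, which leaves slack if we instead use deviation $M/4$ to secure $D\ge D^*/2$. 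Summing the two Hoeffding bounds gives the factor $4$.

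\textbf{Combination.} On the intersection of $E$ with the two Hoeffding events, the ratio identity gives the deviation bound $MC(\lambda)$, and the union bound yields the stated probability. The only delicate point is bookkeeping: Step 2 needs constants uniform over the class of mean-zero $1$-sub-Gaussian laws, and the split of $M$ among bias, fluctuation and the denominator lower bound must be chosen consistently so that the exponent constant stays within $32$.
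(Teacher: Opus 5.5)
Your proposal is correct and follows the paper's proof essentially step for step. It truncates at $R$ with the union-bound term $2ne^{-R^2/2}$, bounds the truncation bias by Cauchy--Schwarz with $C_2(\lambda)$ small, applies Hoeffding to the bounded truncated averages, and finishes with a first-order perturbation of the ratio. On your bookkeeping point, the paper uses the split you mention: bias $\le M/4$ and fluctuation $\le M/4$. Then $|D-D^*|\le M/2\le 1/2$ together with $D^*\ge1$ gives $D\ge1/2$, and Hoeffding yields exactly the constant $32$.
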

\begin{proof}
Set $A_i=Y_i^ke^{\lambda Y_i}\mathbf1_{\{|Y_i|\le R\}}$ and $B_i=e^{\lambda Y_i}\mathbf1_{\{|Y_i|\le R\}}$, and write $A_n=n^{-1}\sum_iA_i$, $B_n=n^{-1}\sum_iB_i$, $A^*=\bbE[Y^ke^{\lambda Y}]$, and $B^*=\bbE[e^{\lambda Y}]\ge1$.
The $A_i$ are independent, as are the $B_i$, and satisfy $|A_i|\le R^ke^{R|\lambda|}$ and $0\le B_i\le e^{R|\lambda|}$.
The second moments of $Y^ke^{\lambda Y}$ and $e^{\lambda Y}$ are bounded in terms of $\lambda,k$ by the sub-Gaussian assumption.
Cauchy--Schwarz and a sufficiently small $C_2(\lambda)$ give $|\bbE A_n-A^*|,|\bbE B_n-B^*|\le M/4$.
Hoeffding's inequality therefore gives $|A_n-A^*|,|B_n-B^*|\le M/2$, except with probability at most $4\exp(-nM^2/(32R^{2k}e^{2R|\lambda|}))$.
On this event, $B_n\ge1/2$, so
\begin{equation}
\left|\frac{A_n}{B_n}-\frac{A^*}{B^*}\right|
\le\frac{|A_n-A^*|}{B_n}
+\frac{|A^*||B_n-B^*|}{B_nB^*}
\le M(1+|A^*|).
\label{FractionDeviationBound}
\end{equation}
The sub-Gaussian moment bound allows $C(\lambda)>1+|A^*|$ to be chosen depending only on $\lambda,k$.
Finally, $\bbP(\max_i|Y_i|>R)\le2ne^{-R^2/2}$, and outside this event the truncated and original ratios agree.
\end{proof}

\section{Additional results}
\label{app:additional}
\begin{lemma}[Lyapunov exit time]
\label{LyapunovExitTime}
Consider $\dot x=f(x)$ and a region $R$.
Suppose $W$ is locally Lipschitz, $W(x)>C$ on $R$, and $\frac d{dt}W(x(t))\le-\gamma<0$ almost everywhere while $x(t)\in R$.
If $x(0)\in R$ and the solution exists up to exit, then its first exit time is at most
$$t^*=\frac{W(x(0))-C}{\gamma}.$$
Moreover, $W(x(t))\le W(x(0))$ until exit.
\end{lemma}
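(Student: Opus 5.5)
The natural route is a direct comparison argument along the trajectory, applied to the function $t\mapsto W(x(t))$. Let $\tau=\inf\{t\ge0:x(t)\notin R\}$ be the first exit time, with the convention $\tau=\infty$ if no exit occurs. Since $W$ is locally Lipschitz and $x(\cdot)$ is $C^1$ (it solves $\dot x=f(x)$), the composition $t\mapsto W(x(t))$ is locally Lipschitz on $[0,\tau)$. It is therefore absolutely continuous on every compact subinterval $[0,T]\subset[0,\tau)$ and equals the integral of its almost-everywhere derivative. This is the one point that needs care, because $W$ need not be differentiable; for instance, the two-branch $W$ in \autoref{avoidance_of_suboptimal_stationary_points} has a kink. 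Local Lipschitz continuity of $W$, together with boundedness of $x([0,T])$, settles it.

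With absolute continuity in hand, integrate the hypothesis $\frac{d}{dt}W(x(t))\le-\gamma$, which holds for almost every $t<\tau$:
\[
W(x(T))\le W(x(0))-\gamma T\qquad(0\le T<\tau).
\]
This already gives the monotonicity claim, $W(x(t))\le W(x(0))$ until exit. In fact it gives more: the same integration on $[s,t]$ shows that $W(x(t))$ is nonincreasing.

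For the exit-time bound, combine the integrated inequality with the lower bound $W>C$ on $R$. For every $T<\tau$ we have $x(T)\in R$, hence
\[
C<W(x(0))-\gamma T,\qquad\text{that is,}\qquad T<\frac{W(x(0))-C}{\gamma}=t^*.
\]
Every $T<\tau$ therefore satisfies $T<t^*$, so $\tau\le t^*<\infty$. The assumption that the solution exists up to exit rules out blow-up before $\tau$, which is what we need to apply the argument on all of $[0,\tau)$.

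The main obstacle is only the regularity step. I would handle it by citing the standard fact that a locally Lipschitz function composed with a $C^1$ curve is locally Lipschitz, and hence absolutely continuous by Rademacher/Lebesgue.
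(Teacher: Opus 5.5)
Your proposal is correct and follows the same route as the paper: integrating the a.e.\ bound $\frac{d}{dt}W(x(t))\le-\gamma$ gives $C<W(x(t))\le W(x(0))-\gamma t$ while $x(t)\in R$, and this becomes contradictory at $t=t^*$. You also justify a step the paper leaves implicit, namely that $t\mapsto W(x(t))$ is locally Lipschitz and hence absolutely continuous on compact subintervals of $[0,\tau)$; note that Lipschitz already implies absolute continuity directly, so Rademacher's theorem is not needed there.
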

\begin{proof}
While the trajectory remains in $R$,
$$C<W(x(t))\le W(x(0))-\gamma t.$$
At $t=t^*$ these inequalities contradict each other.
\end{proof}

\begin{theorem}[Tikhonov's theorem]
\label{thm:tikhonov}
Consider
$$
\varepsilon\dot r=f(r,y),
\qquad \dot y=g(r,y),
\qquad (r(0),y(0))=(r_0,y_0).
$$
Let $\mathcal N_0$ be open and $r^\star$ be a twice continuously differentiable function on $\mathcal N_0$ with $f(r^\star(y),y)=0$ and graph contained in an open set $\mathcal N$.
Assume that, for every $(r,y)\in\mathcal N$ with $y\in\mathcal N_0$, the line segment from $(r^\star(y),y)$ to $(r,y)$ lies in $\mathcal N$.
Assume that $f,g$ and their derivatives through order two are uniformly bounded in $\mathcal N$, and that all eigenvalues of $\partial_r f(r^\star(y),y)$ have real parts at most $-a_0<0$, uniformly for $y\in\mathcal N_0$.
Let
$$
\dot y^0=g(r^\star(y^0),y^0),
\qquad y^0(0)=y_0,
$$
and let $L_g>0$ be a Lipschitz constant for $y\mapsto g(r^\star(y),y)$ on $\mathcal N_0$.
There exist $\varepsilon_0,c_0,\ldots,c_4,\kappa_0>0$ such that, for $0<\varepsilon<\varepsilon_0$ and $(r_0,y_0)\in\mathcal N$ with $\|r_0-r^\star(y_0)\|_2<c_0$,
\begin{align*}
\|r(s)-r^\star(y(s))\|_2
&\le c_1\varepsilon+c_2\|r_0-r^\star(y_0)\|_2e^{-\kappa_0s/\varepsilon},\\
\|y(s)-y^0(s)\|_2
&\le c_3\varepsilon e^{L_gs}+c_4\|r_0-r^\star(y_0)\|_2e^{-\kappa_0s/\varepsilon},
\end{align*}
for all $s\ge0$ such that $(r(u),y(u))\in\mathcal N$ and $y(u),y^0(u)\in\mathcal N_0$ for $0\le u\le s$.
\end{theorem}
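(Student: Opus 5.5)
This is the standard Tikhonov singular-perturbation theorem. I would prove it by a Lyapunov estimate for the fast variable $r$, followed by a Gronwall comparison for the slow variable $y$.

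\emph{Fast variable.} Set $e(s)=r(s)-r^\star(y(s))$. Differentiating,
\[
\varepsilon\dot e=f(r^\star(y)+e,y)-\varepsilon\,\partial_y r^\star(y)\,g(r,y).
\]
Taylor-expanding $f$ along the segment from $(r^\star(y),y)$ to $(r,y)$, which lies in $\mathcal N$ by assumption, gives
\[
\varepsilon\dot e=A(y)e+\mathcal O(\|e\|_2^2)+\mathcal O(\varepsilon),
\qquad A(y)=\partial_rf(r^\star(y),y).
\]
The constants come from the uniform bounds on second derivatives of $f$ and on first derivatives of $r^\star$ and $g$.

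The spectral bound $\Re\,\mathrm{spec}\,A(y)\le-a_0$ holds uniformly. Together with uniform boundedness of $A$, this lets me build quadratic Lyapunov functions. Solve
\[
A(y)^\top P(y)+P(y)A(y)=-I,
\qquad
P(y)=\int_0^\infty e^{A(y)^\top t}e^{A(y)t}\,dt,
\]
so that $P$ is uniformly positive definite and bounded. Since $A$ is $C^1$ in $y$, so is $P$, with bounded derivative. Along the flow, $\frac{d}{ds}P(y(s))=\partial_yP\,g=\mathcal O(1)$. For $V=e^\top P(y)e$ this yields
\[
\varepsilon\dot V\le-\|e\|_2^2+C\|e\|_2^3+C\varepsilon\|e\|_2+C\varepsilon\|e\|_2^2.
\]
Choose $c_0$ and $\varepsilon_0$ small enough that, while $\|e\|_2\le 2c_0'$, the cubic and $\varepsilon\|e\|_2^2$ terms are absorbed. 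This leaves $\varepsilon\frac{d}{ds}\sqrt V\le-\kappa_0\sqrt V+C\varepsilon$. Integrating gives the first estimate,
\[
\|e(s)\|_2\le c_2\|e(0)\|_2e^{-\kappa_0 s/\varepsilon}+c_1\varepsilon.
\]
A first-exit argument shows $\|e\|_2$ never reaches the absorption threshold while the trajectory stays in the stated region.

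\emph{Slow variable.} Write $\dot y=g(r^\star(y),y)+\big[g(r,y)-g(r^\star(y),y)\big]$. The bracket is at most $L\|e\|_2$ by boundedness of $\partial_rg$. Let $\delta(s)=\|y(s)-y^0(s)\|_2$, with $\delta(0)=0$. Then
\[
\dot\delta\le L_g\delta+L\|e\|_2,
\]
where I use that $y,y^0$ stay in $\mathcal N_0$ (where $L_g$ is the Lipschitz constant). Inserting the fast-variable bound and applying Gronwall,
\[
\delta(s)\le\int_0^se^{L_g(s-u)}L\Big(c_1\varepsilon+c_2\|e(0)\|_2e^{-\kappa_0u/\varepsilon}\Big)\,du.
\]
The first term is at most $(Lc_1/L_g)\varepsilon e^{L_gs}$, which gives $c_3$. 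The second is bounded by $\|e(0)\|_2\varepsilon\, e^{L_gs}$ times a constant. I therefore need a slightly more careful split, or the more common statement $c_4\varepsilon\|e(0)\|_2e^{L_gs}$; either way both pieces are absorbed into the displayed right-hand side with adjusted constants.

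\emph{Main obstacle.} I expect the hardest step to be the uniformity of the Lyapunov construction in $y$: the matrix $P(y)$ must be bounded, uniformly positive definite and $C^1$, so that the $\partial_yP\cdot\dot y$ term is $\mathcal O(1)$ and not $\mathcal O(1/\varepsilon)$. The other delicate point is the first-exit bookkeeping that ties smallness of $\|e\|_2$ to $c_0$ and $\varepsilon_0$. Both are classical, and I would follow the treatment in \citet{berglund2001perturbationtheorydynamicalsystems} for these points.
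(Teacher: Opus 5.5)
Your proposal is correct and follows essentially the same route as the proof the paper relies on, \citet[Theorem 4.1.2]{berglund2001perturbationtheorydynamicalsystems}: a quadratic Lyapunov function $e^\top P(y)e$ for the fast error, with $P(y)$ from the Lyapunov equation, followed by Gronwall for the slow variable. The loose ends you flag are fine: boundedness of $\partial_y r^\star$ follows from implicit differentiation because $\partial_r f$ is uniformly invertible, and the transient contribution $\cO(\varepsilon\|r_0-r^\star(y_0)\|_2e^{L_gs})$ is absorbed into $c_3\varepsilon e^{L_gs}$ using $\|r_0-r^\star(y_0)\|_2<c_0$.
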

\begin{proof}
See \citet[Theorem 4.1.2]{berglund2001perturbationtheorydynamicalsystems}.
\end{proof}

\section{Lower bound for the Gaussian mixture}
\label{app:lowerbound}
\subsection{Gaussian and uniform benchmarks}
\label{app:benchmark_proof}
\begin{proposition}[Gaussian and uniform benchmarks]
\label{prop:location_benchmarks}
For the variance-one Gaussian location family, $R_{\mathrm G,n}^{\star}=n^{-1}$, attained by the sample mean.
For the uniform location family, $R_{\mathrm U,n}^{\star}\asymp n^{-2}$, and the sample mid-range has risk
\[
 \bbE_{\mu,\mathrm U}\left[\left(\frac{X_{(1)}+X_{(n)}}2-\mu\right)^2\right]
 =\frac{6}{(n+1)(n+2)}.
\]
The sample mean instead has risk $n^{-1}$ under either family.
\end{proposition}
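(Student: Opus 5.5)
The plan has two parts: compute the mid-range risk directly from order statistics, and get the $n^{-2}$ lower bound by a two-point argument, since the uniform family is not regular. The Gaussian claims follow from standard facts.

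\emph{Gaussian and sample-mean claims.} The sample mean has risk $\Var(\xi_1)/n=1/n$ under any variance-one family. This gives the last sentence and the upper bound $R^\star_{\mathrm G,n}\le n^{-1}$. For the matching lower bound I would use the usual Bayesian argument. Put a prior $\cN(0,\tau^2)$ on $\mu$. The Bayes risk is $(n+\tau^{-2})^{-1}$, and the minimax risk dominates every Bayes risk. Letting $\tau\to\infty$ gives $R^\star_{\mathrm G,n}\ge n^{-1}$. Alternatively one can cite that the sample mean is the Pitman estimator with constant risk, as in \citet{lehmann1998theory}.

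\emph{Mid-range risk.} Write $X_i=\mu+\sqrt3(2U_i-1)$ with $U_i$ i.i.d.\ $\mathrm{Unif}[0,1]$. Then
\[
\tfrac{X_{(1)}+X_{(n)}}2-\mu=\sqrt3\,(U_{(1)}+U_{(n)}-1).
\]
By symmetry this has mean zero. I would use the Beta moments of the order statistics:
\[
\Var U_{(1)}=\Var U_{(n)}=\frac{n}{(n+1)^2(n+2)},\qquad \Cov(U_{(1)},U_{(n)})=\frac{1}{(n+1)^2(n+2)}.
\]
The covariance follows from $\bbE[U_{(1)}U_{(n)}]=1/(n+2)$. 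Summing gives $\Var(U_{(1)}+U_{(n)})=2/((n+1)(n+2))$, and multiplying by $3$ gives $6/((n+1)(n+2))$. This risk does not depend on $\mu$, so it also gives $R^\star_{\mathrm U,n}\lesssim n^{-2}$.

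\emph{Uniform lower bound.} This is the only step with some content, because Fisher-information arguments fail for the uniform family. I would use Le Cam's two-point method. Take $\mu_0=0$ and $\mu_1=c/n$. The two uniform laws overlap on a set of mass $1-c/(2\sqrt3 n)$ each. Hence the product measures have affinity
\[
\int\min(dP_0^{\otimes n},dP_1^{\otimes n})\ge \bigl(1-\tfrac{c}{2\sqrt3 n}\bigr)^n\ge e^{-c}/2
\]
for $n$ large. This uses that both densities equal $(2\sqrt3)^{-n}$ on the intersection of the supports. Le Cam's inequality then gives
\[
\sup_\mu \bbE(\widetilde\mu-\mu)^2\ge \tfrac{(\mu_1-\mu_0)^2}{8}\cdot e^{-c}/2\gtrsim n^{-2}.
\]
Combining this with the mid-range upper bound yields $R^\star_{\mathrm U,n}\asymp n^{-2}$. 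I expect no real obstacle beyond checking the Beta covariance constant carefully.
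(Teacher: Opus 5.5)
Your argument is correct, and it takes a different route from the paper. The paper gives no proof of its own and simply defers to \citep[Theorem 3.5]{lehmann1998theory} and \citep[Section 2]{shao2003mathematical}. You make the proposition self-contained with three elementary tools:
\begin{itemize}
\item the limit of the Bayes risks $(n+\tau^{-2})^{-1}$ under $\cN(0,\tau^2)$ priors, for the Gaussian lower bound;
\item the Beta moments of $U_{(1)},U_{(n)}$, for the exact mid-range risk;
\item Le Cam's two-point bound with separation $c/n$, for the uniform lower bound.
\end{itemize}
The constants you state are all right: the covariance $1/((n+1)^2(n+2))$, the affinity $(1-c/(2\sqrt3 n))^n$, and the factor $(\mu_1-\mu_0)^2/8$.

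Your route has the advantage of exposing why the uniform rate is faster. Because the support edge moves with $\mu$, the product affinity stays bounded below only for shifts of order $n^{-1}$. The argument also needs no regularity of the family.

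The textbook route is sharper. It rests on the fact that the Pitman estimator is minimax in a location family under squared error. That estimator is the sample mean in the Gaussian case and the mid-range in the uniform case, so this route identifies $R^\star_{\mathrm U,n}=6/((n+1)(n+2))$ exactly. Your two-point bound gives only the rate, with a loose constant. That is, however, all the proposition claims.
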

\begin{proof}
See \citep[Theorem 3.5]{lehmann1998theory} or \citep[Section 2]{shao2003mathematical}.
\end{proof}

\subsection{Gaussian-mixture benchmark}

\begin{lemma}[Fisher information and lower bounds for the location]
\label{gsr_fisher}
For known $\sigma_n\in(0,1)$, the observations have density
\begin{equation}
 f_\mu(x)
 =\frac1{\sqrt{2\pi}\sigma_n}
   \exp\rbr{-\frac{(x-\mu)^2+1-\sigma_n^2}{2\sigma_n^2}}
   \cosh\rbr{\frac{\sqrt{1-\sigma_n^2}(x-\mu)}{\sigma_n^2}},
 \label{eq:gsr_density}
\end{equation}
and the Fisher information in one observation is
\begin{equation}
 I_{\sigma_n}
 =\frac1{\sigma_n^2}
 -\frac{1-\sigma_n^2}{\sigma_n^4}
   \bbE\sbr{\sech^2\!\rbr{\frac{\sqrt{1-\sigma_n^2}(X-\mu)}{\sigma_n^2}}\mid\mu}.
 \label{eq:gsr_information}
\end{equation}
As $\sigma_n\downarrow0$,
\begin{equation}
 I_{\sigma_n}
 =\frac1{\sigma_n^2}\left[
 1-\sqrt{\frac\pi2}\frac{\sqrt{1-\sigma_n^2}}{\sigma_n}
       e^{-(1-\sigma_n^2)/(2\sigma_n^2)}\rbr{1+\cO(\sigma_n^2)}
 \right].
 \label{eq:gsr_information_asymptotic}
\end{equation}
By the Cram\'er--Rao inequality, every unbiased estimator of $\mu$ based on $n$ i.i.d.\ observations satisfies
\begin{equation}
 \Var_\mu(\widehat\mu)\ge\frac1{nI_{\sigma_n}}
 \sim\frac{\sigma_n^2}{n}
 \qquad\text{as }\sigma_n\downarrow0.
 \label{eq:gsr_crlb}
\end{equation}
\[
R_{\mathrm M,n}^{\star}\ge\frac{\sigma_n^2}{n}.
\]
\end{lemma}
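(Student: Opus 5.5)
Since $f_\mu(x)=f_0(x-\mu)$ is a location family, the plan has three parts. First I will write down the density and compute the Fisher information in closed form. Then I will expand it as $\sigma_n\downarrow0$. Finally I will pass to the minimax bound, using a revealed-label argument rather than Cram\'er--Rao for that last step.

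\textbf{Density.} Write $a=\sqrt{1-\sigma_n^2}$ and average the two Gaussian components $\cN(\mu\pm a,\sigma_n^2)$. Expanding $(x-\mu\mp a)^2=(x-\mu)^2+a^2\mp2a(x-\mu)$ and combining the two exponentials into a hyperbolic cosine gives \eqref{eq:gsr_density}.

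\textbf{Fisher information.} The score in $\mu$ is
\[
\ell'(x)=\frac{x-\mu}{\sigma_n^2}-\frac{a}{\sigma_n^2}\tanh\!\rbr{\frac{a(x-\mu)}{\sigma_n^2}}.
\]
I will use the identity $I=\bbE[-\ell'']$, which holds because the family is smooth with Gaussian tails. Differentiating once more gives $-\ell''=\sigma_n^{-2}-a^2\sigma_n^{-4}\sech^2(a(x-\mu)/\sigma_n^2)$, which is \eqref{eq:gsr_information} after taking expectations.

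\textbf{Asymptotics.} The main work is in evaluating
\[
E=\bbE\,\sech^2\!\rbr{\frac{a(X-\mu)}{\sigma_n^2}}.
\]
By symmetry I can condition on the component centred at $+a$ and write $X-\mu=a+\sigma_nG$. The argument of $\sech^2$ is then $a^2/\sigma_n^2+aG/\sigma_n$, which is huge except on the event $G\approx-a/\sigma_n$. So $E$ is a Laplace-type integral. Substituting $u=a(a+\sigma_n G)/\sigma_n^2$ gives
\[
E=\int \sech^2(u)\,\frac{\sigma_n}{a\sqrt{2\pi}}\exp\!\rbr{-\frac{(\sigma_n u/a-a/\sigma_n)^2}{2}}\,du .
\]
Expanding the exponent, its dominant part is $e^{-a^2/(2\sigma_n^2)}e^{u}e^{-\sigma_n^2u^2/(2a^2)}$. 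This leaves the integral $\int\sech^2(u)e^{u}\,du$, which diverges. So the naive expansion fails, and the correct scaling must come from the factor $e^{-\sigma_n^2u^2/2a^2}$, i.e.\ from a saddle point at $u\sim a^2/\sigma_n^2$ rather than $u=O(1)$.

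This is the step I expect to be the main obstacle. The precise constant $\sqrt{\pi/2}\,a/\sigma_n$ in \eqref{eq:gsr_information_asymptotic} requires a careful Laplace/saddle analysis (or an exact series via $\sech^2u=4\sum_k(-1)^{k+1}k\,e^{-2k|u|}$ integrated term by term against the Gaussian). My plan is to use that exact series and identify the leading term. For the lower bound that is actually needed, though, it suffices to have $I_{\sigma_n}\le\sigma_n^{-2}$, which is immediate because $E\ge0$. So the sharp expansion only refines the $\sim$ in \eqref{eq:gsr_crlb}.

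\textbf{Minimax bound.} Cram\'er--Rao as stated is for unbiased estimators, so I will not try to upgrade it. Instead I will use the argument from the text: reveal the component labels $S_i$. Conditionally on $S$, the observations $X_i-aS_i$ are i.i.d.\ $\cN(\mu,\sigma_n^2)$ and $S$ is ancillary. Any estimator using $X$ alone is also an estimator using $(X,S)$. Hence its worst-case risk is at least the Gaussian location minimax risk with variance $\sigma_n^2$, which equals $\sigma_n^2/n$ by \autoref{prop:location_benchmarks} after scaling. This gives $R^\star_{\mathrm M,n}\ge\sigma_n^2/n$.
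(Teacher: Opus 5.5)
Your density, score, the identity $I_{\sigma_n}=\bbE[-\ell'']$, and the revealed-label minimax argument all agree with the paper. Your remark that the labels are independent of $X_i-aS_i$ and free of $\mu$ is a useful extra detail.

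The gap is in the asymptotic part of the statement, \eqref{eq:gsr_information_asymptotic} and the $\sim$ in \eqref{eq:gsr_crlb}. You do not prove it, and your reasoning about it rests on a false claim.

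\begin{itemize}
\item \textbf{The integral does not diverge.} We have $\sech^2(u)e^u=4e^{3u}/(1+e^{2u})^2$, which decays like $4e^{-u}$ as $u\to+\infty$ and like $4e^{3u}$ as $u\to-\infty$. Since $\sech^2$ is even, $\int\sech^2(u)e^u\,du=\int\sech^2(u)\cosh u\,du=\int\sech u\,du=\pi$. So there is no saddle at $u\sim a^2/\sigma_n^2$. The mass sits at $u=\cO(1)$, i.e.\ $G\approx-a/\sigma_n$, as your first heuristic correctly suggested.
\item \textbf{The series fallback does not work as stated.} The series $4\sum_k(-1)^{k+1}ke^{-2k|u|}$ diverges at $u=0$. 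Term-by-term integration gives terms $4(-1)^{k+1}k\int e^{-2k|u|+u}\,du=(-1)^{k+1}16k^2/(4k^2-1)$, which do not tend to zero. You would get a divergent series unless you added a summation method.
\item \textbf{$E\ge0$ gives only one direction.} It yields $1/(nI_{\sigma_n})\ge\sigma_n^2/n$. The equivalence $1/(nI_{\sigma_n})\sim\sigma_n^2/n$ also needs $a^2E/\sigma_n^2\to0$, which requires an actual upper estimate of $E$.
\end{itemize}

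The repair is short and is exactly the paper's argument. Your one-component formula is $E=\frac{\sigma_n}{a\sqrt{2\pi}}e^{-a^2/(2\sigma_n^2)}\int\sech^2(u)e^{u}e^{-\sigma_n^2u^2/(2a^2)}\,du$. The paper instead integrates against the full mixture density. There the $\cosh$ factor in \eqref{eq:gsr_density} cancels one power of $\sech$, so the integrand is $e^{-\sigma_n^2u^2/(2a^2)}\sech u$ directly. Symmetrizing your integrand in $u$ gives the same thing.

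Then $0\le1-e^{-t}\le t$ and $\int u^2\sech u\,du<\infty$ give $\int e^{-\sigma_n^2u^2/(2a^2)}\sech u\,du=\pi\bigl(1+\cO(\sigma_n^2)\bigr)$. Hence $E=\sqrt{\pi/2}\,(\sigma_n/a)\,e^{-a^2/(2\sigma_n^2)}\bigl(1+\cO(\sigma_n^2)\bigr)$. Substituting this into the Fisher-information formula gives \eqref{eq:gsr_information_asymptotic}, and in particular $\sigma_n^2I_{\sigma_n}\to1$, which is what the $\sim$ in \eqref{eq:gsr_crlb} requires.
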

\begin{proof}
Differentiating \eqref{eq:gsr_density} with respect to $\mu$ gives
\begin{align*}
 \frac{\partial}{\partial \mu}\log f_\mu(x)
 &=\frac{x-\mu-\sqrt{1-\sigma_n^2}\tanh(\sqrt{1-\sigma_n^2}(x-\mu)/\sigma_n^2)}{\sigma_n^2},\\
 \frac{\partial^2}{\partial \mu^2}\log f_\mu(x)
 &=-\frac1{\sigma_n^2}
   +\frac{1-\sigma_n^2}{\sigma_n^4}\sech^2\!\rbr{\frac{\sqrt{1-\sigma_n^2}(x-\mu)}{\sigma_n^2}}.
\end{align*}
Thus $I_{\sigma_n}=-\bbE[\frac{\partial^2}{\partial \mu^2}\log f_\mu(X)\mid\mu]$, proving \eqref{eq:gsr_information}.

Substituting $u=\sqrt{1-\sigma_n^2}(x-\mu)/\sigma_n^2$ gives
\begin{align*}
 \bbE\sbr{\sech^2\!\rbr{\frac{\sqrt{1-\sigma_n^2}(X-\mu)}{\sigma_n^2}}\mid\mu}
 &=\frac{\sigma_n}{\sqrt{1-\sigma_n^2}\sqrt{2\pi}}e^{-(1-\sigma_n^2)/(2\sigma_n^2)}
   \int_{\bbR}e^{-\sigma_n^2u^2/(2(1-\sigma_n^2))}\sech u\,du\\
 &=\sqrt{\frac\pi2}\frac{\sigma_n}{\sqrt{1-\sigma_n^2}} e^{-(1-\sigma_n^2)/(2\sigma_n^2)}
   \rbr{1+\cO(\sigma_n^2)}.
\end{align*}
The last equality follows from $\int_{\bbR}\sech u\,du=\pi$, $\int_{\bbR}u^2\sech u\,du<\infty$, and $0\le1-e^{-t}\le t$ for $t\ge0$.
Thus, we obtain \eqref{eq:gsr_information_asymptotic}.

For the last bound, reveal the mixture component labels and subtract the corresponding offsets, leaving $n$ independent $\cN(\mu,\sigma_n^2)$ observations conditional on $\mu$.
Revealing the labels cannot increase the location-minimax risk:
\[
R_{\mathrm M,n}^{\star}
\ge\inf_{\widetilde\mu}\sup_{\mu\in\bbR}
\bbE_{\bX\sim\cN(\mu,\sigma_n^2)^{\otimes n}}
[(\widetilde\mu(\bX)-\mu)^2]
=\frac{\sigma_n^2}{n}.
\]
See \autoref{prop:location_benchmarks}.
\end{proof}

\section{Additional experiments}
\label{app:experiments}

\paragraph{Joint-training setup.}
We provide further training details for Figures~\ref{fig:joint_softmax} and~\ref{fig:joint}.
The joint-training experiments below use the same setup unless stated otherwise.
We train the architecture of \autoref{TransformerSimplified} with both softmax MoE and GLU gates, updating $\theta$ and $\lambda$ jointly while keeping $\boldsymbol\omega$ fixed.
We use $m=8000$ fixed training tasks with contexts of length $n=150$ from \autoref{adaptiveMeanEstimation}, with $\sigma_\mu^2=0.5$, $K=2$, and $d=10$.
Each gradient-descent step uses all $m$ training tasks, with constant step sizes $20$ for $\theta$ and $100$ for $\lambda$, for $600{,}000$ steps.
We initialize $\theta_k(0)\sim\cN(0,0.25^2I_d)$ and sample the fixed gating queries $\omega_r\sim\cN(0,1)$ independently.
We draw the initial expert parameters independently and uniformly from $[0,1.9]$, rejecting draws with separation below $0.6$ and ordering them by magnitude.
Each joint-training experiment uses ten independent runs and $200$ validation tasks per family in each run.
In the joint-training plots, experts are relabeled so that expert $1$ is assigned to Gaussian contexts at the end of training.
Curves show medians, and shaded regions span the minimum and maximum across runs.

\subsection{Joint training, Gaussian-mixture alternative}
Figures~\ref{fig:app_joint_mixture_softmax} and~\ref{fig:app_joint_mixture} repeat Figures~\ref{fig:joint_softmax} and~\ref{fig:joint} with the Gaussian-mixture alternative and $\sigma_n^2=0.05$.
The gates move toward $\bar w\approx I_2$ and $|\lambda_1|$ decreases to zero, as with the uniform alternative.
Unlike with the uniform alternative, $|\lambda_2|$ settles near $2$, above the stopping target $\log(1/\sigma_n^2)/(4\sqrt{1-\sigma_n^2})\approx0.77$ of \autoref{convergence_of_first_layer}.
On Gaussian-mixture contexts, both architectures improve on the sample mean but remain above the component-mean error and the lower bound of \autoref{gsr_fisher}.
The component mean assigns observations to components according to their position relative to the sample mean, subtracts the corresponding offset $\pm\sqrt{1-\sigma_n^2}$, and averages the resulting values.

\begin{figure}[H]
    \centering
    \includegraphics[width=\linewidth]{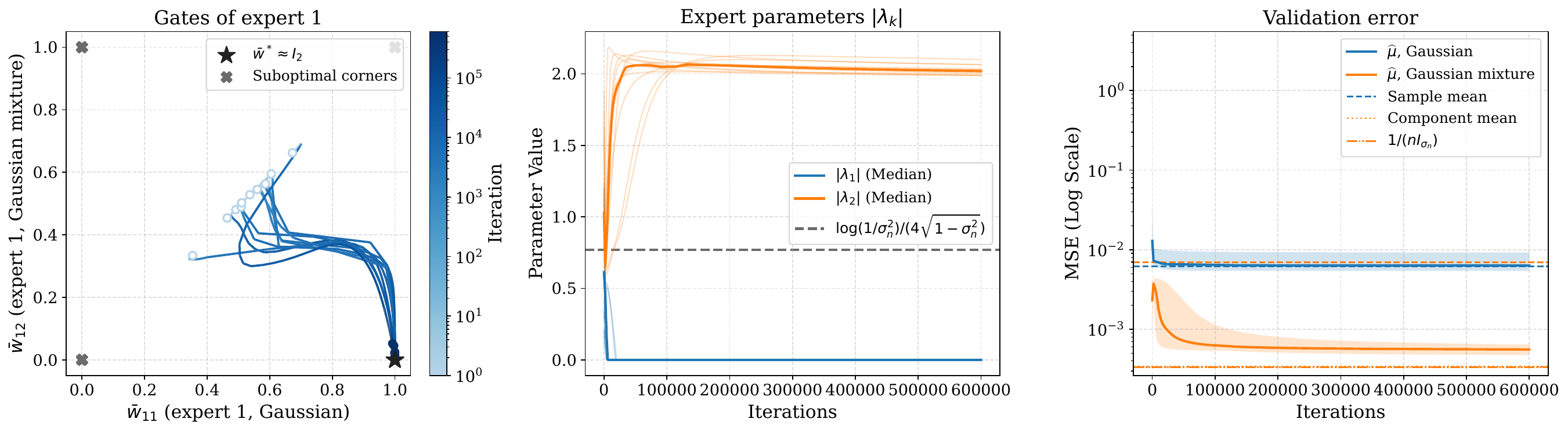}
    \caption{Softmax MoE, joint training with the Gaussian-mixture alternative, as in Figure~\ref{fig:joint_softmax}.
The dashed line in the middle panel marks $\log(1/\sigma_n^2)/(4\sqrt{1-\sigma_n^2})$.}
    \label{fig:app_joint_mixture_softmax}
\end{figure}

\begin{figure}[H]
    \centering
    \includegraphics[width=\linewidth]{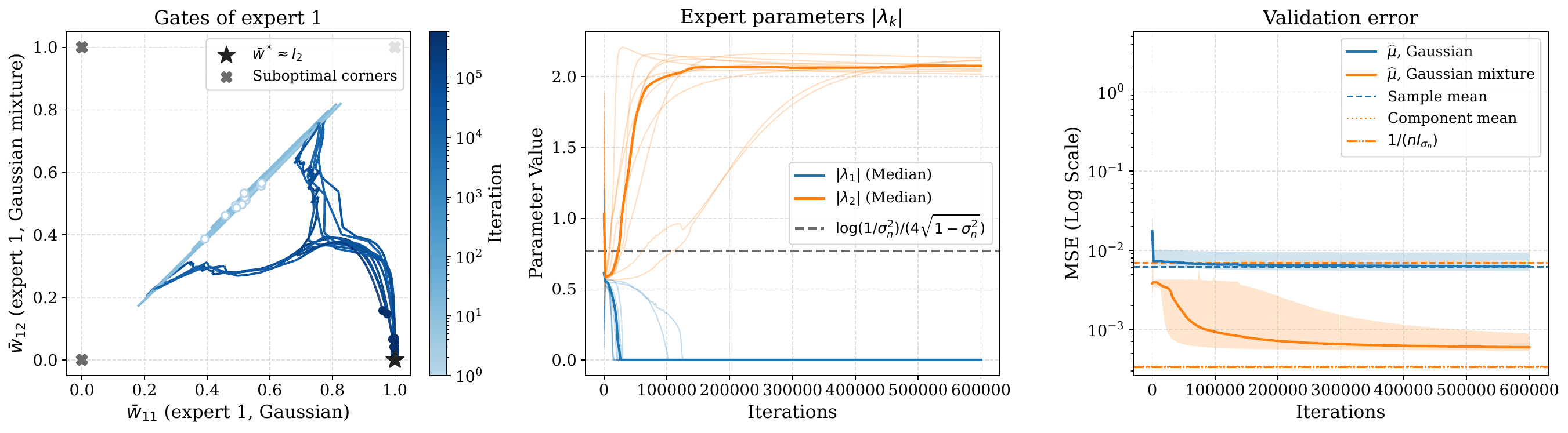}
    \caption{GLU, joint training with the Gaussian-mixture alternative, as in Figure~\ref{fig:joint}.
The dashed line in the middle panel marks $\log(1/\sigma_n^2)/(4\sqrt{1-\sigma_n^2})$.}
    \label{fig:app_joint_mixture}
\end{figure}

\subsection{Joint training: gating dynamics}
Figures~\ref{fig:app_joint_uniform_pi_bar} and~\ref{fig:app_joint_mixture_pi_bar} show the GLU gates and normalization error during joint training.
After an initial transient, the normalization error falls to about $10^{-2}$ before substantial family-specific selection develops.
The gates then move toward $\bar w\approx I_2$, with less complete selection and greater variation across runs for the uniform alternative.

\begin{figure}[H]
    \centering
    \includegraphics[width=\linewidth]{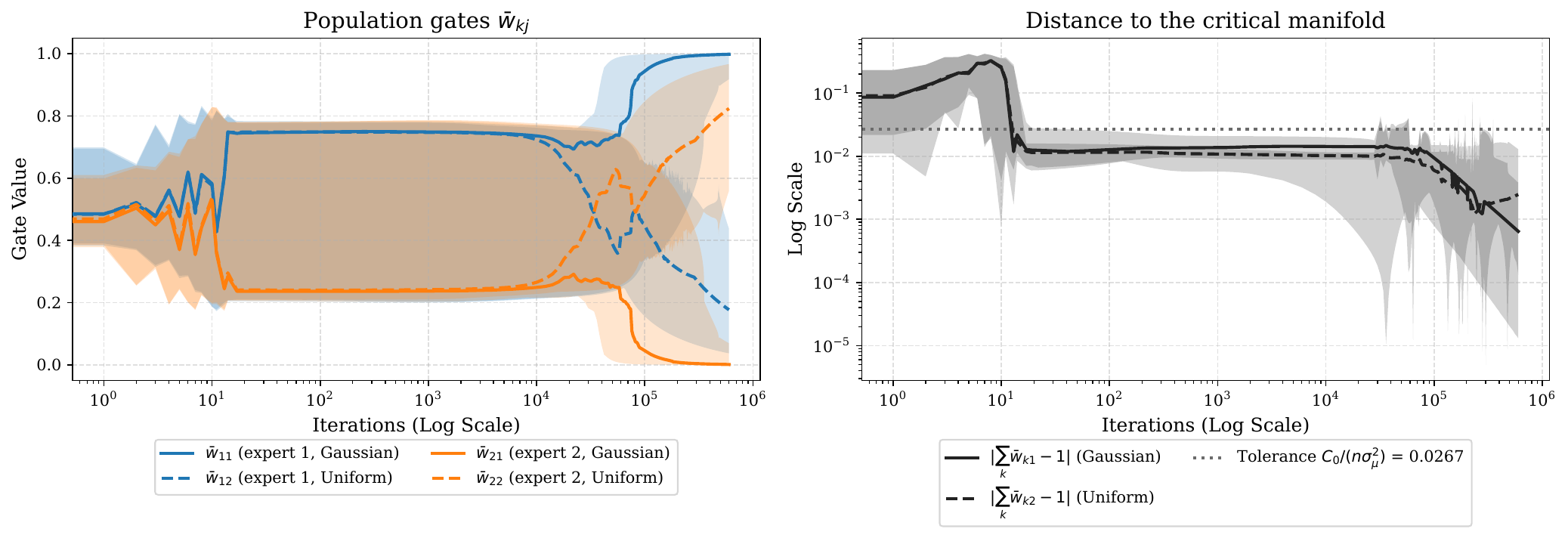}
    \caption{GLU, joint training with the uniform alternative: population gates $\bar w_{kj}$ (left) and normalization error $|\bar w_{1j}+\bar w_{2j}-1|$ (right).
Curves show medians, and shading spans all ten runs.}
    \label{fig:app_joint_uniform_pi_bar}
\end{figure}

\begin{figure}[H]
    \centering
    \includegraphics[width=\linewidth]{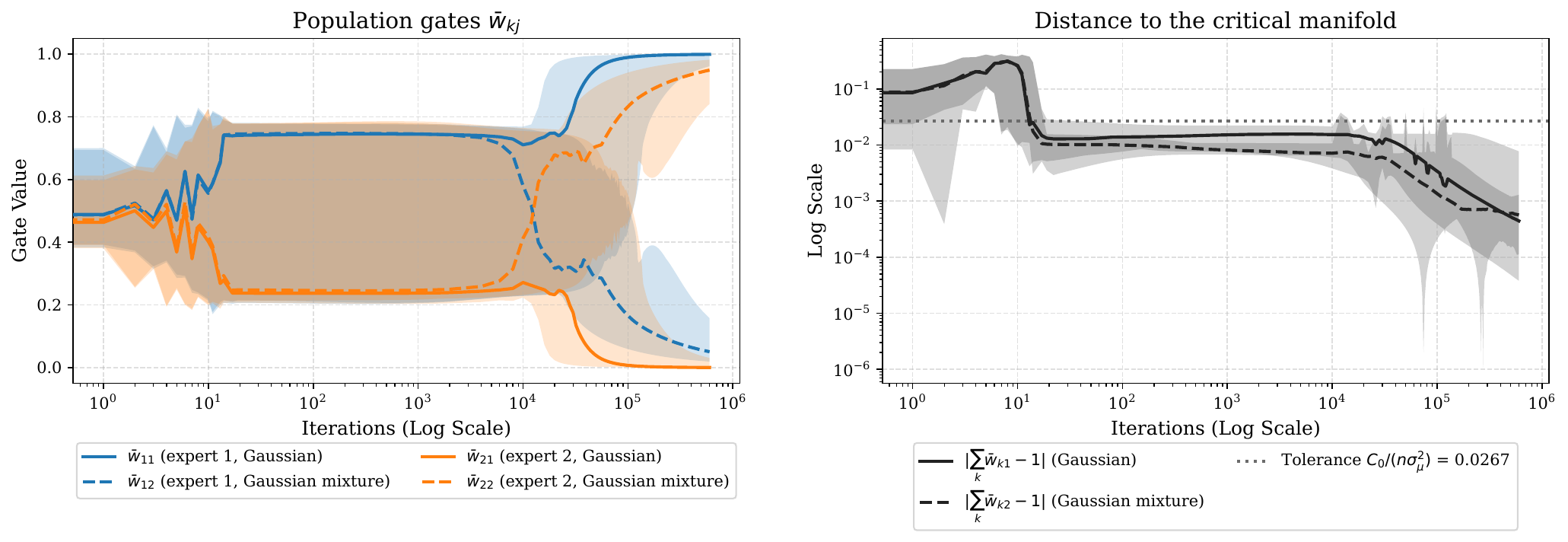}
    \caption{GLU, joint training with the Gaussian-mixture alternative, with panels and conventions as in Figure~\ref{fig:app_joint_uniform_pi_bar}.}
    \label{fig:app_joint_mixture_pi_bar}
\end{figure}

\subsection{Stagewise training}
\label{app:stagewise}
We train the GLU with \autoref{training_procedure}, using the uniform alternative with $\sigma_\mu^2=0.05$ and $n=500$.
The values of $m,K,d$, the initialization, and the numbers of runs and validation tasks are as in the joint-training setup above.
In the first stage, we minimize $\hat\cL_m$ over $\theta$ using $20{,}000{,}000$ gradient-descent steps with step size $100$.
In the second stage, we minimize $\hat\cL_m$ over $\lambda$ using $300{,}000$ gradient-descent steps with step size $1000$.

\paragraph{First stage: critical manifold.}
Figure~\ref{fig:manifold} shows $(\bar w_{1j},\bar w_{2j})$ during the first stage for Gaussian ($j=\mathrm G$) and uniform ($j=\mathrm U$) contexts.
The population gates first approach $\bar w_{1j}+\bar w_{2j} \approx 1$, before substantial specialization toward $e_1$ and $e_2$ for Gaussian and uniform contexts, respectively.
\begin{figure}[H]
    \centering
    \includegraphics[width=0.9\linewidth]{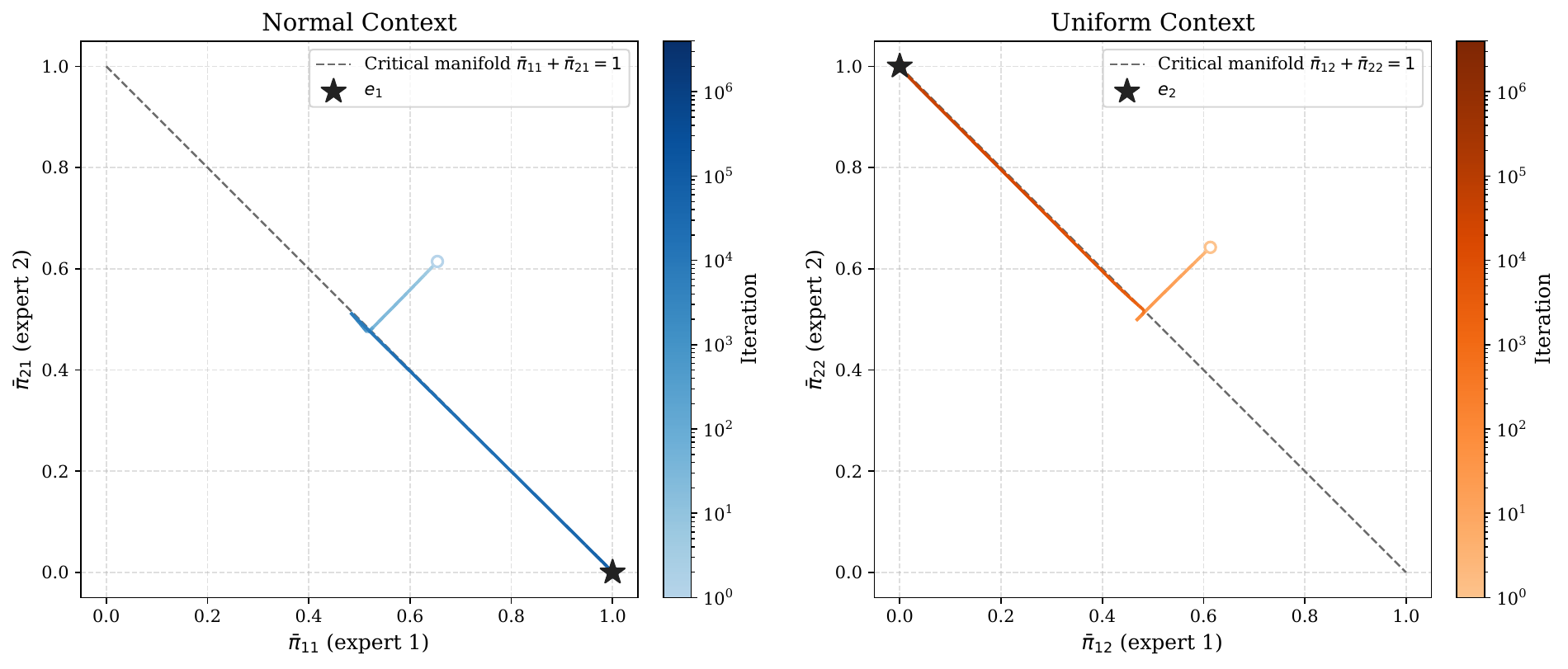}
    \caption{First stage with the uniform alternative: $(\bar w_{1j},\bar w_{2j})$ for Gaussian (left) and uniform (right) contexts across runs, colored by iteration.
The dashed line is the critical manifold $\bar w_{1j}+\bar w_{2j}=1$.}
    \label{fig:manifold}
\end{figure}

\paragraph{Per-expert accuracy.}
Figure~\ref{fig:expert_mse} shows $\bbE_j[(g_k-\mu)^2]$ for both experts and $\bbE_j[(\widehat\mu-\mu)^2]$ on validation tasks.
In the first stage, $\lambda$ is fixed, so the expert errors are constant and only the gating changes the error of $\widehat\mu$.
On Gaussian contexts, expert $1$ and $\widehat\mu$ stay near the sample-mean error.
On uniform contexts, the errors of expert $2$ and $\widehat\mu$ decrease further in the second stage toward the sample mid-range error.
Although expert $2$ becomes worse on Gaussian contexts during this stage, the error of $\widehat\mu$ remains approximately unchanged since $\bar w_{2\mathrm G}\approx0$ after the first stage.
The Gaussian-mixture results are in \autoref{app:stagewise_mixture}.

\begin{figure}[H]
    \centering
    \includegraphics[width=0.9\linewidth]{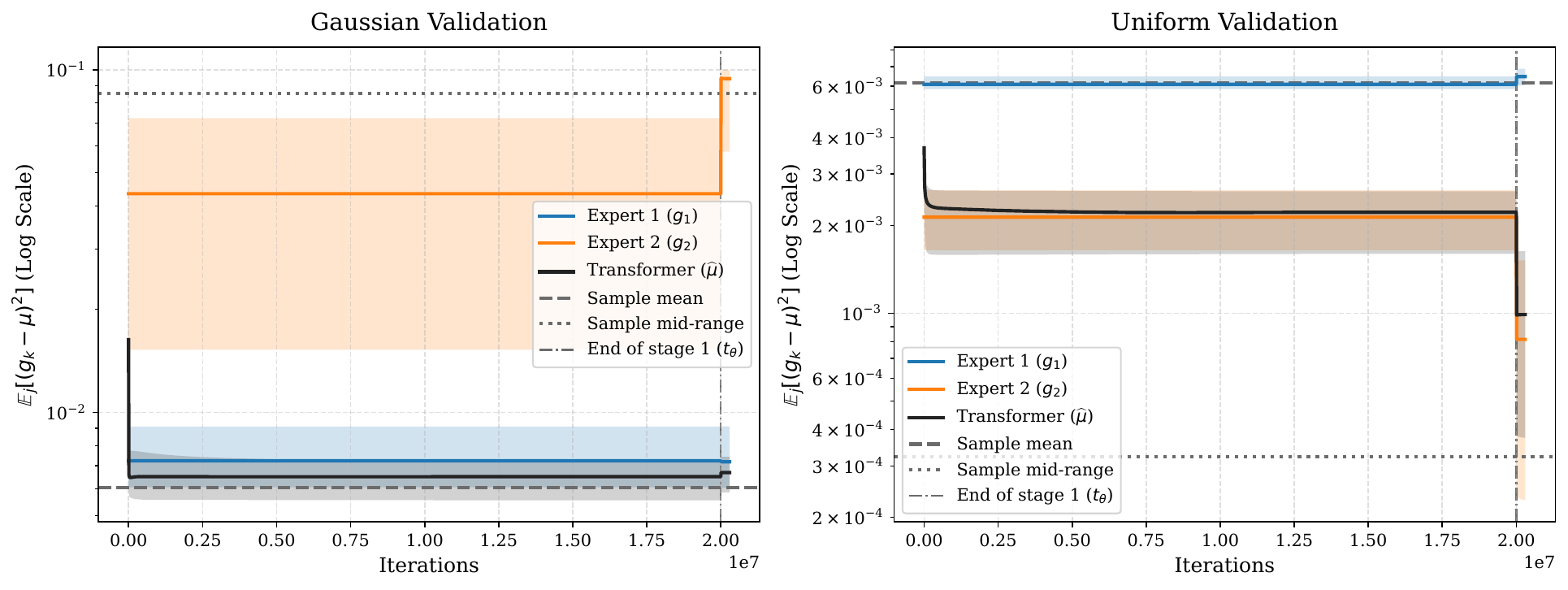}
    \caption{Uniform alternative: validation error of each expert $g_k$ and of $\widehat\mu$ on Gaussian (left) and uniform (right) contexts, with the sample mean and sample mid-range as references.
The vertical line marks $\ttheta$.}
    \label{fig:expert_mse}
\end{figure}

\subsection{Stagewise training, Gaussian-mixture alternative}
\label{app:stagewise_mixture}
We use the same Gaussian-mixture alternative as in the joint-training experiments, with the stagewise setup of \autoref{app:stagewise}.
Figure~\ref{fig:app_uniform_expert_mse} repeats the uniform errors for comparison with Figure~\ref{fig:app_mixture_expert_mse}.
In both cases, training the experts in the second stage reduces the alternative-family error while leaving the Gaussian error approximately unchanged.
For the mixture, the error remains above the component-mean error and the lower bound of \autoref{gsr_fisher}.
Figure~\ref{fig:app_mixture_manifold} shows the first-stage gates moving toward the appropriate expert for each family, with a residual normalization error as in Figure~\ref{fig:manifold}.

\begin{figure}[H]
    \centering
    \includegraphics[width=\linewidth]{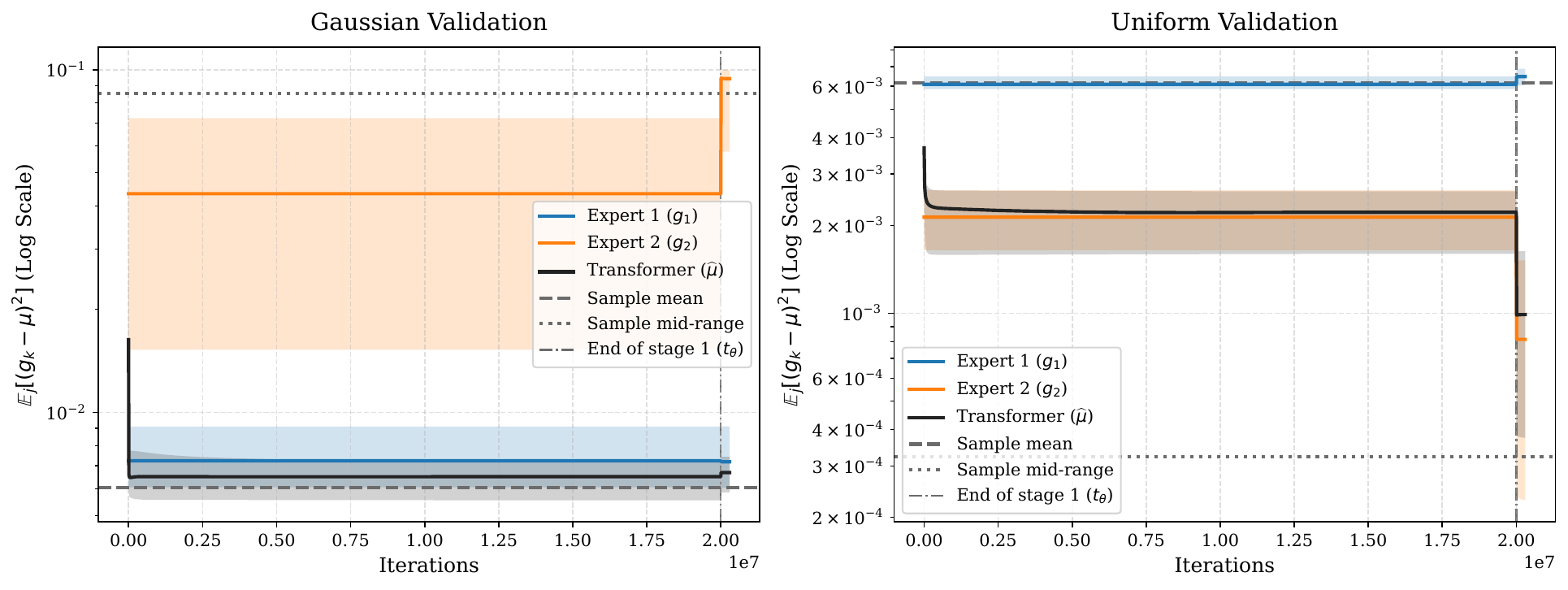}
    \caption{Uniform alternative: validation error of each expert $g_k$ and of $\widehat\mu$ on Gaussian (left) and uniform (right) contexts, repeated for comparison with Figure~\ref{fig:app_mixture_expert_mse}.
The vertical line marks $\ttheta$.}
    \label{fig:app_uniform_expert_mse}
\end{figure}

\begin{figure}[H]
    \centering
    \includegraphics[width=\linewidth]{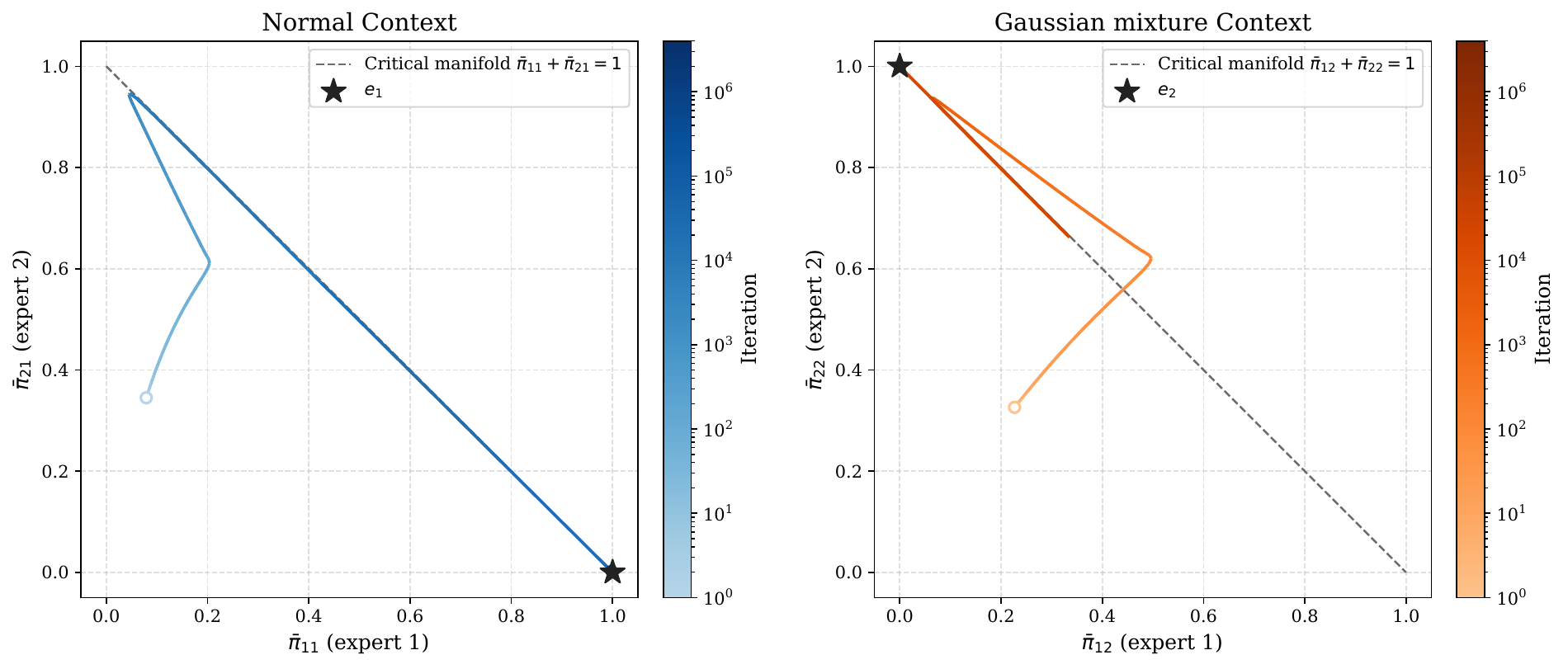}
    \caption{First stage with the Gaussian-mixture alternative: $(\bar w_{1j},\bar w_{2j})$ for Gaussian (left) and Gaussian-mixture (right) contexts across runs, colored by iteration.
The dashed line is the critical manifold $\bar w_{1j}+\bar w_{2j}=1$.}
    \label{fig:app_mixture_manifold}
\end{figure}

\begin{figure}[H]
    \centering
    \includegraphics[width=\linewidth]{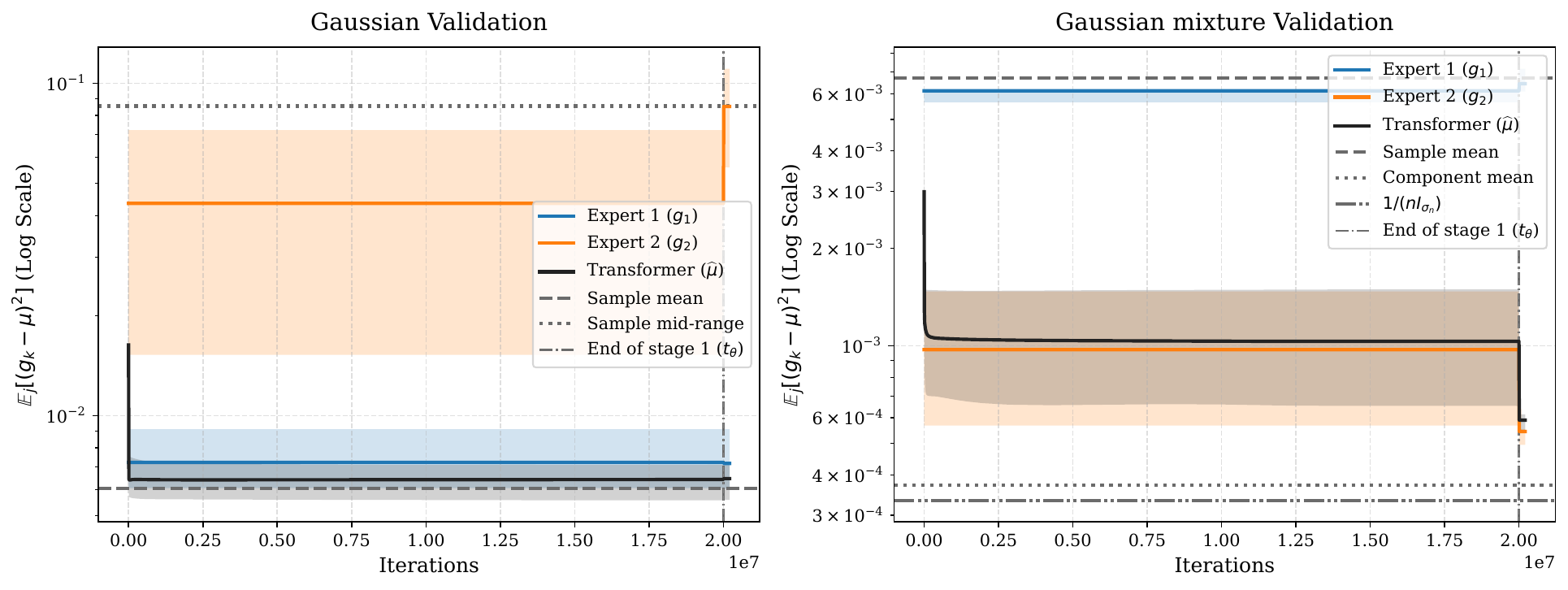}
    \caption{Gaussian-mixture alternative: validation error of each expert $g_k$ and of $\widehat\mu$ on Gaussian (left) and Gaussian-mixture (right) contexts.
The vertical line marks $\ttheta$.}
    \label{fig:app_mixture_expert_mse}
\end{figure}

\subsection{Relaxed architecture}
\label{app:relaxed}
We train ten attention parameters, whose attention outputs serve both as gating features and as inputs to the experts.
The attention outputs are not centered or normalized.
Each expert and each gate logit is a learned linear function of the ten attention outputs.
We use the uniform alternative and train with SGD with $n=200$, $m = 8{,}000$, learning rate $10$, $K=10$, and $\sigma_\mu^2=0.1$, and train all parameters jointly for $15{,}000$ epochs.
Each experiment uses five independent runs and $2{,}000$ held-out validation tasks.
Figure~\ref{fig:relaxed_loss} shows end-to-end training of this less constrained architecture.
The Gaussian validation error stays near the sample-mean error, while the uniform validation error falls well below it toward the sample mid-range error.

\begin{figure}[H]
    \centering
    \includegraphics[width=\linewidth]{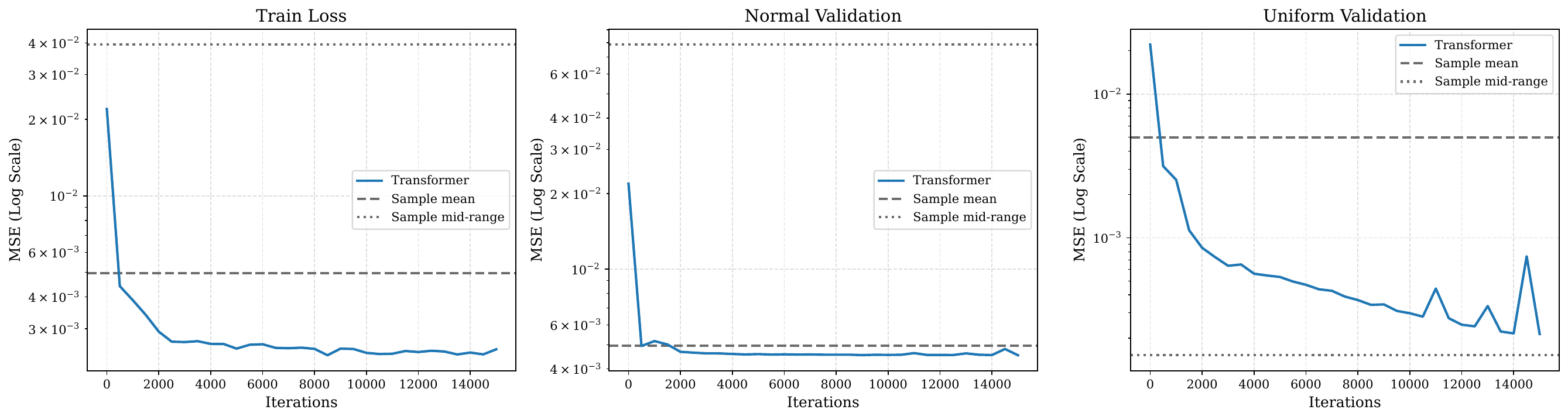}
    \caption{Less constrained architecture, end-to-end training with the uniform alternative.
Left: training loss.
Middle and right: validation error on Gaussian and uniform contexts, with the sample mean and sample mid-range as references.}
    \label{fig:relaxed_loss}
\end{figure}

\section{Role of LayerNorm}
\label{role_of_layernorm}
LayerNorm enforces the condition $Z_{\mathrm{GA}}<\min\wbr{Z_{\mathrm{GG}},Z_{\mathrm{AA}}}$ used in \autoref{avoidance_of_suboptimal_stationary_points}.
In our experiments, convergence is also observed without LayerNorm.
\autoref{fig:streamplots} illustrates the reduced flow studied in that lemma with and without LayerNorm.
For the convergence argument, it would suffice to establish convergence to the optimal assignment from a region containing the reduced initial states at $\tcrit$, with a margin for the approximation errors.
Since the gates are initialized near $1/2$, a neighborhood of $(\bar w_{1\mathrm G},\bar w_{1\mathrm A})=(1/2,1/2)$ containing these states could suffice.
The stream plot suggests convergence from such a central region without LayerNorm.
Under the strict expert hierarchy and $Z_{\mathrm{GA}}>Z_{\mathrm{AA}}>0$, the reduced flow nevertheless has a region converging to the suboptimal corner $(0,0)$: sufficiently close to this corner, when $\bar w_{1\mathrm G}/\bar w_{1\mathrm A}$ is sufficiently small, both coordinates and their ratio decrease, keeping the trajectory in this region and forcing both coordinates to zero.
This does not preclude convergence from the central region.

\begin{figure}[ht]
    \centering
    \includegraphics[width=0.8\linewidth]{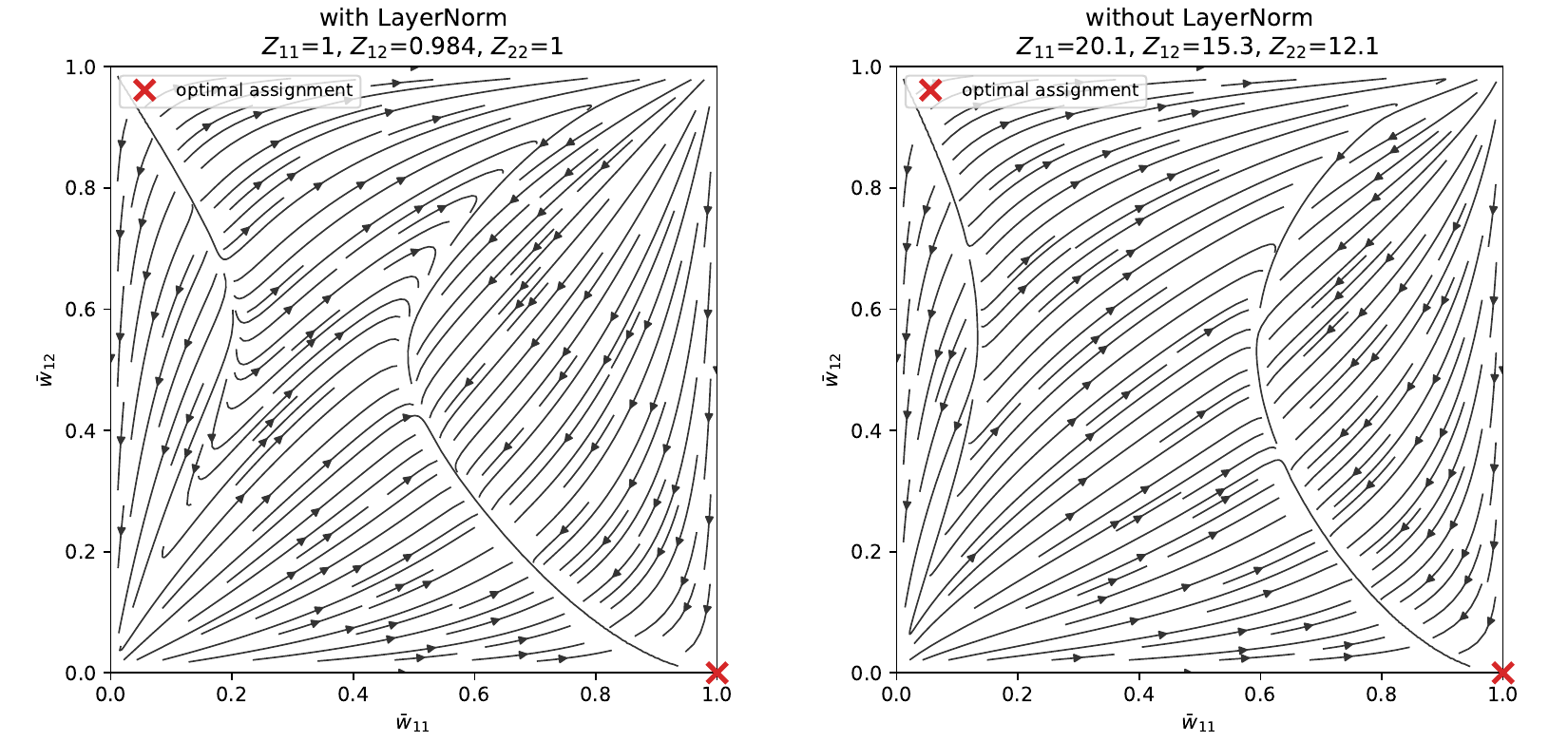}
    \caption{Stream plots of the reduced flow studied in \autoref{avoidance_of_suboptimal_stationary_points}.
Left: population features with LayerNorm.
Right: uncentered population features at $\mu=0$ without LayerNorm.}
    \label{fig:streamplots}
\end{figure}

\section{Extended related work}
\label{app:related}
\paragraph{In-context learning of statistical procedures.}
Prior-data fitted networks (PFNs) train transformers on synthetic tasks to approximate Bayesian prediction in context~\citep{muller2024transformersbayesianinference}.
Their adaptation to tabular data has produced models that perform strongly on standardized benchmarks, particularly for small and medium datasets~\citep{hollmann2023tabpfntransformersolvessmall,hollmann2025accuratepredictions,erickson2025tabarena,ye2025closer}.
Related pretrained models estimate heterogeneous treatment effects, survival curves, mutual information, and Poisson means~\citep{balazadeh2025causalpfn,qi2026survivalpfn,peyrard2025metastatisticallearningsupervisedlearning,teh2025empiricalbayes}, and amortized simulation-based inference likewise maps simulated datasets to parameters~\citep{Cranmer_2020}.
Most of these systems target prediction at a query point.
These works demonstrate that pretraining can produce effective predictors and parameter estimators.
Our focus is on how gradient-based pretraining learns an estimator with rates that depend on the unknown family.

\paragraph{Statistical theory of pretrained predictors.}
Constructive and empirical analyses study which algorithms trained transformers implement in context~\citep{akyurek2023learningalgorithmincontextlearning,chen2024transformersutilizemultiheadattention,li2024onelayertransformerprovablylearns}.
\citet{bai2023transformersstatisticiansprovableincontext} construct transformers that select among base algorithms, either by testing summary statistics of the context or by validating candidates on a held-out split, with guarantees for the empirical risk minimizer.
\citet{kim2024transformersminimaxoptimalnonparametric} show that a deep network followed by linear attention attains minimax rates for nonparametric regression over a known function class, and argue that representations fixed after pretraining limit adaptivity.
\citet{wakayama2025bayesian} and \citet{ma2025adaptivity} analyze pretrained transformers as approximations to Bayesian posterior predictors~\citep[see also][]{nagler2023statisticalfoundationspriordatafitted}.
In particular, \citet{ma2025adaptivity} use posterior contraction to obtain rates that adapt to unknown smoothness and effective dimension.
\citet{cannella2026universalpriorssolvingempirical} establish near-optimal regret for an idealized pretrained Bayes estimator in Poisson empirical Bayes, with concurrent guarantees for nonparametric Bayes estimators in the Gaussian case~\citep{ignatiadis2026compounddecisionsempiricalbayes}.
These results show that the empirical risk minimizer, or an idealized Bayes predictor, can be adaptive.
They do not follow the training trajectory, and they treat the adaptive map as a black box.
In our model, adaptation is carried by an identifiable internal parameter, the attention scale of each expert, which moves the estimator across a family of estimators.

\paragraph{Training dynamics of attention.}
A complementary line analyzes how gradient-based training shapes attention on in-context tasks, including linear attention for linear regression~\citep{zhang2023trainedtransformerslearnlinear,zhang2024incontextlearninglineartransformer}, softmax attention~\citep{huang2023incontextconvergencetransformers}, mean-field dynamics~\citep{kim2024transformerslearnnonlinearfeatures}, and learning of low-dimensional structure~\citep{oko2024pretrainedtransformer}.
Closest to our analysis, \citet{chen2024trainingdynamicsmultiheadsoftmax} show that gradient flow on multi-head softmax attention allocates each head to one task of a multi-task linear regression, after warm-up and emergence phases.
There, the task handled by each head is a fixed coordinate block determined by the task structure and the initialization.
Here, the family varies across contexts and must be inferred from each sample, so specialization must be paired with learned routing.
They also find that concentrated attention is suboptimal for noisy linear tasks, whereas the irregular uniform family calls for attention concentrated on the extremes.
For the GLU, our gating dynamics separate into fast normalization and slow selection, which we analyze with Tikhonov's theorem~\citep{berglund2001perturbationtheorydynamicalsystems}.

\paragraph{Adaptive estimation.}
In statistics, an estimator is adaptive over a collection of models if, for each model, it attains the rate achievable when that model is known~\citep{lepski1991adaptive,donoho1995adapting,tsybakov2009introduction}.
Location estimation provides a classical parametric instance: for symmetric densities with finite Fisher information, the location can be estimated asymptotically as efficiently without knowing the shape as with it~\citep{stein1956efficient,beran1974asymptotically,stone1975adaptive,bickel1982adaptive,bickel1993efficient}.
Differences in rate between location families instead arise from irregularity: the uniform family has parameter-dependent support and admits the faster rate $n^{-2}$~\citep{ibragimov1981statistical}.
Classical adaptive location estimators estimate the score function and apply a one-step correction, whereas the procedure learned here selects a point in a family of exponentially tilted averages.

\paragraph{Set architectures, gating, and mixtures of experts.}
Deep Sets~\citep{zaheer2018deepsets} and Set Transformers~\citep{lee2019settransformerframeworkattentionbased} provide architectures for functions of unordered data, and our model is a small Set Transformer specialized to scalar estimation.
Its second layer is either a softmax mixture of experts~\citep{jacobs1991adaptive,shazeer2017outrageously} or a gated linear unit~\citep{dauphin2017languagemodelinggatedconvolutional,shazeer2020glu}.
Comparing the two isolates the effect of enforcing a statistical constraint, namely normalization of the expert weights and hence exact translation equivariance, versus learning it.
\citet{bu2025provableincontextvectorarithmetic} analyze how gradient-based training learns to retrieve task concepts from context.
They show that training on question answer data enables this behavior, whereas training on their in-context demonstration distributions can fail.
Their results concern factual recall, while we study how learned routing and expert specialization yield family-specific estimation rates.

\end{document}